\documentclass[english,11pt]{article}
\makeatletter

\usepackage{dsfont}

\usepackage{float}
\usepackage{color}
\usepackage{array}
\usepackage[colorlinks=true, linkcolor=red, urlcolor=blue, citecolor=gray]{hyperref}
\usepackage[margin=1.1in,footskip=0.25in]{geometry}

\usepackage{algorithm}
\usepackage[noend]{algpseudocode}
\usepackage{amsopn,amssymb,amsthm,amsmath,amsthm}
\usepackage{times}
\usepackage{graphicx} 
\usepackage{caption}
\usepackage{subcaption}
\usepackage{epstopdf}
\usepackage{bm}
\usepackage{paralist}

\usepackage{tkz-graph}
\usepackage{tikz}
\usetikzlibrary{arrows}

\theoremstyle{plain}
\newtheorem{thm}{\protect\theoremname}
\theoremstyle{plain}
\newtheorem{claim}[thm]{\protect\claimname}
\theoremstyle{plain}
\newtheorem{prop}[thm]{\protect\propositionname}
\theoremstyle{plain}
\newtheorem{lem}[thm]{\protect\lemmaname}
\theoremstyle{plain}
\newtheorem{cor}[thm]{\protect\corollaryname}
\theoremstyle{definition}
\newtheorem{defn}[thm]{\protect\definitionname}
\theoremstyle{definition}

\theoremstyle{definition}
\newtheorem{rem}{\protect\remarkname}
\theoremstyle{plain}

\makeatother

\providecommand{\claimname}{Claim}
\providecommand{\lemmaname}{Lemma}
\providecommand{\propositionname}{Proposition}
\providecommand{\theoremname}{Theorem}
\providecommand{\corollaryname}{Corollary}
\providecommand{\definitionname}{Definition}
\providecommand{\assumptionname}{Assumption}
\providecommand{\remarkname}{Remark}

\global\long\def\RR{\mathbb{R}}
\global\long\def\CC{\mathbb{C}}
\global\long\def\ZZ{\mathbb{Z}}

\global\long\def\R{{\cal R}}
\global\long\def\H{{\cal H}}
\global\long\def\X{{\cal X}}
\global\long\def\Y{{\cal Y}}

\global\long\def\x{{\mathbf{x}}}
\global\long\def\w{{\mathbf{w}}}

\global\long\def\a{{\mathbf{a}}}

\global\long\def\u{{\mathbf{u}}}
\global\long\def\t{{\mathbf{t}}}
\global\long\def\y{{\mathbf{y}}}
\global\long\def\w{{\mathbf{w}}}

\global\long\def\z{{\mathbf{z}}}

\global\long\def\s{{\mathbf{s}}}

\global\long\def\f{{\mathbf{f}}}

\global\long\def\mat#1{{\ensuremath{\bm{\mathrm{#1}}}}}

\global\long\def\matA{\ensuremath{{\bm{\mathrm{A}}}}}
\global\long\def\matB{\ensuremath{{\bm{\mathrm{B}}}}}

\global\long\def\matD{\ensuremath{{\bm{\mathrm{D}}}}}

\global\long\def\matM{\ensuremath{{\bm{\mathrm{M}}}}}
\global\long\def\matR{\mat R}
\global\long\def\matS{\mat S}
\global\long\def\matSigma{\mat \Sigma}
\global\long\def\matPhi{\mat \Phi}
\global\long\def\matY{\mat Y}
\global\long\def\matI{\mat I}

\global\long\def\matZ{\mat Z}
\global\long\def\matV{\mat V}

\global\long\def\matK{\mat K}

\global\long\def\TNormS#1{\|#1\|_{2}^{2}}
\global\long\def\LTNormS#1{\left\|#1\right\|_{2}^{2}}
\global\long\def\TNorm#1{\|#1\|_{2}}
\global\long\def\LTNorm#1{\left\|#1\right\|_{2}}
\global\long\def\InfNorm#1{\|#1\|_{\infty}}

\global\long\def\FNormS#1{\|#1\|^{2}_{F}}

\global\long\def\XNorm#1#2{\|#1\|_{#2}}
\global\long\def\XNormS#1#2{\|#1\|^2_{#2}}

\global\long\def\T{\textsc{T}}

\global\long\def\Expect#1{{\mathbb{E}}\left[#1\right]}
\global\long\def\ExpectC#1#2{{\mathbb{E}}_{#1}\left[#2\right]}
\global\long\def\dotprod#1#2#3{\langle#1,#2\rangle_{#3}}

\global\long\def\Trace#1{\mathrm{Tr}\left(#1\right)}

\global\long\def\spn#1{{\bf Span}\left(#1\right)}
\global\long\def\rank#1{\mathrm{rank}(#1)}
\global\long\def\diag#1{{\mathrm{diag}}\left(#1\right)}
\global\long\def\sinc#1{\mathrm{sinc}\left(#1\right)}

\global\long\def\poly#1{\mathrm{poly}\left(#1\right)}
\newcommand*{\rect}{\mathrm{rect}}
\global\def\eqdef{\equiv}
\global\long\def\conj{*}
\newcommand{\wh}{\widehat}

\newcommand{\norm}[1]{\|#1\|}
\newcommand{\Fc}{\mathcal{F}}

\newcommand{\bs}[1]{\boldsymbol{#1}}
\newcommand{\bv}[1]{\mathbf{#1}}

\global\long\def\valpha{{\bs{\alpha}}}
\global\long\def\veta{{\bs{\eta}}}
\global\long\def\vxi{{\bs{\xi}}}

\global\long\def\rad{R}

\title{Random Fourier Features for Kernel Ridge Regression: \\
  Approximation Bounds and Statistical Guarantees\footnote{An extended abstract of this work appears in the Proceedings of the 34th International Conference on Machine Learning (ICML 2017)~\cite{ICML_paper}.} }
\author{
Haim Avron\\ \small Tel Aviv University\\ \small\texttt{haimav@post.tau.ac.il}
\and
Michael Kapralov\\ \small EPFL\\ \small \texttt{michael.kapralov@epfl.ch}
\and\and
Cameron Musco\\ \small MIT\\ \small \texttt{cnmusco@mit.edu}
\and\and\and\and
Christopher Musco\\ \small MIT\\ \small \texttt{cpmusco@mit.edu}
\and\and
Ameya Velingker\\ \small EPFL\\ \small \texttt{ameya.velingker@epfl.ch}
\and
Amir Zandieh\\ \small EPFL\\ \small \texttt{amir.zandieh@epfl.ch}
}

\begin{document}
\maketitle

\begin{abstract}

Random Fourier features is one of the most popular techniques for scaling up kernel methods, such as kernel ridge regression. However, despite impressive empirical results, the
statistical properties of random Fourier features are still not well understood.
In this paper we take steps toward filling this gap.
Specifically, we approach random Fourier features
from a spectral matrix approximation point of view, give tight bounds on the
number of Fourier features required to achieve a spectral approximation, and show
how spectral matrix approximation bounds imply statistical guarantees for kernel
ridge regression.

Qualitatively, our results are twofold: on the one hand, we show that random Fourier feature approximation can provably speed up kernel ridge regression under reasonable assumptions.
At the same time, we show that the method is suboptimal, and sampling from a modified distribution in Fourier space, given by the \emph{leverage function} of the kernel, yields provably better performance.  We study this optimal sampling distribution for the  Gaussian kernel, achieving a nearly complete characterization for the case of low-dimensional bounded datasets. Based on this characterization, we propose an efficient sampling scheme with guarantees superior to random Fourier features in this regime.

\end{abstract}

\section{Introduction}
\label{sec:introduction}

Kernel methods constitute a powerful paradigm for devising non-parametric modeling techniques for a wide range
of problems in machine learning. One of the most elementary
is {\em Kernel Ridge Regression (KRR)}.
Given training data $(\x_1, y_1),\dots,(\x_n,y_n)\in \X\times\Y$, where $\X \subseteq \RR^d$ is an input domain and
$\Y \subseteq \RR$ is an output domain, a positive definite kernel function $k:\X\times\X\to\RR$, and a regularization
parameter $\lambda > 0$, the response for a given input $\x$ is estimated as:
\begin{equation*}
\bar{f}(\x) \equiv \sum^n_{j=1} k(\x_j, \x) \alpha_j
\end{equation*}
where $\valpha = ( \alpha_1  \cdots  \alpha_n)^\T$ is the solution of the equation
\begin{equation}
\label{eq:linear}
(\matK + \lambda \matI_n)\valpha = \y.
\end{equation}
In the above, $\matK\in\RR^{n\times n}$ is the {\em kernel matrix} or {\em Gram matrix} defined by $\matK_{ij}\equiv k(\x_i, \x_j)$
and $\y \equiv [y_1 \cdots y_n]^\T$ is the vector of responses.
The KRR estimator can be derived by minimizing
a regularized square loss objective function over a hypothesis space defined by the reproducing kernel Hilbert space associated with
$k(\cdot, \cdot)$; however, the details are not important for this paper.

While simple, KRR is a powerful technique that is well understood statistically and capable of achieving impressive empirical results.
Nevertheless, the method has a key
weakness: computing the KRR estimator can be prohibitively expensive for large datasets. Solving~\eqref{eq:linear} generally requires
$\Theta(n^3)$ time\footnote{The running time can be improved using fast matrix products. However fast matrix products are typically not employed in practice due to large hidden constants.} and $\Theta(n^2)$ memory. 
Thus, the design of scalable methods for KRR (and other kernel based methods) has been the focus of intensive research in recent years~\cite{ZhangDuchiWainwright15, AlaouiMahoney15, MuscoMusco16, ACW16}.

One of the most popular approaches to scaling up kernel based methods is random Fourier features sampling, originally proposed by Rahimi and Recht~\cite{RahimiRecht07}. For shift-invariant kernels (e.g. the Gaussian kernel), Rahimi and Recht~\cite{RahimiRecht07} presented
a distribution $D$ on functions from $\X$ to $\CC^s$ ($s$ is a parameter) such that for every $\x,\z \in \RR^d$
\begin{equation*}
k(\x,\z) = \ExpectC{\varphi \sim D}{\varphi(\x)^\conj \varphi(\z)}\,.
\end{equation*}
The random features approach is then to sample a $\varphi$ from $D$ and use $\tilde{k}(\x, \z)\equiv \varphi(\x)^\conj \varphi(\z)$
as a surrogate kernel. The resulting approximate KRR estimator can be computed in $O(ns^2)$ time and $O(ns)$ memory
(see \S\ref{sec:random-fourier-features} for details), giving substantial computational savings if $s \ll n$.

This approach naturally raises the question:  how large should $s$ be to ensure a high quality estimator?
Or, using
the exact KRR estimator as a natural baseline: how large should $s$ be
for the random Fourier features estimator to be almost as good as the exact KRR estimator?
Answering this question can help us determine when random Fourier features can be useful, whether the method needs to be improved, and how to go about improving it.

The original random Fourier features analysis
\cite{RahimiRecht07} bounds the point-wise distance between $k(\cdot, \cdot)$ and $\tilde{k}(\cdot, \cdot)$
(for other approaches for analyzing random Fourier features, see \S\ref{sec:related-work}).
However, the bounds do not naturally lead to an answer to the aforementioned question.
In contrast, spectral approximation bounds on the entire surrogate kernel matrix, i.e. of the form
\begin{equation}
\label{eq:approx-bound-intro}
(1-\Delta) (\matK + \lambda \matI_n) \preceq \tilde \matK + \lambda \matI_n \preceq (1 + \Delta) (\matK + \lambda \matI_n)\,,
\end{equation}
naturally have statistical and algorithmic implications. Indeed, in~\S\ref{sec:approx-bounds} we show that when~\eqref{eq:approx-bound-intro} holds
we can bound the excess risk introduced by the random Fourier features estimator when compared to the KRR estimator. We also show that $\tilde \matK+ \lambda \matI_n$ can be used as an effective preconditioner for the solution of~\eqref{eq:linear}.
This motivates the study of how large $s$ should be as a function of $\Delta$ for~\eqref{eq:approx-bound-intro} to hold.

In this paper we rigorously analyze the relation between the number of random Fourier features and the
spectral approximation bound~\eqref{eq:approx-bound-intro}. Our main results are
the following:
\begin{itemize}

\item We give an upper bound on the number of random features needed to achieve~\eqref{eq:approx-bound-intro} (Theorem~\ref{thm:rf-bound}). This bound, in conjunction with
the results in \S\ref{sec:approx-bounds},  positively shows that random Fourier features can give guarantees for KRR under reasonable assumptions.

\item We give a lower bound showing that our upper bound is tight for the Gaussian kernel (Theorem~\ref{thm:rr-samples}).

\item We show that the upper bound can be improved dramatically by modifying the sampling distribution used in
classical random Fourier features  (\S\ref{sec:ridge-and-rf}). Our sampling distribution is based on an appropriately defined
{\em leverage function} of the kernel, closely related to so-called leverage scores frequently encountered in the analysis of sampling based
methods for linear regression. Unfortunately, it is unclear how to efficiently sample using the leverage function.

\item To address the lack of an efficient way to sample using the leverage
function, we propose a novel, easy-to-sample distribution for the Gaussian
kernel which approximates the true leverage function distribution and allows
random Fourier features to achieve a significantly improved upper bound
(Theorem \ref{thm:improvedBound}). The upper bound
has an exponential dependence on the data dimension, so it is only applicable to
low dimensional datasets. Nevertheless, our results demonstrate that the classic random Fourier sampling distribution can be improved for spectral approximation
and motivates further study. As an application, our improved understanding of
the leverage function yields a novel asymptotic bound on the statistical
dimension of Gaussian kernel matrices over bounded datasets, which may be of
independent interest (Corollary~\ref{cor:sd-bound}).
\end{itemize}

\section{Preliminaries}

\subsection{Setup and Notation}

The complex conjugate of $x \in \CC$ is denoted by $x^\conj$. For
a vector $\x$ or a matrix $\matA$, $\x^\conj$ or $\matA^\conj$ denotes
the Hermitian transpose.
 The $l\times l$ identity matrix is denoted $ \matI_l$.
We use the convention that vectors are column-vectors.

A Hermitian matrix $\matA$ is positive semidefinite (PSD) if
$\x^\conj \matA \x \geq 0$ for every vector $\x$. 
For any two Hermitian matrices $\matA$ and $\matB$ of the same size, $\matA \preceq \matB$ means
that $\matB - \matA$ is PSD.

We use $L_2(d\rho)=L_2(\RR^d, d\rho)$ to denote
the space of complex-valued square-integrable functions with respect to some measure $\rho(\cdot)$. $L_2(d\rho)$ is
a Hilbert space equipped with the inner product
\begin{eqnarray*}
\langle f,g \rangle_{L_2(d\rho)} = \int_{\RR^d} f(\veta)g(\veta)^\conj d\rho(\veta) = \int_{\RR^d} f(\veta)g(\veta)^\conj p_\rho(\veta) d\veta\,.
\end{eqnarray*}
In the above, $p_\rho(\cdot)$ is the density associated with $\rho(\cdot)$ (assuming one exists).

We denote the training set by $(\x_1, y_1), \dots, (\x_n, y_n) \in \X \times \Y \subseteq \RR^d \times \RR$.
Note that $n$ denotes the number of training examples, and $d$ their dimension.
We denote the kernel, which is a function from $\X \times \X$ to $\RR$, by $k$.
We denote the kernel matrix by $\matK$, with $\matK_{ij} \equiv k(\x_i, \x_j)$.
The associated reproducing kernel Hilbert space (RKHS) is denoted by ${\cal H}_k$,
and the associated inner product by $\dotprod{\cdot}{\cdot}{{\cal H}_k}$.
Some results are stated for the Gaussian kernel $k(\x,\z) = \exp(-\TNormS{\x-\z}/2\sigma^2)$
for some bandwidth parameter $\sigma$.

We use $\lambda = \lambda_n$ to denote the ridge regularization parameter. While for brevity we omit the $n$ subscript,
the choice of regularization parameter generally depends on $n$. Typically, $\lambda_n = \omega(1)$ and
$\lambda_n = o(n)$. See Caponnetto and De Vito~\cite{CaponnettoDeVito2007} and Bach~\cite{Bach13} for discussion on
the asymptotic behavior of $\lambda_n$, noting that in our notation, $\lambda$ is scaled by an $n$ factor as compared to those works. As the ratio between $n$ and $\lambda$ will be an important quantity in our bounds, we denote it as $n_\lambda \eqdef n/\lambda$.

The {\em statistical dimension} or {\em effective degrees of freedom} given
the regularization parameter $\lambda$ is denoted by
$s_\lambda(\matK)\equiv\Trace{(\matK+\lambda \matI_n)^{-1} \matK}$.


\subsection{Random Fourier Features}
\label{sec:random-fourier-features}

\subsubsection{Classical Random Fourier Features}

Random Fourier features~\cite{RahimiRecht07} is an approach to scaling up kernel methods for shift-invariant kernels.
A shift-invariant kernel is a kernel of the form $k(\x,\z) = k(\x-\z)$ where $k(\cdot)$ is a positive definite function
(we abuse notation by using $k$ to denote both the kernel and the defining positive definite function).

The underlying observation behind random Fourier features is a simple consequence of Bochner's Theorem:
for every shift-invariant kernel for which $k(\bs 0) = 1$ there is a probability measure $\mu_k(\cdot)$
and possibly a corresponding probability density function $p_k(\cdot)$, both on $\RR^d$, such that
\begin{align}
k(\x,\z) = \int_{\RR^d} e^{-2\pi i \veta^\T(\x-\z)} d\mu_k(\veta)
=\int_{\RR^d} e^{-2\pi i \veta^\T(\x-\z)} p_k(\veta) d\veta~.\label{eq:bochner}
\end{align}
In other words, the inverse Fourier transform of the kernel $k(\cdot)$ is a probability density function, $p_k(\cdot)$.
For simplicity we typically drop the $k$ subscript, writing $\mu(\cdot)= \mu_k(\cdot)$ and $p(\cdot) = p_k(\cdot)$, with the associated kernel function clear from context.
We remark that while it is not always the case that the probability measure $\mu_k(\cdot)$ has an associated density
function $p_k(\cdot)$, we assume the existence of a density function for the kernels we consider in this paper.

If $\veta_1,\dots,\veta_s$ are drawn according to
$p(\cdot)$, and we define $\varphi(\x) \equiv \frac{1}{\sqrt{s}}\left(e^{-2\pi i \veta_1^T\x}, \cdots, e^{-2\pi i \veta_s^T\x} \right)^\conj$, then it is not
hard to see that
\begin{equation*}
k(\x,\z) = \ExpectC{\varphi}{\varphi(\x)^\conj \varphi(\z)}\,.
\end{equation*}
The idea of the Random Fourier features method is then to define the substitute kernel:
\begin{equation}
\tilde{k}(\x,\z) \equiv \varphi(\x)^\conj \varphi(\z) = \frac{1}{s} \sum^s_{l=1} e^{-2\pi i \veta^\T_l(\x - \z)}
\end{equation}

To summarize, the density function $p(\cdot)$ is just the $d$-dimensional Fourier transform of the kernel $k(\cdot)$, and the random Fourier features
method approximates $k(\cdot)$ by sampling $s$ ($d$-dimensional) frequencies $\veta_1,...,\veta_s$ according to their weight in the Fourier transform. Note that in order for $p(\cdot)$ to be a proper probability density function (integrating to $1$) we must have $k(\bs 0) = 1$. We assume this without loss of generality, since any kernel can be scaled to satisfy this condition.

Now suppose that $\matZ \in \CC^{n\times s}$ is the matrix whose $j^{th}$ row is $\varphi(\x_j)^\conj$,
and let $\tilde{\matK}= \matZ\matZ^\conj$. $\tilde{\matK}$ is the kernel matrix corresponding to $\tilde{k}(\cdot,\cdot)$.
The resulting random Fourier features KRR estimator
is $\tilde{f}(\x) \equiv \sum^n_{j=1} \tilde{k}(\x_j, \x) \tilde{\alpha}_j$
where $\bs{\tilde{\alpha}}$ is the solution of $(\tilde{\matK} + \lambda
\matI_n)\bs{\tilde{\alpha}} = \y$. Typically, $s < n$ and we can represent
$\tilde{f}(\cdot)$ more efficiently as:
\begin{equation*}
\tilde{f}(\x) = \varphi(\x)^\conj \w
\end{equation*}
where
\begin{equation*}
\w = (\matZ^\conj\matZ + \lambda \matI_s)^{-1} \matZ^\conj \y
\end{equation*}
(this is a simple consequence of the Woodbury formula).
We can compute $\w$ in $O(ns^2)$ time, making random Fourier features computationally attractive if $s < n$.

\subsubsection{Modified Random Fourier Features}

While it seems to be a natural choice, there is no fundamental reason that we
must sample the frequencies $\veta_1,\dots,\veta_s$ using the Fourier transform
density function $p(\cdot)$. In fact, we will see that it is advantageous to use
a different sampling distribution based on the kernel leverage function (defined
later).

Let $q(\cdot)$ be any probability density function whose support
includes that of $p(\cdot)$. If we sample $\veta_1,\dots,\veta_s$ using $q(\cdot)$,
and define
\begin{equation*}
\varphi(\x) \equiv \frac{1}{\sqrt{s}}\left(\sqrt{\frac{p(\veta_1)}{q(\veta_1)}}e^{-2\pi i \veta_1^T\x}, \cdots, \sqrt{\frac{p(\veta_s)}{q(\veta_s)}}e^{-2\pi i \veta_s^T\x} \right)^\conj
\end{equation*}
we still have $k(\x,\z) = \ExpectC{\varphi}{\varphi(\x)^\conj \varphi(\z)}$.
We refer to this method as {\em modified random Fourier features} and remark that it can be viewed as a form
of importance sampling.

\subsubsection{Additional Notations and Identities}

Now that we have defined (modified) random Fourier features, we can introduce some additional
notation and identities.
The $(j,l)$ entry of $\matZ$ is given by:
\begin{equation}
\label{eq:Z-def}
\matZ_{jl}=\frac{1}{\sqrt{s}}e^{-2 \pi i \x_j^\T \veta_l}\sqrt{p(\veta_l) / q(\veta_l)}.
\end{equation}
Let $\z : \RR^d \to \CC^n$ be defined by
\begin{equation*}
\z(\veta)_j = e^{-2 \pi i \x^\T_j \veta}~.
\end{equation*}
Note that column $l$ of $\matZ$ from the previous section is exactly $\z(\veta_l)\sqrt{p(\veta_l)/[s\cdot q(\veta_l)]}$.
So we have:
\begin{equation*}
\matZ \matZ^\conj = \frac{1}{s}\sum^s_{l=1} \frac{p(\veta_l)}{q(\veta_l)} \z(\veta_l)\z(\veta_l)^\conj.
\end{equation*}
Finally, by \eqref{eq:bochner} we have
$$\matK = \int_{\RR^d} \z(\veta)\z(\veta)^\conj d\mu(\veta) = \int_{\RR^d} \z(\veta)\z(\veta)^\conj p(\veta) d\veta\,.$$
and thus $\Expect{\matZ \matZ^\conj} = \matK$.

\subsection{Related Work}
\label{sec:related-work}
Rahimi and Recht's original analysis of random Fourier features~\cite{RahimiRecht07} bounded the point-wise distance between $k(\cdot,\cdot)$ and $\tilde{k}(\cdot,\cdot)$.

In follow-up work, they give learning rate bounds for a broad class
of estimators
using random Fourier features~\cite{RahimiRecht09}. However, their results do not apply to classic
KRR. 
Furthermore, their main bound becomes relevant only when the number of sampled features is on
order of the training set size.

Rudi et al.~\cite{RudiEtAl16} prove generalization properties for KRR with random features, under
somewhat difficult to verify technical assumptions, some of which can be seen as
constraining the leverage function distribution that we study.
They leave open improving their bounds via a more refined sampling approach.
Bach~\cite{Bach15} analyzes random Fourier features from a function approximation point of view. He defines a similar leverage function distribution to the one that we consider, but leaves open establishing bounds on and effectively sampling from this distribution, both of which we address in this work. Finally,
Tropp~\cite{Tropp15} analyzes the distance between the kernel matrix and its approximation in terms of the
spectral norm, $\TNorm{\matK - \tilde{\matK}}$, which can be a significantly weaker error metric than \eqref{eq:approx-bound-intro}.

Outside of work on random Fourier features, risk inflation bounds for approximate KRR and leverage score sampling have been used to analyze and improve
the Nystr\"{o}m method for kernel approximation~\cite{Bach13,AlaouiMahoney15,rudi2015less,MuscoMusco16}. We apply a number of techniques from this line of work.

Spectral approximation bounds, such as~\eqref{eq:approx-bound-intro}, are quite popular in the
sketching literature; see Woodruff's survey~\cite{Woodruff14}. Most closely related to our work is
analysis of spectral approximation bounds without regularization (i.e. $\lambda = 0$) for the polynomial
kernel~\cite{ANW14}. Improved bounds with regularization (still for the polynomial kernel) were recently
proved by Avron et al.~\cite{ACW16}.

\section{Spectral Bounds and Statistical Guarantees}
\label{sec:approx-bounds}

Given a feature transformation, like random Fourier features, how do we analyze it and relate its use to
non-approximate methods? A common approach, taken for example in the original paper on random Fourier features~\cite{RahimiRecht07}, is to bound the difference
between the true kernel $k(\cdot, \cdot)$ and the approximate kernel $\tilde{k}(\cdot,\cdot)$.
However, it is unclear how such bounds translate to downstream guarantees on statistical learning
methods, such as KRR.
In this paper we advocate and focus on spectral approximation bounds on the
regularized kernel matrix, specifically, bounds of the form
\begin{equation}
\label{eq:approx-bound}
(1-\Delta) (\matK + \lambda \matI_n) \preceq \matZ \matZ^\conj + \lambda \matI_n \preceq (1 + \Delta) (\matK + \lambda \matI_n)
\end{equation}
for some $\Delta < 1$.
\begin{defn}
We say that a matrix $\matA$ is a {\em $\Delta$-spectral approximation} of another matrix $\matB$,
if $(1-\Delta)\matB \preceq \matA \preceq (1+\Delta)\matB$.
\end{defn}
\begin{rem}
When $\lambda=0$, bounds of the form of~\eqref{eq:approx-bound} can be viewed as a low-distortion subspace embedding bounds.
Indeed, when $\lambda=0$ it follows from~\eqref{eq:approx-bound} that $\spn{k(\x_1,\cdot),\dots,k(\x_n,\cdot)}\subseteq \H_k$ can be
embedded with $\Delta$-distortion in $\spn{\varphi(\x_1),\dots,\varphi(\x_n)}\subseteq \RR^s$.
\end{rem}

The main mathematical question we seek to address in this paper is: when using random Fourier features,
how large should $s$ be in order to guarantee that $\matZ \matZ^\conj + \lambda \matI_n$ is a $\Delta$-spectral approximation
of $\matK + \lambda \matI_n$? To motivate this question, in the following two subsections we show that such bounds can be used
to derive risk inflation bounds for approximate kernel ridge regression. We also show that they can be used to analyze the use
of $\matZ \matZ^\conj + \lambda \matI_n$ as a preconditioner for $\matK + \lambda \matI_n$.

While this paper focuses on KRR for conciseness, we remark that in the sketching literature, spectral approximation bounds also form the basis for analyzing
sketching based methods for tasks like low-rank approximation, k-means and more. In the kernel setting, such bounds
where analyzed, without regularization, for the polynomial kernel~\cite{ANW14}. Cohen et al.~\cite{CohenMuscoMusco17} recently showed that \eqref{eq:approx-bound} along with a trace condition on $\matZ\matZ^\conj$ (which holds for all sampling approaches we consider) yields a so called ``projection-cost preservation'' condition for the kernel approximation. With $\lambda$ chosen appropriately, this condition ensures that $\matZ\matZ^\conj$ can be used in place of $\matK$ for approximately solving kernel k-means clustering and for certain versions of kernel PCA and kernel CCA. See Musco and Musco~\cite{MuscoMusco16} for details, where this analysis is carried out for the Nystr\"{o}m method.

\subsection{Risk Bounds}
\label{sec:risk-bounds}

One way to analyze estimators is via risk bounds; several recent papers on approximate KRR employ
such an analysis~\cite{Bach13, AlaouiMahoney15,MuscoMusco16}.
In particular, these papers
consider the fixed design setting and seek to bound the expected in-sample predication error of
the KRR estimator $\bar{f}$,
viewing it as an empirical estimate of the statistical risk. More specifically, the underlying
assumption is that $y_i$ satisfies
\begin{equation}
\label{eq:stat-model}
y_i = f^\star(\x_i) + \nu_i
\end{equation}
for some $f^\star:\X \to \RR$. The $\{\nu_i\}$'s are i.i.d noise terms, distributed as normal variables
with variance $\sigma^2_\nu$. The empirical risk of an estimator $f$, which can be viewed as
a measure of the quality of the estimator, is
$$
\R(f) \equiv \ExpectC{\{\nu_i\}}{\frac{1}{n}\sum^n_{j=1}(f(\x_i) - f^\star(\x_i))^2}
$$
(note that $f$ itself might be a function of $\{\nu_i\}$).

Let $\f \in \RR^n$ be the vector whose $j^{th}$ entry is $f^\star(\x_j)$.
It is quite straightforward to show that for the KRR estimator $\bar{f}$ we have~\cite{Bach13, AlaouiMahoney15}:
\begin{equation*}
\R(\bar{f}) = n^{-1}\lambda^2 \f^\T (\matK + \lambda \matI_n)^{-2} \f + n^{-1}\sigma_\nu^2 \Trace{\matK^2(\matK + \lambda\matI_n)^{-2}}.
\end{equation*}
Since $\lambda^2 \f^\T (\matK + \lambda \matI_n)^{-2} \f \leq  \lambda \f^\T (\matK + \lambda \matI_n)^{-1} \f$
and $\Trace{\matK^2(\matK + \lambda\matI_n)^{-2}} \leq \Trace{\matK(\matK + \lambda\matI_n)^{-1}} = s_\lambda(\matK)$,
we define
$$\wh{\R}_{\matK}(\f) \equiv  n^{-1}\lambda \f^\T (\matK + \lambda \matI_n)^{-1} \f + n^{-1}\sigma_\nu^2 s_\lambda(\matK)$$
and note that $\R(\bar{f}) \leq \wh{\R}_{\matK}(\f)$. The first term in the above expressions
for $\R(\bar{f})$ and $\wh{\R}_{\matK}(\f)$ is frequently referred to as the bias term,
while the second is the variance term.

\begin{lem}
\label{lem:risk-bound}
Suppose that~\eqref{eq:stat-model} holds, and let $\f \in \RR^n$ be the vector whose $j^{th}$ entry is $f^\star(\x_j)$.
Let $\bar{f}$ be the KRR estimator, and let $\tilde{f}$  be KRR estimator
obtained using some other kernel $\tilde{k}(\cdot,\cdot)$ whose kernel matrix is $\tilde{\matK}$.
Suppose that $\tilde{\matK} + \lambda\matI_n$ is a $\Delta$-spectral approximation to $\matK + \lambda\matI_n$ for some $\Delta<1$,
and that $\TNorm{\matK} \geq 1$. The following bound holds:
\begin{align}\label{estimateUpperBound}
\R(\tilde{f}) \leq (1-\Delta)^{-1} \wh{\R}_{\matK}(\f) + \frac{\Delta}{(1+\Delta)}\cdot\frac{\rank{\tilde{\matK}}}{n}\cdot\sigma^2_\nu
\end{align}
\end{lem}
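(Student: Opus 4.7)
The plan is to establish the bound via a standard bias-variance decomposition of $\R(\tilde{f})$ and then control each piece using the spectral approximation. Observe that at the training points, $\tilde{f}$ takes the form $\tilde{\matH}\y$, where $\tilde{\matH} = \tilde{\matK}(\tilde{\matK} + \lambda\matI_n)^{-1}$. Using $\matI_n - \tilde{\matH} = \lambda(\tilde{\matK} + \lambda\matI_n)^{-1}$, and writing $\y = \f + \bs{\nu}$ with $\bs{\nu}$ zero-mean of variance $\sigma_\nu^2$, the cross term vanishes in expectation and
\[
\R(\tilde{f}) = \frac{\lambda^2}{n}\f^\T(\tilde{\matK} + \lambda\matI_n)^{-2}\f + \frac{\sigma_\nu^2}{n}\Trace{\tilde{\matK}^2(\tilde{\matK} + \lambda\matI_n)^{-2}},
\]
which is analogous to the expression used to derive $\wh{\R}_{\matK}(\f)$.

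For the bias term, I would first use that $\lambda(\tilde{\matK} + \lambda\matI_n)^{-1}$ is PSD with eigenvalues in $[0,1]$, so by commutativity $\lambda^2(\tilde{\matK} + \lambda\matI_n)^{-2} \preceq \lambda(\tilde{\matK} + \lambda\matI_n)^{-1}$. The spectral approximation hypothesis then yields $\lambda(\tilde{\matK} + \lambda\matI_n)^{-1} \preceq (1-\Delta)^{-1}\lambda(\matK + \lambda\matI_n)^{-1}$, so the bias is bounded by $(1-\Delta)^{-1}$ times the bias part of $\wh{\R}_{\matK}(\f)$.

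The variance bound is the more delicate step. The same PSD-squaring trick gives $\Trace{\tilde{\matK}^2(\tilde{\matK} + \lambda\matI_n)^{-2}} \leq s_\lambda(\tilde{\matK})$, so it remains to relate $s_\lambda(\tilde{\matK})$ to $s_\lambda(\matK)$ and $\rank{\tilde{\matK}}$. Here the key observation is Weyl's inequality: from $\tilde{\matK} + \lambda\matI_n \preceq (1+\Delta)(\matK + \lambda\matI_n)$, the decreasingly ordered eigenvalues satisfy $\nu_i + \lambda \leq (1+\Delta)(\mu_i + \lambda)$, where $\nu_i = \lambda_i(\tilde{\matK})$ and $\mu_i = \lambda_i(\matK)$. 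Setting $r = \rank{\tilde{\matK}}$ so that $\nu_i = 0$ for $i>r$, and writing $\nu_i/(\nu_i + \lambda) = 1 - \lambda/(\nu_i + \lambda)$,
\[
s_\lambda(\tilde{\matK}) = \sum_{i=1}^{r}\left(1 - \frac{\lambda}{\nu_i + \lambda}\right) \leq r - \frac{1}{1+\Delta}\sum_{i=1}^{r}\frac{\lambda}{\mu_i + \lambda} \leq r - \frac{r - s_\lambda(\matK)}{1+\Delta},
\]
where the last inequality uses $\sum_{i=1}^r \mu_i/(\mu_i + \lambda) \leq s_\lambda(\matK)$. Simplifying gives $s_\lambda(\tilde{\matK}) \leq (\Delta r + s_\lambda(\matK))/(1+\Delta)$.

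Adding the two pieces and using $(1+\Delta)^{-1} \leq (1-\Delta)^{-1}$ on the $s_\lambda(\matK)$ contribution yields the stated inequality. The main subtlety is the variance step: a direct application of spectral approximation to $\lambda\Trace{(\tilde{\matK} + \lambda\matI_n)^{-1}}$ would yield an $n/n$ residual rather than $\rank{\tilde{\matK}}/n$, so the trick is to sum only over the $r$ nonzero eigenvalues of $\tilde{\matK}$ via the Weyl bound; this is what recovers the rank factor. The hypothesis $\TNorm{\matK} \geq 1$ does not appear to enter the bias-variance argument above, and I expect it is used only for normalization when further relating $\wh{\R}_{\matK}(\f)$ to the exact risk $\R(\bar{f})$.
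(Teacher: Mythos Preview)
Your proposal is correct and follows essentially the same route as the paper: both use $\R(\tilde f)\le \wh{\R}_{\tilde{\matK}}(\f)$, bound the bias term directly via $(1-\Delta)(\matK+\lambda\matI_n)\preceq \tilde{\matK}+\lambda\matI_n$, and bound $s_\lambda(\tilde{\matK})$ by writing it as $r-\sum_{i=1}^r \lambda/(\nu_i+\lambda)$ and applying the Courant--Fischer/Weyl eigenvalue comparison $\nu_i+\lambda\le (1+\Delta)(\mu_i+\lambda)$. Your algebraic regrouping of the variance step even gives the slightly tighter intermediate bound $s_\lambda(\tilde{\matK})\le (\Delta r + s_\lambda(\matK))/(1+\Delta)$ before relaxing to $(1-\Delta)^{-1}s_\lambda(\matK)+\Delta r/(1+\Delta)$, and you are right that the hypothesis $\TNorm{\matK}\ge 1$ is not actually used in the paper's proof of this lemma either.
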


\begin{proof}
\label{appendix:lem:risk-bound}
Note that $\matA \preceq \matB$ implies that $\matB^{-1} \preceq \matA^{-1}$ so for the bias term we have:
\begin{align}\label{easyBiasBound}
\f^\T (\tilde{\matK} + \lambda \matI_n)^{-1} \f \leq (1-\Delta)^{-1}\f^\T (\matK + \lambda \matI_n)^{-1} \f.
\end{align}

We now consider the variance term. Denote $s = \rank{\tilde{\matK}}$, and let
$\lambda_1(\matA) \geq \lambda_2(\matA) \geq \dots \geq \lambda_n(\matA)$ denote the eigenvalues
of a matrix $\matA$. We have:
\begin{align*}
s_\lambda(\tilde{\matK})  = \Trace{(\tilde{\matK} + \lambda \matI_n)^{-1}\tilde{\matK}} &= \sum^s_{i=1}\frac{\lambda_i(\tilde{\matK})}{\lambda_i(\tilde{\matK}) + \lambda} \\
&= s -  \sum^s_{i=1}\frac{\lambda}{\lambda_i(\tilde{\matK}) + \lambda} \\
&\leq s - (1+\Delta)^{-1} \sum^s_{i=1}\frac{\lambda}{\lambda_i(\matK) + \lambda} \\
 &= s -  \sum^s_{i=1}\frac{\lambda}{\lambda_i(\matK) + \lambda} + \frac{\Delta}{1+\Delta}\sum^s_{i=1}\frac{\lambda}{\lambda_i(\matK) + \lambda} \\
 &\leq n -  \sum^n_{i=1}\frac{\lambda}{\lambda_i(\matK) + \lambda} + \frac{\Delta\cdot s}{1+\Delta} \\
 &= s_\lambda(\matK) + \frac{\Delta\cdot s}{1+\Delta} \\
 &\leq (1-\Delta)^{-1}s_\lambda(\matK) + \frac{\Delta\cdot s}{1+\Delta}
\end{align*}
where we use the fact that $\bv{A} \preceq \bv{B}$ implies that $\lambda_i(\matA) \leq \lambda_i(\matB)$
(this is a simple consequence of the Courant-Fischer minimax theorem).

Combining the above variance bound with the bias bound in \eqref{easyBiasBound} yields:
$$\wh{\R}_{\tilde{\matK}}(\f) \leq (1-\Delta)^{-1} \wh{\R}_{\matK}(\f) + \frac{\Delta}{(1+\Delta)}\cdot\frac{\rank{\tilde{\matK}}}{n}\cdot\sigma^2_\nu$$
and the bound $\R(\tilde{f}) \leq \wh{\R}_{\tilde{\matK}}(\f)$ completes the
proof.
\end{proof}

In short, Lemma \ref{lem:risk-bound} bounds the risk of the approximate KRR estimator as a function of both the risk upper bound $\wh{\R}_{\matK}(\f)$ and an additive term which is small if $\rank{\tilde{\matK}}$ and/or $\Delta$ is small. In particular, it is instructive to compare the additive term $(\Delta/(1+\Delta))n^{-1}\sigma_\nu^2 \cdot \rank{\tilde{\matK}}$ to the variance term $n^{-1}\sigma_\nu^2 \cdot s_\lambda(\matK)$.
\begin{rem}
An approximation $\tilde{\matK}$ is only useful computationally if $\rank{\tilde{\matK}} \ll n$ so $\bv{\tilde K}$ gives a significantly compressed approximation to the original kernel matrix.
Ideally we should have
$\rank{\tilde{\matK}}/n \to 0$ as $n\to\infty$ and so the additive term in \eqref{estimateUpperBound} will also approach $0$ and generally be small when $n$ is large.
\end{rem}

\subsection{Random Features Preconditioning}
Suppose we choose to solve $(\matK + \lambda \matI_n)\valpha=\y$ using an iterative method (e.g. CG).
In this case, we can apply $\matZ \matZ^\conj + \lambda \matI_n$ as a preconditioner. Using standard
analysis of Krylov-subspace iterative methods it is immediate that if $\matZ\matZ^\conj + \lambda\matI_n$ is a
$\Delta$-spectral approximation of $\matK + \lambda \matI_n$ then
the number of iterations until convergence is $O(\sqrt{(1+\Delta)/(1-\Delta))})$. Thus, if $\matZ\matZ^\conj + \lambda\matI_n$ is, say,
a $1/2$-spectral approximation of $\matK + \lambda \matI_n$, then the number of iterations is bounded by a constant. The preconditioner
can be efficiently applied (after preprocessing) via the Woodbury formula, giving cost per iteration (if $s \leq n$)
of $O(n^2)$. The overall cost of computing the KRR estimator is therefore $O(ns^2 + n^2)$. Thus, as long as $s=o(n)$ this approach gives an advantage over direct methods which cost $O(n^3)$. For small $s$ it also beats non-preconditioned iterative methods cost $O(n^2 \sqrt{\kappa(\matK)})$. See Cutajar et al.~\cite{CutajarEtAl16} and Avron et al.~\cite{ACW16} for a detailed discussion. The upshot though is that we reach again the question that was poised earlier: how big should $s$ be so that $\matZ\matZ^\conj + \lambda\matI_n$ is a
$1/2$-spectral approximation of $\matK + \lambda \matI_n$?

\section{Ridge Leverage Function Sampling and Random Fourier Features}
\label{sec:ridge-and-rf}

In this section we present upper bounds on the number of random Fourier features needed to
guarantee that $\matZ\matZ^\conj + \lambda \matI_n$ is a $\Delta$-spectral approximation
to $\matK + \lambda \matI_n$. Our bounds apply to {\em any}
shift-invariant kernel
and a wide range of feature sampling distributions (in particular, classical random Fourier
features).

Our analysis is based on relating the sampling density to an appropriately defined \emph{ridge leverage function}. This function is a continuous generalization
of the popular leverage scores~\cite{MahoneyDrineas09} and ridge leverage scores~\cite{AlaouiMahoney15,CohenMuscoMusco17} used in the analysis of linear methods.
Bach~\cite{Bach15} defined the leverage function of the integral operator given by the kernel
function and the data distribution. For our purposes, a more appropriate definition
is with respect to a fixed input dataset:
\begin{defn}
For $\x_1,\dots,\x_n$ and shift-invariant kernel $k(\cdot,\cdot)$, define
the {\em ridge leverage function} as
\begin{equation*}
\tau_\lambda(\veta) \equiv p(\veta) \z(\veta)^\conj(\matK + \lambda \bv{I}_n)^{-1} \z(\veta)\,.
\end{equation*}
In the above, $\matK$ is the kernel matrix and $p(\cdot)$ is the distribution given by the inverse Fourier transform of
$k(\cdot,\cdot)$.
\end{defn}
We begin with two simple propositions. Recall that we assume $k(\x,\x) = k(\bv{0}) = 1$ for any $\bv{x}$, however our results apply to general shift invariant kernel after appropriate scaling.
\begin{prop}
\label{prop:simple-tau-bound} For all $\veta$,
$$
p(\veta)n/(n+\lambda) \leq \tau_\lambda(\veta) \leq p(\veta)n / \lambda.\
$$
\end{prop}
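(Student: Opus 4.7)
The plan is to exploit two elementary facts about $\matK$ and $\z(\veta)$ to sandwich the middle quadratic form $\z(\veta)^\conj (\matK+\lambda \matI_n)^{-1} \z(\veta)$ between $n/(n+\lambda)$ and $n/\lambda$, and then multiply through by $p(\veta)$.

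First, I would observe that the vector $\z(\veta)$ has all entries of unit modulus since $\z(\veta)_j = e^{-2\pi i \x_j^\T \veta}$, so $\TNormS{\z(\veta)} = n$. Next, I would record the spectral bracketing
\begin{equation*}
\lambda \matI_n \preceq \matK + \lambda \matI_n \preceq (n+\lambda)\matI_n.
\end{equation*}
The left inequality is immediate from $\matK \succeq 0$. The right inequality uses the normalization $k(\x,\x) = k(\bv 0) = 1$, which forces $\Trace{\matK} = n$ and hence $\lambda_{\max}(\matK) \leq n$ since $\matK$ is PSD, so $\matK \preceq n \matI_n$.

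Inverting (which reverses the semidefinite order for PSD matrices) gives
\begin{equation*}
(n+\lambda)^{-1} \matI_n \preceq (\matK + \lambda \matI_n)^{-1} \preceq \lambda^{-1} \matI_n.
\end{equation*}
Conjugating by $\z(\veta)$ and using $\TNormS{\z(\veta)} = n$, I get $n/(n+\lambda) \leq \z(\veta)^\conj (\matK + \lambda \matI_n)^{-1} \z(\veta) \leq n/\lambda$. Multiplying by $p(\veta) \geq 0$ yields the claimed bound on $\tau_\lambda(\veta)$.

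There is no real obstacle here; the only point of care is confirming $\matK \preceq n \matI_n$, which uses only $k(\bv 0) = 1$ and the PSD property, both assumed in the setup of random Fourier features in \S\ref{sec:random-fourier-features}.
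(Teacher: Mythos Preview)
Your proof is correct and follows essentially the same approach as the paper: bound the eigenvalues of $\matK+\lambda\matI_n$ between $\lambda$ and $n+\lambda$, use $\TNormS{\z(\veta)}=n$, and multiply by $p(\veta)$. The only cosmetic difference is that you bound $\lambda_{\max}(\matK)\le n$ via $\Trace{\matK}=n$ and positive semidefiniteness, whereas the paper uses the entrywise bound $|k(\x,\z)|\le 1$; both are equally valid.
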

\begin{proof}
Since $k$ is positive definite and $k(\bv{0}) = 1$, $|k(\x,\z)| \leq 1$ for all $\x$ and $\z$. This implies that
the maximum eigenvalue of $\matK$ is bounded by $n$. The lower bound follows, after noting that $\TNormS{\z(\veta)} = n$.
The upper bound follows similarly, since all eigenvalues
of $\matK + \lambda \matI_n$ are lower bounded by $\lambda$.
\end{proof}
\begin{prop}
\label{prop:simple-slambda-bound}
$$
\int_{\RR^d} \tau_\lambda(\veta) d\veta = s_\lambda(\matK).
$$
\end{prop}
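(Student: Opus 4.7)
The plan is to write the integrand as a trace, swap the trace and the integral, and then invoke the integral representation of $\matK$ stated at the end of the preliminaries.

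First I would observe that since $\tau_\lambda(\veta) = p(\veta)\z(\veta)^\conj(\matK+\lambda\matI_n)^{-1}\z(\veta)$ is a scalar, it equals its own trace, so by the cyclic property of the trace,
\[
\tau_\lambda(\veta) = \Trace{(\matK+\lambda\matI_n)^{-1}\, p(\veta)\z(\veta)\z(\veta)^\conj}.
\]
Next, using linearity of the trace together with the fact that the integrand is pointwise bounded (for example via Proposition~\ref{prop:simple-tau-bound}, which ensures integrability against $p$), I would interchange the integral and the trace to obtain
\[
\int_{\RR^d} \tau_\lambda(\veta)\, d\veta \;=\; \Trace{(\matK+\lambda\matI_n)^{-1}\!\int_{\RR^d} \z(\veta)\z(\veta)^\conj\, p(\veta)\, d\veta}.
\]

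Then I would plug in the identity already derived in \S2.2.3, namely
\[
\matK \;=\; \int_{\RR^d} \z(\veta)\z(\veta)^\conj\, p(\veta)\, d\veta,
\]
which is itself just a restatement of Bochner's theorem applied entrywise to $\matK$. This immediately yields
\[
\int_{\RR^d}\tau_\lambda(\veta)\,d\veta \;=\; \Trace{(\matK+\lambda\matI_n)^{-1}\matK} \;=\; s_\lambda(\matK),
\]
which is exactly the definition of the statistical dimension given earlier.

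There is no real obstacle here: the only step requiring a moment of care is justifying the swap of the integral and the trace (equivalently, of the integral and the finite-dimensional inner product), but this is immediate since the integrand is a nonnegative continuous function whose integral is finite (Proposition~\ref{prop:simple-tau-bound} gives the uniform bound $\tau_\lambda(\veta)\le p(\veta)n/\lambda$, whose integral is $n/\lambda < \infty$), so Fubini's theorem applies entrywise.
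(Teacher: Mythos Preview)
Your proof is correct and follows essentially the same route as the paper: rewrite the scalar as a trace via cyclicity, interchange trace and integral, and substitute the Bochner identity $\matK = \int \z(\veta)\z(\veta)^\conj p(\veta)\,d\veta$. The only difference is that you explicitly justify the interchange via the bound from Proposition~\ref{prop:simple-tau-bound}, which the paper leaves implicit.
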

\begin{proof}
\begin{eqnarray*}
\int_{\RR^d} \tau_\lambda(\veta) d\veta & = & \int_{\RR^d} p(\veta) \z(\veta)^\conj(\matK + \lambda \matI_n)^{-1} \z(\veta) d\veta \\
& = & \int_{\RR^d}  \Trace{p(\veta)(\matK + \lambda \matI_n)^{-1} \z(\veta) \z(\veta)^\conj} d\veta \\
& = & \Trace{ \int_{\RR^d} p(\veta)(\matK + \lambda \matI_n)^{-1} \z(\veta) \z(\veta)^\conj d\veta } \\
& = & \Trace{ (\matK + \lambda \matI_n)^{-1} \int_{\RR^d} p(\veta) \z(\veta) \z(\veta)^\conj d\veta } \\
& = & \Trace{ (\matK + \lambda \matI_n)^{-1} \matK } = s_\lambda(\matK)\,.
\end{eqnarray*}
The second and third equalities follow from the cyclic property and linearity of the trace respectively.
\end{proof}

Recall that we denote the ratio $n/\lambda$, which appears frequently in our
analysis, by $n_\lambda = n / \lambda$. As discussed, theoretical bounds
generally set $\lambda = \omega(1)$ (as a function of $n$) so $n_\lambda =
o(n)$. However we remark that in practice, it may sometimes be the case that
$\lambda$ is very small and $n_\lambda \gg n$.

An immediate result of Propositions \ref{prop:simple-tau-bound} and \ref{prop:simple-slambda-bound} (which can also be obtained algebraically from $\matK$) is a generic bound on statistical dimension:
\begin{cor}
For any $\matK$, $\s_\lambda(\matK) \leq n_\lambda$.
\end{cor}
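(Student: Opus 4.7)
The plan is to combine the two preceding propositions directly. Proposition~\ref{prop:simple-slambda-bound} expresses $s_\lambda(\matK)$ as the integral of the ridge leverage function, namely $s_\lambda(\matK) = \int_{\RR^d} \tau_\lambda(\veta)\, d\veta$, so all that remains is to bound this integral. The upper half of Proposition~\ref{prop:simple-tau-bound} gives exactly the pointwise estimate we need: $\tau_\lambda(\veta) \leq p(\veta)\, n/\lambda$ for every $\veta \in \RR^d$.

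Substituting the pointwise bound into the integral and pulling the constant $n/\lambda$ out yields
\begin{equation*}
s_\lambda(\matK) \;=\; \int_{\RR^d} \tau_\lambda(\veta)\, d\veta \;\leq\; \frac{n}{\lambda} \int_{\RR^d} p(\veta)\, d\veta \;=\; \frac{n}{\lambda} \;=\; n_\lambda,
\end{equation*}
where in the penultimate step we used that $p(\cdot)$ is a probability density, so it integrates to $1$ (a property guaranteed by Bochner's theorem together with the normalization $k(\bv{0})=1$).

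As the parenthetical in the statement suggests, one can alternatively prove the bound purely algebraically without going through the leverage function: diagonalize $\matK$ and observe that each term $\lambda_i(\matK)/(\lambda_i(\matK)+\lambda)$ in the trace expansion of $s_\lambda(\matK)$ is at most $\lambda_i(\matK)/\lambda$, so $s_\lambda(\matK) \leq \Trace{\matK}/\lambda = n/\lambda$, using that the diagonal entries of $\matK$ are all equal to $k(\bv{0})=1$. Either route is essentially a one-line calculation, and there is no real obstacle; the content of the corollary is simply to record the consequence of the two propositions in a convenient form. The only thing worth being careful about is that the normalization $k(\bv{0})=1$ (assumed throughout this section) is what makes both $\int p = 1$ in the first proof and $\Trace{\matK}=n$ in the second proof hold.
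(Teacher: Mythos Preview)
Your proof is correct and matches the paper's approach exactly: the paper states this corollary as an immediate consequence of Propositions~\ref{prop:simple-tau-bound} and~\ref{prop:simple-slambda-bound} (with the parenthetical remark that it can also be obtained algebraically from $\matK$), and you have filled in precisely those two lines, including the algebraic alternative.
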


For any shift-invariant kernel with $k(\x,\x)=1$ and $k(\x,\z)\to 0$ as $\TNorm{\x -\z}\to\infty$ (e.g., the Gaussian kernel) if we allow points to be arbitrarily spread out, the kernel matrix converges to the identity matrix, and $s_\lambda(\matI_n) = n/(1+\lambda) = \Omega(n_\lambda)$ if $\lambda=\Omega(1)$ so the above bound is tight.
However, this requires datasets of increasingly large diameter (as $n$ grows). In contrast, the usual assumption
in statistical learning is that the data is sampled from a bounded domain $\X$.
In~\S\ref{sec:stat-bound} we show via a leverage function upper bound that for the important Gaussian kernel, for bounded datasets we have $s_\lambda(\matK) = o(n_\lambda)$.

In the matrix sketching literature it is well known that spectral approximation bounds similar to~\eqref{eq:approx-bound}
can be constructed by sampling columns relative to upper bounds on the leverage
scores. In the following, we generalize this for the case of sampling Fourier
features from a continuous domain. First, we need an auxiliary lemma.

\begin{lem}
	\label{lem:tropp-correct}
	Let $\matB$ be a fixed $d_1 \times d_2$ matrix. Construct a $d_1 \times d_2$ random matrix $\matR$ that satisfies
	$$\Expect{\matR} = \matB~~~~\textrm{and}~~~~\TNorm{\matR} \leq L.$$
	Let $\matM_1$ and $\matM_2$ be semidefinite upper bounds for the expected squares:
	$$\Expect{\matR \matR^\conj} \preceq \matM_1~~~~\textrm{and}~~~~\Expect{\matR^\conj \matR} \preceq \matM_2.$$
	Define the quantities
	$$
	m = \max(\TNorm{\matM_1}, \TNorm{\matM_2})~~~~\textrm{and}~~~~d = (\Trace{\matM_1} + \Trace{\matM_2}) / m.
	$$
	Form the matrix sampling estimator
	$$
	\bar{\matR}_n = \frac{1}{n}\sum_{k=1}^{n} \matR_k
	$$
	where each $\matR_k$ is an independent copy of $\matR$.
	Then, for all $t \geq \sqrt{m/n} + 2L/3n$,
	\begin{equation}
	\Pr(\TNorm{\bar{\matR}_n - \matB} \geq t) \leq 4d\exp\left( \frac{-nt^2/2}{m + 2Lt/3} \right).\nonumber
	\end{equation}
	
\end{lem}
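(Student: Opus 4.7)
The plan is to recognize this as a rescaled application of the \emph{intrinsic dimension} (effective rank) version of the matrix Bernstein inequality, in the form due to Tropp (Theorem 7.3.1 of his \emph{User-Friendly Tools} monograph). I would center the samples and compute the parameters needed to invoke that inequality as a black box.

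First I would define the centered summands $\matX_k = \frac{1}{n}(\matR_k - \matB)$ so that $\Expect{\matX_k}=0$ and $\bar{\matR}_n - \matB = \sum_{k=1}^n \matX_k$. Since $\|\matB\|_2 = \|\Expect{\matR}\|_2 \le \Expect{\|\matR\|_2} \le L$ by Jensen's inequality on the operator norm, each summand satisfies the almost-sure bound $\|\matX_k\|_2 \le \frac{1}{n}(\|\matR_k\|_2 + \|\matB\|_2) \le \frac{2L}{n}$, which gives the "uniform bound" parameter $L' = 2L/n$ for matrix Bernstein.

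Next I would compute the variance proxies. Expanding the centered second moment and using $\Expect{\matR}\Expect{\matR}^\conj \succeq 0$ gives
\begin{equation*}
\sum_{k=1}^n \Expect{\matX_k\matX_k^\conj} = \tfrac{1}{n}\bigl(\Expect{\matR\matR^\conj} - \matB\matB^\conj\bigr) \preceq \tfrac{1}{n}\matM_1,
\end{equation*}
and symmetrically $\sum_k \Expect{\matX_k^\conj\matX_k} \preceq \frac{1}{n}\matM_2$. Thus $\matV_1 \eqdef \matM_1/n$ and $\matV_2 \eqdef \matM_2/n$ are semidefinite upper bounds for the matrix variance of the sum, with matrix variance statistic $v(\sum \matX_k) \le \max(\|\matV_1\|_2,\|\matV_2\|_2) = m/n$. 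The intrinsic dimension quantity is the same for a positive scalar rescaling, so
\begin{equation*}
\operatorname{intdim}(\matV_1) + \operatorname{intdim}(\matV_2) = \frac{\Trace{\matM_1}+\Trace{\matM_2}}{m} = d.
\end{equation*}

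Finally I would plug $v = m/n$, $L' = 2L/n$, and intrinsic dimension $d$ into Tropp's intrinsic-dimension Bernstein inequality, which yields, for all $t \ge \sqrt{v} + L'/3 = \sqrt{m/n} + 2L/(3n)$,
\begin{equation*}
\Pr\!\left(\Bigl\|\sum_k \matX_k\Bigr\|_2 \ge t\right) \le 4d\cdot \exp\!\left(\frac{-t^2/2}{v + L't/3}\right) = 4d\cdot\exp\!\left(\frac{-nt^2/2}{m + 2Lt/3}\right),
\end{equation*}
which is exactly the claimed tail bound. The main subtlety, and the only place one could slip up, is the use of Jensen's inequality to bound $\|\matB\|_2$ by $L$ (without which the per-summand bound would only be $L/n + \|\matB\|_2/n$), and keeping the constant $4$ and the threshold $\sqrt{m/n} + 2L/(3n)$ consistent with Tropp's statement; everything else is bookkeeping.
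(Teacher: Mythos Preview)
Your proposal is correct and follows essentially the same route as the paper: center the summands, bound $\|\matX_k\|_2\le 2L/n$ via the triangle inequality and Jensen, show $\sum_k\Expect{\matX_k\matX_k^\conj}\preceq \matM_1/n$ (and symmetrically for $\matM_2$), and then invoke Tropp's intrinsic-dimension matrix Bernstein (Theorem~7.3.1). The only cosmetic slip is writing ``$\operatorname{intdim}(\matV_1)+\operatorname{intdim}(\matV_2)=d$''; the quantity that actually appears in Tropp's theorem (and equals $d$) is $(\Trace{\matV_1}+\Trace{\matV_2})/\max(\TNorm{\matV_1},\TNorm{\matV_2})$, not the sum of the two separate intrinsic dimensions.
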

The proof of Lemma~\ref{lem:tropp-correct}, which is essentially a restatement of Corollary 7.3.3 from~\cite{Tropp15} with slightly improved requirements, appears in appendix \ref{sec:matrix-concentration}
\begin{lem}
\label{lem:leverage-sampling}
Let $\tilde{\tau}:\RR^d \to \RR$ be a measurable function such that $\tilde{\tau}(\veta) \geq \tau_\lambda(\veta)$ for all $\veta \in \RR^d$,
 and furthermore assume that
$$
s_{\tilde{\tau}} \equiv \int_{\RR^d} \tilde{\tau}(\veta) d\veta
$$
is finite. Denote $p_{\tilde{\tau}}(\veta) = \tilde{\tau}(\veta) / s_{\tilde{\tau}}$. Let $\Delta \leq 1/2$ and $\rho\in(0,1)$.
Assume that $\|\matK\|_2\geq \lambda$.
Suppose we take
$s \geq \frac{8}{3}\Delta^{-2}s_{\tilde{\tau}}\ln(16s_\lambda(\matK)/\rho)$ samples $\veta_1,\dots,\veta_s$ from
the distribution associated with the density $p_{\tilde{\tau}}(\cdot)$ and then construct the matrix $\matZ$ according to~\eqref{eq:Z-def} with $q=p_{\tilde{\tau}}$.
Then $\matZ\matZ^\conj + \lambda \matI_n$ is $\Delta$-spectral approximation of $\matK+\lambda\matI_n$ with probability of at least $1-\rho$.
\end{lem}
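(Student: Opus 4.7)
The plan is to reduce the two-sided spectral bound to the single operator-norm inequality
\[
\bigl\|(\matK+\lambda\matI_n)^{-1/2}(\matZ\matZ^\conj-\matK)(\matK+\lambda\matI_n)^{-1/2}\bigr\|_2 \leq \Delta,
\]
which is equivalent to the desired $\Delta$-spectral approximation after adding $\lambda\matI_n$ on both sides. I would then apply the matrix Bernstein-type bound in Lemma~\ref{lem:tropp-correct} to the empirical average giving $\matZ\matZ^\conj$.

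Concretely, for each sample $\veta_l \sim q \equiv p_{\tilde\tau}$ define the Hermitian random matrix
\[
\matR_l \;=\; \frac{p(\veta_l)}{q(\veta_l)}\,(\matK+\lambda\matI_n)^{-1/2}\z(\veta_l)\z(\veta_l)^\conj(\matK+\lambda\matI_n)^{-1/2},
\]
so that $(\matK+\lambda\matI_n)^{-1/2}\matZ\matZ^\conj(\matK+\lambda\matI_n)^{-1/2}=\frac{1}{s}\sum_{l=1}^s\matR_l$. From the identity $\Expect{\matZ\matZ^\conj}=\matK$ in the preliminaries we obtain $\Expect{\matR_l}=\matB$ with $\matB \equiv (\matK+\lambda\matI_n)^{-1/2}\matK(\matK+\lambda\matI_n)^{-1/2}$, which is exactly the matrix we wish to approximate.

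The three quantities needed by Lemma~\ref{lem:tropp-correct} come out cleanly from the definition of the ridge leverage function. For the uniform norm bound, $\matR_l$ is rank one with spectral norm
\[
\TNorm{\matR_l}=\frac{p(\veta_l)}{q(\veta_l)}\z(\veta_l)^\conj(\matK+\lambda\matI_n)^{-1}\z(\veta_l)=\frac{\tau_\lambda(\veta_l)}{q(\veta_l)}=\frac{\tau_\lambda(\veta_l)\,s_{\tilde\tau}}{\tilde\tau(\veta_l)}\leq s_{\tilde\tau},
\]
since $\tilde\tau\geq\tau_\lambda$ pointwise, so $L=s_{\tilde\tau}$. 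For the variance, using $\z(\veta)^\conj(\matK+\lambda\matI_n)^{-1}\z(\veta)=\tau_\lambda(\veta)/p(\veta)$ in the middle of $\matR_l^2$, and pulling the scalar $\tau_\lambda(\veta)/q(\veta)\leq s_{\tilde\tau}$ out, gives
\[
\Expect{\matR_l^2}\;\preceq\; s_{\tilde\tau}\,\Expect{\tfrac{p(\veta_l)}{q(\veta_l)}(\matK+\lambda\matI_n)^{-1/2}\z(\veta_l)\z(\veta_l)^\conj(\matK+\lambda\matI_n)^{-1/2}}=s_{\tilde\tau}\,\matB.
\]
Since $\matB\preceq\matI_n$, this yields $\TNorm{\Expect{\matR_l^2}}\leq s_{\tilde\tau}$, and by Proposition~\ref{prop:simple-slambda-bound} its trace is at most $s_{\tilde\tau}\,s_\lambda(\matK)$. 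Hermiticity makes $\matM_1=\matM_2=s_{\tilde\tau}\matB$, so the parameters in Lemma~\ref{lem:tropp-correct} are $m=s_{\tilde\tau}$ and $d\leq 2s_\lambda(\matK)$.

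Plugging $t=\Delta$ into Lemma~\ref{lem:tropp-correct} and using $\Delta\leq 1/2$ to absorb the $2Lt/3$ term into a constant factor on $m$, the tail bound becomes
\[
8\,s_\lambda(\matK)\,\exp\!\left(-\tfrac{3\,s\,\Delta^2}{8\,s_{\tilde\tau}}\right),
\]
which is at most $\rho$ by our choice of $s$. The mild condition $t\geq\sqrt{m/s}+2L/(3s)$ from the lemma is verified using the assumption $\TNorm{\matK}\geq\lambda$ (which guarantees $s_\lambda(\matK)\geq 1/2$ and hence that the log factor in $s$ is bounded below). The main obstacle is really just step four, the variance estimate: one has to be careful to route the scalar $\tau_\lambda(\veta_l)/q(\veta_l)$ through the $\matR_l^2$ expansion so that the leftover expectation collapses back to $s_{\tilde\tau}\matB$ rather than to something involving $\Expect{(p/q)^2\z\z^\conj\z\z^\conj}$; once that is done the rest is mechanical application of matrix Bernstein.
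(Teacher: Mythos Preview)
Your proposal is correct and follows essentially the same route as the paper: the paper also whitens by $(\matK+\lambda\matI_n)^{-1/2}$ (written via the eigendecomposition $\matV^\T\matSigma^2\matV$), uses the rank-one identity $\TNorm{\matR_l}=\tau_\lambda(\veta_l)/q(\veta_l)\leq s_{\tilde\tau}$, extracts that same scalar to obtain $\matR_l^2\preceq s_{\tilde\tau}\matR_l$ and hence $\Expect{\matR_l^2}\preceq s_{\tilde\tau}\matB$, and finishes with the intrinsic-dimension Bernstein bound of Lemma~\ref{lem:tropp-correct}. The only slip is that with $\matM_1=\matM_2=s_{\tilde\tau}\matB$ and $\TNorm{\matB}=\lambda_1/(\lambda_1+\lambda)\geq 1/2$ one gets $d\leq 4s_\lambda(\matK)$ rather than $2s_\lambda(\matK)$, giving the prefactor $16s_\lambda(\matK)$ (not $8$); this is exactly the constant appearing in the sample bound, so your conclusion is unaffected.
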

\begin{proof}
\if0

Let $\matK + \lambda \matI_n = \matV^\T \matSigma^2 \matV$ be an eigendecomposition of $\matK+\lambda \matI_n$.
Note that the $\Delta$-spectral approximation guarantee \eqref{eq:approx-bound-intro} is equivalent to
$$
\matK - \Delta (\matK + \lambda \matI_n) \preceq \matZ \matZ^\conj \preceq \matK + \Delta (\matK + \lambda \matI_n)\,,
$$
so by multiplying by $\matSigma^{-1} \matV$ on the left and $\matV^\T \matSigma^{-1}$ on the right we find that it
suffices to show that
\begin{equation}
\label{eq:norm-bound-over}
\TNorm{ \matSigma^{-1} \matV \matZ \matZ^\conj \matV^\T \matSigma^{-1}- \matSigma^{-1} \matV \matK \matV^\T \matSigma^{-1} } \leq \Delta
\end{equation}
holds with probability of at least $1-\rho$.
Let
$$
\matY_l = \frac{1}{s} \cdot \left (\frac{p(\veta_l)}{p_{\tilde{\tau}}(\veta_l)} \matSigma^{-1} \matV \z(\veta_l) \z(\veta_l)^\conj \matV^\T \matSigma^{-1} - \matSigma^{-1} \matV \matK \matV^\T \matSigma^{-1} \right ) \,.
$$
Note that $\Expect{\matY_l} = \bv{0}$ and $\sum^s_{l=1}\matY_l = \matSigma^{-1} \matV \matZ \matZ^\conj \matV^\T \matSigma^{-1}- \matSigma^{-1} \matV \matK \matV^\T \matSigma^{-1}$.
Thus, we can use matrix concentration results to prove~\eqref{eq:norm-bound-over}. Specifically, we use Theorem 7.3.1 from~\cite{Tropp15}.

To apply this bound we need to bound the norm of $\matY_l$ and the stable rank of $\Expect{\matY^2_l}$. We first consider the matrix $\bv{Y}_l' = \bv{Y}_l + \frac{1}{s} \matSigma^{-1} \matV \matK \matV^\T \matSigma^{-1} = \frac{1}{s} \cdot \frac{p(\veta_l)}{p_{\tilde{\tau}}(\veta_l)} \matSigma^{-1} \matV \z(\veta_l) \z(\veta_l)^\conj \matV^\T \matSigma^{-1}$. This is a rank-$1$ matrix and so we have:
\begin{eqnarray}\label{firstNormBound}
\TNorm{\matY_l'} & = & \frac{p(\veta_l)}{s\cdot p_{\tilde{\tau}}(\veta_l)}\Trace{\matSigma^{-1} \matV \z(\veta_l) \z(\veta_l)^\conj \matV^\T \matSigma^{-1}} \nonumber\\
  & = & \frac{p(\veta_l)}{s\cdot p_{\tilde{\tau}}(\veta_l)} \z(\veta_l)^\conj \matV^\T \matSigma^{-1} \matSigma^{-1} \matV \z(\veta_l) \nonumber\\
  & = & \frac{p(\veta_l)}{s\cdot p_{\tilde{\tau}}(\veta_l)} \z(\veta_l)^\conj (\matK + \lambda \matI_n)^{-1} \z(\veta_l) \nonumber\\
  & = & \frac{s_{\tilde{\tau}}\cdot \tau(\veta_l)}{s \cdot \tilde{\tau}(\veta_l)} \leq \frac{s_{\tilde{\tau}}}{s} \le \frac{\Delta^2}{4 \ln (16 s_{\tilde \tau}/\rho)}
\end{eqnarray}
since $\tilde{\tau}(\veta_l) \ge \tau(\veta_l)$ and by our setting of $s \ge 4\Delta^{-2} s_{\tilde \tau} \ln(16 s_{\tilde \tau}/\rho)$.
We also have
\begin{align}\label{secondNormBound}
\frac{1}{s} \norm{ \matSigma^{-1} \matV \matK \matV^\T \matSigma^{-1}}_2 &\le \frac{\Delta^2}{4 \ln (16 s_{\tilde \tau}/\rho)} \cdot \frac{1}{s_{\tilde \tau}} \cdot  \Trace{\matSigma^{-1} \matV \matK \matV^\T \matSigma^{-1}}\nonumber\\
&\le \frac{\Delta^2}{4 \ln (16 s_{\tilde \tau}/\rho)} \cdot \frac{1}{s_{\lambda}} \cdot  \Trace{\bv{K} (\bv{K}+\lambda \bv{I}_n)^{-1}}\nonumber\\
&\le \frac{\Delta^2}{4 \ln (16 s_{\tilde \tau}/\rho)}
\end{align}
by the fact that $s_{\tilde \tau} \ge s_\lambda(\matK)$ and $s_\lambda(\matK) = \Trace{\bv{K} (\bv{K}+\lambda \bv{I}_n)^{-1}}$ by Proposition \ref{prop:simple-slambda-bound}. Combining \eqref{firstNormBound} and \eqref{secondNormBound} we have $\TNorm{\matY_l} \le \frac{\Delta^2}{2 \ln (16 s_{\tilde \tau}/\rho)}$. We next consider the variance
\begin{align*}
\Expect{\matY^2_l } = \frac{1}{s^2} &\int p_{\tilde{\tau}}(\veta_l) \big (\frac{p(\veta_l)^2}{p_{\tilde{\tau}}(\veta_l)^2}\matSigma^{-1} \matV \z(\veta_l) \z(\veta_l)^\conj \matV^\T \matSigma^{-1} \matSigma^{-1} \matV \z(\veta_l) \z(\veta_l)^\conj \matV^\T \matSigma^{-1}\\
&- 2 \frac{p(\veta_l)}{p_{\tilde{\tau}}(\veta_l)} \matSigma^{-1} \matV \z(\veta_l) \z(\veta_l)^\conj \matV^\T \matSigma^{-1} \matSigma^{-1} \matV \matK \matV^\T \matSigma^{-1}\\
& + \matSigma^{-1} \matV \matK \matV^\T \matSigma^{-1} \matSigma^{-1} \matV \matK \matV^\T \matSigma^{-1} \big ) d \veta_l\\
= \frac{1}{s^2}  &\matSigma^{-1} \matV \matK \matV^\T \matSigma^{-1} \matSigma^{-1} \matV \matK \matV^\T \matSigma^{-1}\\
 +\frac{1}{s^2} &\int \frac{p(\veta_l)\cdot \tau_\lambda(\veta_l)}{p_{\tilde{\tau}}(\veta_l)}\matSigma^{-1} \matV \z(\veta_l) \z(\veta_l)^\conj \matV^\T \matSigma^{-1} - 2 p(\veta_l)\matSigma^{-1} \matV \z(\veta_l) \z(\veta_l)^\conj \matV^\T \matSigma^{-1} \matSigma^{-1} \matV \matK \matV^\T \matSigma^{-1}d \veta_l\\
 &\preceq \frac{s_{\tilde \tau}}{s^2} \matSigma^{-1} \matV \matK \matV^\T \matSigma^{-1} -  \frac{1}{s^2} \matSigma^{-1} \matV \matK \matV^\T \matSigma^{-1} \matSigma^{-1} \matV \matK \matV^\T \matSigma^{-1}\\
 &\preceq \frac{1}{s} \cdot \frac{\Delta^2}{4\ln (16 s_{\tilde \tau}/\rho)} \matSigma^{-1} \matV \matK \matV^\T \matSigma^{-1}
\end{align*}
where the last step follows from the setting of $s$ and the fact that $\matSigma^{-1} \matV \matK \matV^\T \matSigma^{-1} \preceq \bv{I}_n$. By linearity of expectation we have
\begin{align*}
\Expect{\sum_{l=1}^s \bv{Y}_l^2} \preceq \frac{\Delta^2}{4 \ln (16 s_{\tilde \tau}/\rho)} \matSigma^{-1} \matV \matK \matV^\T \matSigma^{-1} = \frac{\Delta^2}{4 \ln (16 s_{\tilde \tau}/\rho)} \bv{D}
\end{align*}
where letting $\lambda_1\geq \dots\geq\lambda_n$ be the eigenvalues of $\matK$,  $\bv{D} \eqdef \diag{\lambda_1/(\lambda_1 + \lambda),\dots,\lambda_n/(\lambda_n + \lambda)} $.

Applying Theorem 7.3.1 of \cite{Tropp15}, in the notation of that theorem we have $\TNorm{\matY_l} \le L = \frac{\Delta^2}{2 \ln (16 s_{\tilde \tau}/\rho)}$, $v = \frac{\Delta^2}{4 \ln (16 s_{\tilde \tau}/\rho)} \cdot \norm{\bv{D}}_2 \le \frac{\Delta^2}{4 \ln (16 s_{\tilde \tau}/\rho)}$ and $d = 2\Trace{\bv{D}}/\norm{\bv{D}}_2$. We thus have $\Delta \ge \sqrt{v} + L/3$ and so can apply the bound to show:
\begin{align*}
\Pr\left(\LTNorm{\sum_{l=1}^s \bv{Y}_l}  \geq \Delta \right) & \leq
\frac{8\Trace{\matD}}{\TNorm{\matD}}\exp\left(\frac{-\Delta^2/2}{\frac{\Delta^2}{4 \ln(16 s_{\tilde \tau}/\rho)}(1+2\Delta/3)}\right) \\
&\le 8 s_\lambda(\matK) \exp \left (\ln(16 s_{\tilde \tau}/\rho) \right )\\
& \le \rho.
\end{align*}
\fi

Let $\matK + \lambda \matI_n = \matV^\T \matSigma^2 \matV$ be an eigendecomposition of $\matK+\lambda \matI_n$.
Note that the $\Delta$-spectral approximation guarantee \eqref{eq:approx-bound-intro} is equivalent to
$$
\matK - \Delta (\matK + \lambda \matI_n) \preceq \matZ \matZ^\conj \preceq \matK + \Delta (\matK + \lambda \matI_n)\,,
$$
so by multiplying by $\matSigma^{-1} \matV$ on the left and $\matV^\T \matSigma^{-1}$ on the right we find that it
suffices to show that
\begin{equation}
\label{eq:norm-bound-over}
\TNorm{ \matSigma^{-1} \matV \matZ \matZ^\conj \matV^\T \matSigma^{-1}- \matSigma^{-1} \matV \matK \matV^\T \matSigma^{-1} } \leq \Delta
\end{equation}
holds with probability of at least $1-\rho$.
Let
$$
\matY_l = \frac{p(\veta_l)}{p_{\tilde{\tau}}(\veta_l)} \matSigma^{-1} \matV \z(\veta_l) \z(\veta_l)^\conj \matV^\T \matSigma^{-1}\,.
$$
Note that $\Expect{\matY_l} = \matSigma^{-1} \matV \matK \matV^\T \matSigma^{-1}$ and $\frac{1}{s}\sum^s_{l=1}\matY_l = \matSigma^{-1} \matV \matZ \matZ^\conj \matV^\T \matSigma^{-1}$.
Thus, we can use matrix concentration results to prove~\eqref{eq:norm-bound-over}.

To apply this bound we need to bound the norm of $\matY_l$ and the stable rank $\Expect{\matY^2_l}$. Since $\matY_l$ is always a rank one matrix we have
\begin{eqnarray*}
\TNorm{\matY_l} & = & \frac{p(\veta_l)}{p_{\tilde{\tau}}(\veta_l)}\Trace{\matSigma^{-1} \matV \z(\veta_l) \z(\veta_l)^\conj \matV^\T \matSigma^{-1}} \\
  & = & \frac{p(\veta_l)}{p_{\tilde{\tau}}(\veta_l)} \z(\veta_l)^\conj \matV^\T \matSigma^{-1} \matSigma^{-1} \matV \z(\veta_l) \\
  & = & \frac{p(\veta_l)}{p_{\tilde{\tau}}(\veta_l)} \z(\veta_l)^\conj (\matK + \lambda \matI_n)^{-1} \z(\veta_l) \\
  & = & \frac{s_{\tilde{\tau}}\cdot \tau_\lambda(\veta_l)}{\tilde{\tau}(\veta_l)} \leq s_{\tilde{\tau}}
\end{eqnarray*}
since $\tilde{\tau}_\lambda(\veta_l) \ge \tau(\veta_l)$ by assumption of the lemma.
We also have
\begin{eqnarray*}
  \matY^2_l & = & \frac{p(\veta_l)^2}{p_{\tilde{\tau}}(\veta_l)^2}\matSigma^{-1} \matV \z(\veta_l) \z(\veta_l)^\conj \matV^\T \matSigma^{-1} \matSigma^{-1} \matV \z(\veta_l) \z(\veta_l)^\conj \matV^\T \matSigma^{-1} \\
& = &  \frac{p(\veta_l)^2}{p_{\tilde{\tau}}(\veta_l)^2}\matSigma^{-1} \matV \z(\veta_l) \z(\veta_l)^\conj (\matK + \lambda\matI_n)^{-1} \z(\veta) \z(\veta_l)^\conj \matV^\T \matSigma^{-1} \\
& = &  \frac{p(\veta_l)\tau(\veta_l)}{p_{\tilde{\tau}}(\veta_l)^2}\matSigma^{-1} \matV \z(\veta_l) \z(\veta_l)^\conj \matV^\T \matSigma^{-1} \\
& = &  \frac{\tau(\veta_l)}{p_{\tilde{\tau}}(\veta_l)}\matY_l \\
& = & \frac{s_{\tilde{\tau}}\tau(\veta_l)}{\tilde{\tau}(\veta_l)}\matY_l \preceq s_{\tilde{\tau}} \matY_l.
\end{eqnarray*}
Let $\lambda_1\geq \dots\geq\lambda_n$ be the eigenvalues of $\matK$. We have
\begin{eqnarray*}
\Expect{s_{\tilde{\tau}} \matY_l} & = & s_{\tilde{\tau}} \matSigma^{-1} \matV \matK \matV^\T \matSigma^{-1} \\
  & = & s_{\tilde{\tau}} \left(\matI_n - \lambda \matSigma^{-2} \right) \\
& = & s_{\tilde{\tau}}\cdot \diag{\lambda_1/(\lambda_1 + \lambda),\dots,\lambda_n/(\lambda_n + \lambda)} := \matD\,.
\end{eqnarray*}
So,
\begin{eqnarray*}
\Pr\left(\LTNorm{\frac{1}{s}\sum^s_{l=1}\matY_l - \matSigma^{-1} \matV \matK \matV^\T \matSigma^{-1}}  \geq \Delta \right) & \leq &
\frac{8\Trace{\matD}}{\TNorm{\matD}}\exp\left(\frac{-s\Delta^2/2}{\TNorm{\matD} + 2s_{\tilde{\tau}} \Delta /3}\right) \\
& \leq & 8\frac{s_{\tilde{\tau}} \cdot s_\lambda(\matK)}{\lambda_1/(\lambda_1 + \lambda)} \exp\left(\frac{-s\Delta^2}{2s_{\tilde{\tau}}(1 + 2\Delta /3)}\right) \\
& \leq & 16s_\lambda(\matK) \exp\left(\frac{-s\Delta^2}{2s_{\tilde{\tau}}(1 + 2\Delta /3)}\right) \\
& \leq & 16s_\lambda(\matK) \exp\left(\frac{-3s\Delta^2}{8s_{\tilde{\tau}}}\right) \leq \rho
\end{eqnarray*}
where the third inequality is due to the assumption that $\lambda_1 = \|\matK\|_2\geq \lambda$ and the last inequality is due to the bound on $s$.
\end{proof}

Lemma \ref{lem:leverage-sampling} shows that if we could sample using the ridge leverage function, then $O(s_\lambda(\matK) \log(s_\lambda(\matK)))$ samples suffice for spectral approximation of $\matK$
(for a fixed $\Delta$ and failure probability). While there is no straightforward way to perform this sampling, we can consider how well the classic random Fourier features sampling distribution approximates the leverage function, obtaining a bound on its performance:
%
\begin{thm}
\label{thm:rf-bound}
Let $\Delta \leq 1/2$ and $\rho\in(0,1)$. Assume that $\|\matK\|_2\geq \lambda$.
If we use $s\geq \frac{8}{3}\Delta^{-2}n_\lambda\ln(16s_\lambda(\matK)/\rho)$ random Fourier features (i.e., sampled according to $p(\cdot)$),
then $\matZ\matZ^\conj + \lambda \matI_n$ is $\Delta$-spectral approximation of $\matK+\lambda\matI_n$ with probability of at least $1-\rho$.
\end{thm}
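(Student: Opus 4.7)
My plan is to obtain Theorem \ref{thm:rf-bound} as a direct corollary of Lemma \ref{lem:leverage-sampling} by exhibiting a simple, universal upper bound on the ridge leverage function $\tau_\lambda(\cdot)$ whose associated sampling distribution coincides exactly with the classical random Fourier features distribution $p(\cdot)$.

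The key observation is the upper bound from Proposition \ref{prop:simple-tau-bound}, namely $\tau_\lambda(\veta) \leq p(\veta)\,n/\lambda = p(\veta)\,n_\lambda$ for every $\veta \in \RR^d$. So I would define
\[
\tilde{\tau}(\veta) \equiv p(\veta)\,n_\lambda,
\]
which is measurable (since $p$ is) and pointwise dominates $\tau_\lambda$. Then
\[
s_{\tilde{\tau}} = \int_{\RR^d} \tilde{\tau}(\veta)\,d\veta = n_\lambda \int_{\RR^d} p(\veta)\,d\veta = n_\lambda,
\]
which is finite, and the normalized density is $p_{\tilde{\tau}}(\veta) = \tilde{\tau}(\veta)/s_{\tilde{\tau}} = p(\veta)$. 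In other words, sampling from the distribution associated with $p_{\tilde{\tau}}$ is exactly sampling classical random Fourier features, and the matrix $\matZ$ constructed via \eqref{eq:Z-def} with $q = p_{\tilde{\tau}} = p$ has the correct form (the importance weights $p(\veta_l)/q(\veta_l)$ all equal $1$, matching the classical construction).

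Now I would simply invoke Lemma \ref{lem:leverage-sampling}. Its hypothesis requires $\Delta \leq 1/2$, $\rho \in (0,1)$, and $\|\matK\|_2 \geq \lambda$, all of which are assumed in the theorem. Plugging in $s_{\tilde{\tau}} = n_\lambda$, the sample size requirement of the lemma becomes
\[
s \geq \tfrac{8}{3}\Delta^{-2} n_\lambda \ln(16 s_\lambda(\matK)/\rho),
\]
which matches the theorem's hypothesis exactly. The conclusion of Lemma \ref{lem:leverage-sampling} is precisely that $\matZ\matZ^\conj + \lambda \matI_n$ is a $\Delta$-spectral approximation of $\matK + \lambda \matI_n$ with probability at least $1-\rho$, finishing the argument.

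Essentially no obstacle remains once Lemma \ref{lem:leverage-sampling} is in hand; the only substantive step is recognizing that the crude uniform bound $\tau_\lambda \leq p\cdot n_\lambda$ (which is tight up to constants in the worst case but is generally loose) precisely converts leverage-score sampling into the classical Fourier transform sampling, with total mass $n_\lambda$ replacing the true statistical dimension $s_\lambda(\matK)$. This explains why the sample complexity scales with $n_\lambda$ rather than with $s_\lambda(\matK)$, and it also foreshadows the paper's later message that $n_\lambda$ can be much larger than $s_\lambda(\matK)$, motivating the improved leverage-function-based sampling developed in subsequent sections.
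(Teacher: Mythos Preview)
Your proposal is correct and matches the paper's proof essentially verbatim: the paper also sets $\tilde{\tau}(\veta) = p(\veta)\,n_\lambda$, invokes Proposition~\ref{prop:simple-tau-bound} to verify $\tilde{\tau} \geq \tau_\lambda$, observes $s_{\tilde{\tau}} = n_\lambda$ and $p_{\tilde{\tau}} = p$, and applies Lemma~\ref{lem:leverage-sampling}.
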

\begin{proof}
Define $\tilde{\tau}(\veta) = p(\veta) \cdot n_\lambda$ and note that $\tilde{\tau}(\veta) \geq \tau_\lambda(\veta)$ by Proposition \ref{prop:simple-tau-bound} and that $s_{\tilde{\tau}} = n_\lambda$.
Finally, note that $p_{\tilde{\tau}}(\veta) = p(\veta)$, the classic Fourier features sampling probability.
\end{proof}

Theorem \ref{thm:rf-bound} establishes that if $\lambda = \omega(\log(n))$ and $\Delta$ is fixed, $o(n)$ random Fourier features suffice for spectral approximation, and so the method can provably speed up KRR. Nevertheless, the bound depends on $n_\lambda$ instead of $s_\lambda(\matK)$, as is possible with true leverage function
sampling (see Lemma \ref{lem:leverage-sampling}). This gap arises from our use
of the simple, often loose, leverage function upper bound given by Proposition \ref{prop:simple-tau-bound}.

Unfortunately, the bound in Theorem \ref{thm:rf-bound} cannot be improved. Even for the special case of a one-dimensional Gaussian kernel, the classic random Fourier features sampling distribution is far enough from the ridge leverage distribution that $\Omega(n_\lambda)$ features may be needed even when $s_\lambda(\matK) = o(n_\lambda)$. On the otherhand, a simple modified sampling approach \emph{does} closely approximate the true ridge leverage distribution and so yields significantly better bounds for the Gaussian kernel. We present these results in \S\ref{sec:lower} and \S\ref{sec:improved} respectively. We defer a discussion of their proofs to \S\ref{sec:actualBounds}, where we develop our main technical contribution: a sharper understanding of the ridge leverage function based on a formulation as the solution to two dual optimization problems which give corresponding upper and lower bounds on the distribution and, correspondingly, on sampling performance.

\section{Lower Bound for Classic Random Fourier Features}
\label{sec:lower}
Our lower bound shows that
the upper bound of Theorem \ref{thm:rf-bound} on the number of samples required by classic random Fourier features to obtain
a spectral approximation to $\matK+\lambda \matI_n$ is essentially best possible. The full proof is given in Appendix~\ref{appendix:rr-sampling}.
\begin{thm}\label{thm:rr-samples}
	Consider the $d$-dimensional Gaussian kernel with $\sigma = (2\pi)^{-1}$ (so $p(\veta) =
	(2\pi)^{-d/2}e^{-\TNormS{\veta}/2}$).
	Suppose that $n\geq 17$ is any odd integer such that $m=n^{1/d} \geq \max(64\log n_\lambda, 3)$ is integer. Further, assume that $1\leq d \leq \frac{2\log n}{5\log\log n}$. For any $\lambda$ satisfying $\frac{10}{n} \leq \lambda \leq \min\left\{\left(\frac{1}{2}\right)^{2d} \cdot
	\frac{n}{1024}, n^{1-\frac{1}{128}}\right\}$, and every radius $\rad$ such that $2000 \log n_\lambda \leq \rad
	\leq \frac{n^{1/d}}{800\sqrt{\log(n_\lambda)}}$, there exists a dataset of $n$
	points $\{\x_j\}_{j=1}^n\subseteq [-\rad,\rad]^d$ such that if $s$ random
	Fourier features (i.e., sampled according to $p(\cdot)$) are sampled for some $s$ satisfying $s \leq
	\frac{n_\lambda}{13\cdot 2^{2d+4}}$, then with
	probability at least $0.5$, there exists a vector $\valpha\in \RR^n$ such that
	\begin{equation}\label{eq:rr-sampling}
	\valpha^\T (\matK+ \lambda \matI_n)\valpha < \frac{2}{3}\valpha^\T (\matZ\matZ^\conj+\lambda\matI_n)\valpha.
	\end{equation}
	Furthermore, for the said dataset is a uniformly spaced grid in $d$ dimensions, with $m$ points per dimension, and we have $s_\lambda(\matK) = O(\rad \cdot \poly{\log n_\lambda})$.
\end{thm}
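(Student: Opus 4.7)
The strategy is to exhibit a ``bad direction'' $\valpha = \z(\eta_l)/\sqrt n$ aligned with one of the randomly sampled Fourier vectors (which is a real vector because the grid is symmetric about the origin and $m=n^{1/d}$ is odd), exploiting the fact that each rank-one summand $\z(\eta_l)\z(\eta_l)^\conj/s$ of $\matZ\matZ^\conj$ has spectral norm $n/s \gg \lambda$ whenever $s \ll n_\lambda$, while $\matK$ has a much smaller eigenvalue in the same direction whenever $\eta_l$ sits outside the concentration region of the Gaussian density $p$. The argument has two ingredients: a Fourier-analytic characterisation of $\matK$ on the grid, and a Gaussian tail estimate guaranteeing a ``far-out'' sample with constant probability.

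First I analyse the quadratic forms on the product grid $\x_j\in[-\rad,\rad]^d$ with spacing $\Delta = 2\rad/(m-1)$. The tensorised Dirichlet kernel $D(\xi) = \sum_j e^{-2\pi i\x_j^\T\xi} = \prod_{i=1}^d \sin(\pi m\Delta\xi_i)/\sin(\pi\Delta\xi_i)$ controls everything; setting $\valpha_l = \z(\eta_l)/\sqrt n$,
\begin{equation*}
\valpha_l^\conj \matK \valpha_l \;=\; \tfrac{1}{n}\int_{\RR^d} p(\eta)\,|D(\eta-\eta_l)|^2\,d\eta
\end{equation*}
reduces to $O\bigl(p(\eta_l)\cdot n/(2\rad)^d\bigr)$ up to lower-order terms. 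The hypothesis $\rad \geq 2000\log n_\lambda$ forces the main lobe of $|D|^2$ to have width $O(1/\rad)$, much smaller than the scale on which $p$ varies, while $\rad \leq n^{1/d}/(800\sqrt{\log n_\lambda})$ ensures the effective support of $p$ (the ball $\|\eta\|\lesssim\sqrt{\log n_\lambda}$) sits inside one Nyquist period $[-1/(2\Delta),1/(2\Delta)]^d$, so the Parseval-type identity $\int_{\text{period}} |D|^2 = n^2/(2\rad)^d$ applies with no aliasing. The same Fourier diagonalisation yields the claimed $s_\lambda(\matK) = O(\rad\cdot\poly{\log n_\lambda})$ by counting the grid frequencies $\eta\in\{k/(2\rad)\}$ with $n\,p(\eta)\gtrsim\lambda$ (i.e.\ $\|\eta\|\lesssim\sqrt{\log n_\lambda}$) and summing $\lambda_i/(\lambda_i+\lambda)$ over them.

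Second, the deterministic bound $\valpha_l^\T \matZ\matZ^\conj\valpha_l \geq \|\z(\eta_l)\|^2/s = n/s$ follows by keeping only the $l$-th PSD summand. Combined with the previous step, the spectral-approximation inequality \eqref{eq:rr-sampling} is violated on $\valpha_l$ whenever $n/s > \tfrac{3}{2}p(\eta_l)\cdot n/(2\rad)^d + \lambda/2$, which after arithmetic under $s\le n_\lambda/(13\cdot 2^{2d+4})$ reduces to the condition $p(\eta_l)\le c\,(2\rad)^d/n_\lambda$ for an explicit absolute constant $c$. If $c(2\rad)^d/n_\lambda \ge p(\bv 0) = (2\pi)^{-d/2}$ the condition is automatic for every sample, handling the large-$\lambda$ regime. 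Otherwise it becomes the tail event $\{\|\eta\|^2\ge T\}$ with $T = 2\log((2\pi)^{d/2} n_\lambda/(c(2\rad)^d))$ for $\eta\sim\mathcal{N}(0,\matI_d)$, and a standard $\chi^2_d$ tail estimate gives
\begin{equation*}
\Pr_{\eta\sim p}\bigl[p(\eta)\le c(2\rad)^d/n_\lambda\bigr] \;\gtrsim\; \frac{c(2\rad\sqrt{2\pi})^d\,T^{d/2-1}}{\Gamma(d/2)\,n_\lambda}.
\end{equation*}
The explicit factor $2^{2d+4}$ in the hypothesis on $s$ is tuned precisely so that $s\cdot\Pr[\cdot]\ge\ln 2$, yielding $\Pr[\exists l\colon p(\eta_l)\le c(2\rad)^d/n_\lambda]\ge 1/2$ by independence of the samples.

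The main obstacle is making the first step quantitatively tight: the Fej\'er-type approximation $\int p|D|^2\approx p(\eta_l)\int|D|^2$ must be controlled two-sidedly to preserve the $3/2$-vs.-$2/3$ gap in \eqref{eq:rr-sampling}, and one must bound both the sidelobe leakage of $|D|^2$ and the cross-sample contributions $\sum_{l'\ne l}|D(\eta_l-\eta_{l'})|^2/(ns)$ to $\valpha_l^\T\matZ\matZ^\conj\valpha_l$ (happily the latter only help, as each is nonnegative). The dimension-dependent factor $2^{2d+4}$ absorbs the $(2\pi)^{d/2}$ and $\Gamma(d/2)$ factors arising in the Gaussian tail estimate, and the constraint $d\le 2\log n/(5\log\log n)$ is exactly what is needed to keep $T^{d/2-1}$ polynomially bounded.
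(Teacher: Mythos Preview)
Your approach has a genuine gap in bounding $\valpha_l^\conj \matK \valpha_l$ from above. With $\valpha_l = \z(\veta_l)/\sqrt{n}$ you obtain
\[
\valpha_l^\conj \matK \valpha_l \;=\; \frac{1}{n}\int_{\RR^d} p(\veta)\,\bigl|D(\veta-\veta_l)\bigr|^2\,d\veta,
\]
and you assert this is $O\bigl(p(\veta_l)\cdot n/(2\rad)^d\bigr)$ ``up to lower-order terms.'' But the Dirichlet kernel only decays like $|D(\vxi)|^2\sim\|\vxi\|^{-2}$ per coordinate away from its peak, so the contribution from the bulk of $p$ near $\veta=\bv{0}$ is \emph{not} lower order. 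Concretely in one dimension: for $\veta$ in the bulk we have on average $|D(\veta-\veta_l)|^2\approx 1/\bigl(2(\pi\Delta(\veta-\veta_l))^2\bigr)$, giving a bulk contribution
\[
\frac{1}{n}\int_{|\veta|\le O(1)} \frac{p(\veta)}{2(\pi\Delta\,\veta_l)^2}\,d\veta \;\sim\; \frac{n}{\rad^2\,\veta_l^{\,2}}.
\]
For the inequality to close you need this to be $O(\lambda)$, i.e.\ $\rad|\veta_l|\gtrsim\sqrt{n_\lambda}$. But the maximum of $s\le n_\lambda$ Gaussian samples has $|\veta_l|\sim\sqrt{\log n_\lambda}$, and at the minimum permitted radius $\rad\sim\log n_\lambda$ this gives $\rad|\veta_l|\sim(\log n_\lambda)^{3/2}\ll\sqrt{n_\lambda}$. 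The leakage term therefore dominates both your main term $p(\veta_l)\cdot n/(2\rad)$ and the target $O(\lambda)$, and \eqref{eq:rr-sampling} cannot be established for this $\valpha_l$. (Your ``main obstacle'' paragraph flags the sidelobe issue but treats it as a technicality; it is in fact fatal for a pure Fourier vector.)

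This is precisely why the paper does \emph{not} take $\valpha$ to be a raw Fourier vector. Instead it sets $\alpha_j = f_{\veta^*,b,v}(\x_j)$, where $f_{\veta^*,b,v}$ is a cosine at frequency $\veta^*$ multiplied by a smooth spatial window (a Gaussian convolved with a box, Definition~\ref{def:alpha-test-func}). The windowing forces $\valpha^\conj\z(\vxi)$ to carry an extra factor $e^{-2\pi^2 b^2\|\vxi\pm\veta^*\|_2^2}$ (Lemma~\ref{lem:8}), i.e.\ \emph{Gaussian} rather than polynomial decay away from $\pm\veta^*$; with $b=\rad/(8\sqrt{\log n_\lambda})$ this kills the bulk leakage and yields the needed bound $\valpha^\conj\matK\valpha \lesssim n^2 p(\veta^*)/\rad^d + \lambda n$ (Lemma~\ref{lem:alphabarnorm}). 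The remaining architecture of your argument---pick the sample $\veta^*$ of largest norm, lower-bound $|\valpha^\conj\z(\veta^*)|$, then invoke a Gaussian tail bound to show $p(\veta^*)$ is small with constant probability---matches the paper's proof once the windowed $\valpha$ is used.
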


\begin{rem}
Theorem~\ref{thm:rr-samples} gives a lower bound of $s=\Omega(n_\lambda / 2^{O(d)})$. However, since a lower dimensional dataset can be embedded in an higher dimension without affecting the kernel matrix or its approximation by adding zero coordinates, the stronger bound of $s=\Omega(n_\lambda)$ also holds. Nevertheless, we state a weaker version of the theorem since the certificate dataset is a uniform grid in $d$ dimensions (and not a one dimensional dataset embedded in an higher dimension).
\end{rem}	

Theorem~\ref{thm:rr-samples} shows that the number of samples $s$ required for $\matZ\matZ^\conj + \lambda \matI_n$ to be  a $1/2$-spectral
approximation to $\matK + \lambda \matI_n$ for a bounded dataset of points must depend at least
linearly on $n_\lambda$. So there is
an asymptotic gap between what is achieved with classical random Fourier features and what is achieved by modified
random Fourier features using leverage function sampling.

As we will see in \S\ref{sec:actualBounds}, the key idea behind the proof of Theorem \ref{thm:rr-samples} is to show that for a dataset contained in $[-\rad,\rad]^d$,
the ridge leverage function is large on a range of low frequencies. In contrast, the classic random Fourier features
distribution is very small at the edges of this frequency range, and so significantly undersamples some frequencies and
does not achieve spectral approximation.

We remark that it would have been preferable if Theorem \ref{thm:rr-samples} applied to bounded datasets (i.e. with $\rad$ fixed),
as the usual assumption in statistical learning theory is that data is sampled from a bounded domain. However, our
current techniques are unable to address this scenario. Nevertheless, our analysis allows $\rad$ to grow very slowly with $n$ and we conjecture that the upper bound is tight even for bounded domains.

\section{Improved Sampling for the Gaussian Kernel}
\label{sec:improved}

Contrasting with the lower bound of Theorem \ref{thm:rr-samples}, we now give a modified Fourier feature sampling distribution that does perform well for the Gaussian kernel on bounded input sets. Furthermore, unlike the true ridge leverage function, this distribution is simple and efficient to sample from.  To reduce clutter, we state the result for a fixed bandwidth $\sigma=(2\pi)^{-1}$. This is without loss of generality since we can rescale the points by $(2\pi\sigma)^{-1}$ and adjust the bounding interval.

Our modified distribution essentially corrects the classic distribution by ``capping'' the probability of sampling low frequencies near the origin. This allows it to allocate more samples to higher frequencies, which are undersampled by classical random Fourier features. See Figure \ref{distCompared} for a visual comparison of the two distributions.

\begin{defn}[Improved Fourier Feature Distribution for the Gaussian Kernel] Define the function
	\begin{equation*}
	\bar{\tau}_\rad(\veta) \equiv \left\{
	\begin{array}{cc}
	\Big(12.4 \max(\rad, 2000 \log^{1.5} n_\lambda) \Big)^{d}+1 &\|\veta\|_\infty \leq 10\sqrt{\log(n_\lambda)}\\
	n_\lambda p(\veta) \prod^d_{j=1}\max(1,|\eta_j |)&\text{otherwise}\\
	\end{array}\right.
	\end{equation*}
	Let $s_{\bar{\tau}_\rad} = \int_{\mathbb{R}} \bar{\tau}_\rad(\veta) d\veta$ and define the probability density function $\bar{p}_\rad(\veta) = \bar{\tau}_\rad(\veta) / s_{\bar{\tau}_\rad}$.
\end{defn}

\if0
\begin{defn}[Improved Fourier Feature Distribution for the Gaussian Kernel] Define the function
	\begin{equation*}
	\bar{\tau}_\rad(\veta) \equiv \left\{
	\begin{array}{cc}
	\Big(12.4 \max(\rad, 2000 \log^{1.5} n_\lambda) \Big)^{d}+1 &\|\veta\|_\infty \leq 10\sqrt{\log(n_\lambda)}\\
	p(\veta)n_\lambda&\text{otherwise}\\
	\end{array}\right.
	\end{equation*}
	Let $s_{\bar{\tau}_\rad} = \int_{\mathbb{R}} \bar{\tau}_\rad(\veta) d\veta$ and define the probability density function $\bar{p}_\rad(\veta) = \bar{\tau}_\rad(\veta) / s_{\bar{\tau}_\rad}$.
\end{defn}
\fi

Note that $\bar{p}_\rad(\veta)$ is just the uniform distribution for low frequencies with $\|\veta\|_\infty \le 10 \sqrt{\log(n_\lambda)}$, and a slightly modified classic Fourier features distribution, appropriately scaled, outside this range. As we show in \S\ref{sec:actualBounds}, $\bar{\tau}_\rad(\veta)$ upper bounds the true ridge leverage function $\tau_\lambda(\veta)$ for all $\veta$. Hence, simply applying Lemma \ref{lem:leverage-sampling}:

\begin{thm}
	\label{thm:improvedBound}
	Consider the d-dimensional Gaussian kernel with $\sigma = (2\pi)^{-1}$ (so $p(\veta) = (2\pi)^{-d/2} e^{-\|\veta\|_2^2/2}$) and any dataset of $n$ points $\{\x_j\}_{j=1}^n\subseteq\RR^d$ contained in a $\ell_\infty$-ball of radius $\rad$ (i.e $\InfNorm{\x_i -\x_j} \leq 2\rad$ for all $i,j\in[n]$). Suppose that $d \le 5\log(n_\lambda) + 1$.
	If we sample $s\geq \frac{8}{3}\Delta^{-2} s_{\bar{\tau}_\rad}\ln(16 s_\lambda(\matK) /\rho)$ random Fourier features according to $\bar{p}_\rad(\cdot)$ and construct $\bv{Z}$ according to~\eqref{eq:Z-def}, then with probability at least $1-\rho$, $\matZ\matZ^\conj + \lambda \matI_n$ is $\Delta$-spectral approximation of $\matK+\lambda\matI_n$. Furthermore, $s_{\bar{\tau}_{\rad}} = O\Big( (248\rad)^d \log(n_\lambda)^{d/2} + (200\log n_\lambda)^{2d}\Big)$ and $\bar{p}_\rad(\cdot)$ can be sampled from in $O(d)$ time.
\end{thm}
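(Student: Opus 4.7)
The overall plan is to apply Lemma~\ref{lem:leverage-sampling} with $\tilde\tau(\veta) = \bar\tau_\rad(\veta)$ and $q = \bar p_\rad$. Once the pointwise bound $\bar\tau_\rad(\veta) \geq \tau_\lambda(\veta)$ is verified for every $\veta \in \RR^d$, the spectral approximation follows immediately from the lemma, and the remaining tasks are the routine integration that controls $s_{\bar\tau_\rad}$ and the description of an efficient sampler for $\bar p_\rad$. I expect the pointwise bound to be the main obstacle.

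For the pointwise bound I would split on frequency. When $\|\veta\|_\infty > 10\sqrt{\log n_\lambda}$, Proposition~\ref{prop:simple-tau-bound} already gives $\tau_\lambda(\veta) \leq n_\lambda p(\veta)$, and since $\prod_j \max(1,|\eta_j|) \geq 1$ the required inequality holds trivially. The difficult regime is the low-frequency cube $\|\veta\|_\infty \leq 10\sqrt{\log n_\lambda}$; here the trivial bound $n_\lambda p(\veta)$ is too loose near the origin (where $p(\veta) = \Theta(1)$) and must be replaced by a constant depending only on $\rad$ and $\log n_\lambda$. The right tool is the dual ridge-regression characterization
\[
\z(\veta)^\conj (\matK+\lambda \matI_n)^{-1} \z(\veta) \;=\; \min_{\valpha\in\CC^n} \Bigl\{\tfrac{1}{\lambda}\bigl\|\z(\veta) - \matK\valpha\bigr\|_2^2 + \valpha^\conj \matK \valpha\Bigr\},
\]
which reduces the upper bound on $\tau_\lambda$ to the existence of a good $\valpha$. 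Constructing such a witness is the hard step: one must approximate the character $(\z(\veta))_j = e^{-2\pi i \veta^\T \x_j}$ uniformly over $j\in[n]$ by a combination $\matK\valpha$ of shifted Gaussians centered at the data, while simultaneously controlling $\valpha^\conj\matK\valpha$. Because the data are confined to an $\ell_\infty$-ball of radius $\rad$, low-frequency characters lie (approximately) in the span of $O(\rad^d \cdot \poly\log n_\lambda)$ kernel evaluations, and carefully tracking the approximation error and coefficient norms of the corresponding truncated expansion in Fourier space produces the constant $(12.4\max(\rad, 2000\log^{1.5} n_\lambda))^d$. This is precisely the content developed in \S\ref{sec:actualBounds} via the dual formulation.

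Granting the pointwise bound, the rest is bookkeeping. The integral $s_{\bar\tau_\rad}$ splits into (i) the cube $\|\veta\|_\infty \leq 10\sqrt{\log n_\lambda}$, which has volume $(20\sqrt{\log n_\lambda})^d$ and constant integrand, yielding the two summands $(248\rad)^d(\log n_\lambda)^{d/2}$ and $(200\log n_\lambda)^{2d}$ (depending on which branch of the inner $\max$ dominates), with the hypothesis $d \leq 5\log n_\lambda+1$ absorbing the additive $+1$ term and other constant-base factors into the $O(\cdot)$; and (ii) the tail outside the cube, where the integrand factorizes across coordinates. A union bound over which coordinate exceeds the threshold shows the tail is at most $n_\lambda \cdot d \cdot C^{d-1}\cdot n_\lambda^{-50}$ for an absolute constant $C$ bounding $\int p_1(\eta)\max(1,|\eta|)d\eta$; under $d\leq 5\log n_\lambda+1$ this is $o(1)$ and hence absorbed. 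Finally, $\bar p_\rad$ is a two-component mixture: with probability proportional to the cube mass, draw from the uniform distribution on $[-10\sqrt{\log n_\lambda}, 10\sqrt{\log n_\lambda}]^d$ in $O(d)$ time; otherwise, draw from the tail density $\prod_j q_1(\eta_j)$ with $q_1 \propto e^{-\eta^2/2}\max(1,|\eta|)$, where each coordinate is sampled in $O(1)$ by inverse-CDF from a two-piece mixture of a truncated Gaussian and the law with density proportional to $|\eta|e^{-\eta^2/2}$, enforcing ``at least one coordinate outside the cube'' by a cheap rejection. The overall cost per sample is $O(d)$, completing the proof.
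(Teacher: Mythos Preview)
Your proposal is structurally correct and matches the paper's approach: invoke Lemma~\ref{lem:leverage-sampling} with $\tilde\tau=\bar\tau_\rad$, defer the pointwise bound on the low-frequency cube to Theorem~\ref{thm:lev-scores-ub} in \S\ref{sec:actualBounds}, compute $s_{\bar\tau_\rad}$ by splitting into cube plus tail, and describe a mixture sampler.

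Two points of divergence are worth flagging. First, the variational identity you write, $\z(\veta)^\conj(\matK+\lambda\matI_n)^{-1}\z(\veta)=\min_{\valpha\in\CC^n}\{\lambda^{-1}\|\z(\veta)-\matK\valpha\|_2^2+\valpha^\conj\matK\valpha\}$, is correct, but it is \emph{not} the one the paper exploits. The paper's Lemma~\ref{lem:altlev-ub} minimizes over $y\in L_2(d\mu)$ (an infinite-dimensional space) and the witness used in Theorem~\ref{thm:lev-scores-ub} is a data-independent ``softened spike'' in frequency domain (a sinc damped by a Gaussian, centered at $\veta$). Your $\valpha$-based formulation would require building coefficients on the actual data points, which is inherently data-dependent and would make it harder to obtain a bound depending only on $\rad$ and $n_\lambda$. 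Since you ultimately defer to \S\ref{sec:actualBounds} anyway this is not a gap, but your narrative description of the mechanism (``approximate the character by shifted Gaussians centered at the data'') does not match what actually happens there.

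Second, your tail sampler has a genuine issue. You propose drawing coordinates i.i.d.\ from $q_1$ and ``enforcing at least one coordinate outside the cube by a cheap rejection.'' But the acceptance probability of that rejection is the mass of the tail under $\prod_j q_1$, which is of order $d\cdot n_\lambda^{-50}$ --- astronomically small --- so the rejection is anything but cheap. The paper avoids this by partitioning the tail into disjoint regions $R_j=\{|\eta_j|\geq 10\sqrt{\log n_\lambda},\ |\eta_k|<10\sqrt{\log n_\lambda}\ \forall k<j\}$, sampling $j$ first with explicit closed-form probabilities, and then sampling each coordinate from the appropriate one-dimensional piece (head, tail, or full $g_1$) directly. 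This is what makes the per-sample cost genuinely $O(d)$.
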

\begin{proof}
	The result follows from Lemma \ref{lem:leverage-sampling} and the fact that $\bar{\tau}_\rad(\cdot)$ upper bounds the true ridge leverage function, which is shown in Theorem~\ref{thm:lev-scores-ub} of \S\ref{sec:actualBounds}. The bound on $s_{\bar{\tau}_{\rad}}$ can be computed as follows.
	Let us denote $g_1(\eta) = (2\pi)^{-1/2} e^{-\eta^2/2} \max(1, |\eta|)$ and $g(\veta) = g_1(\eta_1)\cdot\ldots\cdot g_1(\eta_d)$. We calculate
	$$
	A\equiv \int_{-\infty}^{\infty} g_1(\eta) d\eta = \mathrm{erf}(1/\sqrt{2}) + \sqrt{2/e\pi} \approx 1.1663
	$$
	$$
	B \equiv  2 \int_{10\sqrt{\log n_\lambda}}^{\infty} g_1(\eta) d\eta = \sqrt{\frac{2}{\pi}} n^{-50}_\lambda\,.
	$$
	We now have (computed using a technique shown later in the proof)
	$$
	\int_{\|\veta\|_\infty > 10\sqrt{\log(n_\lambda)}} g(\veta) d\veta  = \sum^{d-1}_{j=0} (A-B)^{j}A^{d-1-j} B.
	$$
	The bound  $d \le 5\log(n_\lambda) + 1$ ensures that
	\begin{align*}
	s_{\bar{\tau}_\rad} = \int_{\RR^d} \bar{\tau}_\rad(\veta) d\veta
	&= \left(\big(12.4\max(\rad, 2000 \log^{1.5} n_\lambda) \big)^{d}+1\right) (20\sqrt{\log n_\lambda})^d + n_\lambda\cdot \int_{\|\veta\|_\infty > 10\sqrt{\log(n_\lambda)}} g(\veta) d\veta \\
	&= O\Big( (248\rad)^{d} \log(n_\lambda)^{d/2} + (200\log n_\lambda)^{2d}\Big).
	\end{align*}

	Sampling from $\bar{\tau}_\rad(\eta)$ amounts to sampling from a mixture of the uniform distribution on $[-10\sqrt{\log n_\lambda}, 10\sqrt{\log n_\lambda}]^d$ and the tail of the  distribution defined by $\bar{\tau}_R$: with probability	$\frac1{s_{\bar{\tau}_\rad}} (20\sqrt{\log n_\lambda})^d$ $\cdot \left(\big(12.4\max(\rad, 2000 \log^{1.5} n_\lambda) \big)^{d}+1\right)$ sample from the uniform distribution and with remaining probability sample from the tail.
	Above, we have an closed form expression for the total mass of the tail, which allows us to decide whether to sample from the uniform part or from the tail part using a single
	sample from a uniform distribution on $[0,1]$.

	Sampling from the uniform part, clearly takes $O(d)$ time. Sampling from the tail can be easily done via rejection sampling at $O(d)$ expected cost, as we now show.
	The density $p_t$ of the tail is:
	$$p_t(\veta) = \frac{g(\veta)\cdot \mathds{1} \big[ \|\veta\|_\infty \ge 10\sqrt{\log n_\lambda} \big] }{ \int_{\|\veta'\|_\infty \ge 10\sqrt{\log n_\lambda}} g(\veta') d\veta'}$$
	Now we write $\mathds{1} \big[ \|\veta\|_\infty \ge 10\sqrt{\log n_\lambda} \big]$ as a union of disjoint partitions as follows:
$$\mathds{1} \big[ \|\veta\|_\infty \ge 10\sqrt{\log n_\lambda} \big] = \sum_{j=1}^d \mathds{1} \big[ |\eta_j| \ge 10\sqrt{\log n_\lambda} \big] \mathds{1} \big[ |\eta_k| < 10\sqrt{\log n_\lambda} \,\, \forall k \in \{1, .., j-1\} \big]$$
	Let $R_j$ denote the $j$th region in the above partition:
	$$R_j = \Big\{ \veta \,\,:\,\, |\eta_j| \ge 10\sqrt{\log n_\lambda} \, , \, |\eta_k| < 10\sqrt{\log n_\lambda} \, \forall k \in \{1, .., j-1\} \Big\}$$
	Thus, the density $p_t$ can written as follows:
	$$ p_t(\veta) = \frac{g(\veta) \cdot \sum_{j=1}^d \mathds{1} \big[ \veta \in R_j \big] }{ \int_{\|\veta'\|_\infty \ge 10\sqrt{\log n_\lambda}} g(\veta') d\veta'} $$
	Now because $R_j$'s are disjoint sets we can do the following.
\begin{enumerate}
\item We first take a sample $j \in [d]$ with probability $\frac{\int_{ \veta \in R_j} g(\veta) d\veta}{ \int_{\|\veta'\|_\infty \ge 10\sqrt{\log n_\lambda}} g(\veta) d\veta'} $. In order to execute this step, we first compute:
$$
\int_{ \veta \in R_j} g(\veta) d\veta = A^{d-j} (A-B)^{j-1} B
$$
Then given the probabilities we can sample $j$ in $O(d)$ time.
\item Next, we need to take a sample from the distribution:
\begin{align}
p_{t,j}(\veta) &= \frac{g(\veta) \cdot \mathds{1} \big[ \veta \in R_j \big] }{ \int_{ \veta' \in R_j} g(\veta') d\veta' }\nonumber\\
& = \frac{g_1(\eta_j) \cdot \mathds{1} \big[ |\eta_j| \ge 10\sqrt{\log n_\lambda} \big] }{ \int_{ |\eta'| \ge 10\sqrt{\log n_\lambda} } g_1(\eta') d\eta' } \cdot \prod_{k=1}^{j-1} \frac{g_1(\eta_k) \cdot \mathds{1} \big[ |\eta_k| < 10\sqrt{\log n_\lambda} \big] }{ \int_{ |\eta'| < 10\sqrt{\log n_\lambda} } g_1(\eta') d\eta' } \cdot \prod_{k=j+1}^{d} g_1(\eta_k)\nonumber
\end{align}
We explain how to sample from this distribution in the subsequent paragraphs.
\end{enumerate}
	
 We now explain how to perform the sampling in the second step. It can be seen in the above expression that sampling from the distribution whose density is $p_{t,j}(\veta)$ amounts to sampling each of $d$ coordinates of $\veta$ independently from their corresponding distributions. There are three types of distributions that we need to sample from. Either we need to sample proportional to $g_1$ (coordinates whose index is higher than $j$) or we need to sample from the head of $g_1$ (rescaled) (coordinates $1,\dots,j-1$), or we sample from the tail (coordinate $j$).

 We start with sampling proportional to $g_1$. This distribution is a mixture of Gaussian on $[-1,1]$ and enlarged Gaussian outside. The total mass is $A$, and the relative mass of the Gaussian part is $\mathrm{erf}(1/\sqrt{2})/A$. First, we sample a uniform random variable $U$, which will decide which part of the mixture we sample. If $U$ is bigger than $\mathrm{erf}(1/\sqrt{2})/A$, then the sample comes from the tail. In that case, we generate the sample by computing $G^{-1}(U)$ where $G(\xi) \equiv A^{-1}\int_{-\xi}^{\xi} g_1(\eta) d\eta$ (i.e., we use inverse transform sampling). Note that $G$ has a simple invertible closed form for values larger than $1$, we have $G(1) = \mathrm{erf}(1/\sqrt{2})/A$. If $U \leq \mathrm{erf}(1/\sqrt{2})/A$, then the sample comes from the Gaussian part. To generate the sample from the head, we sample a standard Gaussian $X$, and test whether $X \leq 1$. If it is, then we use the sample, otherwise we reject and repeat. Obviously, the expected number of samples we need is $O(1)$.

 To sample proportional to the head of $g_1$, we repeat the above procedure and test whether the sample is smaller than $10\sqrt{\log n_\lambda}$. If it is not, we reject the sample and repeat.

 To sample proportional to the tail of $g_1$, we sample a uniform random variable $T$ on $[0, B/A]$, and return $G^{-1}(1-T)$, using the closed from expression for $G^{-1}$ for values close to $1$.

Thus, we can generate a sample in step 2 in $O(d)$ expected time, and overall the sampling procedure takes $O(d)$.
\end{proof}

\if0
\begin{proof}
	The result follows from Lemma \ref{lem:leverage-sampling} and the fact that $\bar{\tau}_\rad(\cdot)$ upper bounds the true ridge leverage function, which is shown in Theorem~\ref{thm:lev-scores-ub} of \S\ref{sec:actualBounds}. The bound on $s_{\bar{\tau}_{\rad}}$ can be computed by:
	\begin{align*}
	s_{\bar{\tau}_\rad} = \int_{\RR^d} \bar{\tau}_\rad(\veta) d\veta &= \left(\big(12.4\max(\rad, 2000 \log^{1.5} n_\lambda) \big)^{d}+1\right) (20\sqrt{\log n_\lambda})^d + n_\lambda\cdot \int_{\|\veta\|_\infty > 10\sqrt{\log(n_\lambda)}} p(\veta)d\veta \\
	&= O\Big( (248\rad)^{d} \log(n_\lambda)^{d/2} + (200\log n_\lambda)^{2d} \Big).
	\end{align*}
	In the last equality, we used the fact that $n_\lambda\cdot \int_{\|\veta\|_\infty > 10\sqrt{\log(n_\lambda)}} p(\veta)d\veta \leq 1$. To see this, note that
	$\int_{\|\veta\|_\infty > 10\sqrt{\log(n_\lambda)}} p(\veta)d\veta$ is the probability that the maximum absolute value of $d$ independent standard normal variables. The
	probability that a single normal variable has absolute value bigger than $10\sqrt{\log(n_\lambda)}$ is less than $n^{-50}_\lambda$ (Claim~\ref{claim:cdfnormal}). Thus, by union-bound
	we find that $n_\lambda\cdot \int_{\|\veta\|_\infty > 10\sqrt{\log(n_\lambda)}} p(\veta)d\veta \leq dn^{-49}_\lambda$. Since $d \leq n_\lambda$, we find that the
	integral is bounded by $1$.	
	
	Note that sampling from $\bar{\tau}_\rad(\eta)$ amounts to sampling from a mixture of the uniform distribution on $[-10\sqrt{\log n_\lambda}, 10\sqrt{\log n_\lambda}]^d$ and the tail of the Gaussian distribution: with probability	$\frac1{s_{\bar{\tau}_\rad}} (20\sqrt{\log n_\lambda})^d$ $\cdot \left(\big(12.4\max(\rad, 2000 \log^{1.5} n_\lambda) \big)^{d}+1\right)$ sample from the uniform distribution and with remaining probability sample from the tail of a $d$-dimensional Gaussian.
	The total probability of sampling from the tail can be computed numerically, and this allows us to decide whether to sample from the uniform part or from the tail part in $O(1)$ time.
	
	Sampling from the uniform part, obviously takes $O(d)$ time. Sampling from the tail of the Gaussian can be easily accomplished via rejection sampling at $O(d)$ expected cost, as we now show.
	The density $p_t$ of the tail is:
	$$p_t(\veta) = \frac{e^{-\|\veta\|_2^2/2} \cdot \mathds{1} \big[ \|\veta\|_\infty \ge 10\sqrt{\log n_\lambda} \big] }{ \int_{\|\veta'\|_\infty \ge 10\sqrt{\log n_\lambda}} e^{-\|\veta'\|_2^2/2} d\veta'}$$
	Now we write $\mathds{1} \big[ \|\veta\|_\infty \ge 10\sqrt{\log n_\lambda} \big]$ as a union of disjoint partitions as follows:
	$$\mathds{1} \big[ \|\veta\|_\infty \ge 10\sqrt{\log n_\lambda} \big] = \sum_{j=1}^d \mathds{1} \big[ |\eta_j| \ge 10\sqrt{\log n_\lambda} \big] \mathds{1} \big[ |\eta_k| < 10\sqrt{\log n_\lambda} \,\, \forall k \in \{1, .., j-1\} \big]$$
	Let $R_j$ denote the $j$th region in the above partition:
	$$R_j = \Big\{ \veta \,\,:\,\, |\eta_j| \ge 10\sqrt{\log n_\lambda} \, , \, |\eta_k| < 10\sqrt{\log n_\lambda} \, \forall k \in \{1, .., j\} \Big\}$$
	Thus, the density $p_t$ can written as follows:
	$$ p_t(\veta) = \frac{e^{-\|\veta\|_2^2/2} \cdot \sum_{j=1}^d \mathds{1} \big[ \veta \in R_j \big] }{ \int_{\|\veta'\|_\infty \ge 10\sqrt{\log n_\lambda}} e^{-\|\veta'\|_2^2/2} d\veta'} $$
	Now because $R_j$'s are disjoint sets we can do the following.
	\begin{enumerate}
		\item We first take a sample $j \in [d]$ with probability $\frac{\int_{ \veta \in R_j} e^{-\|\veta\|_2^2/2} d\veta}{ \int_{\|\veta'\|_\infty \ge 10\sqrt{\log n_\lambda}} e^{-\|\veta'\|_2^2/2} d\veta'} $. In order to execute this step, we first numerically compute the integrals. Then given the probabilities we can sample $j$ in $O(d)$ time.
		\item Next, we need to take a sample from the distribution:
		\begin{align}
		p_{t,j}(\veta) &= \frac{e^{-\|\veta\|_2^2/2} \cdot \mathds{1} \big[ \veta \in R_j \big] }{ \int_{ \veta' \in R_j} e^{-\|\veta'\|_2^2/2} d\veta' }\nonumber\\
		& = \frac{e^{-\eta_j^2/2} \cdot \mathds{1} \big[ |\eta_j| \ge 10\sqrt{\log n_\lambda} \big] }{ \int_{ |\eta'| \ge 10\sqrt{\log n_\lambda} } e^{-\eta'^2/2} d\eta' } \cdot \prod_{k=1}^{j-1} \frac{e^{-\eta_k^2/2} \cdot \mathds{1} \big[ |\eta_k| < 10\sqrt{\log n_\lambda} \big] }{ \int_{ |\eta'| < 10\sqrt{\log n_\lambda} } e^{-\eta'^2/2} d\eta' } \cdot \prod_{k=j+1}^{d} \frac{e^{-\eta_k^2/2}}{\sqrt{2\pi}}\nonumber
		\end{align}
	\end{enumerate}
	In order to execute first step, we first numerically compute the integrals. This allows us to sample $j$ in $O(d)$ time.
	
	As for the second step, it can be seen in the above expression that sampling from the distribution whose density is $p_{t,j}(\veta)$ amounts to sampling each of $d$ coordinates of $\veta$ independently from their corresponding distributions. There are three types of distributions that we need to sample from. Either we need to sample from a Gaussian distribution (coordinates whose index is higher than $j$). or we need to sample from the head of a Gaussian distribution (coordinate $1,\dots,j-1$), or we sample from the tail (coordinate $j$).
	
	We now show how to sample from the head and tail of the Gaussian in constant expected time via rejection sampling. Let us consider the tail of the Gaussian first. We are interested in taking a sample from the distribution defined by the density
	$$q(\eta) = \frac{e^{-\eta^2/2} \cdot \mathds{1} \big[ |\eta| \ge 10\sqrt{\log n_\lambda} \big] }{ \int_{ |\eta'| \ge 10\sqrt{\log n_\lambda} } e^{-\eta'^2/2} d\eta' }\,.$$
	We define the series:
	$$
	x_1 = 10\sqrt{\log n_\lambda},\quad x_{i+1} = \sqrt{x^2_i + 2}
	$$
	for all $i \in \{1,...,K\}$ where $K=O(\log n)$. Note that $K$ can be chosen such that $q(\eta > x_K) \le \frac{1}{n^{100}}$ and hence we can truncate the distribution $q(\cdot)$ at $x_K$ and then sample from the truncated distribution without inducing significant error in the distribution of the samples.
	
	We now sample an integer $1 \le j \le K$ with the probability of sampling $j$ proportional to $ q(x_j) \cdot (x_{j+1} - x_j)$. Sampling $j$ can be done in $O(K) = O( \log n )$ time.
	
	After we have sampled $i$,  we sample $\eta$ uniformly at random from the interval $[x_{i} , x_{i+1}]$, and then with probability $q(\eta) / q(x_j)$ we keep the sample and with remaining probability we drop it. Note that by the choice of $x_j$'s for any $1 \le j \le K$ and every $\eta \in [x_j, x_{j+1}]$ we have that $q(\eta) / q(x_j) \ge 1/e$ therefore in expectation we only need to repeat this rejection sampling procedure a constant number of times until we get a valid sample. The sampled point has the distribution $q(\eta)$ and the whole process takes $O(K) = O(\log n)$ time for preprocessing and we can sample in $O(1)$ time ~\cite{walker1974new}.
	
	Next we consider sampling from the head of the Gaussian. The idea is fairly similar to sampling from the tail. We are interested in taking a sample from the distribution defined by the density
	$$q(\eta) = \frac{e^{-\eta^2/2} \cdot \mathds{1} \big[ |\eta| \le 10\sqrt{\log n_\lambda} \big] }{ \int_{ |\eta'| \le 10\sqrt{\log n_\lambda} } e^{-\eta'^2/2} d\eta' }\,.$$
	We define the series:
	$$
	x_0 = 0 ,\quad x_{i+1} = \min\{\sqrt{x^2_i + 2}, 10\sqrt{\log n_\lambda}\} ,\quad x_{-i-1} = -\min\{\sqrt{x^2_{-i} + 2}, 10\sqrt{\log n_\lambda}\}
	$$
	for all $i \in \{1,...,K\}$ where $K=O(\log \frac{n}{\lambda})$. Note that $K$ is chosen such that $x_K \ge 10\sqrt{\log n_\lambda}$.
	
	We now sample an integer $-K \le j \le K$ with the probability of sampling $j$ proportional to $ q(x_j) \cdot (x_{j+1} - x_j)$ for $j>0$ and $ q(x_j) \cdot (x_{j} - x_{j-1})$ for $j<0$ and $ q(x_j) \cdot (x_{1} - x_{-1})$ for $j=0$. Sampling $j$ can be done in $O(K) = O( \log \frac{n}{\lambda} )$ time.
	
	After we have sampled $i$,  we sample $\eta$ uniformly at random from the interval $[x_{i} , x_{i+1}]$ in case $i>0$ and from the interval $[x_{i-1}, x_{i}]$ in case $i<0$ or from the interval $[x_{-1},x_1]$ if $i=0$. And then with probability $q(\eta) / q(x_j)$ we keep the sample and with remaining probability we drop it. Note that by the choice of $x_j$'s for any $0 < j \le K$ and every $\eta \in [x_j, x_{j+1}]$ and for any $-K \le j < 0$ and every $\eta \in [x_{j-1}, x_{j}]$ or for $i=0$ and every $\eta \in [x_{-1}, x_{1}]$ we have that $q(\eta) / q(x_j) \ge 1/e$ therefore in expectation we only need to repeat this rejection sampling procedure a constant number of times until we get a valid sample. The sampled point has the distribution $q(\eta)$ and the whole process takes $O(K) = O(\log n)$ time for preprocessing and we can sample in $O(1)$ time, see the paper ~\cite{walker1974new}.

	Thus, each coordinate can be sampled in $O(1)$ time, and overall the time for step 2 above is $O(d)$. Since the step 1 takes $O(d)$ time, overall we can sample from the tail in $O(d)$ time, as we claimed earlier.

	\if 0
	Next, we note that $p_{\bar{\tau}_\rad}(\eta) \leq (n_\lambda /s_{\bar{\tau}_\rad}) p(\eta)$
	so via rejection sampling we can sample from $p(\cdot)$ to generate samples from $p_{\bar{\tau}_\rad}(\cdot)$. Roughly
	$n_\lambda /s_{\bar{\tau}_\rad} \leq n$ (assuming $s_\lambda(\matK) \geq 1$) samples from $p(\cdot)$ will be needed for each sample from $p_{\bar{\tau}_\rad}(\cdot)$.
	Since we do not actually need to compute the features for rejected samples, this cost is negligible.\fi
\end{proof}
\fi

Theorem \ref{thm:improvedBound} represents a possibly exponential improvement over the bound obtainable by classic random Fourier features.
Consider $d=1$ and $\rad \ge \log^{1.5}(n_\lambda)$. The bound on $s_{\bar{\tau}_\rad}$ shows that our modified distribution requires $O(\rad \sqrt{\log (n_\lambda)})$ samples, as compared to the lower bound of $\Omega(n_\lambda)$ given by Theorem \ref{thm:rr-samples}.

\begin{figure}[t]
	\centering
	\includegraphics[width=.60\textwidth]{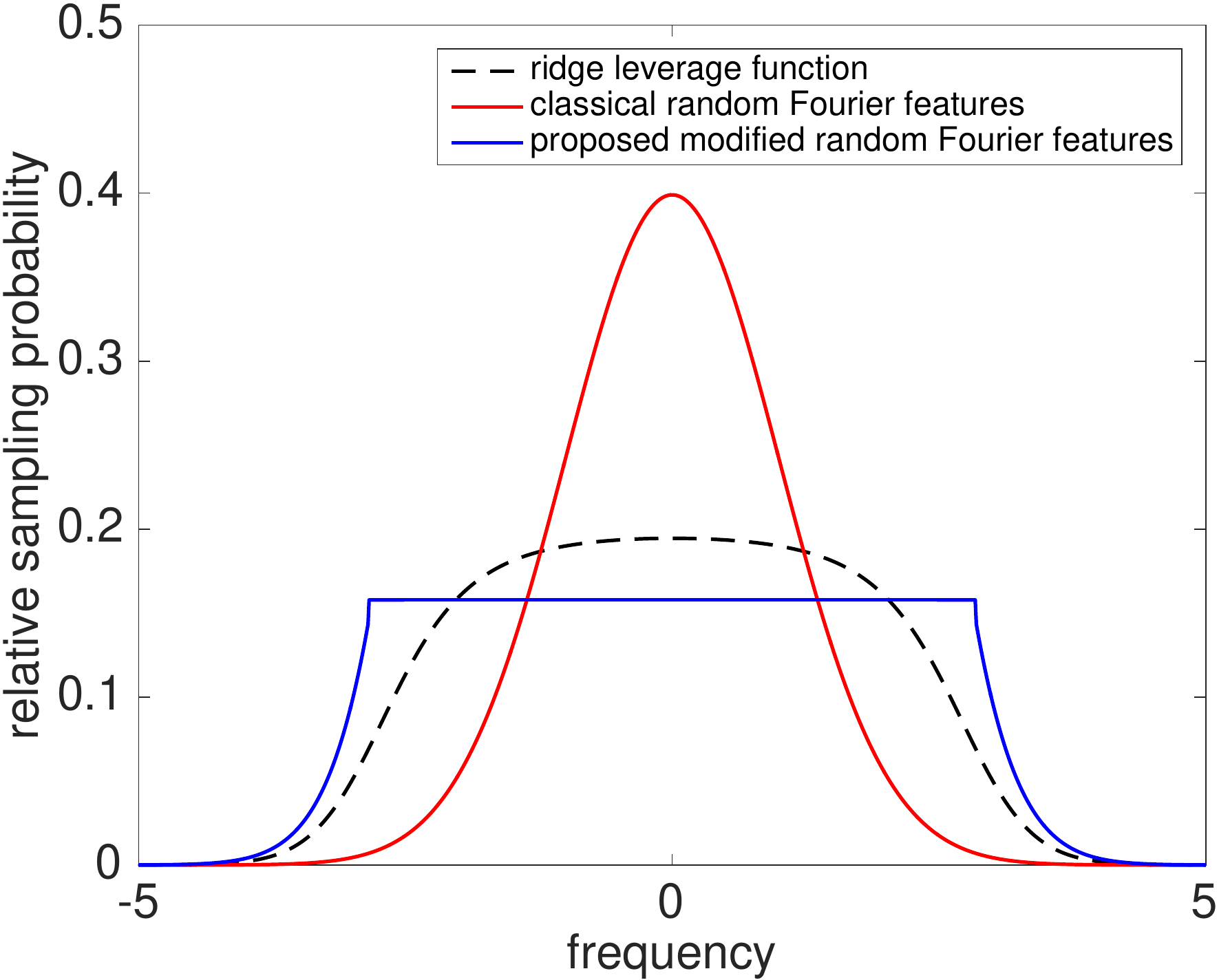}
	\caption{Plot of the true ridge leverage function vs. the classic random Fourier features distribution and our modified distribution, for a dataset of $n = 401$ equispaced points on the range $[-5,5]$. Our modified distribution closely matches the true leverage scores to within a small multiplicative factor. In contrast, the classical distribution oversamples low frequencies, at the expense of substantially undersampling higher frequencies.}
	\label{distCompared}
\end{figure}

\section{Bounding the Ridge Leverage Function}

\label{sec:actualBounds}

We now discuss our approach to bounding the ridge leverage function of the Gaussian kernel, which leads to Theorems \ref{thm:rr-samples} and \ref{thm:improvedBound}. The key idea is to reformulate the leverage function as the solution of
two dual optimization problems. By exhibiting suitable test functions for these optimization problems, we are able to give both upper and lower bounds on the ridge leverage function, and correspondingly on the sampling performance of classic and modified Fourier feature sampling.


\subsection{Primal-Dual Characterization}
Before introducing our primal-dual characterization of the ridge leverage function, we give a few definitions.
%
%
Define the operator $\matPhi : L_2(d\mu) \to \CC^n$ by
\begin{equation}\label{eq:phi-operator}
\matPhi y \equiv \int_{\RR^d} \z(\vxi) y(\vxi) d\mu(\vxi).
\end{equation}
 We first prove that the operator $\matPhi$ is defined on all $L_2(d\mu)$ and is a bounded linear operator. Indeed, for $y \in L_2(d\mu)$ we have:
\begin{eqnarray*}
	\TNormS{\matPhi y} &=& \LTNormS{\int_{\RR^d} \z(\vxi) y(\vxi) d\mu(\vxi)} \\
	&\leq & \int_{\RR^d} \TNormS{ \z(\vxi) y(\vxi) } d\mu(\vxi) \\
	&=&  \int_{\RR^d} |y(\vxi)|^2 \cdot \TNormS{ \z(\vxi) } d\mu(\vxi) \\
	&=& n \cdot \XNormS{y}{L_2(d\mu)}\,.
\end{eqnarray*}
Therefore, there is a unique adjoint operator
$\matPhi^\conj : \CC^n \to  L_2(d\mu)$,
such that $\langle \matPhi y, \x \rangle_{\CC^n} = \langle  y, \matPhi^\conj \x \rangle_{L_2(d\mu)}$ for every
$y \in L_2(d\mu)$ and $\x\in \CC^n$.
It is easy to verify that
$(\matPhi^\conj \x)(\veta) = \z(\veta)^\conj \x$.
We now have the following:
\begin{prop}
	\label{prop13}
	For every $\x \in \CC^n$:
	$$\matPhi \matPhi^\conj \x = \matK \x.$$
\end{prop}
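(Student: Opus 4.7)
The plan is to compute $\matPhi\matPhi^\conj\x$ directly by composing the definitions of $\matPhi$ and its adjoint, and then invoke the integral representation of $\matK$ established in the preliminaries (the identity $\matK = \int_{\RR^d} \z(\veta)\z(\veta)^\conj\,d\mu(\veta)$ that follows from Bochner's theorem).

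First I would apply the formula $(\matPhi^\conj \x)(\veta) = \z(\veta)^\conj \x$, which the excerpt has already derived. Substituting this function into the definition \eqref{eq:phi-operator} of $\matPhi$ gives
\begin{equation*}
\matPhi\matPhi^\conj\x = \int_{\RR^d} \z(\vxi)\bigl(\matPhi^\conj\x\bigr)(\vxi)\,d\mu(\vxi) = \int_{\RR^d} \z(\vxi)\z(\vxi)^\conj\x\,d\mu(\vxi).
\end{equation*}
Since $\x \in \CC^n$ does not depend on $\vxi$, I would pull it out on the right to obtain
\begin{equation*}
\matPhi\matPhi^\conj\x = \left(\int_{\RR^d} \z(\vxi)\z(\vxi)^\conj\,d\mu(\vxi)\right)\x = \matK\x,
\end{equation*}
where the last equality is precisely the integral expression for $\matK$ given at the end of \S2.2.3.

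The only genuine subtlety is justifying the manipulations with the Bochner integrals: pulling the constant vector $\x$ out of the integral and identifying the vector-valued integral $\int \z(\vxi)\z(\vxi)^\conj d\mu(\vxi)$ entrywise with $\matK$. Both are standard once one observes that each entry $(\z(\vxi)\z(\vxi)^\conj)_{jk} = e^{-2\pi i \vxi^\T(\x_j-\x_k)}$ is integrable with respect to $\mu$ (its integral being exactly $k(\x_j,\x_k) = \matK_{jk}$ by \eqref{eq:bochner}), so Fubini applies and the operator equality reduces to a componentwise check. I expect no real obstacle beyond this bookkeeping; the proof is essentially a one-line verification once the adjoint has been identified.
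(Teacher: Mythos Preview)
Your proposal is correct and follows essentially the same approach as the paper's proof: substitute the adjoint formula into the definition of $\matPhi$, pull $\x$ outside the integral, and identify the remaining integral as $\matK$ via the Bochner representation.
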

\begin{proof}
	We have that for every $\x \in \CC^n$,
	\begin{eqnarray*}
		\matPhi \matPhi^\conj \x &=& \int_{\RR^d} \z(\vxi) (\matPhi^\conj \x)(\vxi) d\mu(\vxi) \\
		&=& \int_{\RR^d} \z(\vxi) \z(\vxi)^\conj \x d\mu(\vxi) \\
		& = & \left( \int_{\RR^d} \z(\vxi) \z(\vxi)^\conj d\mu(\vxi)\right) \x = \matK \x.
	\end{eqnarray*}
\end{proof}

We can now equivalently define the ridge leverage function $\tau_\lambda(\cdot)$ via the following optimization problems. Similar characterization are known for the finite dimensional case. Here we extend these results to an infinite dimensional case.
\begin{lem}
	\label{lem:altlev-ub}
	The ridge leverage function can alternatively be defined as:
	\begin{equation}
	\label{eq:altlev}
	\tau_\lambda(\veta) = \min_{y \in L_2(d\mu)} \lambda^{-1}\TNormS{\matPhi y - \sqrt{p(\veta)} \z(\veta)} + \XNormS{y}{L_2(d\mu)}.
	\end{equation}
\end{lem}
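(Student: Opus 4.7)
The plan is to reduce the infinite-dimensional optimization on the right-hand side to a finite-dimensional ridge-regression problem on $\CC^n$ by exploiting the structure of $\matPhi$ and its adjoint. The key identity we will use throughout is $\matPhi\matPhi^\conj = \matK$ from Proposition~\ref{prop13}, along with the adjoint relation $(\matPhi^\conj \x)(\veta) = \z(\veta)^\conj \x$.

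First, I would decompose the Hilbert space $L_2(d\mu)$ as $L_2(d\mu) = \overline{\range{\matPhi^\conj}} \oplus \ker(\matPhi)$, which is an orthogonal decomposition since $\matPhi^\conj$ is the adjoint of the bounded operator $\matPhi$. Any $y\in L_2(d\mu)$ can thus be written uniquely as $y = \matPhi^\conj \u + y_\perp$ for some $\u\in\CC^n$ and $y_\perp \in \ker(\matPhi)$. By construction $\matPhi y_\perp = 0$, so
\[
\TNormS{\matPhi y - \sqrt{p(\veta)}\z(\veta)} = \TNormS{\matPhi\matPhi^\conj \u - \sqrt{p(\veta)}\z(\veta)} = \TNormS{\matK \u - \sqrt{p(\veta)}\z(\veta)},
\]
while $\XNormS{y}{L_2(d\mu)} = \XNormS{\matPhi^\conj \u}{L_2(d\mu)} + \XNormS{y_\perp}{L_2(d\mu)} = \u^\conj \matPhi\matPhi^\conj \u + \XNormS{y_\perp}{L_2(d\mu)} = \u^\conj \matK \u + \XNormS{y_\perp}{L_2(d\mu)}$. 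Since $y_\perp$ appears only in a nonnegative term, the optimum is attained with $y_\perp = 0$, reducing the problem to
\[
\tau_\lambda(\veta) \stackrel{?}{=} \min_{\u \in \CC^n} \lambda^{-1}\TNormS{\matK\u - \sqrt{p(\veta)}\z(\veta)} + \u^\conj \matK \u.
\]

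Next, writing $\b = \sqrt{p(\veta)}\z(\veta)$ and taking the gradient of this finite-dimensional objective with respect to $\u$, the stationarity condition is $\matK\bigl[\lambda^{-1}(\matK\u-\b) + \u\bigr]=0$, which is solved (on the range of $\matK$, and in the nullspace the objective is independent of $\u$) by $\u^\star = (\matK+\lambda\matI_n)^{-1}\b$. Substituting back, one finds $\matK\u^\star - \b = -\lambda(\matK+\lambda\matI_n)^{-1}\b$, so
\[
\lambda^{-1}\TNormS{\matK\u^\star - \b} + (\u^\star)^\conj \matK \u^\star = \b^\conj (\matK+\lambda\matI_n)^{-1}\bigl[\lambda\matI_n + \matK\bigr](\matK+\lambda\matI_n)^{-1}\b = \b^\conj(\matK+\lambda\matI_n)^{-1}\b,
\]
which equals $p(\veta)\z(\veta)^\conj(\matK+\lambda\matI_n)^{-1}\z(\veta) = \tau_\lambda(\veta)$ by definition.

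The only genuine obstacle is verifying that the orthogonal decomposition $L_2(d\mu) = \overline{\range{\matPhi^\conj}} \oplus \ker(\matPhi)$ is valid and that the minimum in \eqref{eq:altlev} is attained. The first follows from standard Hilbert-space theory applied to the bounded linear operator $\matPhi$ (whose boundedness was established in the paragraph preceding the lemma), using $\ker(\matPhi) = \range{\matPhi^\conj}^\perp$. The second is immediate once the problem is reduced to a strictly convex quadratic on $\CC^n$. Everything else is bookkeeping with the identities $\matPhi^\conj\x = \z(\cdot)^\conj \x$ and $\matPhi\matPhi^\conj = \matK$.
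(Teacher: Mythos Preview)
Your proof is correct, but it takes a different route from the paper. The paper works directly with the infinite-dimensional normal equations for the objective in \eqref{eq:altlev}, obtaining $(\matPhi^\conj\matPhi + \lambda \matI_{L_2(d\mu)})y^\star = \sqrt{p(\veta)}\,\matPhi^\conj\z(\veta)$, and then invokes an operator Woodbury identity (citing \cite{Ogawa88}) to rewrite the solution as $y^\star = \sqrt{p(\veta)}\,\matPhi^\conj(\matK+\lambda\matI_n)^{-1}\z(\veta)$; the value is then computed by explicitly expanding $\XNormS{y^\star}{L_2(d\mu)}$ and $\TNormS{\matPhi y^\star - \sqrt{p(\veta)}\z(\veta)}$.

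Your approach instead uses the orthogonal decomposition $L_2(d\mu)=\range{\matPhi^\conj}\oplus\ker(\matPhi)$ to reduce to a finite-dimensional quadratic in $\u\in\CC^n$ before ever writing down normal equations. This is legitimate here because $\matPhi^\conj$ has finite-dimensional domain, so its range is automatically closed (a point worth making explicit, since your write-up puts a closure bar on the range). The payoff is that you avoid the operator matrix-inversion lemma entirely and work only with $n\times n$ linear algebra; the cost is the extra step of justifying the decomposition and the non-uniqueness of $\u$ when $\matPhi^\conj$ (equivalently $\matK$) has a kernel, which you handle correctly by noting the objective is constant on that kernel. Both arguments produce the same minimizer $y^\star=\matPhi^\conj(\matK+\lambda\matI_n)^{-1}\b$ and the same value.
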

\begin{proof}
The minimizer of the right-hand side of~\eqref{eq:altlev} can be obtained from the usual normal
equations, and simplified using the matrix inversion lemma for operators~\cite{Ogawa88}:
\begin{eqnarray*}
	y^\star &=& \sqrt{p(\veta)} (\matPhi^\conj \matPhi + \lambda \matI_{L_2(d\mu)})^{-1} \matPhi^\conj \z(\veta) \\
	& = & \sqrt{p(\veta)} \matPhi^\conj (\matPhi \matPhi^\conj + \lambda \matI_n)^{-1}  \z(\veta) \\
	& = & \sqrt{p(\veta)} \matPhi^\conj (\matK + \lambda \matI_n)^{-1}  \z(\veta)
\end{eqnarray*}
where we used Proposition~\ref{prop13} to replace $\Phi \matPhi^\conj$ with $\matK$.
So, $y^\star(\vxi) = \sqrt{p(\veta)} \z(\vxi)^\conj (\matK + \lambda \matI_n)^{-1}  \z(\veta)$.
We now have
\begin{eqnarray*}
	\|y^\star\|^2_{L_2(d\mu)} & = &p(\veta) \int_{\RR^d} | \z(\vxi)^\conj (\matK + \lambda \matI_n)^{-1}  \z(\veta)|^2d\mu(\vxi) \\
	& = & p(\veta) \int_{\RR^d} \z(\veta)^\conj (\matK + \lambda \matI_n)^{-1} \z(\vxi) \z(\vxi)^\conj (\matK + \lambda \matI_n)^{-1}  \z(\veta)d\mu(\vxi) \\
	& = & p(\veta) \z(\veta)^\conj (\matK + \lambda \matI_n)^{-1} \left(\int_{\RR^d} \z(\vxi) \z(\vxi)^\conj d\mu(\vxi) \right) (\matK + \lambda \matI_n)^{-1}  \z(\veta) \\
	& = & p(\veta) \z(\veta)^* (\matK + \lambda \matI_n)^{-1} \matK (\matK + \lambda \matI_n)^{-1}  \z(\veta) \\
	& = & p(\veta) \z(\veta)^* (\matK + \lambda \matI_n)^{-1} (\matK + \lambda \matI_n - \lambda \matI_n) (\matK + \lambda \matI_n)^{-1}  \z(\veta)\\
	& = & p(\veta) \z(\veta)^* (\matK + \lambda \matI_n)^{-1}  \z(\veta) - \lambda p(\veta) \z(\veta)^* (\matK + \lambda \matI_n)^{-2}  \z(\veta)\\
\end{eqnarray*}
and
\begin{eqnarray*}
	\TNormS{\matPhi y^\star - \sqrt{p(\veta)} \z(\veta)} &=& p(\veta) \TNormS{ \matPhi  \matPhi^\conj (\matK + \lambda \matI_n)^{-1}  \z(\veta) - \z(\veta)} \\
	& = & p(\veta) \TNormS{ (\matK (\matK + \lambda \matI_n)^{-1} - \matI_n)\z(\veta)} \\
	& = & p(\veta) \TNormS{ \big((\matK +\lambda \matI_n - \lambda \matI_n) (\matK + \lambda \matI_n)^{-1} - \matI_n \big)\z(\veta)}\\
	& = & p(\veta) \TNormS{ \big( \lambda (\matK + \lambda \matI_n)^{-1} \big)\z(\veta)}\\
	& = & \lambda^2 p(\veta) \z(\veta)^\conj (\matK + \lambda \matI_n)^{-2} \z(\veta)\,.
\end{eqnarray*}

Now plugging these into \eqref{eq:altlev} gives:
\begin{align*}
&\|y^\star\|^2_{L_2(d\mu)} + \lambda^{-1} \TNormS{\matPhi y^\star - \sqrt{p(\veta)} \z(\veta)} \\
&\qquad = p(\veta) \z(\veta)^\conj (\matK + \lambda \matI_n)^{-1}  \z(\veta) - \lambda p(\veta) \z(\veta)^* (\matK + \lambda \matI_n)^{-2}  \z(\veta)\\
&\qquad \qquad+ \lambda p(\veta) \z(\veta)^* (\matK + \lambda \matI_n)^{-2} \z(\veta)\\
&\qquad = p(\veta) \z(\veta)^* (\matK + \lambda \matI_n)^{-1}  \z(\veta) \\
&\qquad = \tau_\lambda(\veta).
\end{align*}
\end{proof}

Recall that we define $\z(\veta)_j = e^{-2\pi i \x_j^\T \veta}$. So $\matPhi$ is just a $d$-dimensional Fourier transform of the function $y$ weighted by probability measure $d\mu(\vxi) = p(\vxi)d\vxi$, and evaluated at the frequencies given by the data points $\x_1,...,\x_n$.
Thus, the optimization problem of Lemma \ref{lem:altlev-ub} asks us to produce a function $y$ whose Fourier transform is close to the pure cosine wave $\sqrt{p(\veta)} \z(\veta)$ on our datapoints. At the same time, to keep the second term of \eqref{eq:altlev} small, $y$ should have bounded norm under the $\mu(\vxi)$ measure. So, the trivial solution of setting $y$ to be a Dirac delta function at $\veta$ (whose Fourier transform is a pure cosine with frequency $\veta$) fails. A more carefully chosen function must be constructed whose Fourier transform looks like the cosine at our datapoints but diverges elsewhere. Such a function certifies that, on our datapoints, the cosine of frequency $\veta$ can be approximately reconstructed with low energy using other frequencies. Hence $\veta$ is not a critical frequency for sampling, so $\tau_\lambda(\veta)$ is small.

Dual to minimization objective of Lemma \ref{lem:altlev-ub}, which allows us to certify upper bounds on the ridge leverage function, we have a maximization objective allowing us to certify lower bounds:
\begin{lem}
	\label{lem:altlev-lb}
	The ridge leverage function can alternatively be defined as:
	\begin{equation}
	\label{eq:altlev-lb}
	\tau_\lambda(\veta) = \max_{\valpha \in \CC^n} \frac{p(\veta)\cdot|\z(\veta)^{\conj} \valpha|^2}{\XNormS{\matPhi^\conj \valpha}{L_2(d\mu)}+\lambda \TNormS{\valpha}}.
	\end{equation}
\end{lem}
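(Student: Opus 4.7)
The plan is to reduce the dual characterization to a standard finite-dimensional Rayleigh quotient maximization by first simplifying the denominator. The key observation is that the denominator $\XNormS{\matPhi^\conj \valpha}{L_2(d\mu)}+\lambda \TNormS{\valpha}$ collapses to a quadratic form in $\valpha$ involving $\matK + \lambda \matI_n$. Indeed, by the definition of the adjoint and Proposition~\ref{prop13},
\begin{equation*}
\XNormS{\matPhi^\conj \valpha}{L_2(d\mu)} = \langle \matPhi^\conj \valpha, \matPhi^\conj \valpha\rangle_{L_2(d\mu)} = \langle \valpha, \matPhi \matPhi^\conj \valpha\rangle_{\CC^n} = \valpha^\conj \matK \valpha,
\end{equation*}
so the denominator equals $\valpha^\conj (\matK + \lambda \matI_n)\valpha$.

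Next, since $\matK + \lambda \matI_n$ is Hermitian positive definite, I would perform the change of variables $\u = (\matK + \lambda \matI_n)^{1/2}\valpha$, which gives $\valpha = (\matK + \lambda \matI_n)^{-1/2}\u$. The optimization problem then reduces to
\begin{equation*}
\max_{\u \in \CC^n} \frac{p(\veta)\cdot \left|\z(\veta)^\conj (\matK + \lambda \matI_n)^{-1/2}\u\right|^2}{\TNormS{\u}}.
\end{equation*}
This is a standard Rayleigh quotient of the form $\max_\u |\v^\conj \u|^2 / \TNormS{\u}$ where $\v = (\matK + \lambda \matI_n)^{-1/2}\z(\veta)$, and by Cauchy--Schwarz the maximum equals $\TNormS{\v}$ and is attained at $\u^\star \propto \v$.

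Plugging back in, the maximum value is
\begin{equation*}
p(\veta)\cdot \z(\veta)^\conj (\matK + \lambda \matI_n)^{-1/2}(\matK + \lambda \matI_n)^{-1/2}\z(\veta) = p(\veta)\cdot \z(\veta)^\conj (\matK + \lambda \matI_n)^{-1}\z(\veta),
\end{equation*}
which is precisely $\tau_\lambda(\veta)$ by definition, with the maximizer being $\valpha^\star = (\matK + \lambda \matI_n)^{-1}\z(\veta)$ (up to scaling). There is no genuine obstacle here: the only subtlety worth double-checking is that the identification $\XNormS{\matPhi^\conj \valpha}{L_2(d\mu)} = \valpha^\conj \matK \valpha$ is legitimate in the infinite-dimensional setting, which follows cleanly from the boundedness of $\matPhi$ established just before Proposition~\ref{prop13} together with that proposition itself; after this step, the argument is purely finite-dimensional.
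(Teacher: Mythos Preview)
Your proof is correct and slightly more direct than the paper's. The paper proceeds in two stages: it first shows that the ratio is bounded above by $\tau_\lambda(\veta)$ for every $\valpha$ by invoking the primal characterization (Lemma~\ref{lem:altlev-ub}), writing the optimal $(\bar y,\bar\u)$ satisfying $\matPhi\bar y+\sqrt{\lambda}\,\bar\u=\sqrt{p(\veta)}\,\z(\veta)$, taking an inner product with $\valpha$, and applying Cauchy--Schwarz twice; it then exhibits the maximizer $\bar\valpha=\sqrt{p(\veta)}(\matK+\lambda\matI_n)^{-1}\z(\veta)$ and verifies directly (after the same denominator simplification you use) that the ratio equals $\tau_\lambda(\veta)$. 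Your route bypasses the primal problem entirely: once the denominator is reduced to $\valpha^\conj(\matK+\lambda\matI_n)\valpha$, the whitening substitution $\u=(\matK+\lambda\matI_n)^{1/2}\valpha$ turns the problem into a standard Rayleigh quotient and a single Cauchy--Schwarz finishes it. The paper's argument makes the primal--dual relationship explicit, which is thematically useful in context, whereas yours is the shortest self-contained derivation.
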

\begin{proof}
The optimization problem~\eqref{eq:altlev} can equivalently be reformulated as the following problem:
\begin{equation*}
\begin{aligned}
\tau_\lambda(\veta) = & {\text{ minimum}}
& & \XNormS{y}{L_2(d\mu)} + \TNormS{\u} \\
& y \in L_2(d\mu) ;
& & \u \in \CC^n\\
& \text{subject to:}
& & \matPhi y + \sqrt{\lambda}\u = \sqrt{p(\veta)} \z(\veta).
\end{aligned}
\end{equation*}

First we show that for any $\valpha \in \CC^n$, the argument of the minimization problem in \eqref{eq:altlev-lb} is no bigger than $\tau_\lambda(\veta)$. That is because for the optimal solution to above optimization, namely $\bar \u$ and $\bar y$, we have:
$$\matPhi \bar y + \sqrt{\lambda}\bar \u = \sqrt{p(\veta)} \z(\veta).$$
Hence,
\begin{align*}
|\sqrt{p(\veta)}\valpha^\conj \z(\veta)| &= |\valpha^\conj(\matPhi \bar y + \sqrt{\lambda}\bar \u)|\nonumber\\
&= |\valpha^\conj\matPhi \bar y + \valpha^\conj\sqrt{\lambda}\bar \u|\nonumber\\
&\le |\valpha^\conj\matPhi \bar y| + |\valpha^\conj\sqrt{\lambda}\bar \u|\nonumber\\
&= | \langle \valpha , \matPhi \bar y \rangle_{\CC^n} | + |\valpha^\conj\sqrt{\lambda}\bar \u|\nonumber\\
&= | \langle \matPhi^\conj \valpha , \bar y \rangle_{L_2(d\mu)} | + |\valpha^\conj\sqrt{\lambda}\bar \u| \nonumber\\
&\le \XNorm{\matPhi^\conj\valpha}{L_2(d\mu)}\cdot\XNorm{\bar y}{L_2(d\mu)} +
\sqrt{\lambda}\TNorm{\valpha^\conj}\cdot\TNorm{\bar \u}
\end{align*}
where the last inequality follows from Cauchy-Schwarz inequality
($|\valpha^\conj\matPhi \bar y| = |(\valpha^\conj\matPhi \bar y)^\conj| = |(\matPhi \bar y)^\conj\valpha| = |\langle\bar y, \matPhi^*\valpha\rangle_{L_2(d\mu)}|\leq  \XNorm{\matPhi^\conj\valpha}{L_2(d\mu)} \cdot \XNorm{\bar y}{L_2(d\mu)}$). By another use of Cauchy-Schwarz we have:
\begin{align*}
p(\veta)|\valpha^\conj  \z(\veta)|^2 &\le \Big( \XNorm{\matPhi^\conj\valpha}{L_2(d\mu)} \XNorm{\bar
	y}{L_2(d\mu)} + \sqrt{\lambda}\TNorm{\valpha^\conj} \cdot \TNorm{\bar \u} \Big)^2\nonumber\\
&\le  \Big( \XNormS{\matPhi^\conj\valpha}{L_2(d\mu)} + {\lambda}\TNormS{\valpha^\conj} \Big)
\cdot \Big( \XNormS{\bar y}{L_2(d\mu)} + \TNormS{\bar \u} \Big).
\end{align*}
Therefore, for every $\valpha \in \CC^n$,
\begin{equation}
\frac{p(\veta)|\valpha^\conj  \z(\veta)|^2}{\XNormS{\matPhi^\conj\valpha}{L_2(d\mu)} +
	\lambda \TNormS{\valpha}} \le \XNormS{\bar y}{L_2(d\mu)} + \TNormS{\bar \u} =
\tau_\lambda(\veta).
\end{equation}
Now it is enough to show that at the optimal $\valpha$ the dual problem gives the leverage scores. We show that $\bar \valpha = \sqrt{p(\veta)}( \matK + \lambda \matI_n )^{-1} \z(\veta)$ matches the leverage scores. First note that for any $\valpha \in \CC^n$ we have
\begin{align*}
\XNormS{\matPhi^\conj\valpha}{L_2(d\mu)} + \lambda \TNormS{\valpha} &=
\langle \matPhi^\conj\valpha, \matPhi^\conj\valpha\rangle_{L_2(d\mu)} + \lambda \valpha^\conj\valpha\nonumber\\
&= \langle \matPhi \matPhi^\conj\valpha, \valpha\rangle_{\CC^n} + \lambda \valpha^\conj\valpha\nonumber\\
&= \langle \matK\valpha, \valpha\rangle_{\CC^n} + \lambda \valpha^\conj\valpha\nonumber\\
&=  \valpha^\conj( \matK + \lambda \matI_n)\valpha.
\end{align*}

Now by substituting $\bar \valpha = \sqrt{p(\veta)} ( \matK + \lambda \matI_n )^{-1} \z(\veta)$ we have:
\begin{align}
\frac{p(\veta)|\bar \valpha^\conj  \z(\veta)|^2}{\XNormS{\matPhi^\conj\bar\valpha}{L_2(d\mu)}
	+ \lambda \TNormS{\bar\valpha}} &= \frac{p(\veta)^2| \z(\veta)^\conj ( \matK + \lambda \matI_n
	)^{-1}  \z(\veta)|^2}{p(\veta) \z(\veta)^\conj ( \matK + \lambda \matI_n )^{-1} ( \matK + \lambda
	\matI_n)( \matK + \lambda \matI_n )^{-1}  \z(\veta)}\nonumber\\
&= p(\veta)| \z(\veta)^\conj ( \matK + \lambda \matI_n )^{-1}  \z(\veta)| \nonumber\\
& = \tau_\lambda(\veta).
\end{align}
\end{proof}
The optimization problem of Lemma \ref{lem:altlev-lb} asks us to exhibit a set of coefficients $\valpha \in \CC^n$, such that the Fourier domain representation of our point set weighted by these coefficients (i.e. $\matPhi^{\conj} \valpha$) is concentrated at frequency $\veta$ and hence $\frac{p(\veta)\cdot|\valpha^\conj \z(\veta)|^2}{\XNormS{\matPhi^\conj \valpha}{L_2(d\mu)}}$ is large. $\valpha$ certifies that $\veta$ is a critical frequency for representing our point set and so $\tau_\lambda(\veta)$ must be large. $\lambda \norm{\valpha}^2$ is a regularization term, decreasing the ridge leverage function when $p(\veta)$ is very small, i.e. when $\veta$ has small weight in the Fourier transform of our kernel.


\subsection{Bounding the Gaussian Kernel Leverage Function: Upper Bound}\label{sec:upperBoundSketch}


We start by applying Lemma \ref{lem:altlev-ub} to prove a ridge leverage function upper bound for the Gaussian kernel. Again, to reduce clutter, we state the result for a fixed bandwidth $\sigma=(2\pi)^{-1}$.
%
%
\begin{thm}\label{thm:lev-scores-ub}
	Consider the d-dimensional Gaussian kernel with $\sigma=(2\pi)^{-1}$. For any integer $n$ and parameter $0 < \lambda \le \frac{n}{2}$ such that $d \le n_\lambda/4$, and any radius $\rad > 0$,
	if $\x_1,...,\x_n \in \RR^d$  is contained in a $\ell_\infty$-ball of radius $\rad$ (i.e $\InfNorm{\x_i -\x_j} \leq 2\rad$ for all $i,j\in[n]$), then for every $\|\veta\|_\infty \le 10 \sqrt{\log n_\lambda}$ we have:
	$$\tau_\lambda(\veta) \le \Big( 12.4\max(\rad, 2000 \log^{1.5} n_\lambda) \Big)^{d} + 1\,.$$
\end{thm}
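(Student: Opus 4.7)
The plan is to apply the variational characterization in Lemma~\ref{lem:altlev-ub},
\[
\tau_\lambda(\veta) \le \lambda^{-1}\TNormS{\matPhi y - \sqrt{p(\veta)}\,\z(\veta)} + \XNormS{y}{L_2(d\mu)},
\]
and exhibit an explicit test function $y$ that makes the right-hand side at most $(12.4\max(\rad,2000\log^{1.5}n_\lambda))^d + 1$. Observing that $(\matPhi y)_j = \int e^{-2\pi i \x_j^\T\vxi}\,y(\vxi)p(\vxi)\,d\vxi$, writing $h(\vxi)\equiv y(\vxi)p(\vxi)$ gives $(\matPhi y)_j = \hat h(\x_j)$. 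I would take $h(\vxi) = \sqrt{p(\veta)}\,g(\vxi - \veta)$ where $g(\vxi) = \prod_{k=1}^d g_1(\xi_k)$ is a product of identical $1$-D bumps concentrated near the origin. With this choice the approximation error at the $j$-th datapoint is $\sqrt{p(\veta)}\,e^{-2\pi i\x_j^\T\veta}(\hat g(\x_j) - 1)$, after pulling out a pure phase, so it reduces to designing $g_1$ with $\hat g_1 \approx 1$ on $[-\rad,\rad]$ and small $L_2$ norm.

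After translating so that $\{\x_j\}\subseteq[-\rad,\rad]^d$ (which is harmless since $\tau_\lambda$ depends only on pairwise differences), I would take $g_1$ to be a rescaled Gaussian (or a bump of similar shape) with ``width'' $\epsilon$ in the frequency variable of order $1/\max(\rad,2000\log^{1.5}n_\lambda)$. The cap $\log^{1.5}n_\lambda$ handles the small-$\rad$ regime and emerges from the interaction of three constraints on $\epsilon$: (i)~$\epsilon\lesssim 1/\rad$, so that $\hat g_1(x)\approx 1$ on $[-\rad,\rad]$; (ii)~$\epsilon\lesssim 1/\|\veta\|_\infty \le 1/(10\sqrt{\log n_\lambda})$, so that the ratio $p(\veta)/p(\vxi) = e^{(\vxi^2-\veta^2)/2}$ stays bounded on the effective support of $g_1(\cdot-\veta_k)$; and (iii)~an additional factor of $\log n_\lambda$ needed to make the truncation error of the bump at the Gaussian tail subdominant in the approximation-error term.

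With this $g$, the norm penalty is $\XNormS{y}{L_2(d\mu)} = p(\veta)\int |g(\vxi-\veta)|^2/p(\vxi)\,d\vxi$. Taylor-expanding $\log p$ around $\veta$, bounding $p(\veta)/p(\vxi)$ by a constant on the bulk of the support of $g$, and taking the product over the $d$ coordinates gives $\XNormS{y}{L_2(d\mu)} \le (12.4\max(\rad,2000\log^{1.5}n_\lambda))^d$, the dominant term in the claimed bound. For the approximation error, $\lambda^{-1}\TNormS{\matPhi y - \sqrt{p(\veta)}\z(\veta)} = \lambda^{-1} p(\veta)\sum_j |\hat g(\x_j)-1|^2$. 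Since $|1 - \hat g_1(x)| \le O(\epsilon^2 x^2)$ for $|x|\le\rad$ with our Gaussian choice, and $\hat g(\x_j)-1$ inherits a mild $d$-fold amplification from the product structure, summing over the $n$ datapoints and dividing by $\lambda$ produces a contribution of order $n_\lambda\,p(\veta)\,(d\epsilon^2\rad^2)^2$, which the cap on $\epsilon$ together with $d\le n_\lambda/4$ keeps below $1$; this is the source of the additive ``$+1$'' in the statement.

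The main obstacle will be the sharp quantitative bookkeeping needed to extract the explicit constant $12.4$ and the exponent $\log^{1.5}$, since the three scales above must be balanced simultaneously and the constants propagate through a $d$-th power from the product structure. The delicate point is that the norm term, whose integrand contains the exponentially growing factor $1/p(\vxi)$, is extremely sensitive to how fast $g_1$ decays in its tails compared to how rapidly $p$ decays near $\veta$; verifying that this ratio stays bounded by a universal constant on the relevant window (and controlling the $L_2$ contribution of the tail outside that window) is where the bulk of the technical calculation lies. The structural ideas (Fourier duality, the product test function, and a Gaussian-shaped $g_1$) are the clean parts; the careful choice of width and the truncation estimates for $\hat g_1$ on $[-\rad,\rad]^d$ are where I expect the work to concentrate.
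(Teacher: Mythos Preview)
Your high-level plan---plug an explicit product test function into Lemma~\ref{lem:altlev-ub}, translate so the data lies in $[-\rad,\rad]^d$, and balance the approximation error against the $L_2(d\mu)$ norm---matches the paper exactly. The gap is in the \emph{shape} of $g_1$. A pure Gaussian (or any smooth bump with $\hat g_1(0)=1$, $\hat g_1'(0)=0$) satisfies only $|1-\hat g_1(x)| = \Theta(\epsilon^2 x^2)$ on $[-\rad,\rad]$, which is too weak. With your cap $\epsilon \asymp 1/\max(\rad,\log^{1.5}n_\lambda)$ you get $\epsilon^2\rad^2 \le 1$, hence $d\epsilon^2\rad^2 \le d$, and the approximation term is $\ge n_\lambda\,p(\veta)\,\Theta(d^2)$, which for $d\le n_\lambda/4$ is \emph{not} below $1$ (it can be as large as $\Theta(n_\lambda^3)$). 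To force that term below $1$ with a Gaussian bump you would need $\epsilon \lesssim 1/(\rad\sqrt{d}\,n_\lambda^{1/4})$, and then the norm term, which scales like $(C/\epsilon)^d$, blows up by an extra factor $n_\lambda^{d/4}$---far from $(12.4\rad)^d$.

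The paper's fix is to take $g_1$ equal to a \emph{sinc times a Gaussian envelope}: $g_1(\xi) \propto e^{-u^2\xi^2/4}\,v\,\sinc{v\xi}$ with $v = 2(\rad + u\sqrt{2\log n_\lambda})$ and $u = 2000\log n_\lambda$. In Fourier, this is a box of width $v$ convolved with a Gaussian of standard deviation $\Theta(1/u)$, i.e.\ a blurred indicator. For $x\in[-\rad,\rad]$ the box already covers $[-u\sqrt{2\log n_\lambda},\,u\sqrt{2\log n_\lambda}]$ in the convolution integral, so $|1-\hat g_1(x)|$ is $e^{-\Theta(\log n_\lambda)}$-small---a \emph{polynomial} in $1/n_\lambda$, not merely $O(\epsilon^2 \rad^2)$. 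This is what lets the first term be $\le p(\veta)\le 1$ while keeping the effective width $1/v \asymp 1/\max(\rad,\log^{1.5}n_\lambda)$. The Gaussian envelope (with width $\Theta(1/u)$, independent of $\rad$) is there only to kill the slow $1/|\xi|$ tail of $\sinc$ against the exponentially growing weight $1/p(\xi)$; this is exactly the ``delicate point'' you flagged, and Claim~\ref{claim:blah} plus a splitting of the norm integral at $|\xi-\eta|\approx \sqrt{\log n_\lambda}/u$ handles it. The $\log^{1.5}n_\lambda$ in the statement is $u\sqrt{\log n_\lambda}$, coming from the extra box width needed to swallow the Gaussian blur, not from a truncation estimate on $\hat g_1$.
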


Applying Theorem \ref{thm:lev-scores-ub} for $\veta$ with $\|\veta\|_\infty < 10 \sqrt{\log n_\lambda}$ and Proposition \ref{prop:simple-tau-bound} for $\veta$ outside this range immediately implies our improved sampling bound Theorem \ref{thm:improvedBound}.

\paragraph{Theorem~\ref{thm:lev-scores-ub} Proof Outline (Details and a full proof in Appendix~\ref{sec:lev-scores-ub}).}

For simplicity we focus on the case of $d = 1$. Our proof for higher dimensions uses similar ideas.
To upper bound $\tau_\lambda(\eta)$ using Lemma~\ref{lem:altlev-ub} it suffices to exhibit any function $y_\eta \in L_2(d\mu)$ (i.e. with bounded norm $\XNormS{y_\eta}{L_2(d\mu)}$) such that, when reweighted by $\mu(\xi) = p(\xi)d\xi$, $y_\eta$'s Fourier transform is close to the pure cosine target function $\bv{z}(\eta)$ on our datapoints. In general the test function depends on $\eta$ and hence our subscript notation $y_\eta(\cdot)$.

One simple attempt is  $y_\eta(\xi) = \frac{1}{\sqrt{p(\eta)}} \delta(\eta - \xi)$ where $\delta(\cdot)$ is the Dirac delta function. This
choice zeros out the first term of \eqref{eq:altlev}. However $\delta(\cdot)$ is not square integrable,
$y_\eta \not\in L_2(d\mu)$, so the lemma cannot be used (the norm is unbounded). Another attempt is $y_{\eta}(\xi) = 0$, which zeros out the
second term and recovers the trivial bound $\tau_\lambda(\eta) \leq \lambda^{-1} \norm{\sqrt{p(\eta)} \z(\eta)}_2^2 = p(\eta)n_\lambda$ of Proposition~\ref{prop:simple-tau-bound}.

We improve this bound by replacing the Dirac delta function at $\eta$ with a `soft spike' whose Fourier transform still looks approximately like
a cosine wave on $[-\rad,\rad]$, and hence at our data points, which are bounded on this range. The smaller $\rad$ is, the more spread out this function
can be, and hence the smaller its norm $\XNormS{y_\eta}{L_2(d\mu)}$, and the better the leverage function bound.

A natural idea is to consider the inverse Fourier transform of the cosine with frequency $\eta$ restricted to the range $[-\rad,\rad]$ -- i.e. multiplied by the box function on this range. It is well known that this is a sinc function with width $1/2\rad$, centered at $\eta$: $g_\eta(\xi) = 2\rad\cdot \sinc{2\rad (\xi-\eta)}$, where $\sinc{x} = \frac{\sin x}{x}$ (see Figure \ref{sincFigure}). If we set $y_\eta(\xi) = g_\eta(\xi) \cdot \frac{\sqrt{p(\eta)}}{p(\xi)}$, the $d\mu$ weighted Fourier transform at $x_j \in [-\rad,\rad]$, $(\matPhi y_\eta)_j$, will be identical to the target $\bv{z}(\eta)_j$ and so again the first term of \eqref{eq:altlev} will be $0$. Unfortunately, $ \XNormS{y_\eta}{L_2(d\mu)}$ will still be too large. The reweighting function $1/p(\xi) = 2\pi e^{\xi/2}$ grows exponentially in $\xi$, while $\sinc{2\rad(\xi-\eta)}$ only falls off linearly, so $y_\eta$ will have unbounded energy in the high frequencies.

\begin{figure}[t]
	\centering
	\begin{subfigure}{0.5\textwidth}
		\centering
		\includegraphics[width=.9\textwidth]{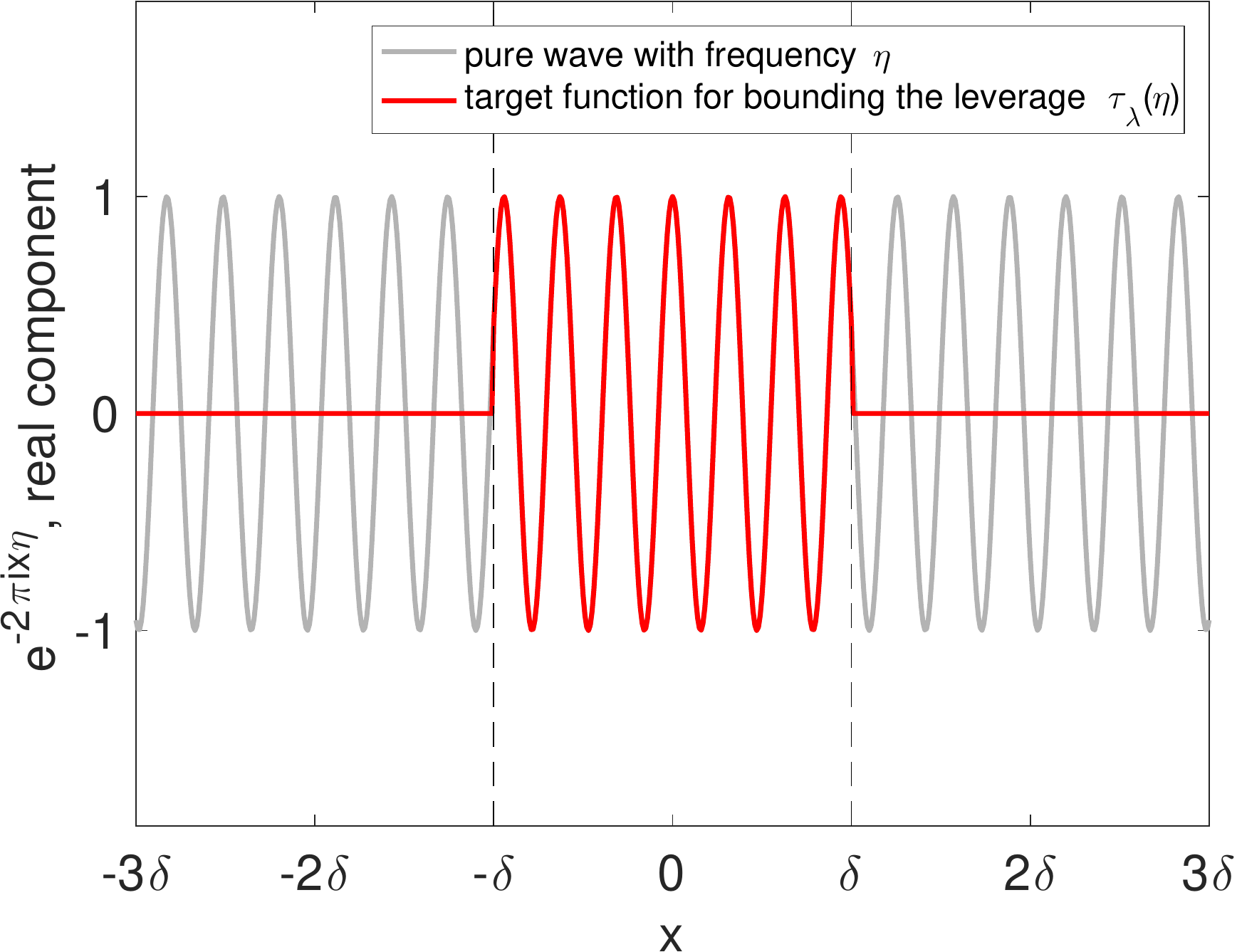}
	\end{subfigure}%
	\begin{subfigure}{0.5\textwidth}
		\centering
		\includegraphics[width=.9\textwidth]{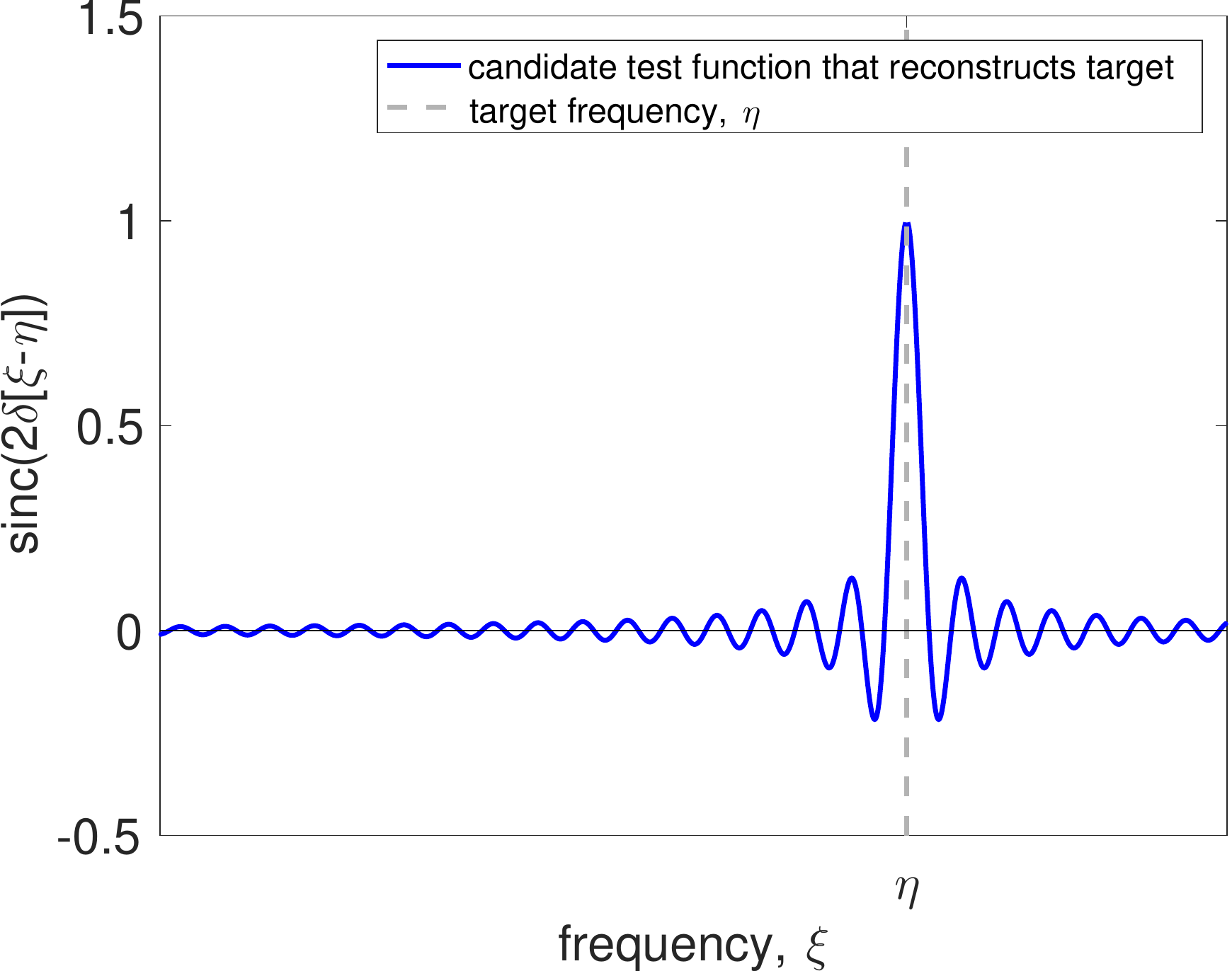}
	\end{subfigure}
	\caption{To minimize $\norm{\matPhi y_\eta - \bv{z}(\eta)}_2^2$, we can choose a test function $y_\eta(\xi)$ whose ($d\mu$ weighted) Fourier transform $\matPhi y_\eta$ is the pure cosine $e^{-2\pi i x \eta}$ multiplied by the box function on $[-\rad,\rad]$. Specifically, $y_\eta(\xi)p(\xi)$ is a sinc function centered at $\eta$. Unfortunately, $\XNormS{y_\eta}{L_2(d\mu)}$ is too large to get a good leverage function bound from Lemma \ref{lem:altlev-ub}. However, this construction is the starting point for our final test function, pictured in Figure \ref{softSincFigure}.}
	\label{sincFigure}
\end{figure}

To correct this issue, we dampen the sinc at higher frequencies by multiplying with a Gaussian, which decreases $ \XNormS{y_\eta}{L_2(d\mu)}$, but does not significantly affect the Fourier transform on $[-\rad,\rad]$.

Specifically, for some parameters $u,v$ set $g_\eta(\xi)$ to be product of a Gaussian with standard deviation $1/u$ with a sinc function with width $1/v$, both centered at $\eta$. The corresponding Fourier transform $\hat{g}_\eta(x)$ is the convolution of a Gaussian with standard deviation $u$ with a box of width $v$ -- i.e. a blurred box.

If we set $v = \Theta(\rad  + u\sqrt{\log n_\lambda})$ then the box, when centered at $x \in [-\rad, \rad]$ nearly covers the full mass of the Gaussian. Specifically, we have $1 - 1/n_\lambda^c \le |\hat{g}_\eta(x) | \le 1$ for $x \in [-\rad, \rad]$ and some large constant $c$. Since $g_\eta(\xi)$ is centered at $\eta$, $\hat{g}_\eta(x)$ is multiplied by the cosine wave $e^{-2\pi i x \eta}$, and so we have $(\matPhi y_{\eta})_j = \sqrt{p(\eta)} \hat{g}_\eta(x_j) \approx \bv{z}(\eta)_j$. Thus, when applying Lemma \ref{lem:altlev-ub} to bound the leverage function, the first term of ~\eqref{eq:altlev} will be negligible (see Figure \ref{softSincFigure}).

Theorem \ref{thm:lev-scores-ub} then follows from setting $u$ to minimize $\XNormS{y_\eta}{L_2(d\mu)}$ -- balancing increased damping for large $\eta$ with increased energy due to a more concentrated Gaussian. We eventually choose $u = \Theta(\log n_\lambda)$. Obtaining tight bounds and in particular achieving the right dependence on $\log n_\lambda$ requires several modifications, but the general intuition described above works!

\begin{figure}[t]
	\begin{subfigure}{0.5\textwidth}
		\centering
		\includegraphics[width=.9\textwidth]{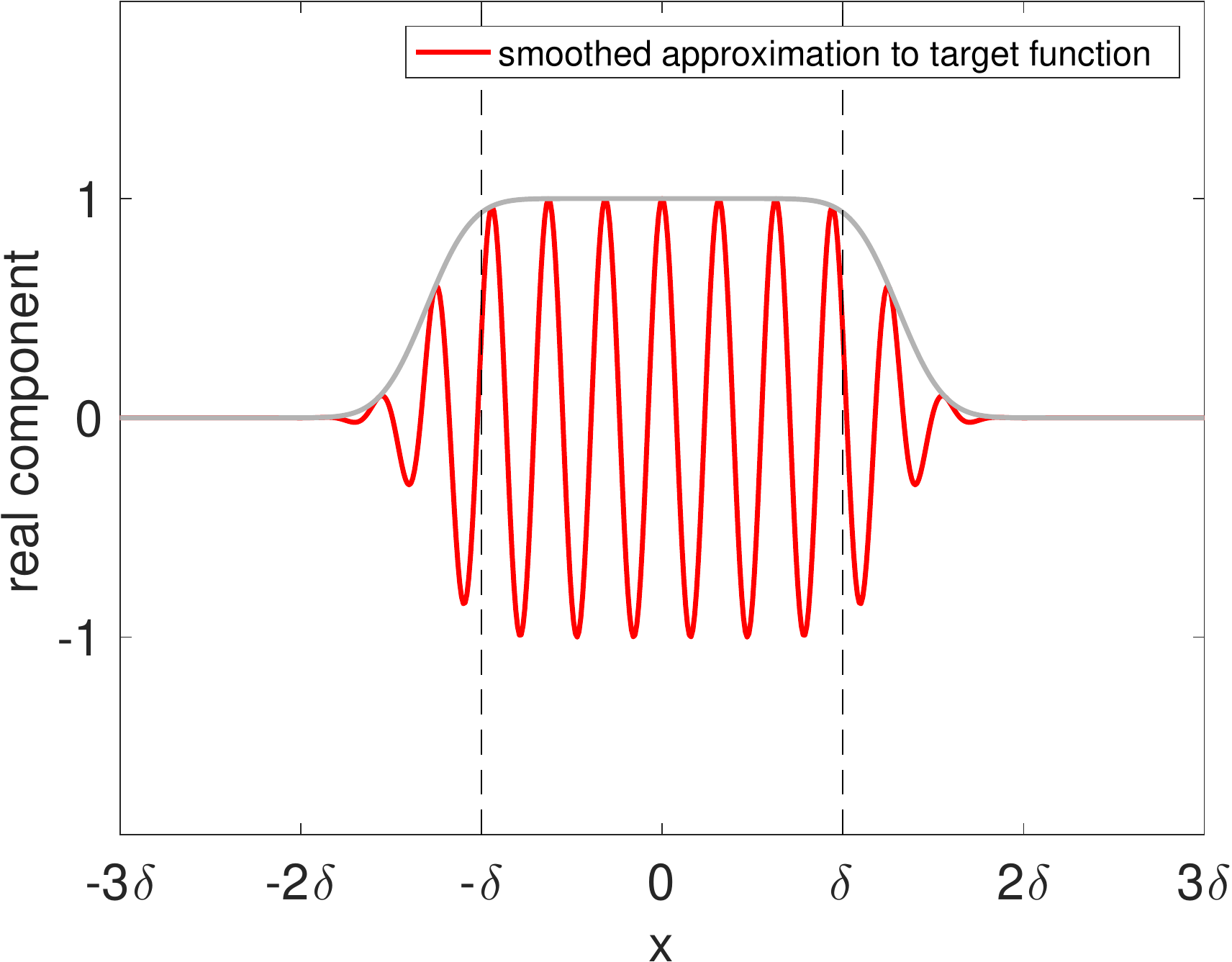}
	\end{subfigure}%
	\begin{subfigure}{0.5\textwidth}
		\centering
		\includegraphics[width=.9\textwidth]{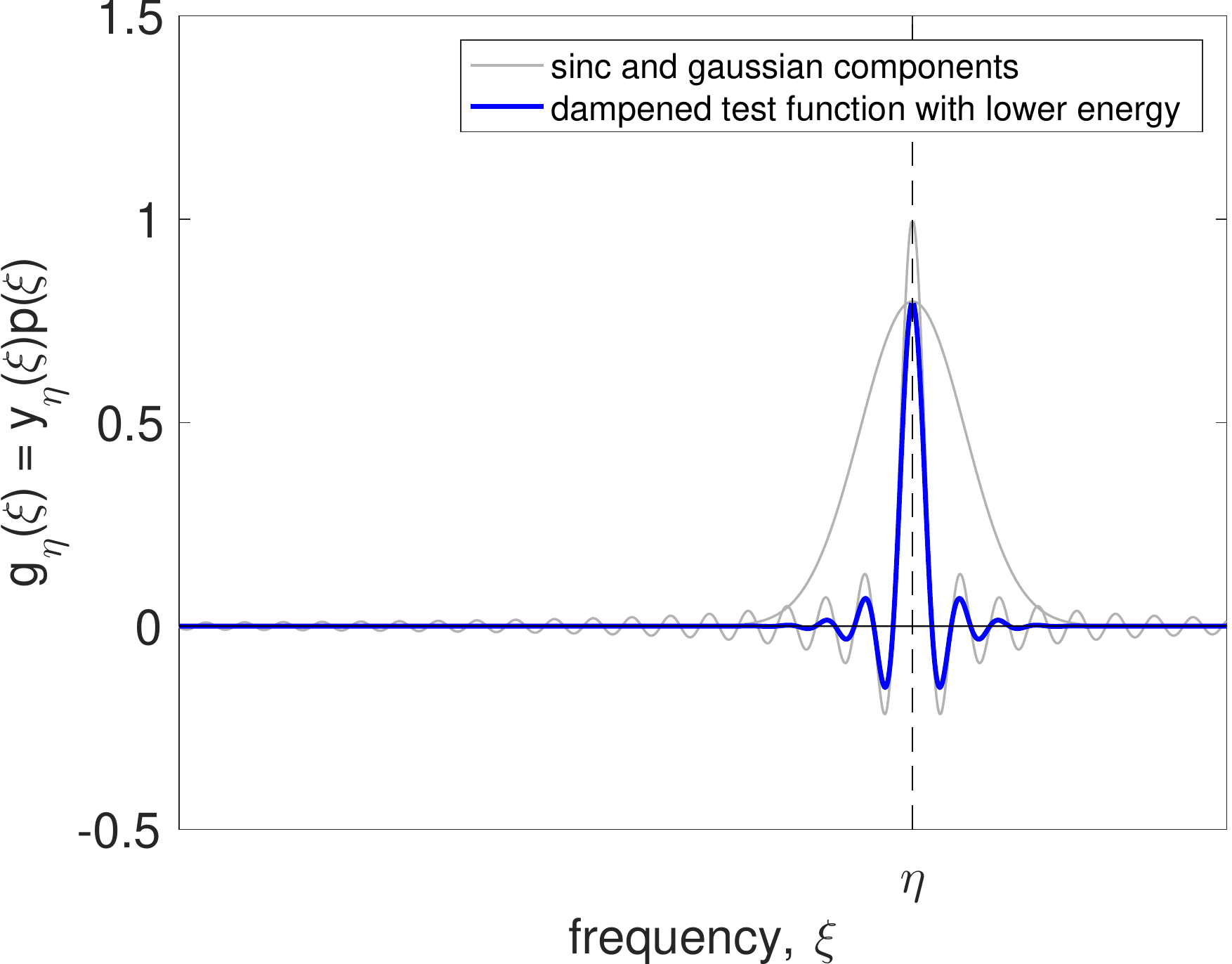}
	\end{subfigure}
	\caption{In comparison to Figure \ref{sincFigure}, damping the sinc function with a Gaussian decreases the energy $ \XNormS{y_\eta}{L_2(d\mu)}$ but does not significantly affect the Fourier transform on $[-\rad,\rad]$. $\matPhi y_{\eta}$ is a pure cosine with frequency $\eta$ multiplied by a blurred box function and thus $(\matPhi y_{\eta})_j \approx \bv{z}(\eta)_j$ for $x_j \in [-\rad,\rad]$. Accordingly, $y_{\eta}$ is ideal for bounding the leverage function via  Lemma \ref{lem:altlev-ub}.}
	\label{softSincFigure}
\end{figure}

\if 0

{\bf Intuition behind $\wh{y}_{\eta,  b, v}$ as a function in unweighted $L_2$.} In order to put our lower bound and upper bound analysis in perspective, it is instructive to first consider several basic properties of the function $\wh{y}_{\eta,  b, v}$ and its time domain representation. The reader may note that the time domain representation of $y$ is not quite what our analysis should be concerned with, as the operator $\matPhi$({\bf REALLY?}) amounts to performing the inverse Fourier transform on $L_2(d\rho)$ as opposed to unweighted $L_2$. However, we will specifically choose the variance $ b^2$ of the Gaussian pulse in the definition of $\wh{y}_{\eta,  b, v}$ quite small so that  the function $\wh{y}_{\eta,  b, v}$ is quite localized in frequency domain, and the weighting $d \rho$ will act almost as a scalar multiplier (see below~\ref{???}).
\fi

\subsection{Bounding the Gaussian Kernel Leverage Function: Lower Bound}

Using the dual leverage function characterization of Lemma \ref{lem:altlev-lb}, we can give a near matching leverage function lower bound for the Gaussian kernel. We have:

\begin{thm}\label{thm:main-lev-lb}
	Consider the $d$-dimensional Gaussian kernel with $\sigma=(2\pi)^{-1}$. For any integer $n = m^d \ge 55$ with integer	$m\geq\max( 64\log(n)\sqrt{\log n_\lambda}, 64\log(n_\lambda), 3)$
and $1\leq d \leq \min\left(\frac{\log n}{18\log\log n},64n^{5/2}_\lambda\log^{3/2}n_\lambda\right)$, any parameter $\frac{10}{n} \le \lambda \le
	\min\left\{\left(\frac{1}{2}\right)^{2d}\cdot\frac{n}{1024}, n^{1-\frac{1}{128}}\right\}$, and every radius
	$2000\log n_\lambda \le \rad \le
	\frac{m}{500\sqrt{\log(n_\lambda)}}$,
	there exist $\x_1,\x_2,\dots,\x_n \in [-\rad,\rad]^d$ such that for every $\veta \in [-50\sqrt{\log n_\lambda} ,
	50\sqrt{\log n_\lambda}]^d$ we have
	\[ \tau_\lambda(\veta) \ge \frac{1}{128}\left( \frac{\rad}{3}\right)^d \cdot \frac{p(\veta)}{p(\veta) +
		(4\rad/3)^d n_\lambda^{-1}}. \]
\end{thm}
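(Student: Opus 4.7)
The plan is to invoke the dual characterization of Lemma~\ref{lem:altlev-lb} and exhibit a good test vector $\valpha$ paired with a carefully chosen dataset. Concretely, I would take $\x_1,\dots,\x_n$ to be a uniformly spaced product grid inside $[-\rad,\rad]^d$ with $m = n^{1/d}$ points per dimension and spacing $\delta = 2\rad/m$. The key quantitative feature of this choice is that the Nyquist frequency $1/(2\delta) = m/(4\rad)$ is much larger than $50\sqrt{\log n_\lambda}$ under the hypothesis $\rad \le m/(500\sqrt{\log n_\lambda})$, so every alias of $\veta$ sits deep in the Gaussian tail of $p(\cdot)$.

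For a given $\veta\in[-50\sqrt{\log n_\lambda},50\sqrt{\log n_\lambda}]^d$ I would use a ``tone'' test vector $\valpha_j = e^{-2\pi i \x_j^\T \veta}$ (possibly restricted to a shrunken sub-grid of side length $\approx m/C$ in order to hit the $(\rad/3)^d$ factor that appears in the statement). Then $\z(\veta)^\conj \valpha = n$ (or $|S|$ in the restricted case) and $\TNormS{\valpha}=n$, so the numerator and the $\lambda$-regularizer in \eqref{eq:altlev-lb} are explicit. The core of the argument is bounding
\[
\XNormS{\matPhi^\conj \valpha}{L_2(d\mu)} \;=\; \int_{\RR^d} \Bigl|\sum_j e^{2\pi i \x_j^\T(\vxi-\veta)}\Bigr|^2 p(\vxi)\,d\vxi.
\]
Because the dataset is a product grid, the inner exponential sum factorizes as a product of $1$-D Dirichlet kernels $D_m(\delta(\xi_k-\eta_k)) = \sin(\pi m\delta(\xi_k-\eta_k))/\sin(\pi\delta(\xi_k-\eta_k))$, whose modulus squared is $1/\delta$-periodic with a main lobe of height $m^2$ and width $\Theta(1/(m\delta))$ at $\vxi=\veta$, together with aliased copies at $\veta+\bv{k}/\delta$ for $\bv{k}\in\ZZ^d\setminus\{\bv{0}\}$.

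The integral I would then split into a main-lobe piece and an alias piece. For the main lobe: the lobe has width $O(1/(m\delta)) = O(1/\rad)$, and the Gaussian density $p(\vxi)$ varies by only a constant factor over that scale near the bounded $\veta$, so the main-lobe contribution is bounded above by a constant times $p(\veta)$ times the Parseval-type integral $\int |D_m|^{2d}$ over one fundamental period, which equals $(m/\delta)^d = n^2/(2\rad)^d$. For the aliases: every non-zero alias satisfies $\|\veta+\bv{k}/\delta\|_\infty \ge 1/\delta - 50\sqrt{\log n_\lambda} \gg \sqrt{\log n_\lambda}$ by the hypothesis on $\rad$, so $p$ at each alias is smaller than $n_\lambda^{-\Omega(1)}$, and summing the geometric-type series in $\bv{k}$ keeps the alias contribution negligible compared to the main-lobe contribution. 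Plugging into \eqref{eq:altlev-lb} gives
\[
\tau_\lambda(\veta) \;\gtrsim\; \frac{p(\veta)\, n^2}{C\,p(\veta)\, n^2/(2\rad)^d \;+\; \lambda n} \;=\; \frac{(2\rad)^d}{C}\cdot\frac{p(\veta)}{p(\veta)+ C(2\rad)^d/n_\lambda},
\]
which, after tracking constants and shrinking the grid slightly to absorb the main-lobe vs.\ circumscribed-box loss, yields the stated $\frac{1}{128}(\rad/3)^d$ and $(4\rad/3)^d$ factors.

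The main obstacle is the Gaussian-weighted integral of the product Dirichlet kernel: we need a two-sided bound of order $p(\veta)\cdot n^2/(2\rad)^d$ with explicit constants, uniformly for all $\veta$ in the prescribed $\ell_\infty$-box, while simultaneously verifying that the alias sum is genuinely lower order. The $6^d$ slack between the back-of-the-envelope $(2\rad)^d$ and the theorem's $(\rad/3)^d$ strongly suggests that the clean way to proceed is to replace the main lobe of $|D_m|^{2d}$ by a rectangular majorant on a slightly shrunken box on which $p(\vxi)$ admits a clean uniform lower bound of order $p(\veta)$; this loses one multiplicative constant per coordinate but decouples the $d$ dimensions and makes the final expression factorize cleanly. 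Balancing the main-lobe width against the smoothness of $p$ and the Nyquist/aliasing constraints given by the hypotheses on $\rad$, $m$, $\lambda$, and $d$ is what makes this step the heart of the argument.
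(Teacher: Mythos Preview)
Your overall plan---uniform product grid plus the dual characterization of Lemma~\ref{lem:altlev-lb}---is exactly the paper's, but the pure-tone test vector $\valpha_j=e^{-2\pi i\x_j^\T\veta}$ is too crude, and the denominator bound you sketch has a real hole. You account for the Dirichlet main lobe (width $\sim 1/\rad$, where $p$ is essentially constant) and for the aliased copies at $\veta+\bv k/\delta$ (deep in the Gaussian tail), but not for the \emph{sidelobes of the Dirichlet kernel inside the fundamental period}. Those sidelobes have envelope $|D_m(\delta(\xi-\eta))|^2\lesssim 1/(\pi\delta(\xi-\eta))^2$, and when $\eta$ sits near the edge of the box, say $\eta=50\sqrt{\log n_\lambda}$, they cover $\xi\approx 0$, where $p(\xi)\sim 1$ while $p(\eta)\sim n_\lambda^{-1250}$. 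A direct one-dimensional computation (averaging the oscillating numerator $\sin^2(\pi m\delta(\xi-\eta))$ over the $O(1)$ support of $p$) gives a contribution of order $1/(\delta\eta)^2 = m^2/(2\rad\eta)^2$ to $\int |D_m|^2 p$. For $\rad$ near its lower limit $2000\log n_\lambda$ this is $\Theta(m^2/\log^3 n_\lambda)$, which dominates both the main-lobe term $m^2 p(\eta)/\rad$ and the regularizer $\lambda m = m^2/n_\lambda$ as soon as $n_\lambda\gg (\rad\eta)^2=O(\log^3 n_\lambda)$. In that regime your dual ratio yields only $\tau_\lambda(\eta)\gtrsim p(\eta)\cdot(\rad\eta)^2$, whereas the theorem requires $\gtrsim p(\eta)\,n_\lambda/(128\cdot 4^d)$; the gap is polynomial in $n_\lambda$ and cannot be closed by shrinking to a sub-grid (that only rescales $m$, leaving the sidelobe envelope intact).

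The paper's fix is exactly to suppress these sidelobes by \emph{windowing the tone in the time domain}: it takes $\alpha_{\bv j}=f_{\veta,b,\rad}(\x^{\bv j})$ with $f_{\veta,b,\rad}(\a)=2\cos(2\pi\veta^\T\a)\cdot\bigl((2\pi b^2)^{-d/2}e^{-\|\cdot\|^2/2b^2}*\rect_\rad\bigr)(\a)$ and $b=\rad/(8\sqrt{\log n_\lambda})$. The point of the smooth envelope is that $\valpha^\conj\z(\vxi)$ now decays like $e^{-2\pi^2 b^2\|\vxi\mp\veta\|^2}\cdot\sinc(\rad(\vxi\mp\veta))$ (Lemmas~\ref{f:1} and \ref{lem:8}) rather than like a Dirichlet kernel, so the mass of $|\valpha^\conj\z(\vxi)|^2 p(\vxi)$ is genuinely concentrated at $\vxi=\pm\veta$ and the integral is $O\bigl(n^2(3/(4\rad))^d p(\veta)+\lambda n\bigr)$ (Lemma~\ref{lem:alphabarnorm}). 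The constants $(\rad/3)^d$ and $(4\rad/3)^d$ in the statement come from this windowing and the choice $v=\rad$, not from a ``rectangular majorant of the main lobe'' as you conjecture.
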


\paragraph{Theorem~\ref{thm:main-lev-lb} Proof Outline (Details and a full proof are given in Appendix~\ref{sec:main-lev-lb}).}
 The main idea of the proof is to use  Lemma~\ref{lem:altlev-lb} to get a lower bound on $\tau_\lambda(\veta)$. Note that the expression given under the
maximum in \eqref{eq:altlev-lb} provides a lower bound for any choice of $\valpha$. However, we provide a judiciously
chosen $\valpha$ that is related to the test function $y_{\veta} \in L_2(d\mu)$ used in the proof of
Theorem~\ref{thm:lev-scores-ub} which provides an upper bound on $\tau_\lambda(\veta)$. The choice of
$y_{\veta}$ in the proof of the upper bound is essentially a sinc function that is dampened by a Gaussian centered at
$\veta$. Due to the duality of the corresponding minimization and maximization problems in Lemma~\ref{lem:altlev-ub}
and Lemma~\ref{lem:altlev-lb}, respectively, the optimal $\bs\alpha$ must essentially be a scalar multiple of $\matPhi
y_{\veta}$, which is a (weighted) Fourier transform of $y_{\veta}$ evaluated on the data points $\x_1, \x_2,\dots,
\x_n$. Hence, we should intuitively choose $\bs\alpha$ to be the samples of $y_{\veta}$ on the data points.
Moreoever, to provide the tightest possible lower bound, we wish to choose our data points $\x_1, \x_2, \dots, \x_n$ to
be as spread apart as possible, as this corresponds to a higher statistical dimension (which corresponds to higher
leverage scores on average). Thus, we choose our points to be evenly spaced points on a $d$-dimensional grid located
inside an $L_\infty$ ball of radius $\rad$ around the origin.

\subsection{Bounding the Statistical Dimension of Gaussian Kernel Matrices}
\label{sec:stat-bound}

Theorems~\ref{thm:lev-scores-ub} and~\ref{thm:main-lev-lb} together imply a
tight bound on the statistical dimension of Gaussian kernel matrices corresponding
to bounded points sets (the proof appears in Appendix~\ref{sec:cor-sd-bound}):
\begin{cor} \label{cor:sd-bound}
	Consider the $d$-dimensional Gaussian kernel with $\sigma=(2\pi)^{-1}$. For any integer $n = m^d \ge 17$ with integer	$m\geq 3$, parameter $0 < \lambda \le \frac{n}{2}$, $1\leq d \leq \frac{5 \log n_\lambda}{\log\log n_\lambda}$, and $\rad>0$, if $\x_1,...,\x_n \in [-\rad,\rad]^d$:
	\begin{eqnarray*}
		s_\lambda(\matK) & \le & \Big( 20 \sqrt{\log n_\lambda} \Big)^d \left(\Big( 12.4\max(\rad, 2000 \log^{1.5} n_\lambda) \Big)^{d} + 1\right)\Big/ \Gamma(d/2+1) + 1 \\
		& = & O\Big( \frac{(248\rad)^{d} \log(n_\lambda)^{d/2} + (200\log n_\lambda)^{2d}}{\Gamma(d/2+1)} \Big)
	\end{eqnarray*}
	
	Furthermore, if $2000\log n_\lambda \le \rad \le
	\frac{m}{500\sqrt{\log(n_\lambda)}}$, $1\leq d \leq \frac{\log n}{2\log\log n}$ and $m\geq 64\log(n)\sqrt{\log n_\lambda}$ there exists a set of points $\x_1,\ldots, \x_n\subseteq [-\rad, \rad]^d$ such that:
	$$s_\lambda(\matK) = \Omega\left( \frac{\left( \frac{\sqrt{\pi}\rad}{18} \sqrt{ \log \frac{n_\lambda}{\rad^d}} \right)^d}{\Gamma(d/2+1)} \right).$$
\end{cor}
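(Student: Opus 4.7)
}
The plan is to reduce both the upper and lower bounds on $s_\lambda(\matK)$ to integrals of the ridge leverage function via Proposition~\ref{prop:simple-slambda-bound}, which gives $s_\lambda(\matK) = \int_{\RR^d} \tau_\lambda(\veta)\,d\veta$, and then apply Theorems~\ref{thm:lev-scores-ub} and~\ref{thm:main-lev-lb} together with elementary volume and Gaussian tail calculations.

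\emph{Upper bound.} I would split the integration domain into the $\ell_2$ ball $B = \{\veta : \TNorm{\veta} \le 10\sqrt{\log n_\lambda}\}$ and its complement $B^c$. On $B$, since $\InfNorm{\veta} \le \TNorm{\veta} \le 10\sqrt{\log n_\lambda}$, Theorem~\ref{thm:lev-scores-ub} gives the pointwise bound $\tau_\lambda(\veta) \le (12.4\max(\rad, 2000\log^{1.5} n_\lambda))^d + 1 =: C$. Integrating over $B$ yields $C$ times the $\ell_2$ ball volume $\pi^{d/2}(10\sqrt{\log n_\lambda})^d/\Gamma(d/2+1)$, which is at most $C\cdot (20\sqrt{\log n_\lambda})^d/\Gamma(d/2+1)$ using $\sqrt{\pi}\le 2$. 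On $B^c$, I apply the cruder bound $\tau_\lambda(\veta) \le p(\veta)\, n_\lambda$ from Proposition~\ref{prop:simple-tau-bound}; the integral becomes $n_\lambda\cdot\Pr_{\veta\sim N(0,\matI_d)}[\TNorm{\veta} > 10\sqrt{\log n_\lambda}]$. Under the assumed range $d \le 5\log n_\lambda/\log\log n_\lambda$, a standard chi-squared tail bound shows this probability is at most $n_\lambda^{-2}$ (or similar), giving a contribution of at most $1$. Summing produces the stated upper bound.

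\emph{Lower bound.} For the lower bound I would instantiate the dataset from Theorem~\ref{thm:main-lev-lb}, whose hypotheses match those assumed in the second half of the corollary. That theorem gives, for all $\veta \in [-50\sqrt{\log n_\lambda}, 50\sqrt{\log n_\lambda}]^d$,
\[
  \tau_\lambda(\veta) \;\ge\; \frac{1}{128}\left(\frac{\rad}{3}\right)^d \cdot \frac{p(\veta)}{p(\veta) + (4\rad/3)^d n_\lambda^{-1}}.
\]
The key step is to restrict to the sub-region $S = \{\veta : p(\veta) \ge (4\rad/3)^d n_\lambda^{-1}\}$, on which the ratio above is at least $1/2$, so $\tau_\lambda(\veta) \ge \frac{1}{256}(\rad/3)^d$ pointwise. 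A direct calculation shows $S$ is the $\ell_2$ ball of radius $r^\star = \sqrt{2\log(n_\lambda/\rad^d) - d\log(32\pi/9)}$, and under the lower bound assumption $\rad \ge 2000\log n_\lambda$ this simplifies to $r^\star = \Theta(\sqrt{\log(n_\lambda/\rad^d)})$ with the desired constants. I would then verify $S \subseteq [-50\sqrt{\log n_\lambda}, 50\sqrt{\log n_\lambda}]^d$ so that the lower bound from Theorem~\ref{thm:main-lev-lb} applies throughout $S$ (this follows because $r^\star \le \sqrt{2\log n_\lambda}$). Integrating the constant $\frac{1}{256}(\rad/3)^d$ over $S$ contributes $\frac{1}{256}(\rad/3)^d \cdot \pi^{d/2}(r^\star)^d/\Gamma(d/2+1)$, which after absorbing constants into the $\Omega(\cdot)$ gives the claimed lower bound $\Omega\bigl((\sqrt{\pi}\rad/18)^d (\log(n_\lambda/\rad^d))^{d/2}/\Gamma(d/2+1)\bigr)$.

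\emph{Where the difficulty lies.} The conceptual content is already packed into Theorems~\ref{thm:lev-scores-ub} and~\ref{thm:main-lev-lb}, so the remaining work is bookkeeping: matching the constants $1/256$, $4/3$, and $\sqrt{\pi}/18$ between the pointwise leverage bound and the $\ell_2$ ball volume, and verifying that the chosen sub-regions fall inside the domain of validity of each theorem under all the dimension/radius assumptions. The most delicate step is the tail integral in the upper bound: one must confirm that the chi-squared concentration, combined with the restriction $d \le 5\log n_\lambda/\log\log n_\lambda$, produces an additive error of only $O(1)$ rather than something that would dominate the main term for small $\rad$.
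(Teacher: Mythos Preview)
Your proposal is correct and mirrors the paper's proof almost exactly: the paper likewise splits the upper-bound integral at the $\ell_2$ radius $10\sqrt{\log n_\lambda}$, uses Theorem~\ref{thm:lev-scores-ub} inside and the crude bound $\tau_\lambda\le n_\lambda p$ outside (handling the tail via polar coordinates and the inequality $r^{d-1}e^{-r^2/2}\le e^{-r^2/4}$ rather than an off-the-shelf chi-squared bound), and for the lower bound restricts to an $\ell_2$ ball on which $p(\veta)$ dominates $(4\rad/3)^d n_\lambda^{-1}$ so that Theorem~\ref{thm:main-lev-lb} yields a constant pointwise lower bound to integrate against the ball volume. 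The only cosmetic differences are the exact threshold chosen for $p(\veta)$ in the lower bound (the paper uses $p(\veta)\ge(\rad/\sqrt{2\pi})^d n_\lambda^{-1}$, picking up a $6^d$ rather than a factor of $2$) and the specific tail argument, neither of which changes the structure or the final constants in any essential way.
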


\newcommand{\Ic}{{\mathcal I}}

\section{Numerical Experiments}
We now report experiments on synthetic low-dimensional datasets. These experiments are designed to illustrate various points
made in the previous sections. The datasets are not designed to be realistic.

\begin{figure}[t]
	\begin{centering}
		\begin{tabular}{cc}
			\includegraphics[width=0.45\textwidth]{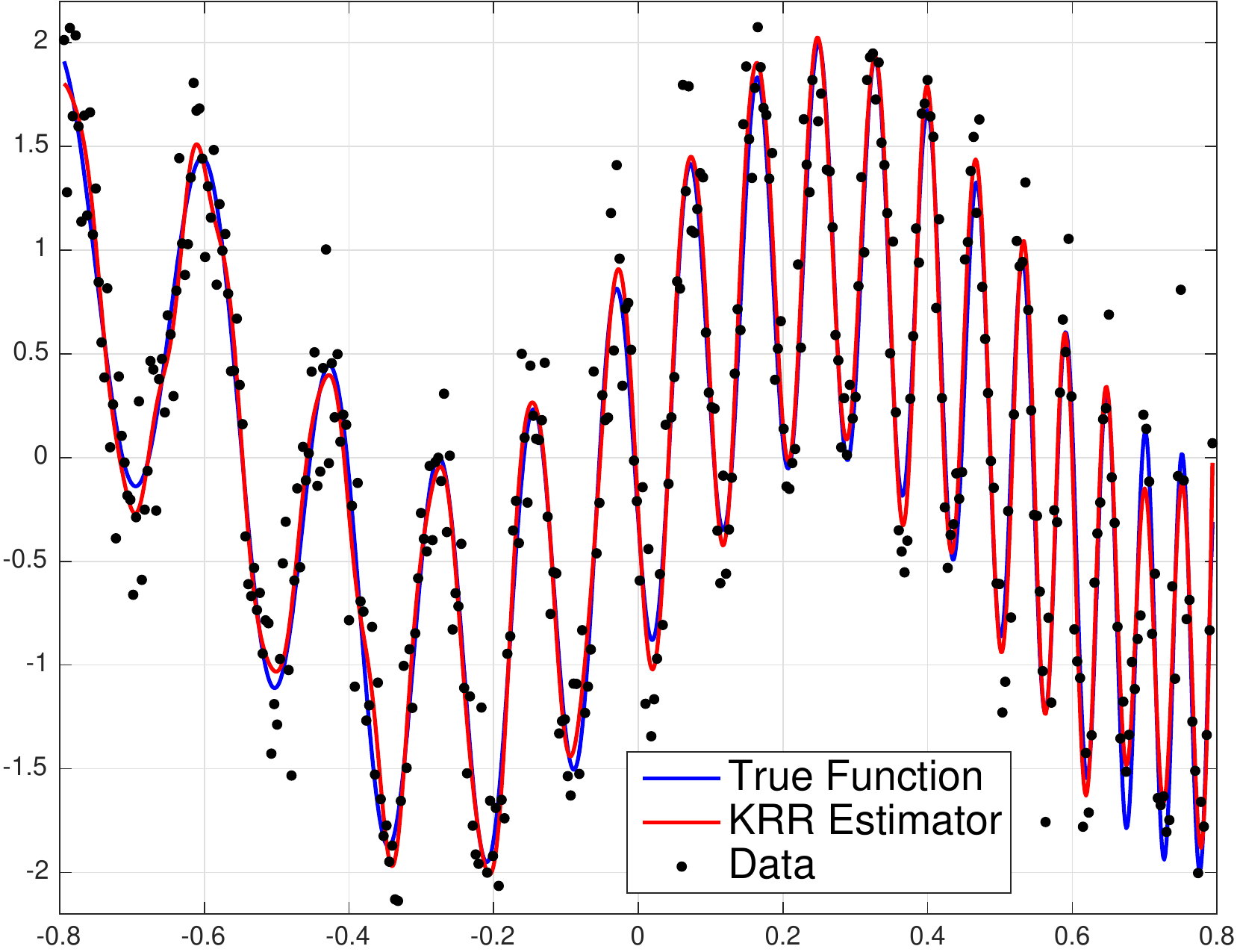} & \includegraphics[width=0.45\textwidth]{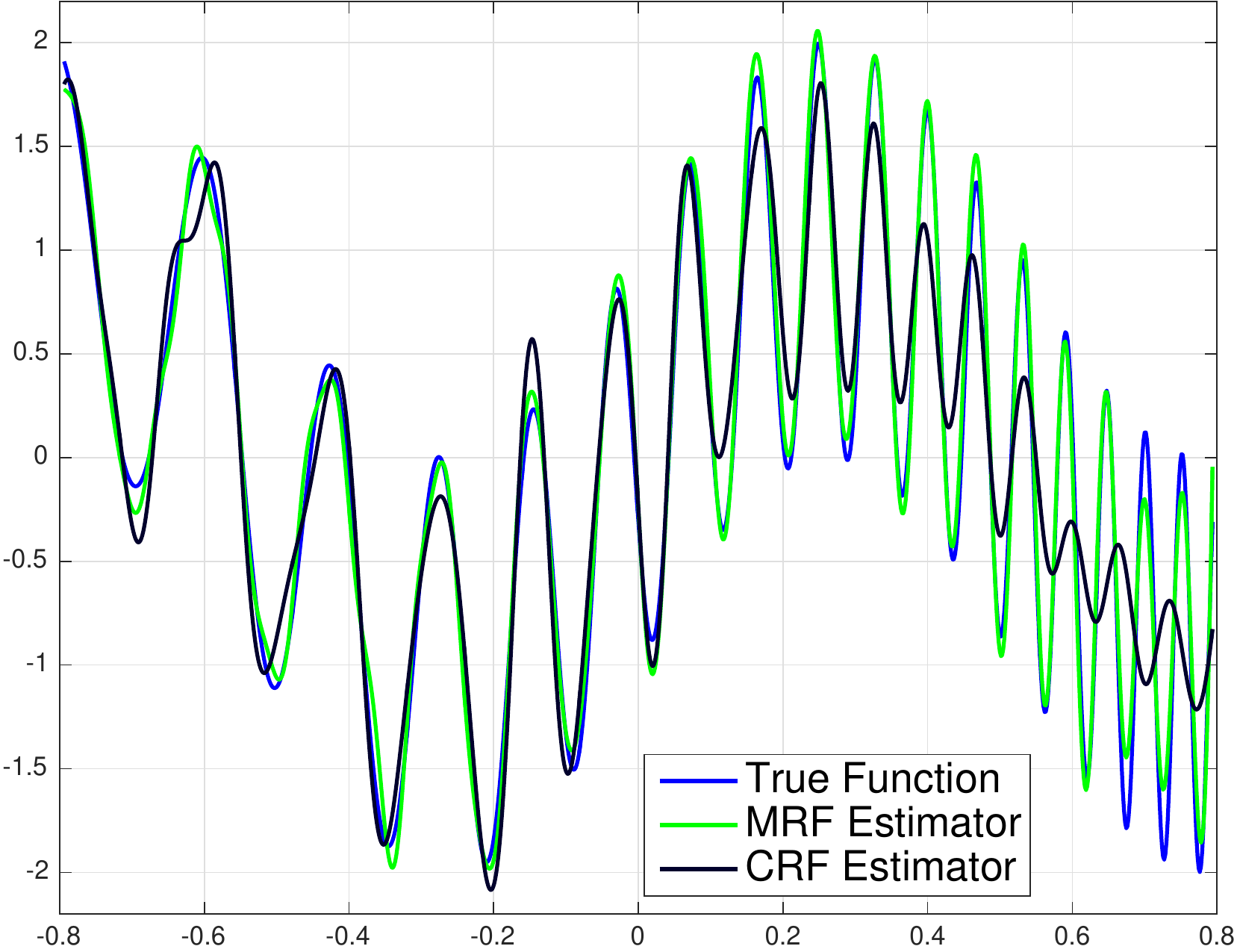}\\
		\end{tabular}
		\par\end{centering}
	\protect\caption{\label{fig:wiggly1} Results on the wiggly function~\eqref{eq:wiggly}. Left graph shows the function itself, the noisy samples and the KRR estimator. Right graph shows both the classical random Fourier features estimator (labeled {\em CRF}) and a modified random Fourier features estimator (labeled {\em MRF}).}
\end{figure}

In the first experiment, we noisily sample from the function\footnote{This function was taken from Trefethen's book on approximation theory~\cite{Trefethen12}.}
\begin{eqnarray}
\label{eq:wiggly}
f^\star(x) = \sin(6x) + \sin(60\exp(x))\,.
\end{eqnarray}
The function is sampled on a fine 400-point uniform grid spanning $[-5/2\pi, +5/2\pi]$. Samples are generated using the formula
$$
y_i = f^\star(x_i) + \nu_i\,.
$$
In the above, $x_i$ is a grid point, $y_i$ is the corresponding noisy sample, and $\{\nu_i\}$'s are i.i.d noise terms, distributed as normal variables with variance $\sigma^2_\nu = 0.3^2$. Figure~\ref{fig:wiggly1} (left) shows $f^\star$ and the noisy samples.

Figure~\ref{fig:wiggly1} (left) also shows the KRR estimator, obtained using the Gaussian kernel with $\sigma=0.0280443$ and regularization parameter $\lambda=0.00618936$. These values where obtained by optimizing the estimator's risk, which we can compute due to our knowledge of $f^\star$ and the noise distribution, using MATLAB's \texttt{fminsearch} function starting from $\sigma_0 = 1$ and $\lambda_0=1$.

Figure~\ref{fig:wiggly1} (right) shows the estimator obtained using $s=200$ classical random Fourier features (labeled {\em CRF}) and $s=200$ modified random Fourier features (labeled {\em MRF}). For modified random Fourier features, we did not use the analytical construction in \S\ref{sec:improved}, but rather use a uniform distribution on $[-\gamma/\sigma, \gamma/\sigma]$, treating $\gamma$ as a parameter (we use $\gamma = 4$). Technically, the support of the distribution is not the entire real line (as required), so the expected value of the substitute kernel is not identical to that of the true kernel, however the weight of values which are not in the support is negligible for large enough values of $\gamma$.  We clearly see that while classical random Fourier features fails to estimate the higher frequency areas of $f^\star$, modified random Fourier features approximates them well (close to the quality of the KRR estimator).

\begin{table*}[t]
	\protect\caption{\label{tab:wiggly} Comparison of estimators for the wiggly function~\eqref{eq:wiggly}.}
	\begin{center}
		\begin{tabular}{c|ccccc}
			{\bf Estimator} &  {\bf ${\cal R}(f)$} & {\bf $\sum^n_{i=1}(f^\star(x_i) - f(x_i))^2$} & {\bf $s_\lambda$} & {\bf $\frac{\FNormS{\matK - \matZ\matZ^\conj}}{\FNormS{\matK}}$} & {\bf $\kappa(\matK + \lambda \matI, \matZ \matZ^\conj + \lambda \matI)$ } \tabularnewline
			\hline
			KRR & 0.0164 & 0.0116 & $s_\lambda(\matK)=73.1$ & &   \tabularnewline
			CRF & 0.1474 & 0.1511 & $s_\lambda(\matZ\matZ^\conj)=46.2$ & 0.17 &  1458.6 \tabularnewline
			MRF & 0.0178 & 0.0120 & $s_\lambda(\matZ\matZ^\conj)=68.8$ & 0.31 &  56.2 \tabularnewline
		\end{tabular}
	\end{center}
\end{table*}

Table~\ref{tab:wiggly} compares the estimators quantitatively. We clearly see that the MRF estimator enjoys both a lower risk and
a lower actual in-sample error, when compared to the CRF estimator. MRF's risk is close to the KRR's risk. It is important to note that while the $\matZ$ produced by MRF leads to a better estimator, when it comes to approximating the kernel matrix entry-wise (measured by
$\FNormS{\matK - \matZ\matZ^\conj}/\FNormS{\matK}$), CRF produces a better approximation. This illustrates that entrywise error rates are not predictive of approximation quality. In contrast, the generalized condition number (ratio between largest and smallest generalized eigenvalues) of $(\matK + \lambda \matI, \matZ\matZ^\conj + \lambda \matI)$, closely related to spectral approximation guarantees, is much more predictive  of estimator quality (although additional experiments reveal that it is not completely predictive).

\begin{figure}
	\begin{centering}
		\begin{tabular}{ccc}
			\includegraphics[width=0.30\textwidth]{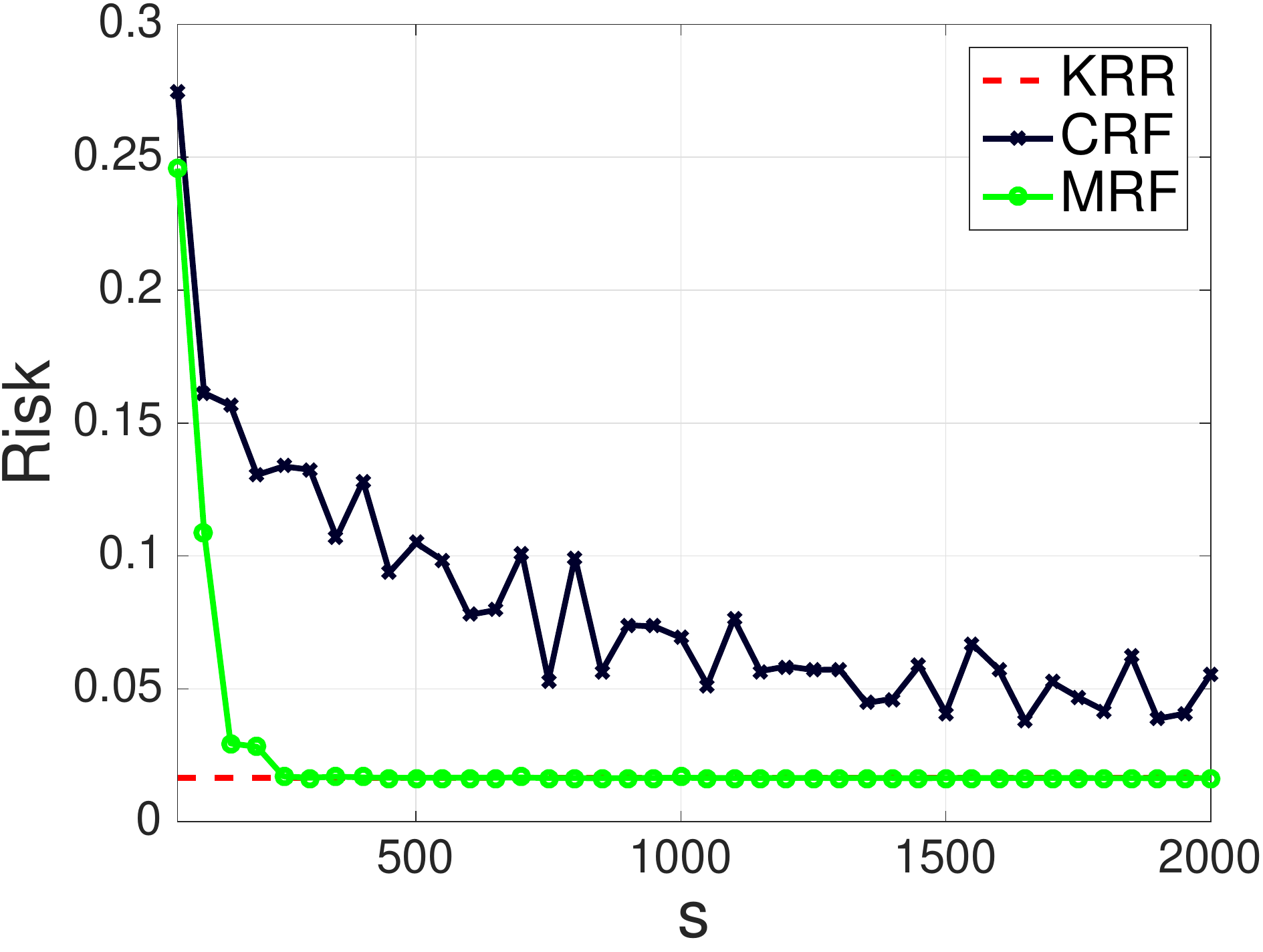} &
			\includegraphics[width=0.30\textwidth]{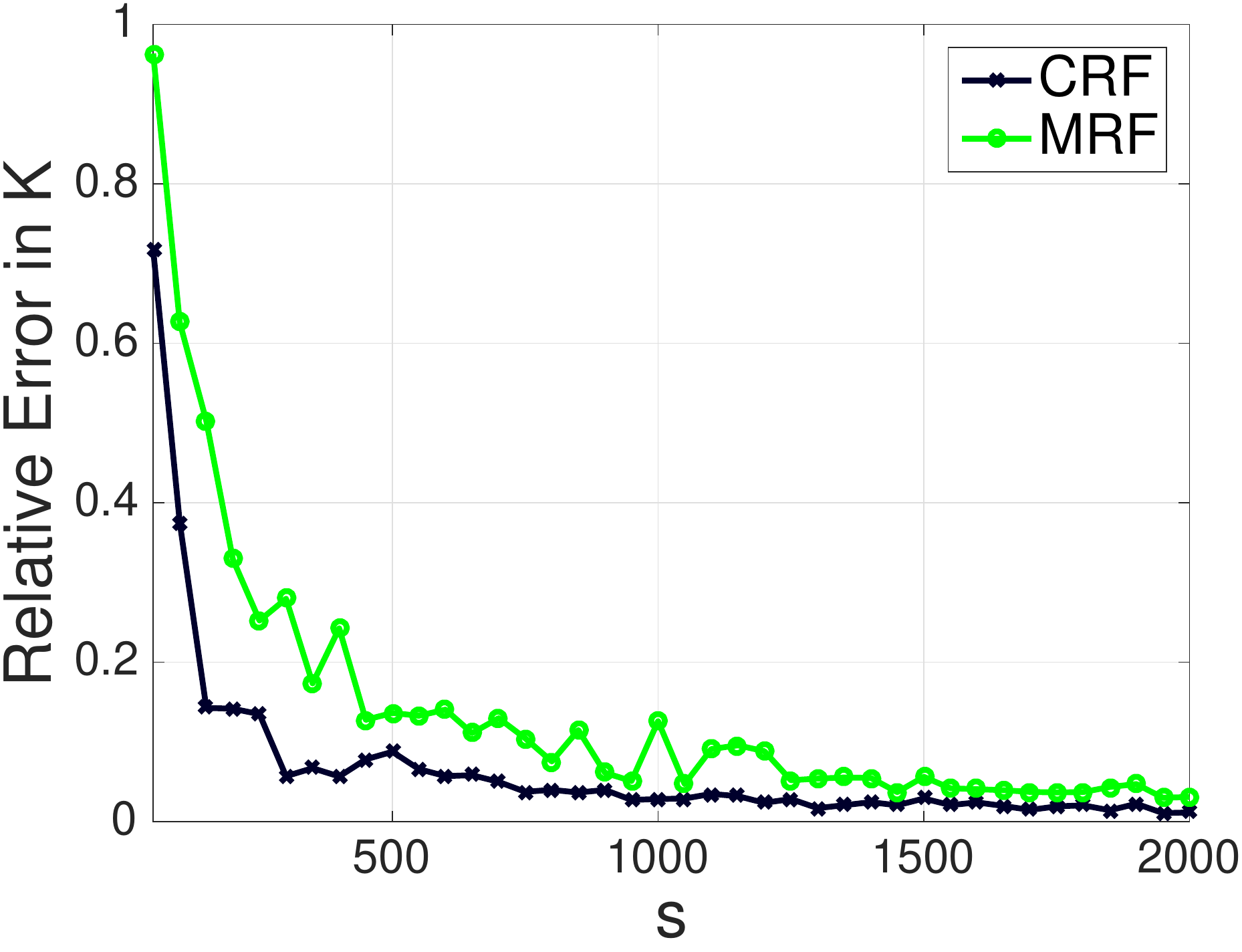} & \includegraphics[width=0.30\textwidth]{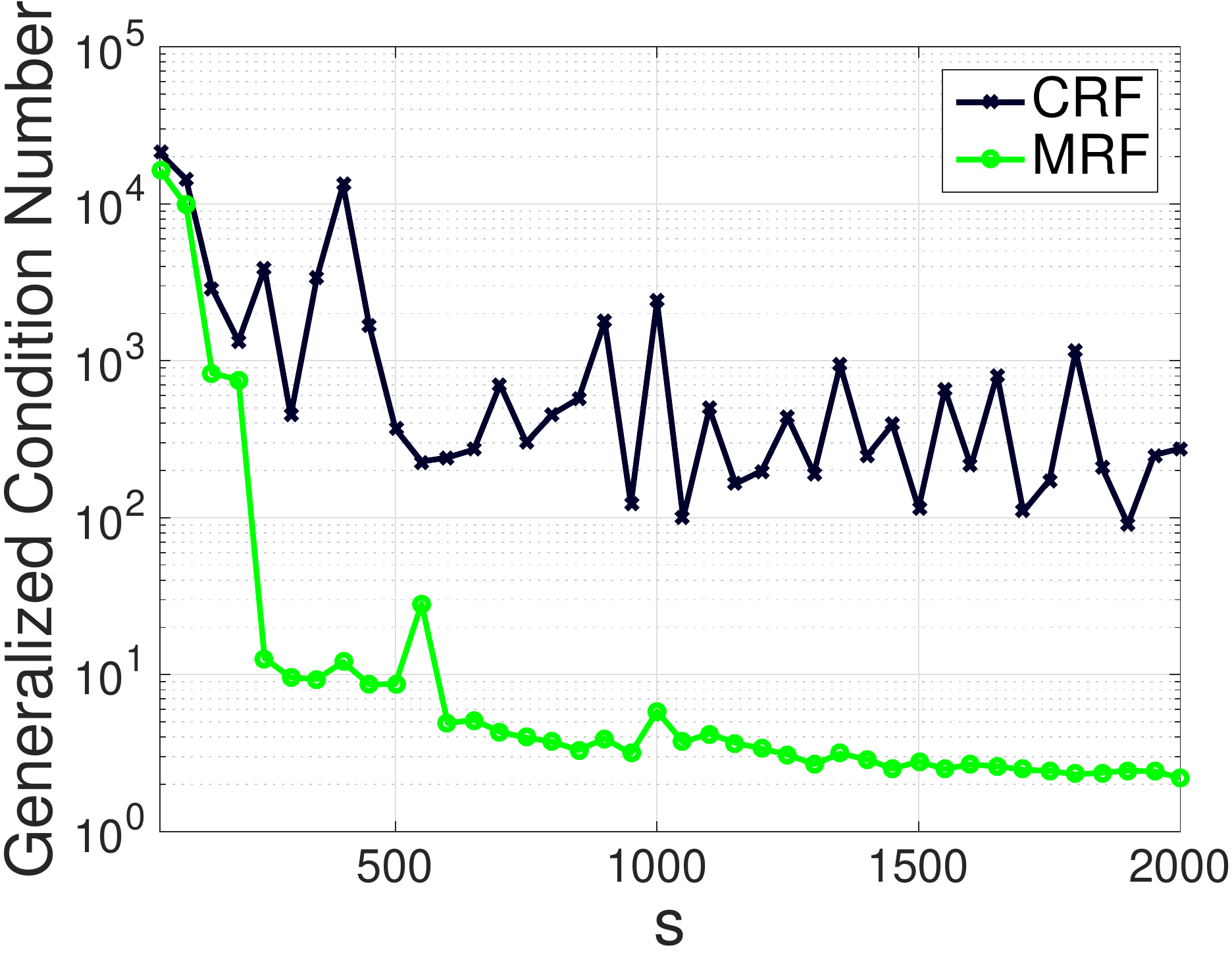}\\
		\end{tabular}
		\par\end{centering}
	\protect\caption{\label{fig:wiggly2} Assessing estimator's quality when varying $s$.}
\end{figure}

This is further examined in Figure~\ref{fig:wiggly2}, where we vary $s$ and assess the estimator's quality. The leftmost graph shows the risk. While the MRF's risk quickly converges to the KRR risk, CRF's risk reduces very slowly, practically stagnating for higher $s$. Note that even when $s>n$ CRF's risk is larger than KRR's risk! This is while the entry-wise error of CRF consistently continues to reduce and is consistently better than MRF's (middle figure). In contrast, MRF's generalized condition number is consistently lower than CRF's (rightmost figure). MRF's generalized condition number continues to reduce when $s$ grows, while CRF's stagnates.

In Figure~\ref{fig:wiggly2D} we report experiments with the two dimensional function
\begin{eqnarray}
\label{eq:wiggly2D}
f^\star(x,z) = (\sin(x) + \sin(10\exp(x)))(\sin(z) + \sin(10\exp(z)))\,.
\end{eqnarray}
We sample points on a $40\times40$ uniform grid (total of $n=1600$ points), and use $\sigma = 0.181167,\lambda = 0.00106475$. We use a fixed
$s=400$.  The MRF estimator is very close to the KRR estimator, while the CRF estimator misses or distorts some of the features of the
function.

\begin{figure}[H]
	\begin{centering}
		\begin{tabular}{c}
			\includegraphics[width=0.9\textwidth]{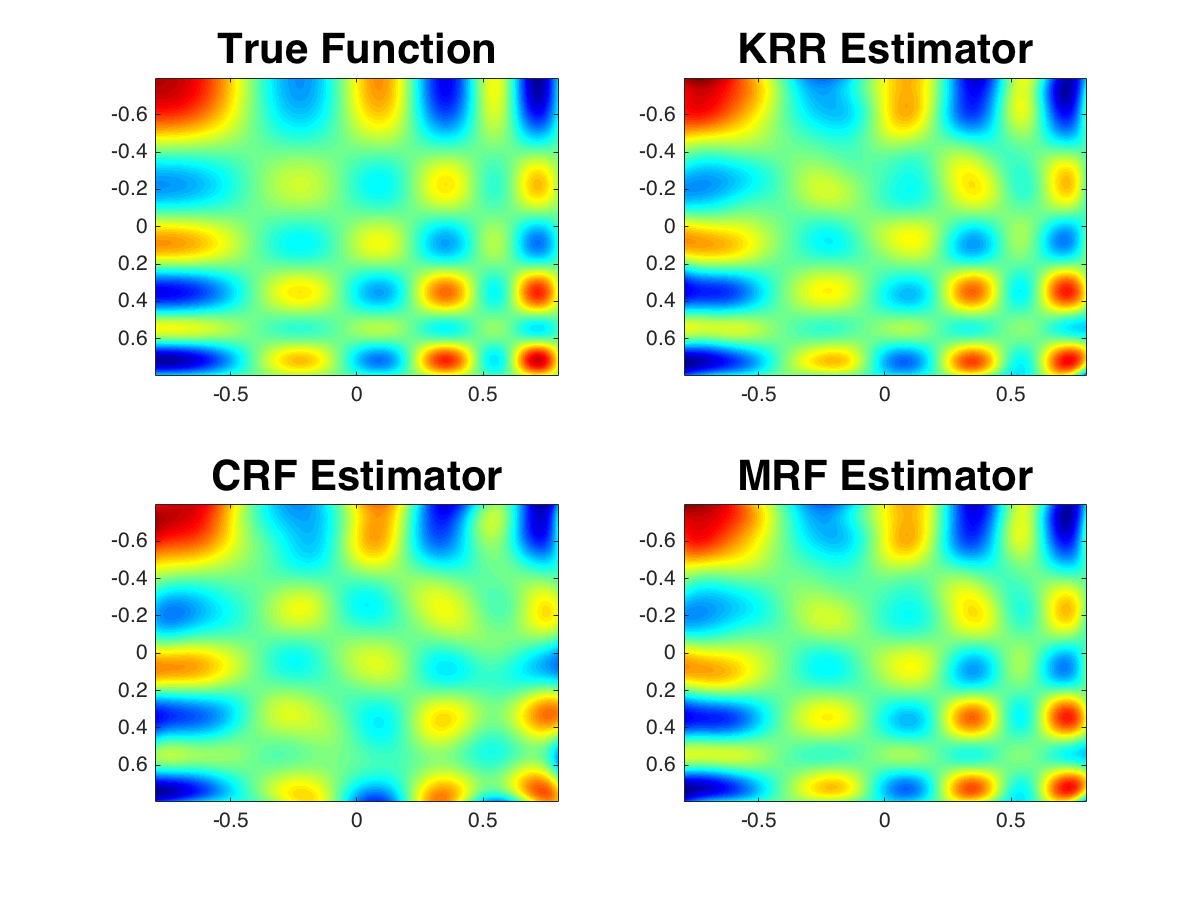}
		\end{tabular}	
		\par\end{centering}
	\protect\caption{\label{fig:wiggly2D} Approximation of the two dimensional wiggly function~\eqref{eq:wiggly2D}.}
\end{figure}

\section{Conclusions}

We have analyzed random Fourier features from a spectral matrix approximation point of view. We show both
positive and negative results regarding the use of random Fourier features to obtain spectral
approximation of the kernel matrix. Our study is well motivated by the fact that spectral approximation
bounds lead to statistical guarantees for KRR. Althouhgh we do not discuss in detail, our results can also be extended to bounds for other kernel-based methods such as kernel $k$-means and kernel PCA via recent results~\cite{CohenMuscoMusco17, MuscoMusco16}.

Our results expose a potential sub-optimality of random Fourier features, and also show that a
variant which uses a specially crafted feature sampling distribution can achieve better theoretical properties.
However, our construction is mostly theoretical due to an exponential dependence on the data dimension.
Nevertheless, our results motivate further efforts to improve random Fourier features by devising improved sampling
distributions.



From a conceptual point of view, our results are based on worst-case analysis of the leverage scores with respect to the data points.
It is natural to try to replace the worst-case analysis with an analysis that assumes the data points are sampled from
some distribution (e.g., as was recently done by Bach~\cite{Bach15}). We leave this for
future work as well.

\section*{Acknowledgements}

The authors thank Arturs Backurs helpful discussions at early stages of this project, and Jarek Blasiok for pointing out a typo in an earlier version of the manuscript.
Haim Avron acknowledges the support from the XDATA program of the Defense Advanced Research Projects
Agency (DARPA), administered through Air Force Research Laboratory contract FA8750-12-C-0323 and an IBM Faculty Award.
Cameron Musco acknowledges the support by NSF Graduate Research Fellowship, AFOSR grant FA9550-13-1-0042 and the NSF Center for Science of Information.



\bibliography{paper_arxiv}
\bibliographystyle{alpha}


\appendix

\section{Matrix Approximation by Random Sampling:\\An Intrinsic Dimension Bound}\label{sec:matrix-concentration}

The following Corollary is essentially a restatement of Corollary 7.3.3 from~\cite{Tropp15}. However,
the minimum $t$ in the following statement is much lower than the bound that appears in~\cite{Tropp15} which is unnecessarily loose (possibly, a typo
in~\cite{Tropp15}). For completeness, we include a proof.

\noindent \textbf{Lemma \ref{lem:tropp-correct}} (Restated)
\emph{
Let $\matB$ be a fixed $d_1 \times d_2$ matrix. Construct a $d_1 \times d_2$ random matrix $\matR$ that satisfies
$$\Expect{\matR} = \matB~~~~\textrm{and}~~~~\TNorm{\matR} \leq L.$$
Let $\matM_1$ and $\matM_2$ be semidefinite upper bounds for the expected squares:
$$\Expect{\matR \matR^\conj} \preceq \matM_1~~~~\textrm{and}~~~~\Expect{\matR^\conj \matR} \preceq \matM_2.$$
Define the quantities
$$
m = \max(\TNorm{\matM_1}, \TNorm{\matM_2})~~~~\textrm{and}~~~~d = (\Trace{\matM_1} + \Trace{\matM_2}) / m.
$$
Form the matrix sampling estimator
$$
\bar{\matR}_n = \frac{1}{n}\sum_{k=1}^{n} \matR_k
$$
where each $\matR_k$ is an independent copy of $\matR$.
Then, for all $t \geq \sqrt{m/n} + 2L/3n$,
\begin{equation}
\label{eq:tropp-concentration}
\Pr(\TNorm{\bar{\matR}_n - \matB} \geq t) \leq 4d\exp\left( \frac{-nt^2/2}{m + 2Lt/3} \right).
\end{equation}
}
\begin{proof}
	The proof mirrors the proof of Corollary 6.2.1 in~\cite{Tropp15}, using Theorem 7.3.1 instead of Theorem 6.1.1 (both
	from \cite{Tropp15}).
	Since $\Expect{\matR}=\matB$, we can write
	$$
	\matZ \equiv \bar{\matR}_n - \matB = \frac{1}{n}\sum^n_{k=1} (\matR_k - \Expect{\matR}) = \sum^n_{k=1}\matS_k,
	$$
	where we have define $\matS_k \equiv n^{-1}(\matR_k - \Expect{\matR})$. These random matrices are i.i.d and each
	has zero mean. 	
	Now, we can bound each of the summands:
	$$ \TNorm{\matS_k} \leq \frac{1}{n}(\TNorm{\matR_k} + \TNorm{\Expect{\matR}}) \leq \frac{1}{n}(\TNorm{\matR_k} + \Expect{\TNorm{\matR}}) \leq \frac{2L}{n},$$
	where the first inequality is the triangle inequality and the second is Jensen's inequality.
	
	To find semidefinite upper bounds $\matV_1$ and $\matV_2$ on the matrix-valued variances we note that
	\begin{eqnarray*}
		\Expect{\matS_1 \matS^\conj_1} & = & n^{-2} \Expect{(\matR - \Expect{\matR})(\matR - \Expect{\matR})^\conj}\\
		& = & n^{-2}\left( \Expect{\matR \matR^\conj} - \Expect{\matR}\Expect{\matR}^\conj \right) \\
		& \preceq & n^{-2} \Expect{\matR\matR^\conj}.
	\end{eqnarray*}
	Likewise, $\Expect{\matS^\conj_1 \matS_1} \preceq n^{-2} \Expect{\matR^\conj\matR}$. Since the summands are i.i.d, if we define
	$\matV_1 \equiv n^{-1} \matM_1$ and $\matV_2 \equiv n^{-1} \matM_2$,  we have $\Expect{\matZ\matZ^\conj} \preceq \matV_1$
	and $\Expect{\matZ^\conj\matZ} \preceq \matV_2$.
	
	We now calculate,
	$$
	\nu \equiv \max(\TNorm{\matV_1}, \TNorm{\matV_2}) = \frac{m}{n}
	$$
	and
	$$\frac{\Trace{\matV_1} + \Trace{\matV_2}}{\max(\TNorm{\matV_1}, \TNorm{\matV_2})} = d\,.$$
	Noticing, that the condition $t \geq \sqrt{m/n} + 2L/3n$ meets the required lower bound in Theorem 7.3.1 in \cite{Tropp15}
	we can now apply this theorem, which along with the above calculations translates to~\eqref{eq:tropp-concentration}.
\end{proof}

\section{Fourier Transforms and Gaussian Distributions}

Our upper and lower bound analysis relies predominantly on Fourier analysis and properties of the
Gaussian distribution.
In this section we introduce some additional notation and state some useful facts about these.

\subsection{Properties of Fourier Transforms}
\begin{defn}[Fourier Transform]
	The \emph{Fourier transform} of a continuous function $f : \RR^d \to \CC$ in
	$L_1(\RR^n)$ is defined to be the function $\mathcal{F}f: \RR^d \to \CC$ as
	follows:
	\[
	(\mathcal{F}f)(\bs{\xi}) =  \int_{\RR^d} f(\bv{t}) e^{-2\pi i \bv{t}^T \bs{\xi}}\,d\bv{t}.
	\]
	We also sometimes use the notation $\hat{f}$ for the Fourier transform of $f$. We often informally refer to $f$ as representing the function in \emph{time domain} and $\hat{f}$ as representing the function in \emph{frequency domain}.
\end{defn}

The original function $f$ can also be obtained from $\hat{f}$ by the
\emph{inverse Fourier transform}:
\begin{equation*}
f(\bv{t}) = \int_{\RR^d} \hat{f}(\bs{\xi}) e^{2\pi i \bs{\xi}^T \bv{t}}\,d\bs{\xi}
\end{equation*}
\begin{defn}[Convolution]
	The \emph{convolution} of two functions $f:\RR^d\to\CC$ and $g:\RR^d\to\CC$ is
	defined to be the function $(f*g):\RR^d\to\CC$ given by
	\begin{equation*}
	(f*g)(\veta) = \int_{\RR^d} f(\bv{t})g(\veta-\bv{t})\,d\bv{t}.
	\end{equation*}
\end{defn}
The convolution theorem shows that the Fourier transform of the convolution of
two functions is simply the product of the individual Fourier transforms:
\begin{claim}[Convolution Theorem] \label{claim:convthm}
	Given functions $f:\RR^d\to\CC$ and $g:\RR^d\to\CC$ whose convolution is $h =
	f*g$, we have
	\[
	\hat{h}(\bs{\xi}) = \hat{f}(\bs{\xi})\cdot\hat{g}(\bs{\xi})
	\]
	for all $\bs{\xi}\in\RR^d$.
\end{claim}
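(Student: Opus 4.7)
The plan is to prove the convolution theorem by writing $\hat{h}(\bs\xi)$ from the definition of the Fourier transform, substituting in the definition of the convolution $h = f*g$, swapping the order of integration (by Fubini), and then performing a change of variables that decouples the two integrals into $\hat{f}(\bs\xi)$ and $\hat{g}(\bs\xi)$.

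Concretely, I would first write
\[
\hat{h}(\bs\xi) = \int_{\RR^d} \left(\int_{\RR^d} f(\bv{t}) g(\veta - \bv{t})\,d\bv{t}\right) e^{-2\pi i \veta^T \bs\xi}\,d\veta,
\]
and then invoke Fubini's theorem (justified by the assumption $f,g \in L_1(\RR^d)$, which makes the double integral absolutely convergent since $|f\ast g| \le |f|\ast|g|$ has finite $L_1$ norm $\|f\|_1 \|g\|_1$) to swap the order of integration, yielding
\[
\hat{h}(\bs\xi) = \int_{\RR^d} f(\bv{t}) \left(\int_{\RR^d} g(\veta - \bv{t}) e^{-2\pi i \veta^T \bs\xi}\,d\veta\right) d\bv{t}.
\]

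Next, for each fixed $\bv{t}$, I would perform the change of variables $\bv{s} = \veta - \bv{t}$ (so $d\bv{s} = d\veta$) in the inner integral. Since $e^{-2\pi i \veta^T \bs\xi} = e^{-2\pi i (\bv{s}+\bv{t})^T \bs\xi} = e^{-2\pi i \bv{t}^T \bs\xi}\,e^{-2\pi i \bv{s}^T\bs\xi}$, the inner integral factors as $e^{-2\pi i \bv{t}^T\bs\xi} \cdot \hat{g}(\bs\xi)$. Substituting back and pulling the $\bs\xi$-dependent (but $\bv{t}$-independent) factor $\hat{g}(\bs\xi)$ outside the outer integral gives
\[
\hat{h}(\bs\xi) = \hat{g}(\bs\xi) \int_{\RR^d} f(\bv{t}) e^{-2\pi i \bv{t}^T\bs\xi}\,d\bv{t} = \hat{f}(\bs\xi)\,\hat{g}(\bs\xi),
\]
which is the desired identity.

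The only real subtlety is justifying the use of Fubini's theorem, which requires $f,g \in L_1(\RR^d)$ so that the integrand $f(\bv{t}) g(\veta-\bv{t}) e^{-2\pi i \veta^T\bs\xi}$ is absolutely integrable on $\RR^d\times\RR^d$; under the paper's running convention that the Fourier transform is defined for $L_1$ functions (as stated in the definition preceding the claim), this is immediate. Everything else is purely mechanical.
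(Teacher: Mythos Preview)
Your proof is correct and is exactly the standard argument for the convolution theorem. The paper itself does not give a proof of this claim at all; it states it as a well-known fact without justification, so there is nothing to compare against.
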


We now define the \emph{rectangle function} and \emph{normalized sinc function}, which
we use extensively in our analysis.
\begin{defn}[Rectangle Function] \label{def:rect}
	We define the 1-dimensional \emph{rectangle function} $\rect_{1,a}:\RR\to\CC$ as
	\[
	\rect_{1,a}(x) = \begin{cases} 0\qquad &\text{if $|x| > a/2$}\\ \frac{1}{2}
	&\text{if $|x| = a/2$}\\ 1\qquad &\text{if $|x| < a/2$} \end{cases}.
	\]
	For any $d > 1$, we define the $d$-dimensional \emph{rectangle function} $\rect_{d,a}:
	\RR^d\to\CC$ as
	\[
	\rect_{d,a}(\x) = \prod_{j=1}^d \rect_{1,a}(x_j).
	\]
	If $d$ is understood from context, we often omit $d$ and write $\rect_a$. Moreover, if $a=1$
	(and $d$ is understood from context), we often omit all subscripts and simply write $\rect$.
\end{defn}
\begin{defn}[Normalized Sinc Function] \label{def:sinc}
	We define the $d$-dimensional \emph{normalized sinc function} $\mathrm{sinc}_d:\RR^d\to\CC$ as
	\[
	\mathrm{sinc}_d(\x) = \prod_{j=1}^d \frac{\sin(\pi x_j)}{\pi x_j}.
	\]
	We often omit the subscript and simply write $\mathrm{sinc}$.
\end{defn}
It is well known that the Fourier transform of the rectangle
function (with $a=1$) is the normalized sinc function:
\[
\mathcal{F}(\rect_d) = \mathrm{sinc}_d.
\]

We use $\delta_d$ to denote the d-dimensional \emph{Dirac delta function}.
The Dirac delta function satisfies the following useful property for any function
$f$:
\[
\int_{\RR^d} f(\x) \delta_d(\x-\a)\,d\x = f(\a),
\]
i.e. the integral of a function multiplied by a shifted Dirac delta functions
picks out the value of the function at a particular point. Thus, it is not hard to see that the Fourier transform of a $\delta_d$ is the constant function which is $1$ everywhere:
\[
(\Fc\delta_d)(\vxi) = \int_{\RR^d} e^{-2\pi i \t^T \vxi}\cdot \delta_d(\t)\,d\t
= e^{-2\pi i \cdot 0^T \cdot \vxi} = 1
\]
for all $\vxi$. Similarly, the Fourier transform of a shifted delta function is as follows:
\[
(\Fc\delta(\cdot - \a))(\vxi) = \int_{\RR^d} e^{-2\pi i \t^T \vxi}\cdot \delta_d(\t-\a)\,d\t = e^{-2\pi i \a^T\vxi}.
\]
Moreover, it is not hard to see that convolving a function by a
shifted delta function results in a shift of the original function:
\[
(f * \delta_d(\cdot - \a))(\x) = f(\x-\a).
\]
Thus, by the convolution theorem, we obtain the following identity:
\begin{claim}
	Given a function $f:\RR^d\to\CC$, we have
	\[
	(\Fc f(\cdot - \a))(\vxi) = (\Fc (f * \delta_d(\cdot - \a)))(\vxi) = \hat{f}(\vxi)
	\cdot e^{-2\pi i \a^T \vxi}.
	\]
\end{claim}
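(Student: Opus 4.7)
The plan is to verify the chain of two equalities directly, exploiting material already developed in the paragraphs preceding the claim. The first equality is not really a computation at all: the excerpt has just observed the identity $(f * \delta_d(\cdot - \a))(\x) = f(\x-\a)$, so applying $\Fc$ to both sides of this pointwise identity gives $(\Fc f(\cdot-\a))(\vxi) = (\Fc(f*\delta_d(\cdot-\a)))(\vxi)$ for free. Hence the only real content is the second equality.

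For the second equality I would give two routes and pick whichever reads more cleanly in context. The clean route is to invoke the Convolution Theorem (Claim \ref{claim:convthm}) on $f*\delta_d(\cdot-\a)$, which yields
\[
(\Fc(f*\delta_d(\cdot-\a)))(\vxi) \;=\; \hat{f}(\vxi)\cdot(\Fc\delta_d(\cdot-\a))(\vxi),
\]
and then plug in the identity $(\Fc\delta_d(\cdot-\a))(\vxi)=e^{-2\pi i \a^T\vxi}$ that was established one paragraph earlier via $\int_{\RR^d} e^{-2\pi i \t^T\vxi}\delta_d(\t-\a)\,d\t = e^{-2\pi i \a^T\vxi}$. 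Composing these two facts yields exactly $\hat{f}(\vxi)\cdot e^{-2\pi i \a^T\vxi}$, as desired.

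As a sanity check (and to avoid any reliance on distributional manipulations with $\delta_d$ for readers who prefer a self-contained derivation), I would also note the direct change-of-variables verification: writing out the definition,
\[
(\Fc f(\cdot-\a))(\vxi) \;=\; \int_{\RR^d} f(\t-\a)\,e^{-2\pi i \t^T\vxi}\,d\t,
\]
and substituting $\u = \t - \a$ (so $d\u = d\t$ and $\t = \u + \a$) gives
\[
\int_{\RR^d} f(\u)\,e^{-2\pi i(\u+\a)^T\vxi}\,d\u \;=\; e^{-2\pi i \a^T\vxi}\int_{\RR^d} f(\u)\,e^{-2\pi i\u^T\vxi}\,d\u \;=\; e^{-2\pi i \a^T\vxi}\hat{f}(\vxi).
\]

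There is no real obstacle here: the statement is the standard Fourier shift/modulation identity, and both the convolution theorem and the Fourier transform of a shifted Dirac delta have already been recorded immediately above the claim. The only mild care I would take is to indicate that the manipulations involving $\delta_d$ are justified either distributionally or via the direct change of variables argument, so that the reader is not left wondering whether integrability of $f$ is being tacitly assumed (the direct calculation goes through verbatim whenever $f\in L_1(\RR^d)$, matching the standing assumption under which $\Fc f$ was defined earlier).
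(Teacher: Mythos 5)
Your proposal is correct and follows essentially the same route the paper intends: write the shift as a convolution with $\delta_d(\cdot-\a)$, apply the convolution theorem (Claim~\ref{claim:convthm}), and substitute the already-computed transform $(\Fc\delta_d(\cdot-\a))(\vxi)=e^{-2\pi i \a^T\vxi}$. The extra change-of-variables verification is a harmless (and slightly more self-contained) addition but does not change the argument.
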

Similarly,
\begin{claim}
	Given a function $f:\RR^d\to\CC$, we have
	\[
	(\Fc (f(\x)\cdot e^{2\pi i \a^T \x}))(\vxi) = \hat{f} (\vxi - \a).
	\]
\end{claim}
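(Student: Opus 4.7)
The plan is to verify this modulation/frequency-shift identity by a direct one-line computation from the definition of the Fourier transform, since $f$ is assumed to be a general function in $L_1(\RR^d)$ (the same regularity hypothesis implicit in the preceding claim that was proved via the convolution theorem). First I would expand the left-hand side using the integral formula of Definition of the Fourier transform:
\[
(\Fc(f(\x)\cdot e^{2\pi i \a^T\x}))(\vxi) \;=\; \int_{\RR^d} f(\x)\, e^{2\pi i \a^T\x}\, e^{-2\pi i \x^T\vxi}\, d\x.
\]
Next I would combine the two complex exponentials, using $\a^T\x = \x^T\a$, to rewrite the integrand's exponential part as $e^{-2\pi i \x^T(\vxi-\a)}$. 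The resulting integral $\int_{\RR^d} f(\x)\, e^{-2\pi i \x^T(\vxi-\a)}\, d\x$ is then, by inspection, exactly the definition of $\hat{f}$ evaluated at the point $\vxi-\a$, which gives the claimed equality.

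The hard part will be essentially nothing: this is a routine identity (the dual of the time-shift-to-modulation property stated in the immediately preceding claim), and the manipulation is one line of algebra in the exponent. The only thing worth flagging is that Fubini / absolute convergence of the integral is justified exactly as in the preceding claim, since $|f(\x)\, e^{2\pi i \a^T\x}| = |f(\x)|$ and so $f(\x)\, e^{2\pi i \a^T\x} \in L_1(\RR^d)$ whenever $f \in L_1(\RR^d)$; hence the Fourier transform on the left is well-defined pointwise and the substitution above is valid.

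As an alternative I could prove the claim via the convolution theorem, observing that multiplication by $e^{2\pi i \a^T\x}$ in the time domain corresponds (distributionally) to convolution with the shifted delta $\delta_d(\vxi-\a)$ in the frequency domain, so that $\hat{f}*\delta_d(\cdot-\a)$ evaluated at $\vxi$ yields $\hat{f}(\vxi-\a)$. However, the direct computation is shorter, requires no distributional machinery beyond what the paper has already introduced, and mirrors the style of the surrounding Fourier-identity claims in this appendix, so I would present that version.
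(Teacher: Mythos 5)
Your proof is correct. Note that the paper itself gives no explicit argument for this claim: it is stated with a ``Similarly'' immediately after the time-shift identity, which \emph{is} proved there by writing the shift as a convolution with $\delta_d(\cdot-\a)$ and invoking the convolution theorem (Claim~\ref{claim:convthm}); the implied proof of the present claim is therefore the dual, distributional convolution argument that you mention only as an alternative. Your primary route --- expanding $(\Fc(f(\x)e^{2\pi i \a^T\x}))(\vxi)=\int_{\RR^d} f(\x)e^{-2\pi i \x^T(\vxi-\a)}\,d\x=\hat f(\vxi-\a)$ directly from the definition --- is more elementary and fully rigorous for $f\in L_1(\RR^d)$, since $|f(\x)e^{2\pi i \a^T\x}|=|f(\x)|$ guarantees absolute convergence (no Fubini is actually needed, as there is only one integral), whereas the delta-function route requires the distributional conventions the paper adopts informally. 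Either argument suffices; yours is self-contained and avoids that machinery.
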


Finally, we introduce a useful function known as the \emph{Dirac comb function}:
\begin{defn}
	The d-dimensional \emph{Dirac comb function} with period $T$ is defined as
	$f$ satisfying
	\[
	f(\x) =  \sum_{\mathbf{j} \in \ZZ^d} \delta(x - \mathbf{j}T).
	\]
\end{defn}
It is a standard fact that the Fourier transform of a Dirac comb function is
another Dirac comb function which is scaled and has the inverse period:
\begin{claim}\label{diracF}
	Let
	\[
	f(\x) = \sum_{\mathbf{j} \in \ZZ^d} \delta(x - \mathbf{j}T)
	\]
	be the d-dimensional Dirac comb function with period $T$. Then,
	\[
	(\Fc f)(\vxi) = \frac{1}{T^d}  \sum_{\mathbf{j} \in \ZZ^d} \delta\left(\xi - \frac{\mathbf{j}}{T}\right).
	\]
\end{claim}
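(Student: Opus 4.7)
The plan is to prove the claim by first reducing to the one-dimensional case and then applying a Fourier series expansion argument. Because the $d$-dimensional Dirac comb factors as a tensor product $f(\x) = \prod_{k=1}^d f_1(x_k)$, where $f_1(t) = \sum_{j \in \ZZ} \delta(t - jT)$ is the one-dimensional comb with period $T$, and because the Fourier transform of a tensor product of (tempered) distributions is the tensor product of their Fourier transforms, it suffices to show that $\widehat{f_1}(\xi) = \frac{1}{T}\sum_{j \in \ZZ} \delta(\xi - j/T)$. The target identity for $\Fc f$ then follows by taking the product over the $d$ coordinates, which yields exactly $\frac{1}{T^d} \sum_{\mathbf{j}\in\ZZ^d} \delta(\vxi - \mathbf{j}/T)$.

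For the one-dimensional step, the key observation is that $f_1$ is a $T$-periodic tempered distribution, so it admits a Fourier series expansion (convergent in the distributional sense):
\begin{equation*}
f_1(t) = \sum_{k \in \ZZ} c_k\, e^{2\pi i k t / T}, \qquad c_k = \frac{1}{T}\int_{-T/2}^{T/2} f_1(t)\, e^{-2\pi i k t / T}\, dt.
\end{equation*}
On the fundamental interval $(-T/2, T/2)$ the comb consists of the single delta $\delta(t)$, so the defining property of the Dirac delta yields $c_k = 1/T$ for every $k$. Hence $f_1 = \frac{1}{T}\sum_{k \in \ZZ} e^{2\pi i k t / T}$.

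Applying the Fourier transform term by term (justified in the space of tempered distributions) and using the standard identity $\Fc(e^{2\pi i a t})(\xi) = \delta(\xi - a)$, which follows from the shift identity proved just before the claim applied to $\Fc \delta_1 \equiv 1$, we obtain
\begin{equation*}
\widehat{f_1}(\xi) = \frac{1}{T}\sum_{k \in \ZZ} \delta\!\left(\xi - \frac{k}{T}\right).
\end{equation*}
Tensoring across coordinates then gives the desired $d$-dimensional identity.

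The main obstacle is that the Dirac comb is not an $L^1$ or $L^2$ function, so each manipulation (the Fourier series expansion, the term-by-term Fourier transform, and the tensor product rule) must be interpreted in the sense of tempered distributions rather than classical functions; in particular, convergence of the series $\sum_k e^{2\pi i k t/T}$ only makes sense after pairing against a Schwartz test function. I would briefly note this caveat but avoid the technicalities, since the identity is standard and the paper uses Dirac combs only as a formal bookkeeping device. With that caveat in hand, the computation above is routine.
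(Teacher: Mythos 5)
Your argument is correct, and in fact the paper never proves this claim at all: Claim~\ref{diracF} is stated as ``a standard fact'' and used without justification, so there is no in-paper proof to compare against. The route you take --- factor the $d$-dimensional comb as a tensor product of one-dimensional combs, expand the $T$-periodic comb in its distributional Fourier series $\frac{1}{T}\sum_{k\in\ZZ} e^{2\pi i k t/T}$ (the coefficient computation picking out the single $\delta$ in the fundamental cell), and then transform term by term using $\Fc\big(e^{2\pi i a t}\big)(\xi)=\delta(\xi-a)$ under the paper's convention --- is the standard one, and it is essentially equivalent to the Poisson summation formula. The only point deserving care is the one you already flag: the identity lives in $\mathcal{S}'(\RR^d)$, so the Fourier-series expansion of a periodic tempered distribution, the term-by-term transform, and the tensor-product rule must all be invoked as theorems about tempered distributions (pairing against Schwartz test functions), not as pointwise statements; with that understood, your computation is complete and supplies the proof the paper omits.
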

We use the Dirac comb function in our lower bound constructions.

\begin{claim}\label{claim:nyquist}
Given a function $f:\RR^d\to\CC$, we have:
\begin{align}	
\Fc \left( f(\cdot)  \sum_{\mathbf{j}\in\ZZ^d}
\delta_d(\cdot-T\mathbf{j}) \right)(\bs{\xi}) &= \sum_{\mathbf{j}\in\ZZ^d} T^{-d} \Fc(f)(\bs{\xi}-T^{-1}\mathbf{j}).
\end{align}
\end{claim}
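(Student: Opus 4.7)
The plan is to combine Claim~\ref{diracF} (the Fourier transform of a Dirac comb is another Dirac comb) with the convolution theorem (Claim~\ref{claim:convthm}) in its dual form: multiplication in the time domain corresponds to convolution in the frequency domain. This dual form follows by a symmetric argument to the one proving Claim~\ref{claim:convthm}, or simply by applying the convolution theorem to the inverse Fourier transform.

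First, I would apply Claim~\ref{diracF} to identify the Fourier transform of the period-$T$ Dirac comb $g(\x) = \sum_{\mathbf{j}\in\ZZ^d}\delta_d(\x - T\mathbf{j})$ as
\[
\hat{g}(\bs{\xi}) = T^{-d}\sum_{\mathbf{j}\in\ZZ^d}\delta_d\!\left(\bs{\xi} - T^{-1}\mathbf{j}\right).
\]
Then, invoking the multiplication-convolution duality, I would write
\[
\Fc\!\left(f \cdot g\right)(\bs{\xi}) = (\hat{f} * \hat{g})(\bs{\xi}) = \int_{\RR^d} \hat{f}(\bs{\xi}-\bs{\zeta})\,\hat{g}(\bs{\zeta})\,d\bs{\zeta}.
\]

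Substituting the Dirac-comb expression for $\hat{g}$ and using the sifting property of the shifted delta functions (which, as noted in the excerpt just before the claim, picks out function values at the shift point), I would obtain
\[
\int_{\RR^d} \hat{f}(\bs{\xi}-\bs{\zeta})\cdot T^{-d}\sum_{\mathbf{j}\in\ZZ^d}\delta_d\!\left(\bs{\zeta}-T^{-1}\mathbf{j}\right)\,d\bs{\zeta} = T^{-d}\sum_{\mathbf{j}\in\ZZ^d}\hat{f}\!\left(\bs{\xi}-T^{-1}\mathbf{j}\right),
\]
which is exactly the claimed identity. The only technical subtlety — and in principle the main obstacle — is justifying the interchange of the (generalized) sum with the convolution integral, i.e.\ treating the Dirac comb rigorously as a tempered distribution so that the convolution theorem applies; for the purposes of this paper it suffices to note that the identity holds in the distributional sense under the mild regularity assumptions on $f$ that are in force throughout the paper (e.g., $f \in L_1(\RR^d)$ with enough decay so that the Poisson summation form converges), which is exactly the regime in which Claim~\ref{claim:convthm} and Claim~\ref{diracF} have already been invoked.
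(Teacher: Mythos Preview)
Your proof is correct and follows the standard route: Fourier transform of the Dirac comb (Claim~\ref{diracF}), the multiplication--convolution duality, and the sifting property. The paper itself does not supply a proof of Claim~\ref{claim:nyquist}; it is stated as a standard Fourier-analytic fact (essentially the Poisson summation formula in distributional form) and simply invoked later in Lemma~\ref{lem:ft1}. So there is nothing to compare against, and your argument is exactly the derivation one would expect.
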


\subsection{Properties of Gaussian Distributions}
The following is a standard fact about the cumulative distribution function of the standard
Gaussian distribution:
\begin{claim}[\cite{feller}] \label{claim:cdfnormal}
	For any $x > 0$, we have
	\[
	\frac{1}{\sqrt{2\pi}}\int_{x}^\infty e^{-t^2/2}\,dt \leq
	\frac{e^{-x^2/2}}{x\sqrt{2\pi}}.
	\]
	Moreover, as a direct consequence, for any $\sigma, x > 0$, we have that
	\[
	\frac{1}{\sqrt{2\pi}\sigma} \int_{x}^\infty e^{-t^2/2\sigma^2}\,dt \leq
	\frac{\sigma e^{-x^2/2\sigma^2}}{x\sqrt{2\pi}}.
	\]
	Also, if $x\geq 1$, then
	\[
	\left(\frac{1}{x}-\frac{1}{x^3}\right)\cdot\frac{1}{\sqrt{2\pi}}e^{-x^2/2}
	\leq \frac{1}{\sqrt{2\pi}} \int_x^\infty e^{-t^2}\,dt.
	\]
\end{claim}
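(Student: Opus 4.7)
The plan is to prove Claim~\ref{claim:cdfnormal} by the classical Mills-ratio argument: repeatedly use the identity $e^{-t^2/2} = -\tfrac{d}{dt}(e^{-t^2/2})/t$ combined with integration by parts, plus a trivial change of variable for the second inequality. None of these estimates requires anything beyond elementary calculus, so the main task is just to lay them out cleanly.

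For the first inequality, the approach is to observe that on the interval $[x,\infty)$ we have $t/x \geq 1$, hence
\begin{equation*}
\int_x^\infty e^{-t^2/2}\,dt \;\leq\; \int_x^\infty \frac{t}{x}\, e^{-t^2/2}\,dt \;=\; \frac{1}{x}\bigl[-e^{-t^2/2}\bigr]_x^\infty \;=\; \frac{e^{-x^2/2}}{x},
\end{equation*}
and dividing by $\sqrt{2\pi}$ yields the stated bound. For the second inequality, I would just substitute $u = t/\sigma$, so $du = dt/\sigma$, transforming the integral into $\sigma^{-1}\int_{x/\sigma}^\infty e^{-u^2/2}\sigma\,du$, and then apply the first inequality at the point $x/\sigma$; the factor of $\sigma$ from the change of variable produces exactly the $\sigma$ in the numerator on the right.

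For the third (lower) bound, the idea is to apply integration by parts to isolate the principal term $e^{-x^2/2}/x$ and then control the remainder. Writing $e^{-t^2/2} = -\tfrac{1}{t}\cdot\tfrac{d}{dt}e^{-t^2/2}$ and integrating by parts gives
\begin{equation*}
\int_x^\infty e^{-t^2/2}\,dt \;=\; \frac{e^{-x^2/2}}{x} \;-\; \int_x^\infty \frac{e^{-t^2/2}}{t^2}\,dt.
\end{equation*}
Applying the same trick once more to the remainder integral — writing $e^{-t^2/2}/t^2 = -\tfrac{1}{t^3}\tfrac{d}{dt}e^{-t^2/2}$ and integrating by parts, then dropping the (nonnegative) $\int_x^\infty 3e^{-t^2/2}/t^4\,dt$ term — yields $\int_x^\infty e^{-t^2/2}/t^2\,dt \leq e^{-x^2/2}/x^3$. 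Combining the two gives $\int_x^\infty e^{-t^2/2}\,dt \geq (1/x - 1/x^3)\, e^{-x^2/2}$, which after dividing by $\sqrt{2\pi}$ is the claimed inequality (reading the exponent in the right-hand side as $e^{-t^2/2}$, matching the other two bounds).

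There is no real obstacle here; the only mild subtlety is making sure the remainder integral in the lower-bound argument really is nonnegative and that the boundary term at $\infty$ vanishes, both of which are immediate from $e^{-t^2/2}$ decaying super-polynomially. I would also cite Feller for the standard statement, as the paper already does, and present the three parts in sequence.
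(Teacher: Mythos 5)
Your proof is correct. The paper itself gives no argument for this claim --- it is simply cited to Feller --- and your Mills-ratio derivation (bounding $e^{-t^2/2}\le \tfrac{t}{x}e^{-t^2/2}$ for the upper tail, rescaling $u=t/\sigma$ for the second part, and two integrations by parts with the nonnegative remainder dropped for the lower bound) is exactly the classical argument that citation refers to. You were also right to flag that the exponent $e^{-t^2}$ in the third displayed inequality is a typo for $e^{-t^2/2}$; this is confirmed by how the claim is later invoked (e.g.\ in the proof of Claim~\ref{claim:gaussiansamp}, where the lower bound is applied to $\int_{q_1}^\infty e^{-x^2/2}\,dx$).
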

Next, we prove the following claim, which provides tail bounds for modified Gaussians:
\begin{claim} \label{claim:cdfnormalhighdim}
	We have the following results:
	\begin{enumerate}
		\item For any $x > 0$ and $d=1$, we have
		\[
		\int_{x}^\infty t^d e^{-t^2/2}\,dt = e^{-x^2/2}.
		\]
		\item For any $x > 0$ and odd integer $d > 1$, we have
		\[
		\int_{x}^\infty t^d e^{-t^2/2}\,dt \geq  (d-1)(d-3)\cdots 2 \cdot e^{-x^2/2}.
		\]
		\item For any $x>0$ and even integer $d >  1$, we have
		\[
		\int_{x}^\infty t^d e^{-t^2/2}\,dt \geq (d-1)(d-3)\cdots 3 \cdot xe^{-x^2/2}.
		\]
		\item For any $x>0$ and integer $d\geq 1$, we have
		\[
		\int_{x}^\infty t^d e^{-t^2/2}\,dt \geq x^{d-1} e^{-x^2/2}.
		\]
	\end{enumerate}
\end{claim}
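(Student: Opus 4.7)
The plan is a short calculus exercise built around a single integration-by-parts recursion, with part 1 supplying the odd-depth base case and a trivial bound supplying the even-depth base case.

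For part 1 I would just substitute $u = t^2/2$, so $du = t\,dt$, and compute
\[
\int_x^\infty t e^{-t^2/2}\,dt = \int_{x^2/2}^\infty e^{-u}\,du = e^{-x^2/2}.
\]
This is the only place an equality (rather than an inequality) is needed.

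For parts 2, 3, and 4 the workhorse is integration by parts on $\int_x^\infty t^d e^{-t^2/2}\,dt$ with the grouping $t^d e^{-t^2/2} = t^{d-1}\cdot\bigl(te^{-t^2/2}\bigr)$, using the antiderivative $v = -e^{-t^2/2}$ of the second factor. This yields the clean recursion
\[
\int_x^\infty t^d e^{-t^2/2}\,dt \;=\; x^{d-1}e^{-x^2/2} \;+\; (d-1)\int_x^\infty t^{d-2}e^{-t^2/2}\,dt,
\]
valid for every integer $d\geq 1$ (with the convention that the tail integral is interpreted appropriately when $d=1$, so that the boundary term alone recovers part 1). Part 4 then follows immediately by dropping the non-negative residual integral on the right-hand side; the $d=1$ case of part 4 coincides with part 1, so there is nothing to check separately.

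For parts 2 and 3 I would iterate this recursion, dropping the non-negative boundary contribution $x^{d-1}e^{-x^2/2}$ that appears at each step except the last. If $d$ is odd, the iteration descends through $d, d-2, d-4, \ldots$ and terminates at the base case $d=1$, where the remaining integral equals $e^{-x^2/2}$ by part 1, producing the factor $(d-1)(d-3)\cdots 2 \cdot e^{-x^2/2}$. If $d$ is even, the iteration terminates at $d=2$, where a single application of the recursion gives
\[
\int_x^\infty t^2 e^{-t^2/2}\,dt \;=\; x e^{-x^2/2} + \int_x^\infty e^{-t^2/2}\,dt \;\geq\; x e^{-x^2/2},
\]
producing the factor $(d-1)(d-3)\cdots 3\cdot x e^{-x^2/2}$. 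Since each step only drops non-negative quantities, the inequalities go in the correct direction.

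There is no real obstacle here; the only thing to keep straight is bookkeeping of the descending product $(d-1)(d-3)\cdots$ and recognizing which parity terminates at which base case. A single induction on $d$ (separately over odd and even $d$), with the recursion above as the inductive step, would package everything cleanly.
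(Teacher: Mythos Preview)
Your proposal is correct and is essentially the same approach as the paper's proof: both use the integration-by-parts recursion $\int_x^\infty t^d e^{-t^2/2}\,dt = x^{d-1}e^{-x^2/2} + (d-1)\int_x^\infty t^{d-2}e^{-t^2/2}\,dt$, iterate it down to the appropriate base case, and drop non-negative terms along the way. The only cosmetic difference is that the paper writes out the full closed-form expansion first and then extracts the desired term, whereas you drop the extra terms step by step, but the underlying argument is identical.
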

\begin{proof}
	Part (1) is simple calculation.
	
	If $d$ is odd, say $d = 2a+1$, then by repeated use of integration by parts,
	\begin{align}
	\int_{x}^\infty t^d e^{-t^2/2}\,dt &= \sum_{j=0}^{a-1} \left( \prod_{k=1}^j (d - (2k-1)) \right) x^{d-(2j+1)}
	e^{-x^2/2} + (d-1)(d-3)\cdots 2 \int_{x}^\infty t e^{-t^2/2}\,dt \label{eq:oddlbd}\\
	&\geq (d-1)(d-3)\cdots 2 \int_{x}^\infty t e^{-t^2/2}\,dt \nonumber\\
	&= (d-1)(d-3)\cdots 2 \cdot e^{-x^2/2}, \nonumber
	\end{align}
	which establishes part (2).
	
	On the other hand, if $d$ is even, say $d = 2a$, then we have
	\begin{align}
	\int_{x}^\infty t^d e^{-t^2/2}\,dt &= \sum_{j=0}^{a-1} \left( \prod_{k=1}^j (d - (2k-1)) \right) x^{d-(2j+1)}
	e^{-x^2/2} + (d-1)(d-3)\cdots 1 \int_{x}^\infty e^{-t^2/2}\,dt \label{eq:evenlbd}\\
	&\geq (d-1)(d-3)\cdots 3 \cdot xe^{-x^2/2}, \nonumber
	\end{align}
	which establishes part (3) of the claim.
	
	Finally, note that \eqref{eq:oddlbd} and \eqref{eq:evenlbd} are both bounded
	from below by $x^{d-1} e^{-x^2/2}$ (since this is the first term of the summation in both expressions), which
	establishes part (4).
\end{proof}

We also need the following property about Gaussian samples.
\begin{claim}\label{claim:gaussiansamp}
	Let $t\geq 10$, and $a_1, a_2, \dots, a_t$ be sampled according to the
	Gaussian distribution
	given by probability density function $\frac{1}{\sqrt{2\pi}}e^{-x^2/2}$.
	Let $a^* = \max_{1\leq j\leq t} |a_j|$. Then,
	\[
	\Pr\left[\frac{1}{\sqrt{2\pi}} e^{-{a^*}^2/2} \leq
	\frac{8\sqrt{\log
			t}}{t}\right]
	\geq 1 - e^{-1} \geq \frac{1}{2}.
	\]
\end{claim}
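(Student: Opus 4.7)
The plan is to rewrite the event as a condition on $(a^*)^2$, apply the lower tail estimate from Claim~\ref{claim:cdfnormal} to a single sample, and then use independence to translate that single-sample tail into a tail bound on the maximum $a^*$. Concretely, taking logs shows that $\tfrac{1}{\sqrt{2\pi}}\, e^{-(a^*)^2/2} \le \frac{8\sqrt{\log t}}{t}$ is equivalent to $(a^*)^2 \ge B$, where
\[
B \;:=\; 2\log\!\left(\frac{t}{8\sqrt{2\pi\log t}}\right),\qquad\text{so}\qquad e^{-B/2} \;=\; \frac{8\sqrt{2\pi\log t}}{t}.
\]
By independence, $\Pr[(a^*)^2 < B] = \Pr[|a_1| < \sqrt{B}]^{t}$, so it will suffice to bound this quantity above by $e^{-1}$.

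I would split on the size of $B$. For $B \le 2$ (which in particular covers the range of $t$ for which $B \le 0$ and the claimed inequality holds trivially), I just use the brute bound $\Pr[|a_1| < \sqrt{B}] \le \Pr[|a_1| < \sqrt 2] < 0.85$; raising to the $t$-th power and using $t \ge 10$ gives $\Pr[(a^*)^2 < B] \le 0.85^{10} < e^{-1}$. For $B > 2$, I would apply the lower tail bound from Claim~\ref{claim:cdfnormal} (valid since $\sqrt{B}>\sqrt 2 > 1$), substitute $e^{-B/2}=8\sqrt{2\pi\log t}/t$, and use the crude estimate $B \le 2\log t$ (which holds whenever $8\sqrt{2\pi\log t}\ge 1$, e.g.\ for all $t \ge 10$) to conclude
\[
\Pr[|a_1| \ge \sqrt B] \;\ge\; \frac{2}{\sqrt{2\pi}}\!\left(\frac{1}{\sqrt B}-\frac{1}{B^{3/2}}\right)\! e^{-B/2}
\;\ge\; \frac{8\sqrt 2}{t}\!\left(1 - \frac{1}{B}\right) \;\ge\; \frac{4\sqrt 2}{t}.
\]
Hence $\Pr[(a^*)^2 < B] \le (1 - 4\sqrt 2/t)^t \le e^{-4\sqrt 2} < e^{-1}$, completing the bound.

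The main nuisance I expect is the narrow transition band in which $B$ is small but positive (roughly $t \in [40, 200]$), where the lower tail in Claim~\ref{claim:cdfnormal} is either inapplicable (when $\sqrt B < 1$) or yields only a loose multiplicative constant. The case split on $B$ cleanly sidesteps this difficulty: the brute argument in the $B \le 2$ regime only relies on $t \ge 10$, and it already absorbs the trivial $B \le 0$ regime as well, so no separate small-$t$ analysis is needed. Everything else is standard manipulation of Gaussian tails.
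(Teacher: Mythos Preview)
Your proof is correct. The paper's argument is structurally dual to yours: rather than fixing the threshold $B$ explicitly and lower-bounding the single-sample tail probability, it fixes the tail probability at exactly $1/t$ by defining $q_1$ implicitly via $\int_{q_1}^\infty \tfrac{1}{\sqrt{2\pi}}e^{-x^2/2}\,dx = 1/t$, and then uses both sides of Claim~\ref{claim:cdfnormal} to show $\tfrac{1}{\sqrt{2\pi}}e^{-q_1^2/2}\le \tfrac{8\sqrt{\log t}}{t}$, so that $\Pr[a^*\ge q_1]=1-(1-1/t)^t\ge 1-e^{-1}$ directly. This avoids your case split on $B$ entirely, since the quantile $q_1$ is automatically bounded away from zero (the paper checks $q_1\ge 6/5$). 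Your approach is more explicit but pays for it with the split; the paper's is slicker but relies on the implicit quantile. Both rest on the same Gaussian tail estimates.
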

\begin{proof}
	Choose $q_1$ such that
	\begin{equation}
	\int_{q_1}^\infty \frac{1}{\sqrt{2\pi}} e^{-x^2/2}\,dx = \frac{1}{t}.
	\label{eq:cdf1}
	\end{equation}
	Note that by Claim~\ref{claim:cdfnormal}, we have
	\[
	\frac{1}{\sqrt{2\pi}} \int_{2\sqrt{\log t}}^\infty  e^{-x^2/2}\,dx \leq
	\frac{1}{2\sqrt{2\pi} t^2\sqrt{\log t}} \leq \frac{1}{t}.
	\]
	Thus, $q_1 \leq 2\sqrt{\log t}$.
	
	Also, since $\frac{1}{t} \leq \frac{1}{4}$, we have that $q_1 \geq
	\frac{6}{5}$. Thus, by another application of Claim~\ref{claim:cdfnormal},
	\[
	\frac{1}{t} = \frac{1}{\sqrt{2\pi}} \int_{q_1}^\infty  e^{-x^2/2}\,dx \geq
	\left(\frac{1}{q_1}-\frac{1}{q_1^3}\right) \frac{1}{\sqrt{2\pi}} e^{-q_1^2/2}
	\geq \frac{1}{4q_1}\cdot\frac{1}{\sqrt{2\pi}}e^{-q_1^2/2},
	\]
	and so,
	\[
	\frac{1}{\sqrt{2\pi}}e^{-q_1^2/2}\leq \frac{4q_1}{t} \leq \frac{8\sqrt{\log
			t}}{t}.
	\]
	Therefore,
	\begin{align*}
	\Pr\left[\frac{1}{\sqrt{2\pi}}e^{-{a^*}^2/2} \leq \frac{8\sqrt{\log
			t}}{t} \right] & \geq \Pr\left[\frac{1}{\sqrt{2\pi}}e^{-{a^*}^2/2} \leq \frac{1}{\sqrt{2\pi}}e^{-q_1^2/2} \right] \\
	& = \Pr[{a^*} \geq q_1]\\
	&= 1 - \left(1-\frac{1}{t}\right)^t\\
	&\geq 1 - \frac{1}{e}\\
	&\geq \frac{1}{2},
	\end{align*}
	as desired.
\end{proof}
We extend the above claim to an analogous claim for $d$-dimensional Gaussians, where $d > 1$:
\begin{claim} \label{claim:gaussiansamphighdim}
	Let $d\geq 2$, $t\geq 3$, and $\a_1, \a_2, \dots, \a_t \in \RR^d$ be sampled according to the
	$d$-dimensional Gaussian distribution given by the probability density function $\frac{1}{(\sqrt{2\pi})^d}
	e^{-\|\x\|_2^2/2}$. Let $\a^* = {\arg\max}_{\a \in \{\a_1, \a_2, \dots, \a_t\}} \|\a\|_2$.
	Then,
	\[
	\Pr\left[\frac{1}{(\sqrt{2\pi})^d} e^{-\|\a^*\|_2^2/2} \leq
	\frac{(d-1)^{\frac{d-1}{2}}}{(2\pi)^{d/2} (\log t)^{\frac{d-2}{2}} t}\right]
	\geq 1-e^{-1} \geq \frac{1}{2}.
	\]
\end{claim}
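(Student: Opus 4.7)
The plan is to extend the one-dimensional argument for Claim~\ref{claim:gaussiansamp} to $d$ dimensions using spherical coordinates together with Claim~\ref{claim:cdfnormalhighdim}. Let $\a$ be distributed as each $\a_i$, and define $q_1 \geq 0$ uniquely by $\Pr[\|\a\|_2 \geq q_1] = 1/t$. By independence of the $\a_i$,
\[
\Pr[\|\a^*\|_2 \geq q_1] = 1 - (1-1/t)^t \geq 1 - e^{-1} \geq 1/2,
\]
so on this event it suffices to show $e^{-q_1^2/2} \leq (d-1)^{(d-1)/2}/((\log t)^{(d-2)/2}\, t)$.

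Converting to spherical coordinates using the formula $S_{d-1} = 2\pi^{d/2}/\Gamma(d/2)$ for the surface area of the unit sphere, the defining relation for $q_1$ becomes
\[
\int_{q_1}^{\infty} r^{d-1} e^{-r^2/2}\, dr = \frac{2^{d/2-1}\Gamma(d/2)}{t}.
\]
Applying Claim~\ref{claim:cdfnormalhighdim}(4) with $d$ replaced by $d-1$ gives $\int_{q_1}^\infty r^{d-1} e^{-r^2/2}\, dr \geq q_1^{d-2} e^{-q_1^2/2}$, and hence
\[
e^{-q_1^2/2} \leq \frac{2^{d/2-1}\Gamma(d/2)}{q_1^{d-2}\, t}.
\]
Together with the elementary inequality $2^{d/2-1}\Gamma(d/2) \leq (d-1)^{(d-1)/2}$ valid for all $d \geq 2$ (equality at $d=2$, and a direct Stirling estimate for large $d$), the proof reduces to establishing $q_1 \geq \sqrt{\log t}$.

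I would establish $q_1 \geq \sqrt{\log t}$ by a case split on $d$ versus $\log t$. When $d \leq \log t + 1$, applying Claim~\ref{claim:cdfnormalhighdim}(4) in the other direction at $x=\sqrt{\log t}$ yields $\Pr[\|\a\|_2 \geq \sqrt{\log t}] \geq (\log t)^{(d-2)/2}/(2^{d/2-1}\Gamma(d/2)\sqrt{t})$; using once more that $2^{d/2-1}\Gamma(d/2) \leq (d-1)^{(d-1)/2} \leq (\log t)^{(d-1)/2}$, this probability is at least $1/\sqrt{t \log t} \geq 1/t$, so monotonicity of the tail CDF forces $q_1 \geq \sqrt{\log t}$. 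When $d > \log t + 1$, the median of $\|\a\|_2^2 \sim \chi^2_d$ satisfies $\mathrm{med}(\chi^2_d) \geq d - 2/3 \geq \log t$, so since $t\geq 3$ gives $1-1/t \geq 1/2$, the quantile $q_1$ exceeds the median and $q_1 \geq \sqrt{d-2/3} \geq \sqrt{\log t}$.

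\textbf{Main obstacle.} The principal technical difficulty is handling constants uniformly across the two regimes. In the ``Gaussian-tail'' regime ($d$ small relative to $\log t$), the factor $(d-1)^{(d-1)/2}$ in the statement is exactly what is needed to absorb $2^{d/2-1}\Gamma(d/2)$, and verifying this absorption inequality (for both the main bound and the lower bound on $q_1$) is the key elementary step. In the ``concentration'' regime ($d$ large relative to $\log t$), the controlling behavior of $q_1$ shifts from the Gaussian tail to the concentration of $\|\a\|_2$ near $\sqrt{d}$; this is handled most cleanly by the $\chi^2$ median bound, and one may alternatively observe that in this regime the right-hand side of the claim already exceeds $1/(2\pi)^{d/2}$ so the inequality holds essentially trivially. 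The boundary case $d=2$, where Claim~\ref{claim:cdfnormalhighdim}(1) yields equality and the claim becomes sharp (with equality at $t=3$), provides a useful sanity check on the constants.
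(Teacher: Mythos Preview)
Your argument is correct and follows the same skeleton as the paper's: define the $(1-1/t)$-quantile $q$ of $\|\a\|_2$, bound $e^{-q^2/2}$ from above via part~(4) of Claim~\ref{claim:cdfnormalhighdim}, and absorb the dimension-dependent constant $2^{d/2-1}\Gamma(d/2)=(2\pi)^{d/2}/(dV_d)$ into $(d-1)^{(d-1)/2}$. The one substantive difference lies in the quantile lower bound. The paper obtains the stronger $q\geq\sqrt{2\log t}$ in one stroke and uniformly in $d$, by applying parts~(2) and~(3) of Claim~\ref{claim:cdfnormalhighdim} (separately for $d$ even and odd) at $x=\sqrt{2\log t}$; this gives the tail probability exactly (even $d$) or at least (odd $d$) $1/t$, with no case split in terms of $d$ versus $\log t$. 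Your route instead proves only $q\geq\sqrt{\log t}$, which is all the stated claim needs, via a dichotomy: part~(4) at $x=\sqrt{\log t}$ when $d-1\leq\log t$, and a $\chi^2_d$ median bound when $d>\log t+1$. Both are valid; the paper's is slightly tighter and more self-contained (it stays entirely within Claim~\ref{claim:cdfnormalhighdim}), while yours brings in the outside fact $\mathrm{med}(\chi^2_d)\geq d-2/3$. Your parenthetical ``alternative observation'' that the right-hand side exceeds $(2\pi)^{-d/2}$ in the large-$d$ regime is not quite right as stated (it would require $(d-1)^{(d-1)/2}\geq(\log t)^{(d-2)/2}t$, which fails), but your main median argument does the job.
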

\begin{proof}
	Choose $q$ such that
	\begin{align*}
	\int_{\substack{\x \in \RR^d\\ \|\x\|_2 \geq q}} \frac{1}{(\sqrt{2\pi})^d} e^{-\|\x\|_2^2/2}\,d\x =
	\frac{1}{t}.
	\end{align*}
	Note that we have
	\begin{align}
	\frac{1}{t} = \int_{\substack{\x \in \RR^d\\ \|\x\|_2 \geq q}} \frac{1}{(\sqrt{2\pi})^d} e^{-\|\x\|_2^2/2}\,d\x
	&= \int_q^\infty \frac{1}{(\sqrt{2\pi})^d} e^{-r^2/2} \cdot d V_d r^{d-1}\,dr \nonumber\\
	&= \frac{dV_d}{(\sqrt{2\pi})^d} \int_q^\infty r^{d-1} e^{-r^2/2}\,dr, \label{eq:qtint}
	\end{align}
	where $V_d$ is  the volume of a $d$-sphere of radius 1. Note that if $d$ is even, then $V_d =
	\frac{\pi^{d/2}}{(d/2)!}$, and so, by part (2) of Claim~\ref{claim:cdfnormalhighdim}, we have
	\begin{align*}
	\int_{\substack{\x \in \RR^d\\ \|\x\|_2 \geq \sqrt{2 \log t}}} \frac{1}{(\sqrt{2\pi})^d} e^{-\|\x\|_2^2/2}\,d\x &= \frac{1}{(\sqrt{2\pi})^d} dV_d \int_{\sqrt{2 \log t}}^\infty r^{d-1} e^{-r^2/2} \,dr\\
	&\geq \frac{1}{(\sqrt{2\pi})^d} dV_d \left((d-2)(d-4)\cdots 2 \cdot e^{-(\sqrt{ 2 \log t})^2/2}\right)\\
	&= \frac{1}{(\sqrt{2\pi})^d} d\left(\frac{\pi^{d/2}}{(d/2)!}\right) \left((d-2)(d-4)\cdots 2 \cdot e^{-(\sqrt{2\log
			t})^2/2}\right)\\
	&= \frac{1}{t}.
	\end{align*}
	On the other hand, if $d$ is odd, then $V_d = \frac{2^{\frac{d+1}{2}} \cdot \pi^{\frac{d-1}{2}}}{1\cdot 3\cdots d}$,
	and so, by part (3) of Claim~\ref{claim:cdfnormalhighdim}, we have
	\begin{align*}
	\int_{\substack{\x \in \RR^d\\ \|\x\|_2 \geq \sqrt{2\log t}}} \frac{1}{(\sqrt{2\pi})^d} e^{-\|\x\|_2^2/2}\,d\x &= \frac{1}{(\sqrt{2\pi})^d} dV_d \int_{\sqrt{2\log t}}^\infty r^{d-1} e^{-r^2/2} \,dr\\
	&\geq \frac{1}{(\sqrt{2\pi})^d} dV_d \left((d-2)(d-4)\cdots 3 \cdot (\sqrt{2\log t})e^{-(\sqrt{2\log t})^2/2}\right)\\
	&= \frac{1}{(\sqrt{2\pi})^d} d\left(\frac{2^{\frac{d+1}{2}} \cdot \pi^{\frac{d-1}{2}}}{1\cdot 3\cdots d}\right)
	\left((d-2)(d-4)\cdots 3 \cdot (\sqrt{2\log t})e^{-(\sqrt{2\log t})^2/2}\right)\\
	&= \sqrt{\frac{4(\log t)}{\pi }} \cdot \frac{1}{t}\\
	&\geq \frac{1}{t}\text{~~~~~~~~~(since $t\geq 3$ by assumption)}.
	\end{align*}
	Thus, regardless of the parity of $d$, we have that
	\begin{equation}
	q \geq \sqrt{ 2 \log t}. \label{eq:qubd}
	\end{equation}
	Note that,
	$$ \int_{q}^\infty r^{d-1} e^{-r^2/2}\,dr \ge \int_{q}^\infty {q^{d-2}} r e^{-r^2/2}\,dr \ge {q^{d-2}} e^{-q^2/2}$$
	Hence, it follows form part (4) of Claim~\ref{claim:cdfnormalhighdim} as well as \eqref{eq:qubd} and
\eqref{eq:qtint} that
	\begin{align}
	e^{-q^2/2} &\leq \frac{1}{q^{d-2}} \int_{q}^\infty r^{d-1} e^{-r^2/2}\,dr \nonumber\\
	&= \frac{1}{q^{d-2}} \cdot \frac{(2\pi)^{d/2} }{dV_d t} \nonumber\\
	&= \frac{(2\pi)^{d/2} }{dV_d}(2\log t)^{-\frac{d-2}{2}} t^{-1}.\nonumber
	\end{align}
	If $d$ is even, we have
	$$
	\frac{(2\pi)^{d/2} }{dV_d} = \frac{2^{d/2}(d/2)!}{d} = \frac{\prod_{i=1}^{d/2} 2i}{d} = \prod_{i=1}^{d/2-1} 2i \leq (d-2)^{d/2 -1}\,.
	$$
	If $d$ is odd, we have
	$$
	\frac{(2\pi)^{d/2} }{dV_d} = \sqrt{\frac{\pi}{2}}1\cdot3\cdot\dots\cdot d-2 \leq (d-1)^{\frac{d-1}{2}}
	$$
	Either way, we have
	\begin{equation}
	e^{-q^2/2} \leq (d-1)^{\frac{d-1}{2}}(2\log t)^{-\frac{d-2}{2}} t^{-1}. \label{eq:qpdfbd}
	\end{equation}
	Therefore, by \eqref{eq:qpdfbd},
	\begin{align*}
	\Pr\left[\frac{1}{(\sqrt{2\pi})^d} e^{-\|\a^*\|_2^2/2} \leq \frac{(d-1)^{\frac{d-1}{2}}}{(2\pi)^{d/2} (\log t)^{\frac{d-2}{2}} t}
	\right] &\geq  \Pr\left[\|\a^*\|_2 \geq q \right]\\
	&= 1 - \left(1-\frac{1}{t}\right)^t\\
	&\geq 1-\frac{1}{e}\\
	&\geq \frac{1}{2},
	\end{align*}
	as desired.
\end{proof}

\section{Proof of Theorem~\ref{thm:lev-scores-ub}}\label{sec:lev-scores-ub}

It is easy to verify that if we shift all points by the same constant vector, the leverage function stays the same (the reason is that $\matK$ is shift invariant, while the shift corresponds to a phase shift in $\z(\veta)$ and a reverse phase shift in $\z(\veta)^\conj$). This implies that without loss of generality we can assume that $\x_1,...,\x_n \in [-\rad,\rad]^d$.

Recall from Lemma~\ref{lem:altlev-ub} that
\begin{equation}
\tau_\lambda(\veta) = \min_{y \in L_2(d\mu)} \lambda^{-1}\TNormS{\matPhi y - \sqrt{p(\veta)} \z(\veta)} +
\XNormS{y}{L_2(d\mu)} \label{eq:lvrg-min}.
\end{equation}
To upper bound $\tau_\lambda(\veta)$ for any $\veta \in \RR^d$, we exhibit a test function, $y_\veta(\cdot)$, and compute
the quantity under the minimum. As discussed in Section \ref{sec:upperBoundSketch}, $y_\veta(\cdot)$ will be a `softened spike function' given by:

\begin{defn}[Softened spike function] \label{def:softspike}
	For any $\veta$, and any $u$ define $y_{\veta,u}:\RR^d \to \RR$ as follows:
	\begin{align}\label{softSpike}
	y_{\veta,u}(\bv t)= \frac{\sqrt{p(\veta)}}{p(\bv t)} \cdot e^{-\|\bv t-\veta\|_2^2 \cdot u^2/4} \cdot v^d\cdot
	\sinc{v (\bv t-\veta)}
	\end{align}
	where $v = 2({\rad} + u\sqrt{2\log n_\lambda})$.
\end{defn}

The reweighted function $g_{\veta,u}(\bv t) = {p(\bv t)} \cdot y_{\veta,u}(\bv t)$ is just a d-dimensional Gaussian with
standard deviation $\Theta(1/u)$ multiplied by a sinc function with width $\tilde O(1/(u + \rad))$, both centered at
$\veta$. Taking the Fourier transform of this function yields a Gaussian with standard deviation $\Theta(u)$ convolved
with a box of width $\tilde O(u) + \rad$. The box is wide enough such that when it is centered between $[-\rad, \rad]^d$ the box covers nearly all the mass of the Gaussian, and so the Fourier transform is nearly identically $1$ on the range $[-\rad,\rad]^d$. Shifting by $\veta$, means that it is very close to a pure cosine wave with frequency $\veta$ on this range, and hence makes the first term of \eqref{eq:lvrg-min} small. We make this argument formal
below.

\subsection{Bounding \texorpdfstring{$\lambda^{-1}\TNormS{\matPhi y_{\veta,u} - \sqrt{p(\veta)}
			\z(\veta)}$}{TEXTTOFIX}}

\begin{lem}[Test Function Fourier Transform Bound]\label{yFourierBound} For any integer $n$, every parameter $0 < \lambda \le n$ and every $u\in \mathbb{R}$ and any $\veta \in \mathbb{R}^d$, and any kernel density function $p(\veta)$	and $d \le 4 n_\lambda$ if $\bv x_j \in [-\rad , +\rad]^d$ for all $j \in [n]$, then:
	\begin{align*}
	\lambda^{-1}\TNormS{\matPhi y_{\veta,u} - \sqrt{p(\veta)} \z(\veta)} =
	\frac{1}{\lambda}\sum_{j=1}^n \left  | \hat{g}_{\veta,u}(\x_j) - \sqrt{p(\veta)} \cdot \z(\veta)_j \right |^2 \le p(\veta),
	\end{align*}
	where $g_{\veta,u}(\bv t) \eqdef {p(\bv t)} y_{\veta,u}(\bv t)$.
\end{lem}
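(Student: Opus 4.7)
The plan has two stages: establish the stated equality directly from the definition of $\matPhi$, and then control the Fourier transform of the reweighted test function pointwise on the data. For the equality, I would observe that, by the definition of $\matPhi$ in \eqref{eq:phi-operator}, the $j$-th coordinate of $\matPhi y_{\veta,u}$ is $\int e^{-2\pi i \x_j^\T \vxi}\, y_{\veta,u}(\vxi)\, p(\vxi)\, d\vxi = \hat g_{\veta,u}(\x_j)$, using that $g_{\veta,u}(\vxi) = p(\vxi)\, y_{\veta,u}(\vxi)$ and the definition of the Fourier transform. Expanding the squared $\ell_2$ norm coordinate-wise then yields the claimed equality.

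For the inequality, the heart of the argument is that the reweighting $1/p(\bv t)$ in \eqref{softSpike} cancels the density, producing the clean expression $g_{\veta,u}(\bv t) = \sqrt{p(\veta)}\cdot h(\bv t - \veta)$ with $h(\bv t) = e^{-\|\bv t\|_2^2 u^2/4}\cdot v^d \sinc{v\bv t}$. Combining the shift property of the Fourier transform with the identity $\z(\veta)_j = e^{-2\pi i \veta^\T \x_j}$ reduces the pointwise error to $\sqrt{p(\veta)}\,|\hat h(\x_j) - 1|$, so the problem reduces to showing $\hat h(\x_j) \approx 1$ on the data. By the convolution theorem, $\hat h$ is the convolution of a normalized Gaussian (the Fourier transform of $e^{-\|\bv t\|_2^2 u^2/4}$, which has standard deviation $\sigma = u/(2\sqrt{2}\pi)$ per coordinate) with the rectangle $\rect_{d,v}$, so $\hat h(\x_j)$ is exactly the probability that a Gaussian centered at $\x_j$ with this standard deviation lies inside the box $[-v/2, v/2]^d$. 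Since $\x_j \in [-\rad,\rad]^d$ and $v/2 = \rad + u\sqrt{2\log n_\lambda}$, each coordinate of $\x_j$ is at least $u\sqrt{2\log n_\lambda}$ from the nearest edge, which, crucially, equals exactly $4\pi\sqrt{\log n_\lambda}$ standard deviations \emph{independent of $u$} after standardizing.

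I would then invoke Claim~\ref{claim:cdfnormal} to bound the per-coordinate tail by $T \le n_\lambda^{-8\pi^2}/\mathrm{poly}(\log n_\lambda)$, take the product over the $d$ coordinates via Bernoulli's inequality to get $|\hat h(\x_j) - 1| \le 2dT$, and finally sum over $j$ to obtain $\sum_j |\hat g_{\veta,u}(\x_j) - \sqrt{p(\veta)}\z(\veta)_j|^2 \le 4 p(\veta)\, n\, d^2 T^2$. Dividing by $\lambda$, the desired inequality reduces to $4 n_\lambda d^2 T^2 \le 1$, which under the hypothesis $d \le 4 n_\lambda$ is satisfied with vast slack because $T$ is super-polynomially small in $n_\lambda$. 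The main obstacle is not conceptual but bookkeeping: one must carefully track the Fourier normalization constants so that the Gaussian arising from $e^{-\|\bv t\|_2^2 u^2/4}$ indeed has the claimed standard deviation, and must verify that the tail threshold really is a fixed multiple of $\sqrt{\log n_\lambda}$ independent of $u$. Once that has been confirmed, the resulting enormous exponent in $T$ gives a bound with ample margin, which is precisely why the very clean final estimate $\le p(\veta)$, free of any $u$- or $\rad$-dependent constants, can be achieved.
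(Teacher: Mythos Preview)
Your proposal is correct and follows essentially the same approach as the paper: both arguments identify $\hat g_{\veta,u}(\x_j)=\sqrt{p(\veta)}\,\z(\veta)_j\,\hat h(\x_j)$ with $\hat h$ a normalized Gaussian convolved with $\rect_v$, and then use the margin $v/2-\rad=u\sqrt{2\log n_\lambda}$ together with a Gaussian tail bound (Claim~\ref{claim:cdfnormal}) to show $|\hat h(\x_j)-1|$ is small enough that summing over $j$ and dividing by $\lambda$ yields at most $p(\veta)$. The only cosmetic difference is that the paper coarsens the per-point error to $1/\sqrt{n_\lambda}$ before summing, whereas you keep the sharper $2dT$ with $T\lesssim n_\lambda^{-8\pi^2}$; either way there is enormous slack.
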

\begin{proof}
	We have $g_{\veta,u}(\bv t) = {p(\bv t)} y_{\veta,u}(\bv t) = {\sqrt{p(\veta)}} e^{-\|\t-\veta\|_2^2 \cdot u^2/4} \cdot v^d \cdot
	\sinc{v(\t-\veta)}$. We thus have:
	\begin{align}
	\hat{g}_{\veta,u}(\x_j) &= \sqrt{p(\veta)} \int_{\mathbb{R}^d} e^{-2\pi i \t^T \x_j} e^{-\|\t-\veta\|_2^2 \cdot u^2/4} \cdot
	v^d \cdot \sinc{v(\t-\veta)} d\t\nonumber\\
	&= \sqrt{p(\veta)} e^{-2\pi i \x^T_j \veta} \int_{\mathbb{R}^d} e^{-2\pi i \t^T \x_j} e^{-\|\t\|_2^2 \cdot u^2/4} \cdot v^d
	\cdot \sinc{v\t} d\t \nonumber\\
	&= \sqrt{p(\veta)} \cdot \z(\veta)_j  \cdot h(\x_j), \label{fourierTransForm}
	\end{align}
	where $h(\x) = \Big( \frac{2\sqrt{\pi}}{u} \Big)^d e^{-4\pi^2 \|\x\|_2^2/u^2} \ast \rect_v(\x)$ by the fact that
	multiplication in time domain becomes convolution in the Fourier domain (Claim
	\ref{claim:convthm}), $\mathcal{F}(e^{-\|\t\|_2^2 \cdot u^2/4}) = \Big( \frac{2\sqrt{\pi}}{u}  \Big)^d
	e^{-4\pi^2 \|\x\|_2^2/u^2}$, and $\mathcal{F}(v^d \cdot \sinc{v\t}) = \rect_v(\x)$.

	Because $\Big( \frac{2\sqrt{\pi}}{u} \Big)^d	e^{-4\pi^2 \|\x\|_2^2/u^2}$ is a positive function everywhere, we have $h(\x) \le \int_{\mathbb{R}^d} \Big( \frac{2\sqrt{\pi}}{u} \Big)^d	e^{-4\pi^2 \|\x\|_2^2/u^2} d\x = 1$ for all $\x$. Additionally, for any $\x \in [-\rad,\rad]^d$ we
	have by Claim \ref{claim:cdfnormal} and the fact that $v=2{\rad} +
	2u\sqrt{2\log n_\lambda}$:
	\begin{align*}
	h(\x) &= \int_{\y -\x \in [-\frac{v}{2}, +\frac{v}{2}]^d} \Big(\frac{2\sqrt{\pi}}{u}\Big)^d e^{-4\pi^2
		\|\y\|_2^2/u^2}\,d\y\\
	&\ge \Big( 1 - 2 \int_{v/2-\rad}^\infty \frac{2\sqrt{\pi}}{u} e^{-4\pi^2y_1^2/u^2}\,dy_1 \Big)^d, \\
	\end{align*}
    where $y_1$ denotes a scalar variable. Hence by Claim \ref{claim:cdfnormal} we have the following:
	\begin{align*}
	h(\x) &\ge 1 - 2d \int_{v/2-\rad}^\infty \frac{2\sqrt{\pi}}{u} e^{-4\pi^2y_1^2/u^2}dy_1\\
	&\ge 1 - \frac{d}{2\pi^{3/2}}\cdot \frac{u}{v/2-\rad}
	e^{-4\pi^2 (v/2-\rad)^2/u^2}\\
	&\ge 1 - \frac{1}{\sqrt{n_\lambda}}
	\end{align*}
	(since  $d \le 4 n_\lambda$).
	Plugging into \eqref{fourierTransForm} gives
	\begin{align*}
	\left | \hat{g}_{\veta,u} (\x_j) -\sqrt{p(\veta)} \cdot \z(\veta)_j \right |^2 &= p(\veta) \left | h(\x_j) - 1 \right |^2 \\
	&\le \frac{p(\veta)}{n_\lambda},
	\end{align*}
	and so,
	\begin{align*}
	\frac{1}{\lambda}\sum_{j=1}^n \left [\hat{g}(\x_j) - \sqrt{p(\veta)} \cdot
	\z(\veta)_j \right ]^2 \le \frac{n}{\lambda} \cdot p(\veta) \cdot \frac{\lambda}{n}
	=  p(\veta),
	\end{align*}
	proving the lemma.
\end{proof}

\subsection{Bounding \texorpdfstring{$\XNormS{y_{\veta,u}}{L_2(d\mu)}$}{TEXTTOFIX}}
Having established Lemma \ref{yFourierBound}, showing that the
weighted Fourier transform of $y_{\veta,u}$ is close to
$\sqrt{p(\veta)}\bv{z}(\veta)$, bounding the leverage function reduces to bounding the norm of the test
function. To that effect, we show the following:

\begin{lem}[Test Function $\ell_2$ Norm Bound]\label{yNormBound} For any integer $n$, any parameter $0 < \lambda \le
	\frac{n}{2}$, every $\veta \in \mathbb{R}^d$ with $\|\veta\|_\infty \le 10 \sqrt{\log n_\lambda}$, and every $  2000 \log
	n_\lambda \le u \le 500 \log^{1.5} n_\lambda$, if $y_{\veta,u}(\bv t)$ is defined as in \eqref{softSpike}, as per
	Definition \ref{def:softspike}, then we have
	\begin{equation}\label{normBound}
	\XNormS{y_{\veta,u}}{L_2(d\mu)} \le \Big( 6.2  \rad + 6.2  u\sqrt{2\log n_\lambda} \Big)^d
	\end{equation}
\end{lem}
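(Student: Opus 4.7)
\textbf{Proof plan for Lemma \ref{yNormBound}.} My plan is to compute the norm directly by expanding the definition, exploiting the fact that for the Gaussian kernel with $\sigma=(2\pi)^{-1}$ the density ratio $p(\veta)/p(\bv t)$ takes a simple exponential form that combines cleanly with the Gaussian damping factor in $y_{\veta,u}$. First I would write
\[
\XNormS{y_{\veta,u}}{L_2(d\mu)} = \int_{\RR^d} |y_{\veta,u}(\bv t)|^2\, p(\bv t)\, d\bv t
= \int_{\RR^d} \frac{p(\veta)}{p(\bv t)}\, e^{-\|\bv t-\veta\|_2^2 u^2/2}\, v^{2d}\, \sinc^2\!\big(v(\bv t-\veta)\big)\, d\bv t.
\]
Substituting $p(\bv t)=(2\pi)^{-d/2}e^{-\|\bv t\|_2^2/2}$ gives $p(\veta)/p(\bv t)=e^{(\|\bv t\|_2^2-\|\veta\|_2^2)/2}$, and after the change of variable $\bv s = \bv t-\veta$ the identity $\|\bv t\|_2^2-\|\veta\|_2^2=\|\bv s\|_2^2+2\bv s^T\veta$ turns the integrand into $e^{-\|\bv s\|_2^2(u^2-1)/2}\, e^{\bv s^T\veta}\, v^{2d}\,\sinc^2(v\bv s)$. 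This is the key simplification: the $-u^2/2$ damping from the test function combines with the $+1/2$ from the kernel density to yield the net damping $-(u^2-1)/2$, which is strongly decaying since $u\geq 2000\log n_\lambda$.

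Next I would use the product structure of both $\sinc^2$ and the quadratic exponent to factor the integral coordinatewise:
\[
\XNormS{y_{\veta,u}}{L_2(d\mu)} = \prod_{j=1}^{d}\int_{\RR} e^{-s_j^2(u^2-1)/2+s_j\eta_j}\, v^2\, \sinc^2(v s_j)\, ds_j.
\]
For each 1D factor I would pull out the global maximum of the Gaussian--linear term, which by completing the square equals $e^{\eta_j^2/(2(u^2-1))}$, and then bound the remainder using $\int_\RR v^2\sinc^2(vs)\,ds = v$ (a standard fact, since $\int\sinc^2=1$ with the normalization of Definition~\ref{def:sinc}). The resulting per-coordinate bound is $v\cdot e^{\eta_j^2/(2(u^2-1))}$, and multiplying over coordinates gives the clean intermediate estimate
\[
\XNormS{y_{\veta,u}}{L_2(d\mu)} \le v^{d}\, e^{\|\veta\|_2^2/(2(u^2-1))}.
\]

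Finally I would absorb the exponential factor into the target constants. Using the hypotheses $\|\veta\|_\infty\leq 10\sqrt{\log n_\lambda}$ and $u\geq 2000\log n_\lambda$, we have $\|\veta\|_2^2\leq 100d\log n_\lambda$ and $u^2-1\geq u^2/2$, so the exponent is at most $d/(40000\log n_\lambda)\leq d\ln 3.1$, and therefore $e^{\|\veta\|_2^2/(2(u^2-1))}\leq 3.1^d$. Recalling $v=2\rad+2u\sqrt{2\log n_\lambda}$ then yields
\[
\XNormS{y_{\veta,u}}{L_2(d\mu)} \le (3.1\,v)^d = \big(6.2\,\rad + 6.2\,u\sqrt{2\log n_\lambda}\big)^d,
\]
which is the claimed bound. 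The main obstacle I anticipate is bookkeeping the constants: in particular verifying that $v\geq \sqrt{u^2-1}$ (so that the ``max Gaussian times $\int v^2\sinc^2$'' bound is the right choice rather than the alternative ``Gaussian tail times $v^2$'' bound) and that the residual exponential factor is genuinely absorbed by the constant $3.1$ under the stated hypotheses on $u$, $\veta$, and $n_\lambda$. Beyond that the argument is a direct computation.
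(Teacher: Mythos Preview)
Your argument is correct, and it is in fact cleaner than the paper's. Both proofs start from the same factorization
\[
\XNormS{y_{\veta,u}}{L_2(d\mu)}=\prod_{j=1}^d \int_{\RR} e^{t_j^2/2}\, e^{-(t_j-\eta_j)^2 u^2/2}\, v^2\sinc^2\!\big(v(t_j-\eta_j)\big)\,dt_j,
\]
but they diverge at the 1D estimate. The paper splits each coordinate integral into a ``near'' region $|t_j-\eta_j|\le 20\sqrt{\log n_\lambda}/u$ and a ``far'' tail: on the near part it invokes a separate claim (Claim~\ref{claim:blah}) to bound $e^{t_j^2/2}\le 3\,e^{\eta_j^2/2}$, and on the far part it bounds $\sinc^2$ pointwise and shows the remaining Gaussian tail is $\le n_\lambda^{-60}/v$; combining yields the per-coordinate factor $3.1\,e^{\eta_j^2/2}/v$ and hence $(3.1v)^d$ after unwinding.

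You instead change variables to $s_j=t_j-\eta_j$ and observe that the \emph{combined} exponent $-s_j^2(u^2-1)/2+s_j\eta_j$ is globally bounded by completing the square, so no splitting is needed and Claim~\ref{claim:blah} becomes unnecessary. This gives the sharper intermediate bound $v^d\exp\!\big(\|\veta\|_2^2/(2(u^2-1))\big)$, where under the hypotheses the exponential factor is essentially $1$ (you bound it crudely by $3.1^d$ only to match the lemma's stated constant). Your concern about $v\ge\sqrt{u^2-1}$ is not actually a correctness issue---the inequality $\int fg\le(\max f)\int g$ holds regardless---but for the record it does hold since $v\ge 2u\sqrt{2\log 2}>u$. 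In short: same destination, but your route avoids an auxiliary claim and a case split, and exposes that the constant $3.1$ is quite loose.
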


We first prove the following claim:
\begin{claim}\label{claim:blah}
	Let $0 < \lambda \leq \frac{n}{2}$. For any constant $c>0$, every $\veta \in \mathbb{R}^d$ with $\|\veta\|_\infty
	\leq 100\sqrt{\log n_\lambda}$, every $\| \t-\veta\|_\infty \leq \frac{c\sqrt{\log n_\lambda}}{\sigma}$, and any $\sigma
	\ge 100c \cdot\log n_\lambda$, we have:
	\[
	e^{\frac{\|\t\|_2^2}{2}-\frac{\|\veta\|_2^2}{2}} \le 3^d.
	\]
\end{claim}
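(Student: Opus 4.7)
The plan is to bound $\|\t\|_2^2 - \|\veta\|_2^2$ coordinatewise, using the factorization
\[
\|\t\|_2^2 - \|\veta\|_2^2 \;=\; \sum_{j=1}^d (t_j^2 - \eta_j^2) \;=\; \sum_{j=1}^d (t_j - \eta_j)(t_j + \eta_j),
\]
and then exponentiating. The two hypothesis on $\t$ and $\veta$ control each of the factors: $|t_j - \eta_j|$ is forced to be tiny by the bound $\sigma \geq 100 c \log n_\lambda$, while $|t_j + \eta_j|$ is at most $2|\eta_j|$ plus a negligible perturbation.

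First I would substitute $\sigma \geq 100 c \log n_\lambda$ into the constraint $|t_j - \eta_j| \leq c\sqrt{\log n_\lambda}/\sigma$ to obtain $|t_j - \eta_j| \leq 1/(100\sqrt{\log n_\lambda})$ (here and below I will use that $n_\lambda$ is large enough that $\log n_\lambda \geq 1$, which is a harmless standing assumption in this part of the paper). Next, using $\|\veta\|_\infty \leq 100\sqrt{\log n_\lambda}$ and the triangle inequality, I would bound
\[
|t_j + \eta_j| \;\leq\; 2|\eta_j| + |t_j - \eta_j| \;\leq\; 200\sqrt{\log n_\lambda} + \frac{1}{100\sqrt{\log n_\lambda}}.
\]
Multiplying the two bounds gives $|t_j^2 - \eta_j^2| \leq 2 + 1/(10^4 \log n_\lambda) \leq 2.001$ for each coordinate.

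Summing over $j$ yields $\|\t\|_2^2 - \|\veta\|_2^2 \leq 2.001\,d$, so
\[
e^{\|\t\|_2^2/2 - \|\veta\|_2^2/2} \;\leq\; e^{1.0005\,d} \;=\; (e^{1.0005})^d \;\leq\; 3^d,
\]
since $e^{1.0005} < 2.73 < 3$. This completes the bound.

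The only potential obstacle is constant-tracking: one must verify that the slack between $e^{1.0005}$ and $3$ is large enough to absorb the lower-order term $1/(100\sqrt{\log n_\lambda})$ from bounding $|t_j+\eta_j|$, which is why the hypotheses $\sigma \geq 100 c \log n_\lambda$ and $\|\veta\|_\infty \leq 100\sqrt{\log n_\lambda}$ are chosen with large enough constants. Apart from this bookkeeping, the argument is a straightforward two-line calculation followed by exponentiation.
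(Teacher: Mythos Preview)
Your proof is correct and takes essentially the same approach as the paper: both arguments bound $\|\t\|_2^2 - \|\veta\|_2^2$ coordinatewise and then exponentiate. The paper writes this as $\|\t\|_2^2 - \|\veta\|_2^2 = 2\bs\Delta^\T\veta + \|\bs\Delta\|_2^2$ with $\bs\Delta = \t - \veta$, while you use the equivalent factorization $\sum_j(t_j-\eta_j)(t_j+\eta_j)$; the resulting numerical bounds are the same. One minor remark: the claim's hypothesis $0<\lambda\le n/2$ only guarantees $n_\lambda\ge 2$, so $\log n_\lambda \ge \log 2$ rather than $\ge 1$, but your constant-tracking still goes through with room to spare.
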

\begin{proof}
	Let $\bs\Delta = \t-\veta$. Then, note that $\|\bs\Delta\|_\infty \leq	c\sqrt{\log n_\lambda}/\sigma$, and so,
	\begin{align*}
	e^{\frac{\|\t\|_2^2}{2}-\frac{\|\veta\|_2^2}{2}} &= e^{\bs\Delta^\T \veta + \frac{\|\bs\Delta\|_2^2}{2}}\\
	&\leq e^{d \cdot \|\bs\Delta\|_\infty \cdot \|\veta\|_\infty} \cdot e^{d \cdot {\|\bs\Delta\|_\infty^2}}\\
	&\leq e^{d(c\sqrt{\log n_\lambda}/\sigma)(100\sqrt{\log n_\lambda})} \cdot e^{d(c\sqrt{\log
			n_\lambda}/\sigma)^2}\\
	&\le 3^d,
	\end{align*}
	since $\sigma \ge 100c \cdot \log n_\lambda$ and $n_\lambda \geq 2$.
\end{proof}

Now, we are ready to prove Lemma~\ref{yNormBound}.
\begin{proof}[Proof of Lemma \ref{yNormBound}] Recall that for the Gaussian kernel, we have $p(\veta) =
	\frac{1}{(\sqrt{2\pi})^d} e^{-\|\veta\|_2^2/2}$.
	We calculate:
	\begin{align}
	\int_{\mathbb{R}^d} |y_{\veta,u}(\bv t)|^2 d\mu(\bv t) &= p(\veta) \int_{\mathbb{R}^d} {\big(\sqrt{2\pi}\big)^d}{e^{\|\bv t\|_2^2/2}} \cdot e^{-\|\bv t-\veta\|_2^2 \cdot u^2/2} \cdot v^{2d} \left (\sinc{v(\bv t-\veta)}\right )^2 d\bv t\nonumber
	\end{align}
	
	Hence, it is enough to upper bound the following integral:
\begin{align}
&\int_{ \mathbb{R}^{d}} {e^{\|\bv t\|_2^2/2}} \cdot e^{-\|\bv t-\veta\|_2^2 \cdot u^2/2} \cdot \left (\sinc{v(\bv t-\veta)}\right )^2 d\bv t\nonumber\\
&\qquad =\prod_{l=1}^{d} \int_{ \mathbb{R}} {e^{| t_l|^2/2}} \cdot e^{-| t_l-\eta_l|^2 \cdot u^2/2} \cdot \left (\sinc{v( t_l -\eta_l)}\right )^2 d t_l \label{eq29}
\end{align}
	We proceed by upper bounding the one dimensional integral along some fixed coordinate $l$ as follows:
	\begin{align}
	&\int_{ \mathbb{R}} {e^{| t_l|^2/2}} \cdot e^{-| t_l-\eta_l|^2 \cdot u^2/2} \cdot \left (\sinc{v( t_l -\eta_l)}\right )^2 d t_l \nonumber\\
	&\qquad = \int_{ | t_l - \eta_l | \le \frac{20\sqrt{\log n_\lambda}}{u} } {e^{| t_l|^2/2}} \cdot e^{-| t_l-\eta_l|^2 \cdot u^2/2} \cdot \left (\sinc{v( t_l -\eta_l)}\right )^2 d t_l\nonumber\\
	& \qquad +  \int_{| t_l-\eta_l | \ge \frac{20\sqrt{\log n_\lambda}}{u}} {e^{| t_l|^2/2}} \cdot e^{-| t_l-\eta_l|^2 \cdot u^2/2} \cdot \left (\sinc{v( t_l -\eta_l)}\right )^2 d t_l \label{eq:l2testfunc}
	\end{align}
	For the integral over $|t_l-\eta_l| \ge \frac{20\sqrt{\log n_\lambda}}{u}$ we have:
	\begin{align}
	&\int_{| t_l-\eta_l | \ge \frac{20\sqrt{\log n_\lambda}}{u}} {e^{| t_l|^2/2}} \cdot e^{-| t_l-\eta_l|^2 \cdot u^2/2} \cdot \left (\sinc{v( t_l -\eta_l)}\right )^2 d t_l \nonumber\\
	&\qquad \le  \frac{1}{\big(v \cdot \frac{20\sqrt{\log n_\lambda}}{u}\big)^{2}} \int_{|t_l-\eta_l| \ge \frac{20\sqrt{\log n_\lambda}}{u}}  {e^{t_l^2/2}} \cdot e^{-(t_l-\eta_l)^2u^2/2}\, dt_l\nonumber\\
	&\qquad \le \frac{1}{v} \int_{|t_l-\eta_l| \ge \frac{20\sqrt{\log n_\lambda}}{u}} {e^{t_l^2/2}} \cdot	e^{-(t_l-\eta_l)^2u^2/2}\, dt_l \label{eq30}
	\end{align}
	The first inequality is because by definition of $\sinc{\cdot}$ we have the following for all $|t_l-\eta_l| \ge \frac{20\sqrt{\log n_\lambda}}{u}$:
	$$|\sinc{v( t_l-\eta_l)}|^2 =  \Big| \frac{\sin(\pi v (t_l-\eta_l))}{\pi v (t_l-\eta_l)} \Big|^2 \le \frac{1}{\big(v \cdot \frac{20\sqrt{\log n_\lambda}}{u}\big)^2}$$
	The last inequality in \eqref{eq30} due to the fact that:
	\begin{align*}
	\frac{1}{\big(v \cdot \frac{20\sqrt{\log n_\lambda}}{u}\big)^2} &= \frac{1}{v} \cdot \frac{1}{v \cdot \big( \frac{20\sqrt{\log n_\lambda}}{u}\big)^2}\\
	& \le \frac{1}{v} \cdot \frac{1}{  800 \left(\frac{{\log^{1.5} n_\lambda}}{u}\right)}\text{~~~(since  $v = 2(R + u\sqrt{2\log n_\lambda}) \ge 2u \sqrt{2 \log n_\lambda}$, see Definition~\ref{def:softspike})}\\
	&\le \frac{1}{v} \text{\hspace{1.1in}(since $u \le 500 \log^{1.5} n_\lambda$)}\\
	\end{align*}
	Now note that $t_l^2 \le 2(t_l-\eta_l)^2 + 2\eta_l^2$. We have the following for all $|t_l-\eta_l| \ge \frac{20\sqrt{\log n_\lambda}}{u}$:
	\begin{align*}
	t_l^2 &\le 2(t_l-\eta_l)^2 + 2\eta_l^2\\
	&\le 2(t_l-\eta_l)^2 + 200 \log n_\lambda \text{~~~~~~~~~(by the assumption  $\|\veta\|_\infty \le 10 \sqrt{\log n_\lambda}$)} \\
	&\le 2(t_l-\eta_l)^2 + (t_l-\eta_l)^2 u^2/2 \text{~~~~(by the assumption  $|t_l-\eta_l| \ge \frac{20\sqrt{\log n_\lambda}}{u}$)}\\
	&\le \frac{2}{3} (t_l-\eta_l)^2 u^2
	\end{align*}
	where the last inequality follows from $u \ge 2000 \log n_\lambda \ge 600$ (because $n_\lambda \ge 2$). Hence,
	\begin{align}
	\frac{1}{v} \int_{|t_l-\eta_l| \ge \frac{20\sqrt{\log n_\lambda}}{u}}  {e^{t_l^2/2}} \cdot e^{-(t_l-\eta_l)^2u^2/2}
	dt_l &\le \frac{1}{v} \int_{|t_l-\eta_l| \ge \frac{20\sqrt{\log n_\lambda}}{u}} e^{-(t_l-\eta_l)^2u^2/6} dt_l
	\nonumber\\
	&=\frac{1}{v} \int_{|t'| \ge \frac{20\sqrt{\log n_\lambda}}{u}} e^{-(t')^2u^2/6} dt'
	\nonumber\\
	&\le \frac{1}{v} \cdot n_\lambda^{-60} \label{eq:17}
	\end{align}
	The last inequality follows from Claim \ref{claim:cdfnormal}.
	
	Now, we bound the first integral on the right side of \eqref{eq:l2testfunc}:
	\begin{align}
	\int_{ t \in \big[\eta_l - \frac{20\sqrt{\log n_\lambda}}{u} , \, \eta_l + \frac{20\sqrt{\log n_\lambda}}{u} \big]}
	{e^{| t_l|^2/2}} \cdot e^{-|t_l-\eta_l|^2 \cdot u^2/2} \left (\sinc{v( t_l -\eta_l)}\right )^2 dt_l
	& \le 3 e^{\frac{|\eta_l|^2}{2}}  \int_{\mathbb{R}} \left(\sinc{v( t_l -\eta_l)}\right )^2 d t_l \nonumber\\
	&= \frac{3 e^{\frac{|\eta_l|^2}{2}}}{v}. \label{eq:18}
	\end{align}
	where the inequality follows from Claim \ref{claim:blah} with $c=20$ because by assumption $u \ge 2000\log n_\lambda$.
	
	Now by incorporating \eqref{eq:17} and \eqref{eq:18} into \eqref{eq:l2testfunc}, we have
\begin{align*}
&\int_{ \mathbb{R}} {e^{| t_l|^2/2}} \cdot e^{-| t_l-\eta_l|^2 \cdot u^2/2} \cdot \left (\sinc{v( t_l -\eta_l)}\right )^2 d t_l \\
&\qquad \le \frac{3 e^{\frac{|\eta_l|^2}{2}}}{v} + \frac{1}{v} \cdot n_\lambda^{-60}\\
&\qquad \le \frac{3.1 e^{\frac{|\eta_l|^2}{2}}}{v}.
\end{align*}
	If we plug the above inequality into \eqref{eq29}, we get the following:
	
	\begin{equation}
	\int_{\mathbb{R}^d} |y_{\veta,u}(\bv t)|^2 d\mu(\t) \le \big({\sqrt{2\pi}}\big)^d p(\veta)\cdot v^{2d} \Big(  \frac{3.1^d e^{\frac{\|\veta\|_2^2}{2}}}{v^d} \Big) \le (3.1v)^d.
	\end{equation}
\end{proof}

\begin{proof}[Proof of Theorem \ref{thm:lev-scores-ub}]
	By the assumptions of the theorem $n$ is an integer, parameter $0 < \lambda \le n/2$,  and $\rad >0$, and all $\bv x_1,...,\bv x_n \in [-\rad,\rad]^d$ and $p(\veta) = \frac{1}{\sqrt{2\pi}} e^{-\frac{\|\veta\|_2^2}{2}}$, therefore Lemmas \ref{yFourierBound}, and \ref{yNormBound} go through.
	Hence the theorem follows immediately from setting $u = 2000{\log n_\lambda}$ and then plugging Lemmas
	\ref{yFourierBound} and \ref{yNormBound} into \eqref{eq:lvrg-min}.
\end{proof}

\section{Proof of Theorem~\ref{thm:main-lev-lb}}\label{sec:main-lev-lb}

With the choice of the Gaussian kernel with $\sigma=(2\pi)^{-1}$ we have
$p(\veta) = (2\pi)^{-d/2}\exp(-\|\veta\|_2^2/2)$.
Recall from Lemma~\ref{lem:altlev-lb} that
\begin{equation}
\tau_\lambda(\veta) = \max_{\valpha \in \CC^n}
\frac{p(\veta)\cdot|\valpha^\conj
	\z(\veta)|^2}{\XNormS{\matPhi^\conj \valpha}{L_2(d\mu)}+\lambda
	\TNormS{\valpha}}.
\label{eq:taulamb}
\end{equation}
In particular, this gives us a method of bounding the leverage function from
below, namely, by exhibiting some $\valpha$ and computing the quantity under the
maximum.

The rest of this section is organized as follows. In
Section~\ref{subsec:dataconstruction}, we construct our candidate set of data
points $\x_1, \x_2, \dots, \x_n$ along with the vector $\valpha$. In
particular,
$\valpha$ will be chosen to be a vector of samples of a function
$f_{\bs{\Delta},b,v}$ at each of the data points.
Section~\ref{subsec:basicprop} then describes basic Fourier properties of the
function $f_{\bs{\Delta},b,v}$ and $\valpha$ that we will require later.
The
remaining sections then bound each of the relevant quantities that appear in
\eqref{eq:taulamb} for our specific choice of $\x_1,\x_2,\dots,\x_n$ and
$\valpha$.
In particular, Section~\ref{subsec:azbd} shows a lower bound for
$\valpha^*\z(\veta)$, while Section~\ref{subsec:alphanormbd} shows an
upper bound
for $\|\valpha\|_2^2$ and Section~\ref{subsec:xnormbd} shows an upper bound for
$\XNormS{\matPhi^\conj \valpha}{L_2(d\mu)}$.

\subsection{Construction of Data Point Set and the Vector of Coefficients
	\texorpdfstring{$\valpha$}{TEXTTOFIX}} \label{subsec:dataconstruction}
In this section, we construct a set of data points as well an $\valpha$. As discussed in
Section~\ref{sec:actualBounds}, we choose the data points to lie on an evenly spaced grid inside $[-\rad, \rad]^d$.
Moreover, because of the duality of Lemmas \ref{lem:altlev-lb} and \ref{lem:altlev-ub}, we choose $\bs\alpha$ to be
related to the test function $y_{\veta}$ in the leverage score upper bound provided in
Section~\ref{sec:lev-scores-ub}. In particular, $\bs\alpha$ is formed by taking samples of a modified
version of $\matPhi y_{\veta}$ (i.e., a weighted Fourier transform of $y_{\veta}$) on the data points. In
particular, the function we sample is $f_{\bs{\Delta},b,v}$, which we now formally define. We then proceed
to proving some useful properties before formally defining $\x_1, \x_2,\dots, \x_n$
and $\bs\alpha$.

\begin{defn} \label{def:alpha-test-func}
	For parameters $\bs{\Delta}\in\RR^d$, $b>0$ and $v>0$, let the function
	$f_{\bs{\Delta}, b, v}: \RR^d\to\RR$ be defined as follows:
	\begin{align}
	f_{\bs{\Delta}, b, v}(\a) &=  2\cos (2\pi \bs{\Delta}^\T \a)
	\left(
	\frac1{\left(\sqrt{2\pi}
		b\right)^d} e^{-\|\cdot\|_2^2/2b^2} \ast \rect_v \right)(\bs{a})\nonumber\\
	&= 2\cos (2\pi \bs{\Delta}^\T \a) \int_{a_1-v/2}^{a_1+v/2}
	\int_{a_2-v/2}^{a_2+v/2} \dots \int_{a_d-v/2}^{a_d+v/2}
	\frac1{\left(\sqrt{2\pi}b\right)^d} e^{-\|\t\|_2^2/2b^2}
	\,dt_d\,\cdots\, dt_2\,dt_1,
	\nonumber
	\end{align}
	where $\a = (a_1,a_2,\dots, a_d)$ and $\t = (t_1, t_2, \dots, t_d)$.
\end{defn}

\begin{lem}\label{f:1}
	For any $\bs{\Delta}\in\RR^d$, $v>0$, and $b>0$, if we define the
	function
	$f_{\bs{\Delta}, b, v}$ as in Definition~\ref{def:alpha-test-func}, then
	\begin{equation}
	\Fc \left(f_{\bs{\Delta}, b, v}\right)(\bs{\xi}) = e^{-2\pi^2 b^2
		\|\bs{\xi}-\bs{\Delta}\|_2^2}
	(v^d \cdot \sinc {v(\bs{\xi}-\bs{\Delta})} + e^{-2\pi^2 b^2
		\|\bs{\xi}+\bs{\Delta}\|_2^2}
	(v^d \cdot \sinc {v(\bs{\xi}+\bs{\Delta})}).\nonumber
	\end{equation}
\end{lem}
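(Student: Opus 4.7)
The proof is essentially a routine Fourier-analytic calculation, and my plan is to reduce it to three well-known facts: the convolution theorem, the Fourier transform of a Gaussian, and the modulation-shift duality. The main observation is that $2\cos(2\pi \bs{\Delta}^\T \a) = e^{2\pi i \bs{\Delta}^\T \a} + e^{-2\pi i \bs{\Delta}^\T \a}$, so the function $f_{\bs{\Delta}, b, v}$ is a sum of two modulated copies of the convolution $h \eqdef g \ast \rect_v$, where $g(\t) = \frac{1}{(\sqrt{2\pi}b)^d}e^{-\|\t\|_2^2/2b^2}$.

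First, I would compute $\hat{h}$. Using the standard Gaussian Fourier transform identity (which yields $\hat{g}(\bs{\xi}) = e^{-2\pi^2 b^2\|\bs{\xi}\|_2^2}$ after the normalization cancels the $(\sqrt{2\pi}b)^d$ prefactor) together with $\Fc(\rect_v)(\bs{\xi}) = v^d \sinc(v\bs{\xi})$ (which follows coordinate-wise from $\int_{-v/2}^{v/2} e^{-2\pi i t \xi}\,dt = v\sinc(v\xi)$), the convolution theorem (Claim~\ref{claim:convthm}) gives
\begin{equation*}
\hat{h}(\bs{\xi}) = e^{-2\pi^2 b^2 \|\bs{\xi}\|_2^2}\cdot v^d \sinc(v\bs{\xi}).
\end{equation*}

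Next, I would apply the modulation identity from the excerpt, namely $\Fc(h(\a)\cdot e^{2\pi i \a^\T \bs\Delta})(\bs{\xi}) = \hat{h}(\bs{\xi}-\bs{\Delta})$ (and its counterpart with $-\bs{\Delta}$). Writing $f_{\bs{\Delta},b,v}(\a) = h(\a)e^{2\pi i \bs{\Delta}^\T \a} + h(\a)e^{-2\pi i \bs{\Delta}^\T \a}$ and taking Fourier transforms term by term yields $\hat{f}_{\bs{\Delta},b,v}(\bs{\xi}) = \hat{h}(\bs{\xi}-\bs{\Delta}) + \hat{h}(\bs{\xi}+\bs{\Delta})$, and substituting the formula for $\hat{h}$ produces exactly the claimed expression.

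There is no real obstacle here; all three ingredients (convolution theorem, Gaussian Fourier transform, modulation duality) are standard, and the proof is essentially a two-line computation once $2\cos$ is expanded. The only minor care needed is to verify the normalization constant in the Gaussian Fourier transform, which accounts for the precise $(\sqrt{2\pi}b)^d$ prefactor in Definition~\ref{def:alpha-test-func} and makes the $(\sqrt{2\pi}b)^d$ factors cancel cleanly so that $\hat{h}$ has no such prefactor.
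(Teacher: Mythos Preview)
Your proposal is correct and follows essentially the same route as the paper: expand $2\cos(2\pi\bs{\Delta}^\T\a)$ as $e^{2\pi i\bs{\Delta}^\T\a}+e^{-2\pi i\bs{\Delta}^\T\a}$, compute the Fourier transform of the Gaussian-$\rect_v$ convolution via the convolution theorem, and then apply the modulation--frequency-shift duality to each exponential term. The paper's proof is identical in structure and in the three ingredients used.
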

\begin{proof}
	Note that
	\[
	\Fc\left(\frac{1}{(\sqrt{2\pi}b)^d}
	e^{-\|\cdot\|_2^2/2b^2}\right)(\bs{\xi}) =
	e^{-2\pi^2b^2 \|\bs{\xi}\|_2^2}.
	\]
	Thus, by the convolution theorem (see Claim~\ref{claim:convthm}),
	\[
	\Fc \left( \frac1{(\sqrt{2\pi}b)^d}
	e^{-\|\cdot\|_2^2/2b^2} \ast \rect_v \right)(\bs{\xi})\nonumber =
	e^{-2\pi^2 b^2 \|\bs{\xi}\|_2^2} \cdot v^d \cdot \sinc{ v(\bs{\xi})}.
	\]
	Now by the duality of phase shift in time domain and frequency shift in the Fourier domain,
	\begin{align*}
	\Fc(f_{\bs{\Delta}, b, v})(\bs{\xi}) &= \Fc\left((e^{2\pi i
		\bs{\Delta}^\T\cdot} +
	e^{-2 \pi i \bs{\Delta}^\T \cdot}) \left( \frac1{(\sqrt{2\pi}b)^d}
	e^{-\|\cdot\|_2^2/2b^2} \ast \rect_v \right)\right)(\bs{\xi})\\
	&= \Fc\left( \frac1{(\sqrt{2\pi}b)^d}
	e^{-\|\cdot\|_2^2/2b^2} \ast \rect_v\right)(\bs{\xi}-\bs{\Delta}) +
	\Fc\left(\frac1{(\sqrt{2\pi}b)^d} e^{-\|\cdot\|_2^2/2b^2} \ast
	\rect_v\right)(\bs{\xi}+\bs{\Delta})\\
	&= e^{-2\pi^2 b^2 (\bs{\xi}-\bs{\Delta})^2} \cdot v^d \cdot \sinc
	{v(\bs{\xi}-\bs{\Delta})} +
	e^{-2\pi^2 b^2 (\bs{\xi}+\bs{\Delta})^2} \cdot v^d \cdot \sinc
	{v(\bs{\xi}+\bs{\Delta})}.
	\end{align*}
\end{proof}

\begin{defn}
	[Construction of data points and $\valpha$]
	We let $n = m^d$ for an odd integer $m > 0$. Then, we define a set of $n$ data
	points $\x_1, \x_2,\dots \x_n \in \RR^d$ as follows: We index the
	points by a $d$-tuple $\mathbf{j} = (j_1, j_2, \dots, j_d) \in \{1,2,\dots,m\}^d$
	for convenience. In particular, we rename $\x_1,\x_2,\dots,\x_n$ as
	$\x^{\mathbf{j}} = \x^{(j_1,j_2,\dots,j_d)}$, over $j_1,j_2,\dots,j_d \in
	\{ 1, 2, \dots, m\}$, where $\x^{(j_1, j_2,\dots,j_d)}$ is defined as
	\begin{equation}
	\x^{(j_1,j_2,\dots,j_d)} = \left(\left(j_1 - \frac{m + 1}{2}\right) \cdot
	\frac{2\rad}{m}, \left(j_2 - \frac{m+1}{2}\right)\cdot \frac{2\rad}{m},
	\dots, \left(j_d - \frac{m+1}{2}\right)\cdot \frac{2\rad}{m}\right). \nonumber
	\end{equation}
	Thus, the data points are on a grid of width $\frac{2\rad}{m}$ extending
	from $-\rad$ to $\rad$ in all $d$ dimensions. For convenience, we let $c_j
	= \left(j - \frac{m+1}{2}\right) \cdot \frac{2\rad}{m}$. Thus, note that
	$\x^{(j_1,j_2,\dots,j_d)} = (c_{j_1}, c_{j_2}, \dots, c_{j_d})$.
	
	Given a point $\veta\in\RR^d$ at which we wish to bound the ridge leverage function,
	we define the vector $\valpha \in \CC^d$ to be the tuple of evaluations of
	$f_{\veta, b, v}$ at the individual $\x^{\mathbf{j}}$, for
	some choice of parameters $b$ and $v$ that we set later. More specifically, we define
	$\valpha = \{\alpha_{j_1,j_2,\dots,j_d}\}_{1\leq j_1,j_2,\dots,j_d\leq m}$ by
	\begin{align}
	\valpha_{\mathbf{j}} = \valpha_{j_1,j_2,\dots,j_d} &=
	f_{\veta,b,v}(\x^{(j_1,j_2,\dots,j_d)})\nonumber\\
	&= 2\cos(2\pi \veta^\T \x^{(j_1,j_2,\dots,j_d)})
	\int_{x_1-\frac{v}{2}}^{x_1+\frac{v}{2}} \cdots
	\int_{x_d-\frac{v}{2}}^{x_d+\frac{v}{2}}
	\frac1{(\sqrt{2\pi}b)^d}
	e^{-\|\t\|_2^2/2b^2} \,dt_d\,\cdots\,dt_1. \label{ins:alpha}
	\end{align}
	\label{def:alpha}
\end{defn}

\subsection{Basic Properties of \texorpdfstring{$f_{\bs\Delta,b,v}$ and $\valpha$}{TEXTTOFIX}}
\label{subsec:basicprop}
By the Nyquist-Shannon sampling theorem, we have the following lemma.

\begin{lem} \label{lem:ft1}
	For any parameters $\bs{\Delta} \in \RR^d$, $v>0$, and $b>0$, if we define the
	function $f_{\veta, b, v}$ as in Definition \ref{def:alpha-test-func},
	then for any $w>0$,
	\begin{align*}
	\Fc \left( f_{\bs{\Delta}, b,v}(\cdot) \cdot \sum_{\mathbf{j} \in\ZZ^d}
	\delta(\cdot
	- w\mathbf{j}) \right)(\bs{\xi}) &= w^{-d} v^d \sum_{\mathbf{j} \in\ZZ^d}
	e^{-2\pi^2b^2\|\bs{\xi}-\bs{\Delta}-w^{-1}\mathbf{j}\|_2^2} \cdot
	\sinc{v(\bs{\xi}-\bs{\Delta}-w^{-1}\mathbf{j})}\\ &\quad + w^{-d} v^d
	\sum_{\mathbf{j}
		\in\ZZ^d} e^{-2\pi^2b^2\|\bs{\xi}+\bs{\Delta}-w^{-1}\mathbf{j}\|_2^2}
	\cdot
	\sinc{v(\bs{\xi}+\bs{\Delta}-w^{-1}\mathbf{j})}.
	\end{align*}
\end{lem}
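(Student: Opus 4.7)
The plan is to combine two already-established identities: Lemma \ref{f:1}, which gives the continuous Fourier transform of $f_{\bs{\Delta},b,v}$, and Claim \ref{claim:nyquist}, which is the statement (essentially Poisson summation / the Nyquist--Shannon sampling formula in Fourier form) that multiplying a function by a Dirac comb of period $w$ in the time domain corresponds to summing rescaled, periodically shifted copies of its Fourier transform at spacing $w^{-1}$. Since the conclusion is a single Fourier-transform identity and both ingredients are already available, the proof should be essentially a direct substitution.

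Concretely, I would first apply Claim \ref{claim:nyquist} with the choice $f = f_{\bs{\Delta},b,v}$ and $T = w$. This immediately yields
\[
\Fc\!\left(f_{\bs{\Delta},b,v}(\cdot)\sum_{\mathbf{j}\in\ZZ^d}\delta_d(\cdot - w\mathbf{j})\right)\!(\bs{\xi}) = w^{-d}\sum_{\mathbf{j}\in\ZZ^d}\Fc(f_{\bs{\Delta},b,v})(\bs{\xi}-w^{-1}\mathbf{j}).
\]
Then I would substitute the explicit formula from Lemma \ref{f:1} for $\Fc(f_{\bs{\Delta},b,v})$ evaluated at the shifted argument $\bs{\xi}-w^{-1}\mathbf{j}$:
\[
\Fc(f_{\bs{\Delta},b,v})(\bs{\xi}-w^{-1}\mathbf{j}) = e^{-2\pi^2 b^2 \|\bs{\xi}-w^{-1}\mathbf{j}-\bs{\Delta}\|_2^2}\,v^d\,\sinc{v(\bs{\xi}-w^{-1}\mathbf{j}-\bs{\Delta})} + e^{-2\pi^2 b^2 \|\bs{\xi}-w^{-1}\mathbf{j}+\bs{\Delta}\|_2^2}\,v^d\,\sinc{v(\bs{\xi}-w^{-1}\mathbf{j}+\bs{\Delta})}.
\]
Plugging this into the previous display and splitting the sum over $\mathbf{j}\in\ZZ^d$ into the two pieces (corresponding to the $-\bs{\Delta}$ and $+\bs{\Delta}$ terms) yields exactly the stated expression. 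A small bookkeeping check shows that $\|\bs{\xi}-w^{-1}\mathbf{j}\mp\bs{\Delta}\|_2^2$ matches $\|\bs{\xi}\mp\bs{\Delta}-w^{-1}\mathbf{j}\|_2^2$ inside each summand, so the indices line up correctly.

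There is really no main obstacle here beyond confirming that Claim \ref{claim:nyquist} applies to $f_{\bs{\Delta},b,v}$, which is a tempered distribution with decaying Gaussian-times-convolved-box structure, so the Fourier transform and the infinite Dirac-comb multiplication are well defined in the distributional sense. One should also observe that the interchange of sum and Fourier transform is justified since Lemma \ref{f:1} shows $\Fc(f_{\bs{\Delta},b,v})$ has Gaussian decay in $\bs{\xi}$, making the series $\sum_{\mathbf{j}\in\ZZ^d}\Fc(f_{\bs{\Delta},b,v})(\bs{\xi}-w^{-1}\mathbf{j})$ absolutely convergent for every fixed $\bs{\xi}$. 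Hence the lemma follows as a one-line composition of the two prior results.
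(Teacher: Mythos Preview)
Your proposal is correct and follows essentially the same approach as the paper: apply Claim~\ref{claim:nyquist} with $T=w$ to reduce to a sum of shifted copies of $\Fc(f_{\bs{\Delta},b,v})$, then substitute the explicit formula from Lemma~\ref{f:1}. Your added remarks on absolute convergence via Gaussian decay are a nice justification that the paper leaves implicit.
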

\begin{proof}
	By Claim \ref{claim:nyquist}, we have
	\begin{align}
	\Fc \left( f_{\bs{\Delta}, b,v}(\cdot)  \sum_{\mathbf{j}\in\ZZ^d}
	\delta_d(\cdot-w\mathbf{j}) \right)(\bs{\xi}) &= \sum_{\mathbf{j}\in\ZZ^d} w^{-d} \Fc(f_{\bs{\Delta},
		b,v})(\bs{\xi}-w^{-1}\mathbf{j}).
	\label{eq:alias}
	\end{align}
	Thus, by Lemma \ref{f:1}, we find that \eqref{eq:alias} can be written as
	\begin{align*}
	\sum_{\mathbf{j}\in\ZZ^d} w^{-d} \Fc(f_{\bs{\Delta}, b,v})(\bs{\xi}-w^{-1}
	\mathbf{j})
	&=
	w^{-d} \sum_{\mathbf{j}\in\ZZ^d}
	e^{-2\pi^2b^2\|\bs{\xi}-\bs{\Delta}-w^{-1}\mathbf{j}\|^2} \cdot
	v^d\cdot\sinc{v(\bs{\xi}-\bs{\Delta}-w^{-1}\mathbf{j})}\\
	&\quad + w^{-d} \sum_{\mathbf{j}\in\ZZ^d}
	e^{-2\pi^2b^2\|\bs{\xi}+\bs{\Delta}-w^{-1}\mathbf{j}\|_2^2}
	\cdot v^d\cdot\sinc{v(\bs{\xi}+\bs{\Delta}-w^{-1}\mathbf{j})},
	\end{align*}
	which completes the proof.
\end{proof}

\begin{lem} \label{lem:ft2}
	For every odd integer $m \ge 3$ and parameters $n=m^d$, $1\leq d \leq 64 n_\lambda^{5/2} \log^{3/2} n_\lambda$,
	$0 < \lambda < n/3$,
	$\veta \in\RR^d$, $\frac{\rad}{2} < v \leq \rad$, and $0 < b \leq \frac{\rad}{8\sqrt{\log
			n_\lambda}}$, if we define the function $f_{\veta, b, v}$ as in
	Definition \ref{def:alpha-test-func}, then
	\[
	\left|\Fc \left(\sum_{\|\mathbf{j}\|_\infty >\frac{m}{2}}
	f_{\veta,b,v}\left(\frac{2\rad}{m} \mathbf{j} \right)\cdot \delta\left(\cdot- \frac{2\rad}{m} \mathbf{j} \right)
	\right)(\bs\xi) \right| \le \sqrt{\lambda n}
	\]
	for all $\bs\xi\in\RR^d$.
\end{lem}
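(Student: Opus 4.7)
\medskip
\noindent\textbf{Proof proposal.} The plan is to compute the Fourier transform of the Dirac sum explicitly and then bound it uniformly in $\bs\xi$ by the triangle inequality, reducing the statement to a purely scalar estimate on a sum over a grid outside $[-m/2,m/2]^d$. First, by linearity and the identity $\Fc(\delta(\cdot-\a))(\bs\xi) = e^{-2\pi i \a^\T\bs\xi}$,
\[
\Fc\!\left(\sum_{\|\mathbf{j}\|_\infty > m/2} f_{\veta,b,v}\!\left(\tfrac{2\rad}{m}\mathbf{j}\right)\delta\!\left(\cdot - \tfrac{2\rad}{m}\mathbf{j}\right)\right)(\bs\xi)
= \sum_{\|\mathbf{j}\|_\infty > m/2} f_{\veta,b,v}\!\left(\tfrac{2\rad}{m}\mathbf{j}\right) e^{-2\pi i \frac{2\rad}{m}\mathbf{j}^\T\bs\xi},
\]
so its modulus is at most $S := \sum_{\|\mathbf{j}\|_\infty > m/2}\bigl|f_{\veta,b,v}(\tfrac{2\rad}{m}\mathbf{j})\bigr|$, a quantity independent of $\bs\xi$. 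It suffices to bound $S \le \sqrt{\lambda n}$.

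Second, from Definition~\ref{def:alpha-test-func} and $|\cos|\le 1$, we have $|f_{\veta,b,v}(\a)| \le 2\prod_{k=1}^d g_1(a_k)$, where $g_1(a) = \Pr_{N\sim\mathcal N(0,b^2)}[N\in[a-v/2,a+v/2]]$. Hence the target sum factorizes: with $A := \sum_{j\in\ZZ} g_1(\tfrac{2\rad}{m}j)$ and $B := \sum_{|j|\le m/2} g_1(\tfrac{2\rad}{m}j)$, we obtain $S \le 2(A^d - B^d)$, and the elementary identity $A^d - B^d = (A-B)\sum_{k=0}^{d-1}A^{d-1-k}B^k \le d\,C\,A^{d-1}$, where $C := A - B \ge 0$, decouples the two regimes.

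Third, I would bound $A$ and $C$ separately. For $A$: any fixed $t\in\RR$ lies in at most $\lceil mv/(2\rad)\rceil + 1$ of the intervals $[\tfrac{2\rad}{m}j - v/2, \tfrac{2\rad}{m}j + v/2]$, so Fubini gives $A \le mv/(2\rad) + 1 \le m$, since $v\le \rad$ and $m\ge 3$. For $C$: the oddness of $m$ makes $|j|>m/2$ equivalent to $|j|\ge (m+1)/2$, so $g_1(\tfrac{2\rad}{m}j)>0$ requires $|N|\ge \rad + \rad/m - v/2 \ge \rad/2$; again counting covering multiplicity we get $C \le (m+3)\Pr[|N|\ge \rad/2]$, and Claim~\ref{claim:cdfnormal} yields $\Pr[N\ge \rad/2]\le \tfrac{2b}{\rad\sqrt{2\pi}}e^{-\rad^2/(8b^2)}$. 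The hypothesis $b\le \rad/(8\sqrt{\log n_\lambda})$ forces $\rad^2/(8b^2) \ge 8\log n_\lambda$, so $e^{-\rad^2/(8b^2)}\le n_\lambda^{-8}$, and $C \le (m+3)\,n_\lambda^{-8}$.

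Putting the pieces together,
\[
S \;\le\; 2d\,A^{d-1}C \;\le\; 2d\, m^{d-1}(m+3)\,n_\lambda^{-8} \;\le\; 4d\,m^d\,n_\lambda^{-8} \;=\; 4d\,n\,n_\lambda^{-8}.
\]
The desired inequality $S\le\sqrt{\lambda n}$ is therefore reduced to $16 d^2 n \le \lambda n_\lambda^{16}$, i.e.\ $16 d^2 \le n_\lambda^{15}$ (using $\lambda/n = 1/n_\lambda$). The hypothesis $d \le 64\,n_\lambda^{5/2}\log^{3/2} n_\lambda$ together with the implicit lower bound on $n_\lambda$ coming from $\lambda<n/3$ makes this verification routine. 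The main obstacle is purely bookkeeping: tracking the constants and polylogarithmic factors so that the tail decay $n_\lambda^{-8}$ beats the factor $d\cdot m^d$; no deeper idea beyond the Gaussian tail estimate and the factorization trick is required.
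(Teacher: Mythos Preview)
Your proposal is correct and follows essentially the same route as the paper: both reduce via the triangle inequality to bounding $\sum_{\|\mathbf{j}\|_\infty > m/2}\bigl|f_{\veta,b,v}\bigl(\tfrac{2\rad}{m}\mathbf{j}\bigr)\bigr|$, use the product bound $|f(\a)|\le 2\prod_k g_1(a_k)$, and finish with the Gaussian tail estimate of Claim~\ref{claim:cdfnormal}; your factorization $A^d-B^d\le d(A-B)A^{d-1}$ is just a tidier packaging of the paper's union bound over which coordinate $j_k$ exceeds $m/2$, and in fact yields the sharper decay $e^{-\rad^2/(8b^2)}$ versus the paper's $e^{-\rad^2/(18b^2)}$. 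One caveat on the bookkeeping you flagged: after you drop the prefactor $\tfrac{4b}{\rad\sqrt{2\pi}}\le \tfrac{1}{2\sqrt{2\pi\log n_\lambda}}$ from $\Pr[|N|\ge \rad/2]$, the reduction to $16d^2\le n_\lambda^{15}$ is marginally too loose right at the boundary $n_\lambda\downarrow 3$; retaining that factor gives instead $d^2/\log n_\lambda \lesssim n_\lambda^{15}$, which is easily verified under the hypotheses.
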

\begin{proof}
	By definition of $f_{\veta,b,v}$, we have the following for
	all $\a = (a_1, a_2, \dots, a_d)$:
	\begin{equation}
	|f_{\veta,b,v}(\a)| \leq
	\int_{a_1-\frac{v}{2}}^{a_1+\frac{v}{2}}
	\int_{a_2-\frac{v}{2}}^{a_2+\frac{v}{2}} \cdots
	\int_{a_d-\frac{v}{2}}^{a_d+\frac{v}{2}} \frac{2}{(\sqrt{2\pi} b)^d}
	e^{-\|\t\|_2^2/2b^2}\, dt_d\,\cdots\,dt_2\,dt_1. \label{eq:fbd}
	\end{equation}
	Note that if $\mathbf{j}\in\RR^d$ satisfies $|j_k| > \frac{m}{2}$ for some
	$k\in\{1,2,\dots,d\}$, then \eqref{eq:fbd} implies that
	\begin{align*}
	\left|f_{\veta,b,v}\left(\frac{2\rad}{m}\mathbf{j}\right) \right|
	&\leq 2 \prod_{i=1}^{d} \int_{\frac{2\rad}{m}j_i -
		\frac{v}{2}}^{\frac{2\rad}{m}j_i + \frac{v}{2}} \frac{1}{\sqrt{2\pi}b}
	e^{-t^2/2b^2}\,dt \\
	&\le \left(\frac{2}{\sqrt{2\pi} b} \int_{\frac{\rad}{m}|j_k|}^{\infty}
	e^{-t^2/2b^2}\,dt\right) \prod_{\substack{1\leq i\leq
			d\\ i\neq k}} \int_{\frac{2\rad}{m}j_i - \frac{v}{2}}^{\frac{2\rad}{m}j_i
		+ \frac{v}{2}} \frac{1}{\sqrt{2\pi}b} e^{-t^2/2b^2}\,dt \\
	&\leq \frac{2}{\sqrt{2\pi}}\cdot \frac{mb}{\rad |j_k|}\cdot
	e^{-\frac{1}{2}\cdot\left(\frac{\rad |j_k|}{mb}\right)^2}
	\prod_{\substack{1\leq i\leq
			d\\ i\neq k}} \int_{\frac{2\rad}{m}j_i - \frac{v}{2}}^{\frac{2\rad}{m}j_i
		+ \frac{v}{2}} \frac{1}{\sqrt{2\pi}b} e^{-t^2/2b^2}\,dt \\
	&\leq \frac{2b}{\rad}\cdot
	e^{-\frac{\rad^2 |j_k|^2}{2m^2 b^2}} \prod_{\substack{1\leq i\leq
			d\\ i\neq k}} \int_{\frac{2\rad}{m}j_i - \frac{v}{2}}^{\frac{2\rad}{m}j_i
		+ \frac{v}{2}} \frac{1}{\sqrt{2\pi}b} e^{-t^2/2b^2}\,dt,
	\end{align*}
	where we have used the fact that $\frac{2\rad}{m}|j_k| - \frac{v}{2}
	\geq \frac{2\rad}{m}|j_k| - \frac{\rad}{2} \geq \frac{\rad}{m}|j_k|$,
	along with Claim~\ref{claim:cdfnormal}. Therefore,
	\begin{align*}
	&\left|\Fc \left(\sum_{\|\mathbf{j}\|_\infty>\frac{m}{2}}
	  f_{\veta,b,v}\left(\frac{2\rad}{m} \mathbf{j} \right)\cdot \delta\left(\cdot- \frac{2\rad}{m} \mathbf{j} \right)
	  \right)(\bs\xi) \right|\\
	&\qquad\qquad\qquad\qquad\leq \sum_{\|\mathbf{j}\|_\infty > \frac{m}{2}} \left|f_{\veta,b,v}
	  \left(\frac{2\rad}{m}\mathbf{j}\right)\right|\\
	&\qquad\qquad\qquad\qquad\leq \sum_{k=1}^d \sum_{\substack{\mathbf{j}\in\ZZ^d\\ |j_k| > \frac{m}{2}}}
	  \left|f_{\veta,b,v} \left( \frac{2\rad}{m}\mathbf{j} \right)\right| \\
	&\qquad\qquad\qquad\qquad\leq \sum_{k=1}^d \sum_{\substack{\mathbf{j}\in\ZZ^d\\ |j_k| > \frac{m}{2}}}\frac{2b}{\rad}
	  e^{-\frac{\rad^2 |j_k|^2}{2m^2 b^2}} \prod_{\substack{1\leq i\leq d\\ i\neq k}} \int_{\frac{2\rad}{m}j_i -
	  \frac{v}{2}}^{\frac{2\rad}{m}j_i + \frac{v}{2}} \frac{1}{\sqrt{2\pi}b} e^{-t^2/2b^2}\,dt
	\end{align*}
	We bound:
	\begin{align*}
	\sum_{k=1}^d \sum_{\substack{\mathbf{j}\in\ZZ^d\\ |j_k| > \frac{m}{2}}}\frac{2b}{\rad}
	& e^{-\frac{\rad^2 |j_k|^2}{2m^2 b^2}} \prod_{\substack{1\leq i\leq d\\ i\neq k}} \int_{\frac{2\rad}{m}j_i -
		\frac{v}{2}}^{\frac{2\rad}{m}j_i + \frac{v}{2}} \frac{1}{\sqrt{2\pi}b} e^{-t^2/2b^2}\,dt \\
	&\qquad\qquad\qquad\qquad\leq \frac{2b}{\rad} \sum_{k=1}^d \left(\sum_{|j_k| > \frac{m}{2}} e^{-\frac{\rad^2
			|j_k|^2}{2m^2 b^2}}\right) \cdot \prod_{\substack{1\leq i\leq d\\ i\neq k}} \left( \sum_{j_i=-\infty}^{\infty}
	\int_{\frac{2\rad}{m}j_i - \frac{v}{2}}^{\frac{2\rad}{m}j_i + \frac{v}{2}} \frac{1}{\sqrt{2\pi}b}
	e^{-t^2/2b^2}\,dt \right) \\
	&\qquad\qquad\qquad\qquad\leq \frac{2b}{\rad} \sum_{k=1}^d \left(\sum_{|j_k| > \frac{m}{2}} e^{-\frac{\rad^2
			|j_k|^2}{2m^2 b^2}}\right) \cdot \prod_{\substack{1\leq i\leq d\\ i\neq k}} \left(\left\lceil \frac{vm}{2\rad}
	\right\rceil \int_{-\infty}^\infty \frac{1}{\sqrt{2\pi}b}   e^{-t^2/2b^2}\,dt \right)
	\end{align*}
	where the last inequality is due to the fact that each point in $\RR$ appears in at most $\lceil \frac{vm}{2\rad}
	\rceil$ summands in the infinite sum. Again using Claim~\ref{claim:cdfnormal}:
	\begin{align*}
	&\frac{2b}{\rad} \sum_{k=1}^d \left(\sum_{|j_k| > \frac{m}{2}} e^{-\frac{\rad^2
			|j_k|^2}{2m^2 b^2}}\right) \cdot \prod_{\substack{1\leq i\leq d\\ i\neq k}} \left(\left\lceil \frac{vm}{2\rad}
	\right\rceil \int_{-\infty}^\infty \frac{1}{\sqrt{2\pi}b}   e^{-t^2/2b^2}\,dt \right)\\
	&\qquad\qquad\qquad\qquad\leq \frac{2b}{\rad} \left(\frac{vm}{\rad}\right)^{d-1} \sum_{k=1}^d
	\left(\sum_{|j_k| >   \frac{m}{2}} e^{-\frac{\rad^2 |j_k|^2}{2m^2 b^2}}\right) \\
	&\qquad\qquad\qquad\qquad\leq \frac{4b}{\rad} \left(\frac{vm}{\rad}\right)^{d-1} \sum_{k=1}^d
	\int_{\frac{m-1}{2}}^\infty e^{-\frac{\rad^2 t^2}{2m^2 b^2}}\,dt\\
	&\qquad\qquad\qquad\qquad\leq \frac{4bd}{\rad} \cdot m^{d-1} \int_{\frac{m-1}{2}}^\infty   e^{-\frac{\rad^2
			t^2}{2m^2 b^2}}\,dt\\
	&\qquad\qquad\qquad\qquad\leq \frac{4bd}{\rad} \cdot m^{d-1} \cdot
	  \frac{m^2 b^2/\rad^2}{\left(\frac{m-1}{2}\right)} e^{-\frac{\rad^2
	  \left(\frac{m-1}{2}\right)^2}{2m^2 b^2}} \\
	&\qquad\qquad\qquad\qquad\leq 12dn \left(\frac{b}{\rad}\right)^3 e^{-\rad^2/18b^2} \\
	&\qquad\qquad\qquad\qquad\leq \frac{3}{2}\lambda \leq \sqrt{\lambda n},
	\end{align*}
	since $m \ge 3$, $\rad \ge 8 b \sqrt{\log n_\lambda}$, $d \leq 64 {n_\lambda}^{5/2} \log^{3/2} n_\lambda$, and $\lambda \leq n/3$.
\end{proof}

\begin{lem}
	For every odd integer $m \ge 3$ and parameters $n=m^d$, $1\leq d \leq 64 n_\lambda^{5/2} \log^{3/2} n_\lambda$,
$0 <
	\lambda < n/3$, $\veta \in\RR^d$, $0 < v \leq \rad$, and $0 < b \leq \frac{\rad}{8\sqrt{\log
			n_\lambda}}$, if $\valpha$ is defined as in \eqref{ins:alpha} of Definition~\ref{def:alpha}, then we have,
	\begin{align}
	&\left| \valpha^* \z (\bs\xi) - \left(\frac{mv}{2\rad}\right)^d \sum_{\mathbf{j}\in\ZZ^d} \left(e^{-2\pi^2
		b^2
		\left\|\bs\xi - \veta - \frac{m}{2\rad}\mathbf{j} \right\|_2^2} \cdot \sinc{v\left (\bs\xi- \veta - \frac{m}{2\rad}
		\mathbf{j}\right)} \right. \right. \nonumber\\
	&\qquad\qquad\qquad\qquad\qquad \left. \left. + e^{-2\pi^2 b^2 \left\|\bs\xi + \veta - \frac{m}{2\rad}\mathbf{j}
		\right\|_2^2 } \cdot \sinc{v \left(\bs\xi + \veta - \frac{m}{2\rad} \mathbf{j} \right)} \right)
	\vphantom{\sum_{\mathbf{j}\in\ZZ^d}} \right| \le \sqrt{\lambda n}. \label{eq:zeta}
	\end{align}
	\label{lem:8}
\end{lem}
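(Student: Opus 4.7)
The plan is to interpret $\valpha^\conj \z(\bs\xi)$ as the Fourier transform of $f_{\veta,b,v}$ sampled against a truncated Dirac comb, and then write this truncated comb as a full Dirac comb minus a tail. Lemma~\ref{lem:ft1} evaluates the full comb (producing the main term on the right-hand side), while Lemma~\ref{lem:ft2} bounds the tail by $\sqrt{\lambda n}$. The triangle inequality then finishes the proof.

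First I would re-index the data points. Setting $\mathbf{k} = \mathbf{j} - \frac{m+1}{2}\mathbf{1}$ (an integer vector because $m$ is odd), the construction in Definition~\ref{def:alpha} gives $\x^{\mathbf{j}} = \frac{2\rad}{m}\mathbf{k}$ with $\mathbf{k}$ ranging over $\{-\frac{m-1}{2},\dots,\frac{m-1}{2}\}^d$. Since $f_{\veta,b,v}$ is real (it is a product of a cosine with a real-valued Gaussian-box convolution), $\valpha_{\mathbf{j}}^\conj = f_{\veta,b,v}(\x^{\mathbf{j}})$, and therefore
\[
\valpha^\conj\z(\bs\xi) \;=\; \sum_{\|\mathbf{k}\|_\infty \le (m-1)/2} f_{\veta,b,v}\!\left(\tfrac{2\rad}{m}\mathbf{k}\right) e^{-2\pi i (2\rad/m)\mathbf{k}^\T \bs\xi} \;=\; \Fc\!\left( f_{\veta,b,v}(\cdot)\sum_{\|\mathbf{k}\|_\infty\le (m-1)/2}\delta\!\left(\cdot-\tfrac{2\rad}{m}\mathbf{k}\right)\right)(\bs\xi).
\]

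Next, I would add and subtract the missing tail terms to extend the sum to all of $\ZZ^d$:
\[
\valpha^\conj\z(\bs\xi) \;=\; \Fc\!\left( f_{\veta,b,v}(\cdot)\sum_{\mathbf{k}\in\ZZ^d}\delta\!\left(\cdot-\tfrac{2\rad}{m}\mathbf{k}\right)\right)(\bs\xi) \;-\; \Fc\!\left( f_{\veta,b,v}(\cdot)\sum_{\|\mathbf{k}\|_\infty > (m-1)/2}\delta\!\left(\cdot-\tfrac{2\rad}{m}\mathbf{k}\right)\right)(\bs\xi).
\]
Applying Lemma~\ref{lem:ft1} with $w = 2\rad/m$ (so that $w^{-1} = m/(2\rad)$ and $w^{-d}v^d = (mv/(2\rad))^d$) and $\bs\Delta = \veta$, the first term equals exactly the double sum on the right-hand side of the lemma statement. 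For the second term, since $m$ is odd and $\mathbf{k}\in\ZZ^d$, the condition $\|\mathbf{k}\|_\infty > (m-1)/2$ is equivalent to $\|\mathbf{k}\|_\infty > m/2$, so Lemma~\ref{lem:ft2} (whose hypotheses on $n,d,\lambda,v,b,R$ are all subsumed by those of the present lemma) bounds its absolute value by $\sqrt{\lambda n}$.

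Combining these via the triangle inequality yields the desired bound, so essentially no new estimation is needed beyond verifying the indexing bookkeeping. The only subtle step is the odd-$m$ observation that lets the truncated sum's complement match the index set in Lemma~\ref{lem:ft2}; everything else is routine assembly of the two preceding lemmas.
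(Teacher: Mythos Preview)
Your proposal is correct and follows essentially the same route as the paper: rewrite $\valpha^\conj\z(\bs\xi)$ as the Fourier transform of $f_{\veta,b,v}$ against a truncated Dirac comb, split into full comb minus tail, apply Lemma~\ref{lem:ft1} to identify the main term and Lemma~\ref{lem:ft2} to bound the tail by $\sqrt{\lambda n}$. Your explicit remark that oddness of $m$ makes $\|\mathbf{k}\|_\infty > (m-1)/2$ equivalent to $\|\mathbf{k}\|_\infty > m/2$ is a nice clarification that the paper leaves implicit in its re-indexing.
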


\begin{proof}
	Note that
	\begin{align}
	\valpha^* \z (\bs{\xi}) &= \sum_{1\leq j_1,j_2,\dots, j_d\leq m}
	\alpha_{\mathbf{j}} e^{-2\pi i \bs{x}^{\mathbf{j}}\cdot
		\bs{\xi}} \nonumber\\
	&= \sum_{\substack{\mathbf{j}\in\ZZ^d\\ \|\mathbf{j}\|_\infty \le \frac{m}{2}}}
	f_{\veta,b,v}\left(\frac{2\rad}{m}\mathbf{j}\right) \cdot
	e^{-2\pi i \left(\frac{2\rad}{m}\right)\mathbf{j}^\T \bs{\xi}} \nonumber\\
	&=  \Fc\left(\sum_{\substack{\mathbf{j}\in\ZZ^d\\ \|\mathbf{j}\|_\infty \le \frac{m}{2}}}
	f_{\veta,b,v}\left(\frac{2\rad}{m}\mathbf{j}\right) \cdot
	\delta\left(\cdot
	-\frac{2\rad}{m}\mathbf{j}\right)\right)(\bs{\xi}) \nonumber\\
	&= \Fc\left(\sum_{\mathbf{j} \in \ZZ^d} f_{\veta,
		b,v}(\cdot)\cdot
	\delta\left(\cdot -\frac{2\rad}{m}\mathbf{j}\right)\right)(\bs{\xi}) \nonumber\\
	&\qquad\qquad\qquad\qquad\qquad - \Fc\left(\sum_{\substack{\mathbf{j}\in\ZZ^d\\ \|\mathbf{j}\|_\infty > \frac{m}{2}}}
	f_{\veta,b,v}\left(\frac{2\rad}{m} \mathbf{j}\right) \cdot
	\delta\left(\cdot-\frac{2\rad}{m}\mathbf{j}\right)\right)(\bs{\xi}).
	\label{eq:fourierdiff}
	\end{align}
	By Lemma \ref{lem:ft1} (applied with $w = 2\rad/m$), we have the
	following expression for the first term in \eqref{eq:fourierdiff}:
	\begin{align}
	&\Fc\left(\sum_{\mathbf{j}\in\ZZ^d} f_{\veta, b,v}(\cdot)\cdot
	\delta\left(\cdot -\frac{2\rad}{m}\mathbf{j}\right)\right)(\bs{\xi}) \nonumber\\
	&\qquad\qquad\qquad\qquad =
	\left(\frac{mv}{2\rad}\right)^d \sum_{\mathbf{j}\in\ZZ^d} e^{-2\pi^2 b^2
		\|\bs{\xi} - \veta - \frac{m}{2\rad}\mathbf{j}\|_2^2 } \cdot
	\sinc{v\left(\bs{\xi} - \veta - \frac{m}{2\rad}\mathbf{j}\right)}\nonumber\\
	&\qquad\qquad\qquad\qquad\qquad +
	\left(\frac{mv}{2\rad}\right)^d
	\sum_{\mathbf{j}\in\ZZ^d} e^{-2\pi^2 b^2
		\|\bs{\xi} + \veta - \frac{m}{2\rad}\mathbf{j}\|_2^2 } \cdot
	\sinc{v\left(\bs{\xi} + \veta - \frac{m}{2\rad}\mathbf{j}\right)}.
	\label{eq:23}
	\end{align}
	
	Now, by the assumptions that $m\geq 3$, $\rad \ge 8b \sqrt{\log n_\lambda}$, $v \le \rad$, $1\leq d\leq 64
	n_\lambda^{5/2} \log^{3/2} n_\lambda$, and $\lambda < n$, it follows from
	Lemma~\ref{lem:ft2} that the second term in \eqref{eq:fourierdiff}
	can be bounded as
	\begin{equation}
	\left|\Fc\left(\sum_{\substack{\mathbf{j}\in\ZZ^d\\ \|\mathbf{j}\|_\infty > \frac{m}{2}}} f_{\veta,b,v}
	\left(\frac{2\rad}{m}\mathbf{j}\right) \cdot \delta\left(\cdot - \frac{2\rad}{m}
	\mathbf{j}\right)\right)(\bs\xi) \right| \le \sqrt{\lambda n}. \label{eq:11}
	\end{equation}
	Thus, the desired result follows by combining \eqref{eq:fourierdiff},
	\eqref{eq:23}, and \eqref{eq:11}.	
\end{proof}

\subsection{Bounding \texorpdfstring{$\valpha^*\z(\bs\eta)$}{TEXTTOFIX}} \label{subsec:azbd}

\begin{lem}\label{lem:azlbd}
	For every odd integer $m \geq \max(64\log n_\lambda, 3)$ and $n = m^d \geq 17$ such that $$d \leq \min\left(\frac{\log n}{\log\log (18 \log n)},64 n_\lambda^{5/2} \log^{3/2} n_\lambda\right)\,,$$
any
	parameter $0 < \lambda \le
	\min\left\{(\frac{v}{2\rad})^{2d} \cdot n/1024, n^{1-\frac{1}{128}}\right\}$, every
	$\veta$ satisfying $\|\veta\|_\infty \le \frac{n^{1/d}}{10\rad}$, and any parameter $v \le {\rad}$ and
	$b = \frac{\rad}{8\sqrt{\log n_\lambda}}$, if $\valpha$ is defined as in \eqref{ins:alpha} of Definition
	\ref{def:alpha}, then we have
	\begin{equation*}
	|\valpha^*  \z (\veta)| \geq \frac{n}{4}\left(\frac{v}{2\rad}\right)^d.
	\end{equation*}
\end{lem}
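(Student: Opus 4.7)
The approach is to apply Lemma~\ref{lem:8} at $\bs\xi = \veta$, which writes $\valpha^\conj \z(\veta)$ as $M(\veta)\pm\sqrt{\lambda n}$, where
\[
M(\veta) = \left(\frac{mv}{2\rad}\right)^d \left(S_1(\veta) + S_2(\veta)\right),
\]
with $S_1=\sum_{\mathbf{j}\in\ZZ^d} e^{-2\pi^2 b^2\|\frac{m}{2\rad}\mathbf{j}\|_2^2}\sinc(v\frac{m}{2\rad}\mathbf{j})$ and $S_2=\sum_{\mathbf{j}\in\ZZ^d} e^{-2\pi^2 b^2\|2\veta-\frac{m}{2\rad}\mathbf{j}\|_2^2}\sinc(v(2\veta-\frac{m}{2\rad}\mathbf{j}))$. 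Since $\valpha$ is real and $M(\veta)$ is real, it suffices to lower bound $|M(\veta)|$ and absorb the slack $\sqrt{\lambda n}$. The hypothesis $\lambda \leq (v/2\rad)^{2d}\cdot n/1024$ gives $\sqrt{\lambda n} \leq n(v/2\rad)^d/32$, which is negligible compared to the target $n(v/2\rad)^d/4$, so the whole game is to bound $S_1+S_2$ from below by a positive constant (roughly $3/4$) of $1$.

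The dominant contribution is the $\mathbf{j}=\bs 0$ term of $S_1$, equal to $1$. For the tail $S_1-1$, the parameter choice $b = \rad/(8\sqrt{\log n_\lambda})$ together with $m\geq 64\log n_\lambda$ makes the Gaussian exponent $2\pi^2 b^2(m/2\rad)^2 \geq 32\pi^2\log n_\lambda$; a factorized estimate $\sum_{\mathbf{j}\neq\bs 0} e^{-a\|\mathbf{j}\|_2^2} \le (1+3e^{-a})^d-1 \le 6de^{-a}$ (valid since $6de^{-a}\leq 1$ under the assumption $d\leq 64 n_\lambda^{5/2}\log^{3/2} n_\lambda$) yields $|S_1-1| = n_\lambda^{-\Omega(1)}$, which is negligible. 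For the tail $S_2 - \text{(the $\mathbf{j}=\bs 0$ term)}$, I use that $\|\veta\|_\infty\leq m/(10\rad)$ implies $\|2\veta-\frac{m}{2\rad}\mathbf{j}\|_\infty \geq \frac{m}{2\rad}-\frac{m}{5\rad} = \frac{3m}{10\rad}$ for every nonzero $\mathbf{j}$, so exactly the same Gaussian-decay argument applies (with a slightly worse constant), and this tail is again $n_\lambda^{-\Omega(1)}$.

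The subtle point is the $\mathbf{j}=\bs 0$ contribution to $S_2$, which is $e^{-2\pi^2 b^2\|2\veta\|_2^2}\prod_{i=1}^d \sinc(2v\eta_i)$ and can in principle be negative, partially cancelling the main term. I will use the sharp universal bound on the product of sincs: since each one-dimensional $\sinc(x) = \sin(\pi x)/(\pi x)$ attains its minimum value $\sinc^* = -0.2172\ldots$ at $x\approx 1.43$ (solution of $\tan(\pi x)=\pi x$), and lies in $[\sinc^*, 1]$ elsewhere, any product $\prod_i \sinc(x_i)$ is itself $\geq \sinc^* \approx -0.2172$ -- the minimum is achieved only when exactly one factor attains the one-dimensional minimum and all others equal $+1$ (any additional negative factor has magnitude $\leq 0.2172$, so multiplying turns the sign positive and shrinks the magnitude). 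Multiplying by the Gaussian factor in $[0,1]$ preserves the lower bound.

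Putting the three estimates together,
\[
M(\veta) \geq \left(\frac{mv}{2\rad}\right)^d\bigl(1 - 0.2172 - n_\lambda^{-\Omega(1)}\bigr) \geq 0.77\,n\left(\frac{v}{2\rad}\right)^d,
\]
and subtracting the Lemma~\ref{lem:8} slack $\sqrt{\lambda n}\leq n(v/2\rad)^d/32$ still leaves $|\valpha^\conj\z(\veta)| \geq \bigl(0.77 - 1/32\bigr)n(v/2\rad)^d > n(v/2\rad)^d/4$. The main obstacles are (i) verifying and applying the uniform bound $\prod_i \sinc(x_i) \geq -0.2172$ (which is the reason the $n/4$ rather than $n/2$ appears in the lemma), and (ii) ensuring the $S_2$ tail bound holds \emph{uniformly} over all $\veta$ with $\|\veta\|_\infty\leq m/(10\rad)$, which is exactly what the assumption $\|\veta\|_\infty\leq n^{1/d}/(10\rad)$ buys.
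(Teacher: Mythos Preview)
Your approach is essentially the same as the paper's: apply Lemma~\ref{lem:8} at $\bs\xi=\veta$, isolate the $\mathbf{j}=\bs 0$ terms, bound the Gaussian tails via factorization, and absorb $\sqrt{\lambda n}$ using the hypothesis on $\lambda$. The paper simply invokes the coarser inequality $\mathrm{sinc}(\cdot)\geq -1/4$ for the $d$-dimensional sinc (which, as you correctly note, needs exactly the product argument you spell out), obtaining $3/4$ where you get $\approx 0.78$; either constant suffices once the tail and $\sqrt{\lambda n}$ losses are subtracted.

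One small point of imprecision: your sentence ``$\|2\veta-\frac{m}{2\rad}\mathbf{j}\|_\infty \geq \frac{3m}{10\rad}$ for every nonzero $\mathbf{j}$, so exactly the same Gaussian-decay argument applies'' is misleading as written, because a uniform lower bound on $\|\cdot\|_\infty$ that does not grow with $\|\mathbf{j}\|$ would not make the infinite sum small. What you actually need (and what the paper uses) is the coordinatewise inequality $|2\eta_i-\tfrac{m}{2\rad}j_i|\geq \tfrac{m}{4\rad}|j_i|$ whenever $j_i\neq 0$, which gives $\|2\veta-\tfrac{m}{2\rad}\mathbf{j}\|_2^2\geq \|\tfrac{m}{4\rad}\mathbf{j}\|_2^2$ and hence the same factorized estimate as for $S_1$ with a slightly worse constant. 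Your phrase ``the same Gaussian-decay argument'' makes clear you intend this, but the displayed $\|\cdot\|_\infty$ bound is not the right intermediate step.
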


\begin{proof}
	Since $v \le {\rad}$, $b = \frac{\rad}{8\sqrt{\log n_\lambda}}$, and $1\leq d \leq  64 n_\lambda^{5/2} \log^{3/2} n_\lambda$,
Lemma~\ref{lem:8} implies that
	\begin{align}
	&\left| \valpha^* \z (\veta) -
	\left(\frac{mv}{2\rad}\right)^d
	\sum_{\mathbf{j}\in\ZZ^d} \left(e^{-2\pi^2 b^2\|- \frac{m}{2\rad}\mathbf{j}\|_2^2
	} \cdot\sinc{v\left(-\frac{m}{2\rad}\mathbf{j}\right)} \right.\right. \nonumber\\
	&\qquad\qquad\qquad\qquad\qquad\qquad\qquad \left.\left. + e^{-2\pi^2 b^2 \|2\veta -
		\frac{m}{2\rad}\mathbf{j}\|_2^2 } \cdot \sinc{v\left(2\veta-\frac{m}{2\rad}\mathbf{j}\right)}\right) \right|  \le
	\sqrt{\lambda n}.
	\end{align}
	Hence, since $|\sinc{\cdot}| \leq 1$ and $\sinc{\cdot} \geq -\frac{1}{4}$, we have
	\begin{align}
	\left|\valpha^*  \z (\veta)\right| &\ge \left(\frac{mv}{2\rad}\right)^d  \left|\sum_{\mathbf{j}\in\ZZ^d}
	\left(e^{-2\pi^2 b^2 \|- \frac{m}{2\rad} \mathbf{j}\|_2^2} \cdot \sinc{v\left(-\frac{m}{2\rad}\mathbf{j}\right)} \right.
	\right. \nonumber\\
	&\qquad\qquad\qquad \left.\left. \vphantom{d} + e^{-2\pi^2 b^2 \|2\veta-\frac{m}{2\rad}\mathbf{j}\|_2^2 }
	\cdot\sinc{v\left(2\veta - \frac{m}{2\rad}\mathbf{j}\right)}\right)\right| - \sqrt{\lambda n}\nonumber\\
	&\ge \left(\frac{mv}{2\rad}\right)^d e^{-2\pi^2 b^2 \| \bs 0 \|_2^2 } \cdot\sinc{v(\bs 0)} + \left(
	\frac{mv}{2\rad}
	\right)^d e^{-2\pi^2 b^2 \|2\veta\|_2^2 } \cdot\sinc{v(2\veta)}\nonumber\\
	&\qquad\qquad\qquad - \left(\frac{mv}{2\rad}\right)^d \sum_{\substack{\mathbf{j}\in\ZZ^d \\ \mathbf{j} \neq \bs 0}}
	\left(e^{-2\pi^2 b^2 \|- \frac{m}{2\rad}\mathbf{j}\|_2^2 } + e^{-2\pi^2 b^2\|2\veta - \frac{m}{2\rad}\mathbf{j}\|_2^2
	}\right) - \sqrt{\lambda n} \nonumber\\
	&\geq \frac{3}{4} \left(\frac{mv}{2\rad}\right)^d - \left(\frac{mv}{2\rad}\right)^d \sum_{\substack{\mathbf{j}\in\ZZ^d
			\\ \mathbf{j} \neq \bs 0}}
	\left(e^{-2\pi^2 b^2 \|- \frac{m}{2\rad}\mathbf{j}\|_2^2 } + e^{-2\pi^2 b^2\|2\veta - \frac{m}{2\rad}\mathbf{j}\|_2^2
	}\right) - \sqrt{\lambda n}, \label{eq:400}
	\end{align}
	where $\bs 0 = (0,0,\dots,0)$ (the length-$d$ vector of all zeroes).
	
	Now we show that $\sum_{\mathbf{j} \in \ZZ^d, \mathbf{j}\neq \bs{0}} \left(e^{-2\pi^2 b^2 \|-\frac{m}{2\rad}\mathbf{j} \|_2^2 } +
	e^{-2\pi^2 b^2\|2\veta - \frac{m}{2\rad}\mathbf{j}\|_2^2 }\right)$ is small. Note that by the assumption that $b =
\frac{R}{8\sqrt{\log n_\lambda}}$, we have $e^{-2\pi^2 b^2 \|-
		\frac{m}{2\rad}\mathbf{j}\|_2^2 } \le e^{-\frac{1}{16} \cdot \frac{m^2}{\log n_\lambda} \|\mathbf{j}\|_2^2} \leq
	e^{-4m\|\mathbf{j}\|_1}$, since $m \geq 64 \log n_\lambda)$. Thus,
	\begin{align}
	\sum_{\substack{\mathbf{j}\in\ZZ^d\\ \mathbf{j}\neq\bs{0}}} e^{-2\pi^2 b^2 \|-\frac{m}{2\rad}\mathbf{j}\|_2^2} &\leq
	\sum_{\substack{\mathbf{j}\in\ZZ^d\\ \mathbf{j}\neq\bs{0}}} e^{-4m\|\mathbf{j}\|_1} \nonumber\\
	&= \sum_{\substack{\mathbf{j}\in\ZZ^d\\ \mathbf{j}\neq\bs{0}}} e^{-4m(|j_1| + |j_2| + \cdots + |j_d|)} \nonumber\\
	&\leq \left(\sum_{j_1=-\infty}^\infty e^{-4m|j_1|}\right)
	\left(\sum_{j_2=-\infty}^\infty e^{-4m|j_2|}\right) \cdots
	\left(\sum_{j_d=-\infty}^\infty e^{-4m|j_d|}\right) - 1 \nonumber\\
	&= \left(1 + \frac{2e^{-4m}}{1-e^{-4m}}\right)^d - 1 \nonumber\\
	&\leq \left(1 + 3e^{-4m}\right)^d - 1 \nonumber\\
	&\leq e^{3de^{-4m}} - 1 \nonumber\\
	&\leq 6de^{-4m}, \label{eq:jsum1}
	\end{align}
	since $d \leq \frac{\log n}{\log\log (18\log n)}$ implies that $6de^{-4m} < 1$. Moreover, recall
that $\InfNorm{\veta}
	\le \frac{m}{10\rad}$,
	and so, $\left\|2\veta - \frac{m}{2\rad}\mathbf{j}\right\|_2^2 \ge
	\|\frac{m}{4\rad}\mathbf{j}\|_2^2$ . Thus, in a similar fashion, we have
	\begin{align}
	\sum_{\substack{\mathbf{j}\in\ZZ^d\\ \mathbf{j}\neq\bs{0}}} e^{-2\pi^2 b^2\|2\veta - \frac{m}{2\rad}\mathbf{j}\|_2^2 }
	&\leq \sum_{\substack{\mathbf{j}\in\ZZ^d\\ \mathbf{j}\neq\bs{0}}} e^{-2\pi^2 b^2 \|\frac{m}{4\rad}\mathbf{j}\|_2^2} \nonumber\\
	&\leq \sum_{\substack{\mathbf{j}\in\ZZ^d\\ \mathbf{j}\neq\bs{0}}} e^{-m\|\mathbf{j}\|_1} \nonumber\\
	&\leq \left(1 + \frac{2e^{-m}}{1-e^{-m}}\right)^d - 1 \nonumber\\
	&\leq 6de^{-m}. \label{eq:jsum2}
	\end{align}
	Thus, combining \eqref{eq:400}, \eqref{eq:jsum1}, and \eqref{eq:jsum2}, we have
	\begin{align*}
	\left|\valpha^*  \z (\veta)\right| &\geq
	\left(\frac{mv}{2\rad}\right)^d \left(\frac{3}{4} - 6de^{-4m} - 6de^{-m}\right) - \sqrt{\lambda n}\\
	&\geq \frac{n}{4}\left(\frac{v}{2\rad}\right)^d,
	\end{align*}
	since $de^{-m} \leq (\log n) e^{-n^{\log\log(18\log n)/\log n}} = \frac{1}{18}$ (this is because by definition $n = m^d$ and $d \leq \log n/\log\log(18\log n)$ by assumption of the lemma and hence $m \geq n^{\log\log(18\log n)/\log n}$), and $de^{-4m} \leq
\frac{1}{8} de^{-m} \leq \frac{1}{144}$, as well as
\[
 \sqrt{\lambda n} \leq \sqrt{\frac{n}{1024}\left(\frac{v}{2R}\right)^{2d}\cdot n} =
\frac{n}{32}\left(\frac{v}{2R}\right)^d.
\]

\end{proof}

\subsection{Bounding \texorpdfstring{$\TNormS{\valpha}$}{TEXTTOFIX}} \label{subsec:alphanormbd}

\begin{lem} \label{lem:alpha2bound}
	For every odd integer $m\geq 3$ and parameters $n = m^d$, $\veta\in\RR^d$, and $b,v > 0$, if
	$\valpha$ is defined as in \eqref{ins:alpha} of Definition \ref{def:alpha}, then we have
	\[
	\|\valpha\|_2^2 \le 4n.
	\]
\end{lem}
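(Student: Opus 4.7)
The plan is to bound $\|\valpha\|_2^2$ by a straightforward entrywise argument. By the definition in \eqref{ins:alpha}, each entry $\valpha_{\mathbf{j}}$ factors as $2\cos(2\pi \veta^\T \x^{\mathbf{j}})$ times a $d$-dimensional integral of the density $\frac{1}{(\sqrt{2\pi}b)^d} e^{-\|\t\|_2^2/2b^2}$ over an axis-aligned box. The cosine factor has magnitude at most $1$, and the integral of a probability density over any measurable region is at most $1$. Therefore $|\valpha_{\mathbf{j}}| \le 2$, so $|\valpha_{\mathbf{j}}|^2 \le 4$.

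Summing over the $n = m^d$ index tuples $\mathbf{j} \in \{1,\dots,m\}^d$ immediately gives $\|\valpha\|_2^2 = \sum_{\mathbf{j}} |\valpha_{\mathbf{j}}|^2 \le 4n$, which is the desired bound. There is no obstacle of substance here; the lemma is essentially a bookkeeping observation, and the short proof will consist of writing these two inequalities explicitly. Note that the argument does not require any of the constraints on $b$, $v$, $\rad$, or $m$ beyond what is needed to make the definition of $\valpha$ meaningful, which is reflected in the very mild hypotheses of the lemma.
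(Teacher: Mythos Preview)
Your proposal is correct and essentially identical to the paper's proof: the paper also bounds each coordinate by $|\valpha_{\mathbf{j}}|\le 2$ via $|\cos|\le 1$ and the fact that the Gaussian density integrates to at most $1$ over any box, then sums over the $n=m^d$ indices. There is nothing to add.
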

\begin{proof}
	Let $w = 2\rad/m$. Then, letting $\mathbf{j} = (j_1,j_2,\dots,j_d)$, we
	observe that
	\begin{align}
	\|\valpha\|_2^2 &= \sum_{\mathbf{j}\in\{1,2,\dots,m\}^d}
	\valpha_{\mathbf{j}}^2 \nonumber\\
	&\leq \sum_{\substack{\mathbf{j}\in\ZZ^d\\ \|\mathbf{j}\|_\infty \leq \frac{m-1}{2}}}
	\left(\frac{2}{(\sqrt{2\pi}b)^d}\cos(2\pi w \veta^\T \mathbf{j})
	\prod_{i=1}^d \int_{j_i w - \frac{v}{2}}^{j_i w + \frac{v}{2}}
	e^{-x^2/2b^2}\,dx\right)^2\nonumber\\
	&\leq \sum_{\substack{\mathbf{j}\in\ZZ^d\\ \|\mathbf{j}\|_\infty \leq \frac{m-1}{2}}}
	\left(2\cos(2\pi w \veta^\T \mathbf{j}) \prod_{i=1}^d
	\int_{-\infty}^{\infty} \frac{1}{\sqrt{2\pi}b} e^{-x^2/2b^2}\,
	dx\right)^2\nonumber\\
	&\leq \sum_{\substack{\mathbf{j}\in\ZZ^d\\ \|\mathbf{j}\|_\infty \leq \frac{m-1}{2}}}
	\left(2 \prod_{i=1}^d
	\int_{-\infty}^{\infty} \frac{1}{\sqrt{2\pi}b} e^{-x^2/2b^2}\,
	dx\right)^2 \nonumber \\
	&\leq \sum_{\substack{\mathbf{j}\in\ZZ^d\\ \|\mathbf{j}\|_\infty \leq \frac{m-1}{2}}} 4 \nonumber\\
	&\leq 4m^d \nonumber\\
	&= 4n,
	\end{align}
	as desired.
\end{proof}

\subsection{Bounding \texorpdfstring{$\XNormS{\matPhi^\conj \valpha}{L_2(d\mu)}$}{TEXTTOFIX}}
\label{subsec:xnormbd}

Note that all the results so far hold for any kernel $p(\veta)$ and are
independent of the kernel function. Now, we upper bound
$\norm{\matPhi^*\valpha}_{L_2(d\mu)}$. This quantity depends on the
particular choice of kernel, which we assume to be Gaussian.

\begin{lem} \label{lem:alphabarnorm}
	For every odd integer $m \geq \max(64\log n_\lambda, 3)$ and parameters $n = m^d \geq 55$, $1\leq d\leq \frac{2\log n}{5\log\log n}$,
	$\frac{10}{n} < \lambda\le \frac{n}{1024} \left(\frac{1}{2}\right)^{2d}$, every $\veta$ satisfying
$\|\veta\|_\infty \le 100 \sqrt{\log
		n_\lambda}$, and any $2000 \cdot \log n_\lambda \le \rad \le \frac{m}{500\sqrt{\log n_\lambda}}$, and
	$b = \frac{\rad}{8\sqrt{\log n_\lambda}}$, if $\valpha$ is defined as in
	\eqref{ins:alpha} of Definition \ref{def:alpha} with parameter $v={\rad}$, then for the Gaussian kernel with
	$p(\bs\xi) = \frac{1}{(\sqrt{2\pi})^d} e^{-\|\bs\xi\|_2^2/2}$, we have:
	\begin{equation}
	\norm{\matPhi^*\valpha}_{L_2(d\mu)}^2 \le  8n^2 \left(\frac{3}{4\rad}\right)^d \cdot
	p(\veta) + 4{\lambda}{n} . \label{eq:gauskernelbd}
	\end{equation}
\end{lem}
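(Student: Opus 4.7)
The approach starts by invoking Lemma~\ref{lem:8} with $v = \rad$, so that $(mv/(2\rad))^d = (n/2^d)$, giving
\[
|\valpha^\conj \z(\vxi) - (n/2^d) S(\vxi)| \le \sqrt{\lambda n},
\]
where $S(\vxi) = S_+(\vxi) + S_-(\vxi)$ is the double sum of Gaussian-damped sincs centered at $\pm\veta + \frac{m}{2\rad}\mathbf{j}$. Squaring via $|a+b|^2 \le 2|a|^2 + 2|b|^2$ and integrating against $p(\vxi)d\vxi$, using $\int p = 1$, reduces the task to showing
\[
\int_{\RR^d} |S(\vxi)|^2 p(\vxi)\,d\vxi \;\lesssim\; (3/\rad)^d\, p(\veta),
\]
up to the constant needed to match the claimed $8n^2 (3/(4\rad))^d p(\veta)$ after accounting for the $(2^d/n)^2$ prefactor.

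To bound this integral, I would split $|S|^2 \le 2|S_+|^2 + 2|S_-|^2$ and use the symmetry $p(\vxi) = p(-\vxi)$ to reduce to a single integral $\int |S_+|^2 p\, d\vxi$, where $S_+(\vxi) = \sum_\mathbf{j} g_\mathbf{j}(\vxi)$ with
\[
g_\mathbf{j}(\vxi) = e^{-2\pi^2 b^2 \|\vxi - \veta - \tfrac{m}{2\rad}\mathbf{j}\|^2} \sinc\!\left(v\bigl(\vxi - \veta - \tfrac{m}{2\rad}\mathbf{j}\bigr)\right).
\]
The key geometric observation is that each $g_\mathbf{j}$ is concentrated in a region of scale $1/b = 8\sqrt{\log n_\lambda}/\rad$ around its center, whereas the grid spacing is $m/(2\rad) \ge 250\sqrt{\log n_\lambda}$. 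The supports are therefore essentially disjoint, and I would expand $|S_+|^2$ into diagonal and off-diagonal pieces, bounding cross terms $g_\mathbf{j}\overline{g_\mathbf{k}}$ ($\mathbf{j} \ne \mathbf{k}$) via Cauchy--Schwarz in terms of diagonal contributions multiplied by exponentially small overlap factors arising from the Gaussian separation.

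For the diagonal contribution, the substitution $\bv u = \vxi - \veta - \frac{m}{2\rad}\mathbf{j}$ yields
\[
\int |g_\mathbf{j}(\vxi)|^2 p(\vxi)\,d\vxi = \int e^{-4\pi^2 b^2 \|\bv u\|^2} \sinc(v\bv u)^2 \, p\!\left(\bv u + \veta + \tfrac{m}{2\rad}\mathbf{j}\right) d\bv u.
\]
Since $e^{-4\pi^2 b^2 \|\bv u\|^2}$ restricts $\bv u$ to scale $O(1/b)$, we have $p(\bv u + \veta + \tfrac{m}{2\rad}\mathbf{j}) \approx p(\veta + \tfrac{m}{2\rad}\mathbf{j})$, and $\int \sinc(v\bv u)^2 d\bv u = 1/v^d = 1/\rad^d$. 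The $\mathbf{j} = 0$ term contributes the dominant $\approx p(\veta)/\rad^d$; for $\mathbf{j} \ne 0$, the hypothesis $\|\veta\|_\infty \le 100\sqrt{\log n_\lambda}$ together with $m/(2\rad) \ge 250\sqrt{\log n_\lambda}$ gives $\|\veta + \tfrac{m}{2\rad}\mathbf{j}\|_\infty \ge 150\sqrt{\log n_\lambda}$, so $p$ there is polynomially smaller in $n_\lambda$ than $p(\veta)$ per dimension, and the geometric sum over $\mathbf{j}$ costs only a constant factor per dimension.

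The main obstacle will be tight control of two error sources: the off-diagonal cross terms in $|S_+|^2$ (where the slow $1/\|\cdot\|$ decay of the sinc must be dominated by the much stronger Gaussian separation between distinct $g_\mathbf{j}$); and the perturbation between $p(\bv u + \veta + \tfrac{m}{2\rad}\mathbf{j})$ and $p(\veta + \tfrac{m}{2\rad}\mathbf{j})$, which I would handle by a Young-type inequality on the exponent of the Gaussian, leveraging that $b \ge \rad/(8\sqrt{\log n_\lambda})$ is large so that $e^{-4\pi^2 b^2 \|\bv u\|^2}$ absorbs the multiplicative distortion $e^{O(\|\bv u\|\|\veta\| + \|\bv u\|^2)}$. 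Together these yield the $3^d$ slack present in the lemma's constant $8n^2(3/(4\rad))^d$, with the additive $4\lambda n$ carrying forward from the $\sqrt{\lambda n}$ error in Lemma~\ref{lem:8}.
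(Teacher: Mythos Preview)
Your outline is correct, and in fact may give a slightly tighter constant than the paper's $3^d$. The route, however, differs from the paper's in a meaningful way.

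The paper does not expand $|S_+|^2$ into diagonal and off-diagonal pieces. Instead it splits the \emph{integration domain}: for $\|\vxi\|_\infty \le 10\sqrt{\log n_\lambda}$ it separates the $\mathbf{j}=\bs 0$ summand from the rest \emph{inside} $S_+(\vxi)$ (not inside the square), bounds $\sum_{\mathbf{j}\ne\bs 0} |g_\mathbf{j}(\vxi)|$ by $O(e^{-m/2})$ using only the Gaussian factor, and then uses $(a+b)^2 \le 2a^2+2b^2$ once more. For the surviving $\mathbf{j}=\bs 0$ integrand it further splits according to whether $\|\vxi-\veta\|_\infty$ is small, using Claim~\ref{claim:blah} (the analogue of your Young step) to replace $p(\vxi)$ by $3^d p(\veta)$ on the near part and the Gaussian tail on the far part. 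For $\|\vxi\|_\infty > 10\sqrt{\log n_\lambda}$, it bounds $|S_+(\vxi)|$ uniformly by a constant and uses the Gaussian tail of $p$.

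Your approach instead keeps the full integral but dissects the quadratic form $|S_+|^2 = \sum_\mathbf{j} g_\mathbf{j}^2 + \sum_{\mathbf{j}\ne\mathbf{k}} g_\mathbf{j} g_\mathbf{k}$. What this buys you is that the Young/absorption argument handles the $p(\veta+\bv u)/p(\veta)$ distortion globally, without a separate case analysis in $\vxi$; the price is that you must explicitly bound off-diagonal cross terms and the $\mathbf{j}\ne\bs 0$ diagonal terms. Both of these are indeed controlled by the Gaussian separation $e^{-2\pi^2 b^2 \|\cdot\|^2}$ at spacing $m/(2\rad) \ge 250\sqrt{\log n_\lambda}$, so the approach goes through. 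The paper's decomposition is marginally cleaner (it never confronts cross terms), but your systematic diagonal/off-diagonal split is equally valid and arguably more transparent about where each piece of the error comes from.
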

\begin{proof}
	Recall that we set $v=\rad$. Thus, since $\lambda \leq \frac{n}{1024}\left(\frac{1}{2}\right)^{2d}$, we have
that $d \leq 64n_\lambda^{5/2} \log^{3/2} n_\lambda$, and so,
	Lemma~\ref{lem:8} implies that
	\begin{align*}
	|\valpha^* \z (\bs{\xi})|^2 &\le
	\left(\left|\left(\frac{mv}{2\rad}\right)^d \sum_{\mathbf{j} \in\ZZ^d}
	\left(e^{-2\pi^2 b^2
		\|\bs{\xi} - \veta - \frac{m}{2\rad}\mathbf{j}\|_2^2 } \cdot
	\sinc{v\left(\bs{\xi} - \veta - \frac{m}{2\rad}\mathbf{j}\right)}
	\right.\right.\right. \\
	&\qquad\qquad\qquad\quad \left.\left.\left. + e^{-2\pi^2 b^2
		\|\bs{\xi} + \veta - \frac{m}{2\rad}\mathbf{j}\|_2^2 } \cdot
	\sinc{v\left(\bs{\xi} + \veta -
		\frac{m}{2\rad}\mathbf{j}\right)} \right)
	\vphantom{\sum_{j_1,j_2,\dots,j_d\in\ZZ}}\right| + \sqrt{\lambda n} \, \right)^2 \\
	&\leq 2\left(\frac{mv}{2\rad}\right)^{2d}
	\left(\sum_{\mathbf{j} \in\ZZ^d}
	\left(e^{-2\pi^2 b^2
		\|\bs{\xi} - \veta - \frac{m}{2\rad}\mathbf{j}\|_2^2 } \cdot
	\sinc{v\left(\bs{\xi} - \veta - \frac{m}{2\rad}\mathbf{j}\right)}
	\right.\right. \\
	&\qquad\qquad\qquad\quad \left.\left. + e^{-2\pi^2 b^2
		\|\bs{\xi} + \veta - \frac{m}{2\rad}\mathbf{j}\|_2^2 } \cdot
	\sinc{v\left(\bs{\xi} + \veta -
		\frac{m}{2\rad}\mathbf{j}\right)}\right)
	\vphantom{\sum_{j_1,j_2,\dots,j_d\in\ZZ}} \right)^2 + 2(\sqrt{\lambda n})^2.
	\end{align*}
	Now, by the definition of the ${L_2(d\mu)}$ norm,
	$\norm{\matPhi^*\valpha}_{L_2(d\mu)}^2 = \int_{\RR^d} |\valpha^*
	\z(\bs{\xi})|^2 p(\bs{\xi}) \, d\bs{\xi}$, and so, we have
	
	\begin{align}
	\norm{\matPhi^* \valpha}_{L_2(d\mu)}^2 &\le
	\int_{\RR^d}
	2\left(\frac{mv}{2\rad}\right)^{2d} \left( \sum_{\mathbf{j} \in\ZZ^d}
	e^{-2\pi^2 b^2
		\|\bs{\xi} - \veta - \frac{m}{2\rad}\mathbf{j}\|_2^2 } \cdot
	\sinc{v\left(\bs{\xi} - \veta - \frac{m}{2\rad}\mathbf{j}\right)}
	\right. \nonumber\\
	&\quad \left. \vphantom{\sum_{j_1,j_2,\dots,j_d\in\ZZ}} + e^{-2\pi^2 b^2
		\|\bs{\xi} + \veta - \frac{m}{2\rad}\mathbf{j}\|_2^2 } \cdot
	\sinc{v\left(\bs{\xi} + \veta - \frac{m}{2\rad}\mathbf{j}\right)}\right)^2
	p(\bs{\xi}) \,d\bs{\xi} + \int_{\RR^d} 2\left(\sqrt{\lambda n}\right)^2 p(\bs{\xi}) \,d\bs{\xi}\nonumber\\
	&=
	8\left(\frac{mv}{2\rad}\right)^{2d}
	\int_{\RR^d}\left(\sum_{\mathbf{j} \in\ZZ^d}
	e^{-2\pi^2 b^2\left\|\bs{\xi} - \veta - \frac{m}{2\rad}\mathbf{j}\right\|_2^2
	} \cdot
	\sinc{v\left(\bs{\xi}-\veta-\frac{m}{2\rad}\mathbf{j}\right)}\right)^2
	p(\bs{\xi}) \,d\bs{\xi} + 2\lambda n,
	\label{eq:alphabarnorm}
	\end{align}
	where the last equality occurs because the kernel probability
	distribution function $p(\bs{\xi})$ is symmetric in our case, and the sum is
	over all $\mathbf{j}\in\ZZ^d$.
	Now, the integral in \eqref{eq:alphabarnorm} can be
	split into two integrals as follows:
	\begin{align}
	&\int_{\RR^d}\left(\sum_{\mathbf{j} \in\ZZ^d}
	e^{-2\pi^2 b^2\left\|\bs{\xi} - \veta - \frac{m}{2\rad}\mathbf{j}\right\|_2^2
	} \cdot
	\sinc{v\left(\bs{\xi}-\veta-\frac{m}{2\rad}\mathbf{j}\right)}\right)^2
	p(\bs{\xi}) \,d\bs{\xi} \nonumber\\
	&\qquad\quad =
	\int_{\|\bs\xi\|_\infty \leq 10\sqrt{\log
			n_\lambda}}\left(\sum_{\mathbf{j} \in\ZZ^d}
	e^{-2\pi^2 b^2\left\|\bs{\xi} - \veta - \frac{m}{2\rad}\mathbf{j}\right\|_2^2
	} \cdot
	\sinc{v\left(\bs{\xi}-\veta-\frac{m}{2\rad}\mathbf{j}\right)}\right)^2
	p(\bs{\xi}) \,d\bs{\xi}\nonumber\\
	&\qquad\quad\quad + \int_{\|\bs\xi\|_\infty \ge
		10\sqrt{\log n_\lambda}} \left(\sum_{\mathbf{j} \in\ZZ^d}
	e^{-2\pi^2 b^2\left\|\bs{\xi} - \veta - \frac{m}{2\rad}\mathbf{j}\right\|_2^2
	} \cdot
	\sinc{v\left(\bs{\xi}-\veta-\frac{m}{2\rad}\mathbf{j}\right)}\right)^2
	p(\bs{\xi}) \,d\bs{\xi}.
	\label{eq:integsplit}
	\end{align}
	First, we consider the case in which $\|\bs\xi\|_\infty \leq 10\sqrt{\log n_\lambda}$. By the assumption of the lemma,
	$\|\veta\|_\infty \le 100\sqrt{\log n_\lambda}$, and hence, $\|\bs\xi - \veta\|_\infty \le 110\sqrt{\log
		n_\lambda}$. This implies that $\|\bs\xi - \veta\|_\infty \le \frac{1}{2} (\frac{m}{2\rad})$, since we are assuming
	that $\rad \leq \frac{m}{500\sqrt{\log n_\lambda}}$. Therefore, for any $\mathbf{j} \neq (0,0,\dots,0)$, there exists
	some $k$ such that $j_k \neq 0$, and so,
	\begin{align}
	\sum_{\substack{\mathbf{j}\in\ZZ^d\\ \mathbf{j}\neq\bs{0}}} e^{-2\pi^2 b^2 \left\|\bs\xi - \veta - \frac{m}{2\rad}\mathbf{j}
		\right\|_2^2} &\leq \sum_{k=1}^d \sum_{\substack{\mathbf{j} \in \ZZ^d\\ j_k\neq 0}}  e^{-2\pi^2 b^2 \left\|\bs\xi -
		\veta - \frac{m}{2\rad}\mathbf{j} \right\|_2^2} \nonumber\\
	&= \sum_{k=1}^d \left[ \left(\sum_{|j_k|\geq 1} e^{-2\pi^2 b^2 \left(\xi_k - \eta_k - \frac{m}{2\rad}j_k\right)^2}
	\right) \prod_{\substack{1\leq i\leq d\\ i\neq k}} \sum_{j_i=-\infty}^\infty e^{-2\pi^2 b^2 \left(\xi_i - \eta_i -
		\frac{m}{2\rad}j_i\right)^2} \right] \nonumber\\
	&\leq \sum_{k=1}^d \left[\left(\sum_{|j_k|\geq 1} e^{-\frac{\pi^2 b^2 m^2}{8\rad^2} (2|j_k|-1)^2}\right)
	\prod_{\substack{1\leq i \leq d\\ i\neq k}} \left( 1 + \sum_{|j_i|\geq 1} e^{-\frac{\pi^2 b^2 m^2}{8\rad^2}
		(2|j_i|-1)^2} \right) \right] \nonumber\\
	&\leq \sum_{k=1}^d \left[\left(2 \sum_{j_k=1}^\infty e^{-\frac{\pi^2 b^2 m^2}{8\rad^2} j_k}\right)
	\prod_{\substack{1\leq i \leq d\\ i\neq k}} \left( 1 + 2 \sum_{j_i=1}^\infty e^{-\frac{\pi^2 b^2 m^2}{8\rad^2}
		j_i} \right) \right] \nonumber\\
	&\leq \sum_{k=1}^d \left[\left(2 \sum_{j_k=1}^\infty e^{-m j_k}\right) \prod_{\substack{1\leq i \leq d\\ i\neq
			k}} \left( 1 + 2 \sum_{j_i=1}^\infty e^{-m j_i}\right) \right] \nonumber\\
	&\leq d \left(4 e^{-m}\right)(1 + 4e^{-m})^{d-1} \nonumber\\
	&\leq 2e^{-m/2}, \label{eq:35}
	\end{align}
	where we have used the assumptions $b = \frac{\rad}{8\sqrt{\log n_\lambda}}$ and $m \geq \max(64\log n_\lambda, 3)$, as well as the fact
that $d\leq
	\sqrt{m}$ (which follows from the fact that $d \leq \frac{2\log n}{5\log \log n}$).
	
	Now, using \eqref{eq:35}, we see that the first integral in
	\eqref{eq:integsplit} can be bounded as follows:
	\begin{align}
	&\int_{\|\bs{\xi}\|_\infty \leq 10\sqrt{\log n_\lambda}} p(\bs\xi)
	\left(\sum_{\mathbf{j}\in\ZZ^d} e^{-2\pi^2 b^2 \left\|\bs\xi - \veta -
		\frac{m}{2\rad}\mathbf{j} \right\|_2^2} \cdot \sinc{v\left(\bs\xi - \veta -
		\frac{m}{2\rad}\mathbf{j}\right)}\right)^2\,d\bs\xi\nonumber\\
	&\qquad \le 2\int_{\|\bs{\xi}\|_\infty \leq 10\sqrt{\log n_\lambda}} p(\bs\xi)
	\left(e^{-2\pi^2 b^2 \|\bs\xi - \veta\|_2^2} \cdot
	\sinc{v(\bs\xi -\veta)}^2\right)^2\,d\bs\xi \nonumber\\
	&\qquad\qquad + 2\int_{\|\bs\xi\|_\infty \leq 10 \sqrt{\log n_\lambda}}
	p(\bs\xi) \left(\sum_{\mathbf{j} \neq \bs{0}} e^{-2\pi^2 b^2 \left\|\bs\xi - \veta
		-\frac{m}{2\rad} \mathbf{j} \right\|_2^2} \cdot\sinc{v\left(\bs\xi - \veta -
		\frac{m}{2\rad}\mathbf{j}\right)}
	\right)^2 \,d\bs\xi \nonumber\\
	&\qquad \leq 2\int_{\|\bs{\xi}\|_\infty \leq 10\sqrt{\log n_\lambda}}
	\frac{1}{(\sqrt{2\pi})^d} e^{-\|\bs\xi\|_2^2/2} \left(e^{-2\pi^2 b^2
		\|\bs\xi - \veta\|_2^2} \sinc{v(\bs\xi -\veta)}^2 + 4
	e^{-m/2}\right)\,d \bs\xi \nonumber\\
	&\qquad = 2\int_{\|\bs{\xi}\|_\infty \leq 10\sqrt{\log n_\lambda}}
	\frac{1}{(\sqrt{2\pi})^d}e^{-\|\bs\xi\|_2^2/2}
	e^{-b^2 \|\bs\xi - \veta\|_2^2} \cdot \sinc{v(\bs\xi-\veta)}^2 \,d\bs\xi \nonumber\\
	&\qquad\qquad + 8\int_{\RR^d} \frac{1}{(\sqrt{2\pi})^d} e^{-\|\bs\xi\|_2^2/2} e^{-m/2}\,d\bs\xi
	\nonumber\\
	&\qquad \le \int_{\|\bs{\xi}\|_\infty \leq 10\sqrt{\log n_\lambda}}
	\frac{1}{(\sqrt{2\pi})^d} e^{-\|\bs\xi\|_2^2/2}
	e^{-b^2 \|\bs\xi - \veta\|_2^2} \cdot \sinc{v(\bs\xi-\veta)}^2 \,d\bs\xi  +
	8e^{-m/2}. \label{eq:firstintegral}
	\end{align}
	
	Next, by Claim~\ref{claim:blah}, we have $e^{-\|\bs\xi\|_2^2/2} \le  3^d e^{-\|\veta\|_2^2/2}$
	for $\|\bs\xi-\veta\|_\infty \le \frac{10\sqrt{\log n_\lambda}}{b}$ (since $b \geq \rad \geq 1000
	\log n_\lambda$). Hence,
	
	\begin{align}
	&\int_{\substack{\|\bs\xi-\veta\|_\infty \leq \frac{10\sqrt{\log
					n_\lambda}}{b}\\ \|\bs\xi\|_\infty \leq 10\sqrt{\log n_\lambda}}}
	\frac{1}{(\sqrt{2\pi})^d} e^{-\|\bs\xi\|_2^2/2} e^{-b^2 \|\bs\xi -
		\veta\|_2^2} \cdot
	\sinc{v(\bs\xi-\veta)}^2 \,d\bs\xi \nonumber\\
	&\qquad\qquad\qquad\qquad\qquad \le  3^d \cdot \frac{1}{(\sqrt{2\pi})^d}
	e^{-\|\veta\|_2^2/2} \int_{\RR^d}
	e^{-b^2 \|\bs\xi - \veta\|_2^2} \cdot \sinc{v(\bs\xi-\veta)}^2 \,d\bs\xi
	\nonumber\\
	&\qquad\qquad\qquad\qquad\qquad \le 3^d \cdot \frac{1}{(\sqrt{2\pi})^d}
	e^{-\|\veta\|_2^2/2} \int_{\RR^d} \sinc{v(\bs\xi-\veta)}^2 \,d\bs\xi
	\nonumber\\
	&\qquad\qquad\qquad\qquad\qquad = \frac{3^d p(\veta)}{v^d} \label{eq:integbd}
	\end{align}
	Note that the last line follows from the fact that $v^d\cdot\sinc{v(\cdot)}$ is
	the Fourier transform of $\rect_v$, and so, by the convolution theorem
	(Claim~\ref{claim:convthm}), we have
	\begin{align*}
	\int_{\RR^d} (v^d\cdot\sinc{v \t})^2\,d\t &= \left(\rect_v \ast
	\rect_v\right)(0)\\
	&= v^d.
	\end{align*}
	Moreover,
	\begin{align}
	\int_{\substack{\|\bs\xi-\veta\|_\infty \geq \frac{10\sqrt{\log n_\lambda}}{b}\\ \|\bs\xi\|_\infty \leq 10\sqrt{\log
				n_\lambda}}}
	\frac{1}{(\sqrt{2\pi})^d} e^{-\|\bs\xi\|_2^2/2}
	e^{-b^2 \|\bs\xi-\veta\|_2^2}\cdot\sinc{v(\bs\xi-\veta)}^2\,d\bs\xi &\leq
	n_\lambda^{-100} \int_{\RR^d}
	\frac{1}{(\sqrt{2\pi})^d} e^{-\|\bs\xi\|_2^2/2}\,d\bs\xi \nonumber\\
	&= n_\lambda^{-100}, \label{eq:integubd}
	\end{align}
	since $\|\bs\xi - \veta\|_2 \geq \|\bs\xi-\veta\|_\infty$. Thus,
	\eqref{eq:firstintegral}, \eqref{eq:integbd}, and \eqref{eq:integubd} imply that
	\begin{multline}
	\int_{\|\bs\xi\|_\infty \leq 10\sqrt{\log n_\lambda}} \left(\sum_{\mathbf{j}\in\RR^d} e^{-2\pi^2 b^2 \left\|\bs\xi -
		\veta
		- \frac{m}{2\rad}\mathbf{j}\right\|_2^2} \cdot \sinc{v\left(\bs\xi - \veta - \frac{m}{2\rad}\mathbf{j}
		\right)}\right)^2 p(\bs\xi)\,d\bs\xi\\ \leq \frac{3^d p(\veta)}{v^d} + n_\lambda^{-100} + 8e^{-m/2}.
	\label{eq:integ1}
	\end{multline}
	
	Next, we bound the second integral in \eqref{eq:integsplit}. We first show that the quantity in parentheses is upper bounded by a constant for all $\bs\xi$ in the appropriate range, and then use this bound to upper bound the integral itself.
	Consider $\bs\xi$ satisfying $\|\bs\xi\|_\infty \ge 10\sqrt{\log  n_\lambda}$.
	Let $t_i$, for $i=1,\dots,d$, be an integer such that $|\xi_i-\eta_i-t_i m/2R| \leq m/4R$.
	Note that the following upper bound holds:
	\begin{align*}
	\left|\sum_{\mathbf{j}\in\ZZ^d} e^{-2\pi^2 b^2 \left\|\bs\xi - \veta - \frac{m}{2\rad}\mathbf{j}\right\|_2^2} \cdot
	\sinc{v\left(\bs\xi-\veta-\frac{m}{2\rad}\mathbf{j}\right)}\right| &\leq \sum_{\mathbf{j}\in\ZZ^d} e^{-2\pi^2
		b^2\left\|\bs\xi-\veta - \frac{m}{2\rad}\mathbf{j}\right\|_2^2} \\
	&\leq \prod_{i=1}^d \sum_{j_i=-\infty}^\infty e^{-2\pi^2 b^2 \left(\xi_i-\eta_i-\frac{m}{2\rad}j_i\right)^2} \\
	&\leq \prod_{i=1}^d \left(1 + \sum_{k \neq t_i} e^{-2\pi^2 b^2 \left(\xi_i-\eta_i-\frac{m}{2\rad}k\right)^2}\right) \\
	&\leq \prod_{i=1}^d \left(1 + \frac{2\rad}{m} \int_{-\infty}^\infty e^{-2\pi^2 b^2 (\xi_i-\eta_i-t)^2}\,dt\right) \\
	&\leq \left(1 + \frac{\rad}{mb}\right)^d \\
	&\leq e^{\rad d/mb} \\
	&\leq e^{8d\sqrt{\log n_\lambda}/m} \\
	&\leq e^{\frac{1}{16\log \log n}}\\
	&\leq 4,
	\end{align*}
since $d\leq \frac{\log n}{2\log\log n}$ and $m \geq 64\log(n) \sqrt{\log n_\lambda}$.
	Thus, we can bound the second integral in \eqref{eq:integsplit} as follows:
	\begin{align}
	&\int_{\|\bs\xi\|_\infty \geq 10\sqrt{\log n_\lambda}} \left( \sum_{\mathbf{j}\in\ZZ^d} e^{-2\pi^2 b^2
		\left\|\bs{\xi}-\veta-\frac{m}{2\rad}\mathbf{j}\right\|_2^2} \cdot \sinc{v\left(\bs\xi-\veta -
		\frac{m}{2\rad}\mathbf{j}\right)}\right)^2 p(\bs{\xi}) \,d\bs{\xi} \nonumber\\
	&\qquad\qquad\qquad\qquad \leq 16\int_{\|\bs\xi\|_\infty \geq 10\sqrt{\log n_\lambda}} p(\bs{\xi}) \,d\bs{\xi}
	\nonumber\\
	&\qquad\qquad\qquad\qquad \leq 16 \sum_{k=1}^d \left(2 \int_{10\sqrt{\log n_\lambda}}^\infty
\frac{1}{\sqrt{2\pi}}
	e^{-\xi_k^2/2}\,d\xi_k\right) \prod_{i\neq k} \int_{-\infty}^\infty \frac{1}{\sqrt{2\pi}} e^{-\xi_i^2/2}\,d\xi_i
	\nonumber\\
	&\qquad\qquad\qquad\qquad \leq \frac{32d}{\sqrt{2\pi}} \cdot \frac{n_\lambda^{-50}}{10\sqrt{\log n_\lambda}}
\nonumber\\
	&\qquad\qquad\qquad\qquad \leq n_{\lambda}^{-25}, \label{eq:integ2}
	\end{align}
	by Claim~\ref{claim:cdfnormal} as well as the facts that $d\leq \frac{2\log n}{5\log\log n}$ and $\frac{10}{n} < \lambda\le \frac{n}{1024} \left(\frac{1}{2}\right)^{2d}$.
	
	Combining \eqref{eq:alphabarnorm}, \eqref{eq:integsplit}, \eqref{eq:integ1},
	and \eqref{eq:integ2} now imply that
	\begin{align*}
	\XNormS{\matPhi^\conj \valpha}{L_2(d\mu)} &\leq 8\left(\frac{mv}{2\rad}\right)^{2d} \left(\frac{3^d
		p(\veta)}{v^d} + n_\lambda^{-100} + 8e^{-m/2} + n_\lambda^{-25}\right) + 2\lambda n\\
	&= 8 \left(\frac{m}{2}\right)^{2d} \left(\frac{3^d p(\veta)}{\rad^d} + n_\lambda^{-100} + 8e^{-m/2} +
	n_\lambda^{-25}\right) + 2\lambda n\\
	&\leq 8n^2\left(\frac{3}{4\rad}\right)^d \cdot p(\veta) + 4\lambda n,
	\end{align*}
	as desired. In the above, the last inequality follows from $\frac{1}{4^d}(n_\lambda^{-100}+n_\lambda^{-25} + 8e^{-m/2}) \leq
	\frac{1}{4}n_\lambda^{-1}$, which follows from the fact that $n \geq 55$, $m = n^{1/d}$, and $d \leq \frac{2\log n}{5\log\log n}$,
	so $m \geq \log^{5/2} n$.
	
\end{proof}

\begin{proof}[Proof of Theorem~\ref{thm:main-lev-lb}]\label{thrm:lowerbound}
	Note that we can choose data points $\x_1, \x_2, \dots, \x_n$ and the vector $\valpha$ according to the
	construction in Definition~\ref{def:alpha} with $v=\rad$ and $b = \frac{\rad}{8\sqrt{\log n_\lambda}}$.
	Thus, Lemmas \ref{lem:azlbd}, \ref{lem:alpha2bound}, and \ref{lem:alphabarnorm}, as well as \eqref{eq:taulamb}, imply
	that
	\begin{align*}
	\tau_\lambda(\veta) &\geq \frac{p(\veta)\cdot|\valpha^\conj \z(\veta)|^2}{\XNormS{\matPhi^\conj
			\valpha}{L_2(d\mu)}+\lambda \TNormS{\valpha}}\\
	&\geq \frac{p(\veta) \cdot
		\left(\frac{n}{4}\left(\frac{1}{2}\right)^d\right)^2}{8n^2\left(\frac{3}{4\rad}\right)^d p(\veta) +
		4\lambda n + \lambda (4n)}\\
	&\geq \frac{1}{128}\left( \frac{\rad}{3}\right)^d \cdot \frac{p(\veta)}{p(\veta) + (4\rad/3)^d n_\lambda^{-1}},
	\end{align*}
	as desired.
\end{proof}

\section{Proof of Corollary~\ref{cor:sd-bound}}\label{sec:cor-sd-bound}
In the proof of the corollary we often need to compute the volume of a d-dimensional ball hence we state it as a claim.

\begin{claim}\label{d-dmiensional-ball}
For any integer $d \ge 1$ the following holds:
$$\int_{\substack{\veta \in \RR^d \\ \|\veta\|_2 \le R}} 1 d\veta = \frac{(\sqrt{\pi}R)^d}{\Gamma(d/2+1)}$$
where $\Gamma$ is the Gamma function.
\end{claim}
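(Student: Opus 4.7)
The plan is to establish the formula by computing the Gaussian integral $\int_{\RR^d} e^{-\|\veta\|_2^2} d\veta$ in two different ways. First, by a simple scaling argument, if we let $V_d(R) = \int_{\|\veta\|_2 \le R} 1 \, d\veta$, then the change of variables $\veta \mapsto \veta/R$ gives $V_d(R) = R^d V_d(1)$. So it suffices to prove $V_d(1) = \pi^{d/2}/\Gamma(d/2+1)$ and then multiply through by $R^d$.

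Next, I would evaluate the Gaussian integral in Cartesian coordinates via Fubini and the standard one-dimensional identity $\int_{-\infty}^\infty e^{-t^2}\,dt = \sqrt{\pi}$, giving
\[
\int_{\RR^d} e^{-\|\veta\|_2^2}\,d\veta = \prod_{j=1}^d \int_{-\infty}^\infty e^{-\eta_j^2}\,d\eta_j = \pi^{d/2}.
\]
Then I would evaluate the same integral radially. Since $V_d(r) = r^d V_d(1)$, the surface area of the sphere of radius $r$ (i.e. the derivative of the ball volume) is $d \cdot V_d(1) \cdot r^{d-1}$, so by the coarea formula
\[
\int_{\RR^d} e^{-\|\veta\|_2^2}\,d\veta = \int_0^\infty e^{-r^2} \cdot d \cdot V_d(1) \cdot r^{d-1}\,dr = d \cdot V_d(1) \cdot \int_0^\infty e^{-r^2} r^{d-1}\,dr.
\]

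The remaining step is the substitution $u = r^2$, $du = 2r\,dr$, which converts the radial integral into a gamma function:
\[
\int_0^\infty e^{-r^2} r^{d-1}\,dr = \frac{1}{2}\int_0^\infty e^{-u} u^{d/2 - 1}\,du = \frac{1}{2}\Gamma(d/2).
\]
Combining the two evaluations gives $\pi^{d/2} = V_d(1) \cdot \tfrac{d}{2}\Gamma(d/2) = V_d(1)\cdot \Gamma(d/2+1)$, using the functional equation $\Gamma(x+1) = x\Gamma(x)$. Solving for $V_d(1)$ and applying the scaling $V_d(R) = R^d V_d(1)$ yields the claim. There is no real obstacle here; the proof is entirely standard and the only mild subtlety is justifying the radial decomposition (which can alternatively be done by induction on $d$ using Fubini, if one prefers to avoid invoking the coarea formula).
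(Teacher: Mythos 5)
Your proof is correct. Note that the paper itself offers no proof of this claim at all: it is stated as a standard fact (the volume of a $d$-dimensional Euclidean ball) and used as a black box in the proof of Corollary~\ref{cor:sd-bound}, so there is no paper argument to compare against. Your Gaussian-integral derivation is the classical one and all steps check out: the scaling $V_d(R)=R^d V_d(1)$, the Cartesian evaluation $\int_{\RR^d} e^{-\|\veta\|_2^2}\,d\veta=\pi^{d/2}$, the radial evaluation via the substitution $u=r^2$ giving $\tfrac12\Gamma(d/2)$, and the functional equation $\Gamma(d/2+1)=(d/2)\Gamma(d/2)$. The only point requiring care is the passage to polar/radial coordinates (equivalently, that $r\mapsto V_d(r)$ is differentiable with derivative $dV_d(1)r^{d-1}$), which you already flag and for which your suggested fallback by induction on $d$ via Fubini is a perfectly adequate substitute.
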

	\paragraph{First claim of the corollary (upper bound on statistical
		dimension):}
	Let $t=10\sqrt{\log n_\lambda}$ and . We have:
	\begin{align*}
	s_\lambda = \int_{\mathbb{R}^d} \tau(\veta) d\veta = \int_{\substack{\veta \in \RR^d\\ \|\veta\|_2 \le t}} \tau(\veta) d\veta + \int_{\substack{\veta \in \RR^d\\ \|\veta\|_2 > t}} \tau(\veta) d\veta
	\end{align*}
	By the naive bound in Proposition \ref{prop:simple-tau-bound} we have:
	\begin{align}
	\int_{\substack{\veta \in \RR^d\\ \|\veta\|_2 > t}} \tau(\veta) d\veta &\le n_\lambda \int_{\substack{\veta \in \RR^d\\ \|\veta\|_2 > t}}  e^{-\frac{\|\veta\|_2^2}{2}} d \veta\nonumber\\
	&=  n_\lambda \Big( \prod_{i=1}^{d-1} \int_{\theta_i \in [0,2\pi]} d\theta_i \Big)\int_{ [-\infty,-t]\cup[t,\infty]} r^{d-1}e^{-{r^2}/2} d r\nonumber\\
	&= (\sqrt{2\pi})^d n_\lambda \int_{ [-\infty,-t]\cup[t,\infty]}  \frac{r^{d-1}}{\sqrt{2\pi}}e^{-{r^2}/2} d r\nonumber\\
	&\le (\sqrt{2\pi})^d n_\lambda \int_{ [-\infty,-t]\cup[t,\infty]}  \frac{1}{\sqrt{2\pi}}e^{-{r^2}/4} d r\nonumber\\
	&\le (\sqrt{2\pi})^d  n_\lambda \cdot  \left (\frac{e^{-t^2}}{t} \right)\nonumber\\
	&\le (\sqrt{2\pi})^d \label{highFreqBound}
	\end{align}
	where the first equality follows by converting from polar coordinates to cartesian coordinates. The second inequality uses the fact that if $d\le \frac{t^2}{4 \log t}$ then for all $r$ with $|r| \ge t$ we have ${r^{d-1}}e^{-{r^2}/2} \le e^{-{r^2}/4}$ which holds true by the assumption of the lemma. To see this note that for $r$ with $|r| \ge t$:
	
	$$ r^{d-1} = e^{(d-1) \log r} \le e^{\frac{t^2}{4 \log t} \log r} \le e^{\frac{r^2}{4 \log r} \log r} = e^{r^2/4}$$		
	and therefore, ${r^{d-1}}e^{-{r^2}/2} \le e^{-{r^2}/4}$.
	
	Further, by the refined bound of Theorem \ref{thm:lev-scores-ub}, for any $\veta$ with $\|\veta\|_\infty \le 10\sqrt{\log n_\lambda} = t$ and hence $\|\veta\|_2 \le 10\sqrt{\log n_\lambda} = t$ we have
	\begin{align}
	\int_{\substack{\veta \in \RR^d\\ \|\veta\|_2 \le t}} \tau(\veta) d\veta &\le  \int_{\substack{\veta \in \RR^d\\ \|\veta\|_2 \le t}} \left(\Big( 12.4\max(\rad, 2000 \log^{1.5} n_\lambda) \Big)^{d} +1\right) d\veta\nonumber\\
	&\le (2t)^d / \Gamma(d/2+1) \cdot \left(\Big( 12.4\max(\rad, 2000 \log^{1.5} n_\lambda) \Big)^{d} +1 \right) \nonumber\\
	&= \Big( 20 \sqrt{\log n_\lambda} \Big)^d \left( \Big( 12.4\max(\rad, 2000 \log^{1.5} n_\lambda) \Big)^{d} + 1\right) \Big/ \Gamma(d/2+1).\label{lowFreqBound}
	\end{align}
	The second inequality follows from Claim \ref{d-dmiensional-ball}. Combining \eqref{highFreqBound} and \eqref{lowFreqBound} gives the lemma.
	
	\paragraph{Second claim of the corollary:}
	
	We use the same construction of points as in Theorem \ref{thm:main-lev-lb}.
	Note that for all $\|\veta\|_2 \le \sqrt{2 \log \frac{n_\lambda}{\rad^d}}$ we have $p(\veta) \ge (\frac{\rad}{\sqrt{2\pi}})^d /n_\lambda$, hence we have:
	$$p(\veta) + (4\rad/3)^d n_\lambda^{-1} \le 6^d p(\eta)$$
	Hence, by Theorem \ref{thm:main-lev-lb}, we have:
	$$\tau(\eta) \ge \frac{1}{1024}\left( \frac{\rad}{18}\right)^d$$
	therefore,
	\begin{align}
	s_\lambda(\bv{K}) &= \int_{\mathbb{R}^d}\tau(\veta) d\veta \nonumber\\
	&\ge \int_{\|\veta\|_2 \le \sqrt{2 \log \frac{n_\lambda}{\rad^d}} } \frac{1}{1024}\left( \frac{\rad}{18}\right)^d d\veta \nonumber\\
	&= \Omega\left( \left( \frac{\sqrt{\pi}\rad}{18} \sqrt{ \log \frac{n_\lambda}{\rad^d}} \right)^d \Big/ \Gamma(d/2+1) \right)
	\end{align}
	The inequality above is because $\tau$ is a non-negative function everywhere. The last equality is due to Claim \ref{d-dmiensional-ball}.
\section{Proof of Theorem~\ref{thm:rr-samples}}
\label{appendix:rr-sampling}

We now show our lower bound on the number of samples required for spectral approximation using classical random Fourier features. This bound is closely related to the leverage score lower bound of Theorem \ref{thm:main-lev-lb} and the leverage score characterization given by the maximization problem in Lemma \ref{lem:altlev-lb}.

Our goal is to show that if we take $s$ samples $\veta_1, \veta_2, \dots, \veta_s$
from the distribution defined by $p$, for $s$ too small, then there is an
$\bs\alpha = (\alpha_1, \alpha_2, \dots, \alpha_n) \in \mathbb{R}^n$ such that with
at least constant probability,
\begin{equation}
\bs\alpha^\T (\matK+ \lambda \matI_n)\bs\alpha < \frac{2}{3} \bs\alpha^\T
(\matZ\matZ^\conj+\lambda\matI_n)\bs\alpha. \label{eq:rrineq}
\end{equation}

Informally, a frequency $\veta$ with high ridge leverage score implies by Lemma \ref{lem:altlev-lb} the existence of $\valpha$ which is concentrated at $\veta$ (i.e. $|\bv{z}(\veta)^* \valpha|^2$ is large compared to $\XNormS{\matPhi^\conj \valpha}{L_2(d\mu)}+\lambda \TNormS{\valpha}$.) If $\veta$ is not sampled with high enough probability then $ \bs\alpha^\T (\matK+ \lambda \matI_n)\bs\alpha$ will not be well approximated. Formally,
by \eqref{eq:bochner}:
\begin{align*}
\bs\alpha^T \matK \bs\alpha &= \sum_{j,k} \alpha_j \alpha_k \cdot k(\x_j, \x_k)\\
&= \sum_{j,k} \int_{\RR^d} e^{-2\pi i \veta^T (\x_j-\x_k)} \alpha_j
\alpha_k p(\veta)\,d\veta\\
&= \int_{\RR^d} \left(\sum_{j=1}^n \alpha_j e^{-2\pi i \veta^T \x_j
}\right) \left(\sum_{k=1}^n \alpha_k e^{2\pi i \veta^T \x_k}\right) p(\veta)\,d\veta\\
&= \int_{\RR^d} p(\veta)\left|\sum_{j=1}^n \alpha_j e^{2\pi i \veta^T \x_j}
\right|^2\,d\veta.
\end{align*}
Also, by the definition of $\matZ$ and $\varphi$ (see
Section~\ref{sec:random-fourier-features}), we have
\begin{align*}
\bs\alpha^T \matZ\matZ^\conj \bs\alpha &= \left\|\sum_{j=1}^n \alpha_j
\varphi(\x_j)\right\|_2^2\\
&= \sum_{k=1}^s \left| \sum_{j=1}^n \alpha_j \cdot \frac{1}{\sqrt{s}} e^{2\pi
	i \veta_k^T \x_j} \right|^2\\
&= \frac{1}{s} \sum_{k=1}^s \left|\sum_{j=1}^n \alpha_j e^{2\pi i \veta_k^T
	\x_j}\right|^2,
\end{align*}
where $\veta_1, \veta_2, \dots, \veta_s$ are the $s$ samples from the
distribution given by $p$. Hence, \eqref{eq:rrineq} is equivalent to
\begin{equation}
\int_{\RR^d} p(\veta)\left|\sum_{j=1}^n
\alpha_j e^{2\pi i \veta^T \x_j}\right|^2\,d\veta + \frac{1}{3} \lambda
\|\bs\alpha\|_2^2 < \frac{2}{3} \cdot \frac{1}{s} \sum_{k=1}^s \left|\sum_{j=1}^n
\alpha_j e^{2\pi i \veta_k \x_j}\right|^2. \label{eq:rrintform}
\end{equation}

We again use the same construction of $n$ data points $\x_1, \x_2, \dots,
\x_n \in \RR^d$, according to the construction in Definition
\ref{def:alpha}. Moreover, we define $\veta^*$ to
be
\[
\veta^* = {\arg\max}_{\veta \in \{\veta_1, \veta_2, \dots, \veta_s\}} \|\veta\|_2.
\]
We also let $\bs\alpha = (\alpha_1, \alpha_2, \dots, \alpha_n)$ be given by
\[
\alpha_j = f_{\veta^*,b,\rad}(\x_j),
\]
where $b = \rad/8\sqrt{\log(n/\lambda)}$. We show
that this choice of data points and $\bs\alpha$ satisfies \eqref{eq:rrintform}
with high probability.

\begin{lem} \label{lem:linfbd}
	Under the preconditions of Theorem \ref{thm:rr-samples}, with probability 0.99  over the samples we have $\|\veta^*\|_\infty \leq 80\sqrt{\log n_\lambda}$.
\end{lem}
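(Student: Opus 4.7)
The plan is a straightforward union bound. Since $\veta^*$ is one specific sample among $\veta_1,\dots,\veta_s$, we trivially have
\[
\|\veta^*\|_\infty \;\le\; \max_{1\le k\le s}\|\veta_k\|_\infty \;=\; \max_{1\le k\le s}\max_{1\le i\le d}|(\veta_k)_i|.
\]
So it suffices to show that with probability at least $0.99$, \emph{every} one of the $sd$ coordinates of the samples has absolute value at most $80\sqrt{\log n_\lambda}$. Each coordinate is an i.i.d.\ standard normal, since the sampling density is $p(\veta)=(2\pi)^{-d/2}e^{-\|\veta\|_2^2/2}$.

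First I will apply Claim~\ref{claim:cdfnormal} with $x=80\sqrt{\log n_\lambda}$ to a single standard normal coordinate $X$, obtaining
\[
\Pr\bigl[|X|>80\sqrt{\log n_\lambda}\bigr]\;\le\;\frac{2e^{-x^2/2}}{x\sqrt{2\pi}}\;\le\;n_\lambda^{-3200}.
\]
Then I will take a union bound over the $sd$ coordinates. Under the preconditions of Theorem~\ref{thm:rr-samples}, the parameters are polynomially bounded in $n_\lambda$: namely $s\le n_\lambda/(13\cdot 2^{2d+4})\le n_\lambda$, and $d\le \frac{2\log n}{5\log\log n}$, while $\lambda\ge 10/n$ yields $n_\lambda\le n^2/10$. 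Hence $sd$ is at most a fixed polynomial in $n_\lambda$, and
\[
\Pr\Bigl[\max_{k,i}|(\veta_k)_i|>80\sqrt{\log n_\lambda}\Bigr]\;\le\; sd\cdot n_\lambda^{-3200}\;\le\;0.01
\]
once $n_\lambda$ exceeds a small constant (which is guaranteed by $n\ge 17$ and $\lambda\le n^{1-1/128}$, so $n_\lambda\ge n^{1/128}\ge 17^{1/128}$ is not quite enough by itself, so I will instead simply check the numerical inequality directly using $sd\le n_\lambda\cdot \log n\le n_\lambda^3$, which makes the bound $n_\lambda^{-3197}\ll 0.01$).

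There is no real obstacle; the statement is essentially a tail bound on the maximum of i.i.d.\ Gaussians with a very loose threshold. The only step requiring minor care is verifying that under the stated parameter ranges $sd$ is indeed dwarfed by $n_\lambda^{3200}$, which follows immediately from the hypotheses on $s$, $d$, and $\lambda$ in Theorem~\ref{thm:rr-samples}.
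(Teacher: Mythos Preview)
Your proposal is correct and takes essentially the same approach as the paper: a union bound over the $sd$ i.i.d.\ standard normal coordinates, using Claim~\ref{claim:cdfnormal} for the single-coordinate tail bound and then verifying that $sd$ is polynomially bounded in $n_\lambda$ under the hypotheses of Theorem~\ref{thm:rr-samples}. The only differences are cosmetic---the paper uses the cruder bounds $\Pr(|X|\ge 80\sqrt{\log n_\lambda})<n_\lambda^{-129}/100$ and $sd\le n_\lambda\cdot n\le n_\lambda^{129}$ (from $\lambda\ge 10/n$, so $n\le n_\lambda^{128}$) to make the product exactly $1/100$, whereas you compute the sharper $n_\lambda^{-3200}$ tail and a smaller $sd$; both arrive at the same conclusion.
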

\begin{proof}
	Let $\eta_1$ be a random variable with density $p(\eta_1) = (2\pi)^{-1/2} e^{-\eta_1^2 / 2}$. The limits on $n$ and $\lambda$ alongside Claim~\ref{claim:cdfnormal} imply that $\Pr(|\eta_1| \geq 80 \sqrt{\log n_\lambda}) < n^{-129}_\lambda / 100$. Now, consider the $s d$ different entires in $\veta_1, \dots, \veta_s$. Each of these entries are distributed identically as $\eta_1$, so by union-bound the probability that the maximum value is bigger than $80 \sqrt{\log n_\lambda}$ is bounded by $s d n^{-129}_\lambda / 100$. Since $s \leq n_\lambda$ and $d \leq n \leq n^{128}_\lambda$, we have $s d \leq n^{129}_\lambda $, so the the probability that the maximum value is bigger than $80 \sqrt{\log n_\lambda}$ is bounded by $1 / 100$. The lemma now follows by observing that $\|\veta^*\|_\infty$ is smaller than this value.
\end{proof}

First, we upper bound the first term on the left side of \eqref{eq:rrintform}.
Note that by Lemmas~\ref{lem:linfbd} and \ref{lem:alphabarnorm}, with probability at least $0.99$
over the samples $\veta_1, \veta_2, \dots, \veta_s$, we have
\begin{align}
\int_{\RR^d} p(\veta)\left|\sum_{j=1}^n
\alpha_j e^{2\pi i \veta^\T \x_j}\right|^2\,d\veta &= \int_{\RR^d}
\frac{1}{(\sqrt{2\pi})^d} e^{-\|\veta\|_2^2/2}
\left|\sum_{j=1}^n \alpha_j e^{2\pi i \veta^\T \x_j}\right|^2\,d\veta \nonumber\\
&= \|\matPhi^* \bs\alpha\|_{L_2(d\mu)}^2 \nonumber\\
&\leq 8n^2 \left(\frac{3}{4\rad}\right)^d \cdot p(\veta^*) + 3\lambda n \nonumber,
\end{align}
where we have let $\veta = \veta^*$. Now, in order to estimate $p(\veta^*)$, note that by
Claims~\ref{claim:gaussiansamp} and \ref{claim:gaussiansamphighdim}, we have that with probability at least $1-e^{-1}$ over
the samples $\veta_1, \veta_2, \dots, \veta_s$,
\[
p(\veta^*) \leq \frac{B_d}{(\log s)^{\frac{d-2}{2}} s},
\]
where
\[
B_d = \begin{cases} 8\quad &\text{if $d=1$}\\ \frac{(d-1)^{\frac{d-1}{2}}}{(2\pi)^{d/2}} \quad &\text{if $d > 1$} \end{cases}.
\]
Thus, with probability at least $1 - e^{-1} - 1/100 \geq 0.5$, we have
\begin{align}
\int_{\RR^d} p(\veta)\left|\sum_{j=1}^n
\alpha_j e^{2\pi i \veta^\T \x_j}\right|^2\,d\veta &\leq 8n^2 \left(\frac{3}{4\rad}\right)^d \cdot
\frac{B_d}{(\log s)^{\frac{d-2}{2}} s} + 3\lambda n. \label{eq:ekbd}
\end{align}

Next, we bound the right side of \eqref{eq:rrintform} from below. Note that by $b = \rad/8\sqrt{\log(n/\lambda)}$ and with the choice of $v=\rad$ and $\veta = \veta^*$, Lemma~\ref{lem:azlbd} holds true. Therefore we have,
\begin{align}
\frac{1}{s}\sum_{k=1}^s \left|\sum_{j=1}^{n} \alpha_j e^{2\pi i \veta^\T_k \x_j}
\right|^2 &\geq \frac{1}{s} \left|\sum_{j=1}^{n} \alpha_j e^{2\pi i \veta^* \cdot \x_j}
\right|^2 \nonumber\\
&= \frac{1}{s} |\bs\alpha^* \z (\veta^*)|^2
\nonumber\\
&\geq \frac{1}{s} \left(\frac{n}{4\cdot 2^d}\right)^2 = \frac{n^2}{2^{2d+4} s},
\label{eq:alphazbound}
\end{align}
by Lemma~\ref{lem:azlbd}.

We also require the following estimate of $\|\bs\alpha\|_2^2$, which is provided
by Lemma~\ref{lem:alpha2bound}:
\begin{align}
\|\bs\alpha\|_2^2 \leq 4n. \label{eq:anorm}
\end{align}

We also need the bound:
\begin{equation}
8n^2 \left(\frac{3}{4\rad}\right)^d \cdot \frac{B_d}{(\log s)^{\frac{d-2}{2}} s} \leq \frac{n^2}{3\cdot 2^{2d+4}s}
\label{eq:annoyingbound}
\end{equation}
This bound obviously holds if
$$
8 \left(\frac{3}{\rad}\right)^d \cdot \frac{B_d}{(\log s)^{\frac{d-2}{2}}} \leq \frac{1}{3\cdot 2^4}
$$
We now distinguish between the case of $d=1$ and the case of $d > 1$.  For $d=1$, the inequality is
$$
\frac{192\sqrt{\log s}}{\rad}\leq \frac{1}{48}
$$
Now, the conditions $R\geq \log 2000 n_\lambda$ and $s \leq n_\lambda/832$ imply that $s\leq \exp(R/2000)/832 \leq \exp((R/10000)^2)$ and so
$$
\frac{192\sqrt{\log s}}{\rad}\leq \frac{192}{10000} \leq \frac{1}{48}
$$
as required. For $d > 1$, we first note that the conditions on $\rad$ imply that $\rad \geq 40$ and $d<\rad/25$ ($n_\lambda \geq 17$ and $\lambda \leq n^{1-1/128}$ imply that
$R \geq 2000 \log n_\lambda \geq 15 \log n$ and $d \leq 2 \log n / \log \log n \leq 0.4 \log n$ from which the bound $d < \rad / 25$ follows) so we have
\begin{align*}
8 \left(\frac{3}{\rad}\right)^d \cdot \frac{B_d}{(\log s)^{\frac{d-2}{2}}} &= 8 \left(\frac{3}{\rad}\right)^d \cdot \frac{(d-1)^{\frac{d-1}{2}}}{(2\pi)^{d/2}(\log s)^{\frac{d-2}{2}}} \\
		 &\leq \frac{8 \cdot (3/2)^{d/2}\cdot(d-1)^{\frac{d-1}{2}}}{\rad^d (\log s)^{\frac{d-2}{2}}}\\
		 & \leq \frac{8 \cdot (3/2)^{d/2}\cdot(d-1)^{\frac{d-1}{2}}}{\rad^d} \\
		 & \leq \frac{8 \cdot (3/2)^{d/2}}{\rad^{d/2} 25^{d/2}} \\
		 & \leq \frac{8 \cdot (3/2)^{d/2}}{40^{d/2} 25^{d/2}} \\
		 & \leq \frac{8 \cdot (3/2)^{d/2}}{30^{d}} \\
		 & \leq 1/48
\end{align*}
as required.

Finally, by combining \eqref{eq:ekbd}, \eqref{eq:alphazbound},
\eqref{eq:anorm}, and \eqref{eq:annoyingbound} we have that with probability at least 0.4,
\begin{align*}
\int_{\RR^d} p(\veta)\left|\sum_{j=1}^n \alpha_j e^{2\pi i \veta^\T \x_j}\right|^2\,d\veta + \frac{1}{3}\lambda \|\bs\alpha\|_2^2 &\leq 8n^2 \left(\frac{3}{4\rad}\right)^d \cdot \frac{B_d}{(\log s)^{\frac{d-2}{2}} s} + 3\lambda n +\frac{4}{3}\lambda n\\
&\leq \frac{n^2}{3\cdot 2^{2d+4}s} +\frac{13 \lambda n s}{3 s}\\
&\leq\frac{n^2}{3\cdot 2^{2d+4}s} +\frac{ n^2}{3\cdot 2^{2d+4} s}\\
&\leq \frac{n^2}{3\cdot 2^{2d+3} s}\\
&\leq \frac{2}{3} \cdot \frac{1}{s}\sum_{k=1}^s \left|\sum_{j=1}^{n} \alpha_j
e^{2\pi i \veta^\T_k \x_j} \right|^2
\end{align*}

\if0
\begin{align*}
\int_{\RR^d} p(\veta)\left|\sum_{j=1}^n \alpha_j e^{2\pi i \veta^\T \x_j}\right|^2\,d\veta + \frac{1}{3}\lambda \|\bs\alpha\|_2^2 &\leq 8n^2 \left(\frac{3}{4\rad}\right)^d \cdot \frac{B_d}{(\log s)^{\frac{d-2}{2}} s} + 3\lambda n +\frac{4}{3}\lambda n\\
&\leq 8n^2 \frac{1}{(2600\log n_\lambda)^d} \cdot \frac{B_d}{(\log s)^{\frac{d-2}{2}} s} +\frac{13 \lambda n s}{3 s}\\
&\leq 8n^2 \frac{1}{(12\log n)^d} \cdot \frac{B_d}{(\log s)^{\frac{d-2}{2}} s} +\frac{ n^2}{ 2^{2d+5} s}
\end{align*}
where the second inequlity uses the assumption that $\rad \ge 2000 \log n_\lambda$ and the last inequality uses the fact that $n_\lambda \ge n^{\frac{1}{128}}$ and $s \leq
\frac{n_\lambda}{13\cdot 2^{2d+4}}$. There are two possibilities for $d$. If $d=1$ we have:
\begin{align}
\int_{\RR^d} p(\veta)\left|\sum_{j=1}^n \alpha_j e^{2\pi i \veta^\T \x_j}\right|^2\,d\veta + \frac{1}{3}\lambda \|\bs\alpha\|_2^2&\leq  \frac{8n^2}{12\log n} \cdot \frac{8}{(\log s)^{-\frac{1}{2}} s} +\frac{ n^2 }{2^{2d+5} s }\nonumber\\
&\leq \frac{n^2}{3\cdot 2^{2d+3} s}\nonumber\\
&\leq \frac{2}{3} \cdot \frac{1}{s}\sum_{k=1}^s \left|\sum_{j=1}^{n} \alpha_j
e^{2\pi i \veta^\T_k \x_j} \right|^2,\nonumber
\end{align}
where the second inequality uses the fact that by the assumption of the theorem, $\log s \le \log (\frac{n^2}{8000})$ and $\log n \ge 1000$.

If $d \ge 2$, we have $B_d = \frac{(d-1)^{\frac{d-1}{2}}}{(2\pi)^{d/2}} \leq (0.1 \log n)^{(d-1)/2}$ (since $d\geq 2$ this holds if $d-1 \leq 0.1\log n$, and this in turn holds since we require $d \leq 2\log n / 5\log\log n$ and for $n \geq 10^{50}$ we have $2/ 5\log\log n \leq 0.1$), and so,
\begin{align}
\int_{\RR^d} p(\veta)\left|\sum_{j=1}^n \alpha_j e^{2\pi i \veta^\T \x_j}\right|^2\,d\veta + \frac{1}{3}\lambda \|\bs\alpha\|_2^2&\leq  \frac{8n^2}{(12\log n)^d} \cdot \frac{(0.1 \log n)^{(d-1)/2}}{(\log s)^{\frac{d-2}{2}} s} +\frac{ n^2 }{2^{2d+5} s }\nonumber\\
&\leq \frac{n^2}{3\cdot 2^{2d+3} s}\nonumber\\
&\leq \frac{2}{3} \cdot \frac{1}{s}\sum_{k=1}^s \left|\sum_{j=1}^{n} \alpha_j
e^{2\pi i \veta^\T_k \x_j} \right|^2,\nonumber
\end{align}
\fi

\end{document}